\newcommand\myshade{90}
\colorlet{mylinkcolor}{MidnightBlue}
\colorlet{mycitecolor}{MidnightBlue}
\colorlet{myurlcolor}{MidnightBlue}
\newcommand{\norm}[1]{\left\lVert#1\right\rVert}
\newcommand{\bT}{\boldsymbol{\Theta}}
\newcommand{\tr}{\mathsf{tr}}
\newcommand{\Diag}{\mathsf{Diag}}
\newcommand{\ww}{\bm{w}}
\newcommand{\sL}{\mathcal{L}}
\newcommand{\sA}{\mathcal{A}}
\newcommand{\sD}{\mathfrak{d}}
\begin{document}

\title{Algorithms for Learning Graphs in Financial Markets}

\author{\name Jos\'e Vin\'icius de Miranda Cardoso \email jvdmc@connect.ust.hk \\
       \addr Department of Electronic and Computer Engineering\\
       The Hong Kong University of Science and Technology\\
       Clear Water Bay, Hong Kong
       \AND
       \name Jiaxi Ying \email jx.ying@connect.ust.hk \\
       \addr Department of Electronic and Computer Engineering\\
       The Hong Kong University of Science and Technology\\
       Clear Water Bay, Hong Kong
       \AND
       \name Daniel P. Palomar \email palomar@ust.hk \\
       \addr Department of Electronic and Computer Engineering\\
       Department of Industrial Engineering and Decision Analytics\\
       The Hong Kong University of Science and Technology\\
       Clear Water Bay, Hong Kong
       }

\editor{}

\maketitle

\begin{abstract}
In the past two decades, the field of applied finance has tremendously benefited from graph theory.
As a result, novel methods ranging from asset network estimation to hierarchical asset selection and
portfolio allocation are now part of practitioners' toolboxes.
In this paper, we investigate the fundamental problem of learning undirected graphical models under Laplacian structural
constraints from the point of view of financial market times series data. In particular,
we present natural justifications, supported by empirical evidence, for the usage of the Laplacian matrix
as a model for the precision matrix of financial assets, while also establishing a direct link that
reveals how Laplacian constraints are coupled to meaningful physical interpretations related to the market index factor
and to conditional correlations between stocks. Those interpretations lead to a set of guidelines that
practitioners should be aware of when estimating graphs in financial markets. In addition, we design
numerical algorithms
based on the alternating direction method of multipliers to learn undirected, weighted graphs
that take into account stylized facts that are intrinsic to financial data such as heavy tails and modularity.
We illustrate how to leverage the learned graphs into practical scenarios
such as stock time series clustering and foreign exchange network estimation. The proposed graph learning algorithms outperform
the state-of-the-art, benchmark methods in an extensive set of practical experiments, evidencing the advantages of adopting
more principled assumptions into the learning framework. Furthermore,
we obtain theoretical and empirical convergence results for the proposed algorithms.
Along with the developed methodologies for graph learning in financial markets,
we release an $\mathsf{R}$ package, called \href{https://github.com/mirca/fingraph}{$\mathsf{fingraph}$ \faGithub}, accommodating the code and data to obtain all the
experimental results.
\end{abstract}

\begin{keywords}
  Graphs, Financial Markets, Quantitative Finance, Unsupervised Learning
\end{keywords}

\section{Introduction}

Graph learning from data has been a problem of critical importance
for the statistical graph learning and graph signal processing
fields~\citep{friedman2008, lake2010, witten2011, kalofolias2016, egilmez2017, pavez2018, zhao2019},
with direct impact on applied areas such as unsupervised learning,
clustering \citep{hsieh2012, sun14, tan2015, feiping2016, hao2018, kumar2019, kumar20192}, 
applied finance~\citep{mantegna1999, deprado2016, marti2017}, network topology inference
~\citep{segarra2017, mateos2019, geert2019, mateos2020}, community detection~\citep{fortunato2010,li2018,chen2019},
and graph neural nets~\citep{wu2019, pal2019}.

The basic idea behind graph learning is to answer the following question: given a data matrix whose columns represent
signals (observations) measured at the graph nodes, how can one design a graph representation that ``best" fits such data
matrix without possibly any (or with at most partial) knowledge of the underlying graph structure? By
``graph representation" or ``graph structure'', it is often understood the Laplacian, adjacency, or incidence matrices of the graph, or
even a more general graph shift operator~\citep{marques2016}. In addition, the observed signals need not to live in
regular, ordered spaces and can take arbitrary values, such as categorical and numerical, hence the probability distribution
of the data can be highly unknown. Figure~\ref{fig:graph-signal-full} illustrates such setting.

\begin{figure}[!htb]
  \captionsetup[subfigure]{justification=centering}
\centering
\begin{subfigure}[t]{0.45\textwidth}
 \centering
  \includegraphics[scale=1]{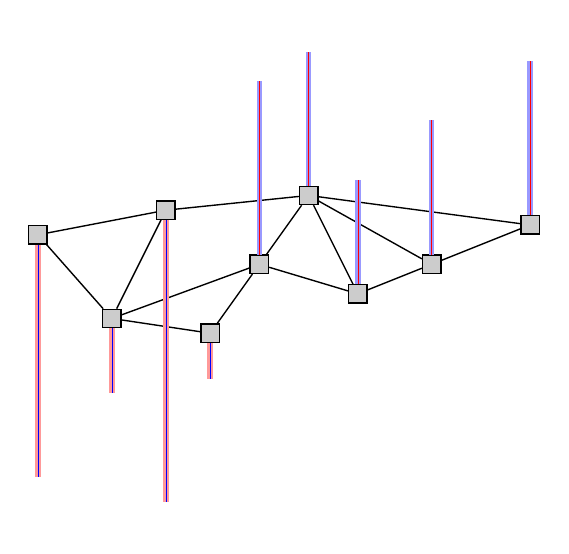}
 \caption{A graph signal.}
 \label{fig:graph-signal}
\end{subfigure}%
\begin{subfigure}[t]{0.45\textwidth}
  \centering
   \includegraphics[scale=0.65]{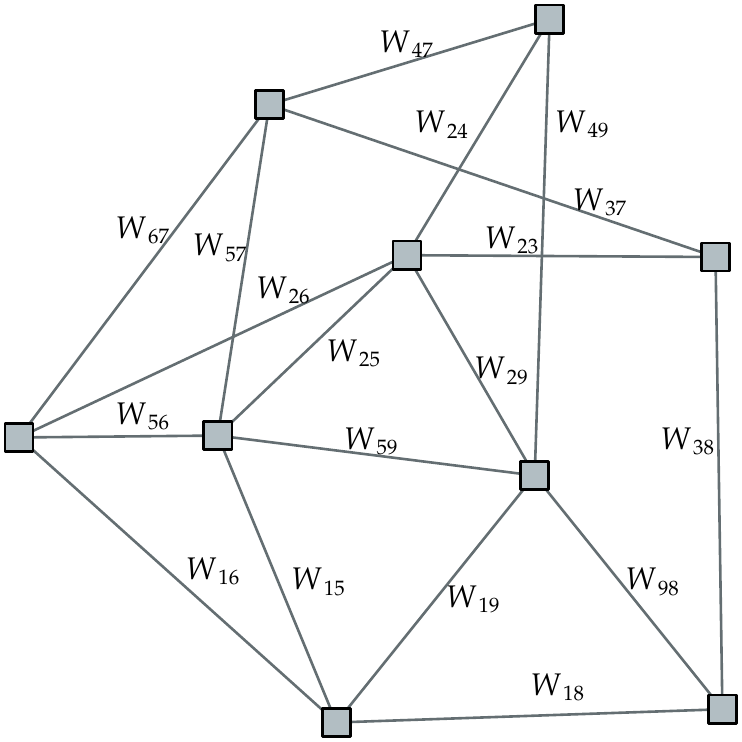}
 \caption{A graph to be estimated on the basis of its graph signals.}
  \label{fig:graph-structure}
 \end{subfigure}%
 \caption{Illustration of a hypothetical signal observed in a graph~(Figure~\ref{fig:graph-signal}). The gray squares represent the graph nodes, the thin black
 lines denote the graph edges, indicating the relationships among nodes, whereas the vertical red bars denote the signal intensities
 measured at each node. Graph learning techniques seek to estimate the underlying graph structure
 (edge weights $W_{ij}$ in Figure~\ref{fig:graph-structure}) through the graph signal measurements.} 
 \label{fig:graph-signal-full}
\end{figure}

Data derived from financial instruments such as equities and foreign exchanges, on the other hand,
are defined on the well-known, ordered time domain (equally-spaced
intraday\footnote{In high-frequency trading systems, in contrast, tick data may not necessarily be uniformly sampled.
This scenario is not contemplated in this work.}, daily, weekly, monthly, etc)
and take on real values whose returns are often modeled by Gaussian processes. Then, the question is: How can graph learning
be a useful tool for financial data analysis? In fact,~\citet{mantegna1999} in his pioneer work showed that
learning topological arrangements, such as graphs, in a stock market context, provides critical
information that reveals economic factors that affect the price data evolution.
The benefits that graph representations bring to applications on financial stocks are vastly discussed in the literature.
A non-exhaustive, yet representative list of examples include:
(i) identifying ``business as usual" and ``crash" periods via asset tree graphs~\citep{onnela2003,onnela2003b},
(ii) understanding portfolio dynamics via the topological properties of simulated minimum spanning trees~\citep{bonanno2003,bonanno2004, mantegna2004},
(iii) constructing networks of companies based on graphs~\citep{onnela2004},
(iv) understanding risks associated with a portfolio~\citep{malevergne2006},
(v) leveraging properties of the learned graph into follow-up tasks such as hierarchical portfolio
designs~\citep{deprado2016,raffinot2018,raffinot2018b},
(vi) mining the relationship structure among investors~\citep{yang2020},
(vii) exploring graph properties such as degree centrality and eigenvector centrality for
market crash detection and portfolio construction~\citep{millington2020}, and
(viii) community detection in financial stock markets~\citep{rama2020,cardoso2020}.

Despite the plethora of applications, learning the structure of general graphical
models is an NP-hard problem~\citep{anima2012} whose importance is critical towards visualizing,
understanding, and leveraging the full potential contained in the data that live in such structures.
Nonetheless, most existing techniques for learning graphs are often unable to impose a particular
graph structure due to their inability to incorporate prior information in the
learning process. More surprisingly, as it is shown in this work, state-of-the-art learning algorithms
fall short when it comes to estimate graphs that posses certain properties such as $k$-components.

Moreover, graph learning frameworks are designed with the operational assumption that
the observed graph signals are Gaussian
distributed~\citep{friedman2008, lake2010, dong2016, kalofolias2016, egilmez2017, zhao2019, kumar2019, ying2020nips},
inherently neglecting situations where there may exist outliers. As a consequence, those methods may
not succeed in fully capturing a meaningful representation of the underlying graph
especially in data from financial instruments, which are known to be heavy-tailed and
skewed~\citep{gourieroux1997, cont2001, tsay2010, harvey2013, feng2015, liu2019a}.

While estimators for connected graphs have been proposed~\citep{dong2016, kalofolias2016, egilmez2017, zhao2019},
some of its properties, such as sparsity, are yet being investigated~\citep{ying2020nips, ying2020}, and
only recently~\citet{kumar2019} have presented estimators for learning more general
graphical structures such as $k$-component, bipartite, and $k$-component bipartite. However,
one major shortcoming in~\citep{kumar2019}
is the lack of constraints on the degrees of the nodes. As we show in this work, the ability to control the degrees of the nodes
is key to avoid trivial solutions while learning $k$-component graphs. 

Recently, \citet{feiping2016,kumar2019,kumar20192} proposed optimization programs for learning the class of $k$-component graphs,
as such class is an appealing model for clustering tasks due to the spectral properties of the Laplacian matrix.
From a financial perspective, clustering financial time-series, such as stock return data, has been an active research
topic~\citep{mantegna1999, dose2005, gautier2016, marti2017, marti2017a}. However, these works rely primarily on
hierarchical clustering techniques and on the assumption that the underlying graph has a tree structure,
which does bring advantages due to its hierarchical clustering properties, but also have been shown to be
unstable~\citep{carlsson2010, lemieux2014, marti2015} and not suitable when the data is not Gaussian distributed~\citep{donnat2016}.
In this work, on the contrary, we tackle the problem of clustering stocks from a probabilistic perspective, similarly to the
approach layed out by~\citet{kumar20192, kumar2019}, where the Laplacian matrix of a $k$-component graph is assumed to model the pairwise
conditional correlations between stocks. A crucial advantage of this approach is that we can consider more realistic probabilistic
assumptions such as heavy tails.

In practice, prior information about clusters of stocks is available via sector classification systems such as
the Global Industry Classification Standard (GICS)~\citep{gics2006,msci2018} or the
Industry Classification Benchmark (ICB)~\citep{schreiner2019}.
However, more often than not, stocks have impacts on multiple industries, \textit{e.g.}, the evident case of technology
companies, such as Amazon, Apple, Google, and Facebook, whose influence on prices affect stocks not only in their own sector, but spans across multiple sectors.
One reason for this phenomena is the myriad of services offered by those companies,
resulting in challenges to precisely pin point which stock market sector they should belong to.

Motivated by practical applications in finance such as clustering of financial instruments and network estimation,
we investigate the problem of learning graph matrices whose structure follow that of a Laplacian
matrix of an undirected weighted graph.

The main contributions of our paper include:
\begin{enumerate}
  \item As far as the authors are aware of, we for the first time provide interpretations for Laplacian constraints
  of graphs from the perspective of stock market data. Those interpretations naturally lead to meaningful and intuitive guidelines
  on the data pre-processing required for learning graphs from financial data. 
  \item We show that rank constraints alone, a practice often used by state-of-the-art methods,
  are not sufficient to learn non-trivial $k$-component graphs. We achieve learning of $k$-component graphs without isolated nodes
  by leveraging linear constraints on the node degrees of the graph.
  \item We propose novel formulations to learn $k$-component graphs and heavy-tailed graphs, which are solved via carefully
  designed Alternating Direction Method of Multipliers (ADMM) algorithms. In addition, we establish theoretical convergence
  guarantees for the proposed algorithms along with experiments on their empirical convergence.
  The proposed algorithms can be easily extended to account for additional linear constraints on the graph weights.
  \item We present extensive practical results that showcase the advantage of the operational assumptions used in the proposed
  algorithms when compared to state-of-the-art methods. Along with the methods proposed in this paper,
  we release an $\mathsf{R}$ package, called \href{https://github.com/mirca/fingraph}{$\mathsf{fingraph}$ \faGithub}, 
  containing fast, unit-tested code that implements the proposed algorithms and it is publicly available at: \url{https://github.com/mirca/fingraph}.
\end{enumerate}

\subsection{Notation}
The reals, nonnegative reals, and positive reals fields are denoted as $\mathbb{R}$, $\mathbb{R}_{+}$, and $\mathbb{R}_{++}$, respectively.
We use the abbreviation iff to denote ``if and only if''.
Scalars and real-valued random variables are denoted by lowercase roman letters like $x$.
Matrices (vectors) are denoted by bold, italic, capital (lowercase) roman
letters like $\bm{X}$, $\bm{x}$. Vectors are assumed to be column vectors.
The $(i,j)$ element of a matrix $\bm{X} \in \mathbb{R}^{n\times p}$ is
denoted as $X_{ij}$. The $i$-th row (column) of $\bm{X}$ is denoted as $\bm{x}_{i,*} \in \mathbb{R}^{p\times 1}$
($\bm{x}_{*,i} \in \mathbb{R}^{n\times 1}$).
The $i$-th element of a vector $\bm{x}$ is denoted as $x_i$. The transpose of $\bm{X}$ is denoted as $\bm{X}^\top$.
The identity matrix of order $p$ is denoted as $\bm{I}_p$.
The Moore-Penrose inverse of a matrix $\bm{X}$ is denoted as $\bm{X}^\dagger$.
The trace of a square matrix, \textit{i.e.}, the sum of elements on the principal diagonal, is denoted as $\mathsf{tr}(\bm{X})$.
The inner product between two matrices $\bm{X}$, $\bm{Y}$ is denoted as
$\langle \bm{X}, \bm{Y} \rangle \triangleq \mathsf{tr}\left(\bm{X}^\top\bm{Y}\right)$. 
The element-wise sum of the absolute values and the Frobenius norm of a matrix
$\bm{X}$ are denoted as $\norm{\bm{X}}_{1} = \sum_{ij}\vert X_{ij}\vert$ and $\norm{\bm{X}}_{\text{F}} = \sqrt{\mathsf{tr}\left(\bm{X}^\top\bm{X}\right)}$, respectively.
For $\bm{x} \in \mathbb{R}^p$, $\Vert \bm{x}\Vert_2$ stands for the usual Euclidean norm of $\bm{x}$.
If $\bm{A}$ is a symmetric matrix, $\lambda_\mathsf{max}(\bm{A})$ denotes the maximum eigenvalue of $\bm{A}$.
Let $\bm{A}$, $\bm{B}$ be two self-adjoint matrices. We write $\bm{A} \succeq \bm{B}$ ($\bm{A} \succ \bm{B}$)
iff $\bm{A} - \bm{B}$ is nonnegative (positive) definite.
The symbols $\mathbf{1}$ and $\mathbf{0}$ denote the all ones and zeros vectors of appropriate dimension, respectively.
The operator $\mathsf{diag}: \mathbb{R}^{p\times p} \rightarrow \mathbb{R}^{p}$
extracts the diagonal of a square matrix.
The operator $\mathsf{Diag}: \mathbb{R}^{p} \rightarrow \mathbb{R}^{p\times p}$
creates a diagonal matrix with the elements of an input vector along its diagonal.
$(\bm{x})^+$ denotes the projection of $\bm{x}$ onto the nonnegative orthant, \textit{i.e.},
the elementwise maximum between $\mathbf{0}$ and $\bm{x}$.

\section{Background and Related Works}

An undirected, weighted graph is usually denoted as a triple $\mathcal{G} = \left(\mathcal{V}, \mathcal{E}, \bm{W}\right)$,
where $\mathcal{V} = \left\{1, 2, \dots, p\right\}$ is the vertex (or node) set,
$\mathcal{E} \displaystyle \subseteq \left\{\left\{u, v\right\}: u,v \in \mathcal{V}\right\}$ is the edge set, that
is, a subset of the set of all possible unordered pairs of $p$ nodes
such that $\{u, v\} \in \mathcal{E}$ iff nodes $u$ and $v$ are connected.
$\bm{W} \in \mathbb{R}_{+}^{p\times p}$ is the symmetric weighted adjacency matrix that satisfies
$W_{ii} = 0, W_{ij} > 0 ~\text{iff}~ \{i,j\} \in \mathcal{E} ~\text{and} ~W_{ij} = 0,~\text{otherwise}$.
We denote a graph as a 4-tuple $\mathcal{G} = \left(\mathcal{V}, \mathcal{E}, \bm{W}, f_t\right)$, where
$f_{t} : \mathcal{V} \rightarrow \left\{1, 2, \dots, t \right\}$ is a function that associates
a single type (label) to each vertex of the graph, where $t$ is the number of possible types.
This extension is necessary for computing certain graph properties of practical interest such as graph modularity.
We denote the number of elements in $\mathcal{E}$ by $\vert\mathcal{E}\vert$.
The combinatorial, unnormalized graph Laplacian matrix $\bm{L}$ is defined, as usual, as $\bm{L} \triangleq \bm{D} - \bm{W}$, where
$\bm{D} \triangleq \Diag(\bm{W}\mathbf{1})$ is the degree matrix.

An Improper Gaussian Markov Random Field (IGMRF)~\citep{rue2005, slawski2014}
of rank $p-k$, $k \geq 1$, is denoted as a $p$-dimensional, real-valued, Gaussian
random variable $\bm{x}$ with mean vector $\mathbb{E}\left[\bm{x}\right] \triangleq \boldsymbol{\mu}$ and rank-deficient precision
matrix $\mathbb{E}\left[(\bm{x} - \boldsymbol{\mu})(\bm{x} - \boldsymbol{\mu})^\top\right]^\dagger \triangleq \boldsymbol{\Xi}$.
The probability density function of $\bm{x}$ is then given as
\begin{equation}
  p(\bm{x}) \propto \sqrt{\mathrm{det}^*\left(\boldsymbol\Xi\right)}
  \exp\left\{-\dfrac{1}{2}(\bm{x} - \boldsymbol\mu)^\top\boldsymbol\Xi(\bm{x} - \boldsymbol\mu)\right\},
\end{equation}
where
$\mathrm{det}^*(\boldsymbol{\Xi})$ is the pseudo (also known as generalized) determinant of $\boldsymbol{\Xi}$, \textit{i.e.},
the product of its positive eigenvalues~\citep{knill2014}.

The data generating process is assumed to be a zero-mean, IGMRF $\bm{x} \in \mathbb{R}^{p}$, 
such that $x_i$ is the random variable generating a signal measured at node $i$,
whose rank-deficient precision matrix is modeled as a graph Laplacian matrix.
This model is also known as Laplacian constrained Gaussian Markov Random Field (LGMRF)~\citep{ying2020nips}.
Assume we are given $n$ observations of $\bm{x}$, \textit{i.e.}, 
$\bm{X} = \left[\bm{x}^\top_{1,*}, \bm{x}^\top_{2,*}, \dots, \bm{x}^\top_{n,*}\right]^\top$, $\bm{X} \in \mathbb{R}^{n\times p}$,
$\bm{x}_{i,*} \in \mathbb{R}^{p\times 1}$.
The goal of graph learning algorithms is to learn a Laplacian matrix, or equivalently an adjacency matrix,
given only the data matrix $\bm{X}$, \textit{i.e.}, often without any knowledge of $\mathcal{E}$ and $f_t$.

To that end, the classical penalized Maximum Likelihood Estimator (MLE) of the Laplacian-constrained precision matrix of $\bm{x}$,
on the basis of the observed data $\bm{X}$, may be formulated as the following optimization program:
\begin{equation}
\begin{array}{cl}
  \underset{\bm{L} \succeq \mathbf{0}}{\mathsf{minimize}} & \mathsf{tr}\left(\bm{L}\bm{S}\right)
  - \log\mathrm{det}^{*}\left(\bm{L}\right) + h_{\boldsymbol\alpha}(\bm{L}),\\
  \mathsf{subject~to} & \bm{L}\mathbf{1} = \mathbf{0},~L_{ij} = L_{ji} \leq 0,
\end{array}
\label{eq:laplacian-learning}
\end{equation}
where $\bm{S}$ is a similarity matrix, \textit{e.g.}, the sample covariance (or correlation) matrix $\bm{S} \propto \bm{X}^\top\bm{X}$,
and $h_{\boldsymbol\alpha}(\bm{L})$ is a regularization function, with hyperparameter vector $\boldsymbol\alpha$,
to promote certain properties on $\bm{L}$, such as sparsity or low-rankness.

Problem~\eqref{eq:laplacian-learning} is a fundamental problem in the graph signal processing field
that has served as a cornerstone for many extensions, primarily those involving the inclusion
of structure onto $\bm{L}$~\citep{egilmez2017,pavez2018,kumar20192,kumar20193, kumar2019}.
Even though Problem~\eqref{eq:laplacian-learning} is convex, provided we assume a convex choice for $h_{\boldsymbol\alpha}(\cdot)$, it is not
adequate to be solved by Disciplined Convex Programming (DCP) languages, such as $\mathsf{cvxpy}$~\citep{diamond2016cvxpy},
particularly due to
scalability issues related to the computation of the term $\log\mathrm{det}^*(\bm{L})$~\citep{egilmez2017, zhao2019}. Indeed, recently,
considerable efforts have been
directed towards the design of scalable, iterative algorithms based on Block Coordinate Descent (BCD)~\citep{wright2015},
Majorization-Minimization (MM)~\citep{sun2017}, and ADMM~\citep{boyd2011}
to solve Problem~\eqref{eq:laplacian-learning} in an efficient fashion, \textit{e.g.},
\citep{egilmez2017}, \citep{zhao2019}, \citep{kumar2019}, and \citep{ying2020nips}, just to name a few.

To circumvent some of those scalability issues
related to the computation of the term $\log\mathrm{det}^*(\bm{L})$,
~\citet{lake2010} proposed the following relaxed version with an $\ell_1$-norm penalization:
\begin{equation}
  \begin{array}{cl}
    \underset{\bm{\tilde{L}} \succ \mathbf{0}, \bm{W}, \sigma > 0}{\mathsf{maximize}} & - \mathsf{tr}\left(\bm{\tilde{L}}\bm{S}\right)
    + \log\mathrm{det}\left(\bm{\tilde{L}}\right) - \alpha\Vert \bm{W}\Vert_{1},\\
    \mathsf{subject~to} & \bm{\tilde{L}} = \Diag(\bm{W}\mathbf{1}) - \bm{W} + \sigma\bm{I}_{p}, \\
                        & \mathsf{diag}(\bm{W}) = \mathbf{0}, ~ W_{ij} = W_{ji} \geq 0, ~\forall~ i,j \in \{1, 2, ..., p\}.
  \end{array}
  \label{eq:lake2010}
\end{equation}

In words, Problem~\eqref{eq:lake2010} relaxes the original problem by forcing the precision matrix
to be positive definite through the introduction of the term $\sigma\bm{I}_{p}$, which bounds the
minimum eigenvalue of $\bm{\tilde{L}}$ to be at least $\sigma$, and thus the generalized determinant
can be replaced by the usual determinant. 
Although the technical issues related to the generalized determinant have been seemingly dealt with
(albeit in an indirect way), in this formulation, there are twice
as many variables to be estimated, which turns out to be prohibitive when designing practical, scalable algorithms,
and the applicability of $\mathsf{cvx}$, like it was done in ~\citep{lake2010}, is only possible in small scale ($p \approx 50$) scenarios.

In order to solve this scalability issue,~\citet{moghaddam2016} and \citet{egilmez2017} proposed
customized BCD algorithms~\citep{shwartz2011, ankan2013, wright2015}
to solve Problem~\eqref{eq:laplacian-learning}
assuming an $\ell_1$-norm regularization, \textit{i.e.},
$h_{\alpha}(\bm{L}) \triangleq \alpha \sum_{i\neq j}\vert\bm{L}_{ij}\vert = -\alpha \sum_{i\neq j}\bm{L}_{ij}$,
in order to promote sparsity on the resulting estimated Laplacian matrix.
However, as recently shown by~\citet{ying2020nips, ying2020}, in contrast to common practices,
the $\ell_1$-norm penalization surprisingly leads to denser graphs to the point that,
for a large value of $\alpha$, the resulting graph will be fully connected
with uniformly distributed graph weights.

On the other hand, due to such nuisances involved in dealing with the term $\log\mathrm{det}^*(\bm{L})$,
several works departed from the LGMRF formulation altogether. Instead, they focused on the assumption that the underlying signals in a graph are smooth~\citep{kalofolias2016, dong2016,chepuri2017}.
In its simplest form, learning a smooth graph from a data matrix $\bm{X} \in \mathbb{R}^{n \times p}$ is tantamount to finding an adjacency matrix $\bm{W}$ that minimizes the
Dirichlet energy, \textit{i.e.},
\begin{equation}
  \begin{array}{cl}
   \underset{\bm{W}}{\mathsf{minimize}} &
  \frac{1}{2}\sum_{i,j}W_{ij}\norm{\bm{x}_i - \bm{x}_j}^2_2, \\
  \mathsf{subject~to} & W_{ij} = W_{ji} \geq 0, ~ \mathsf{diag}(\bm{W}) = \mathbf{0}.
  \end{array}
  \label{eq:signal-smoothness1}
\end{equation}

Problem~\eqref{eq:signal-smoothness1} can also be equivalently expressed in terms of the Laplacian matrix:
\begin{equation}
  \begin{array}{cl}
   \underset{\bm{L} \succeq \mathbf{0}}{\mathsf{minimize}} &
   \mathsf{tr}\left(\bm{X}\bm{L}\bm{X}^\top\right),\\
  \mathsf{subject~to} & \bm{L}\mathbf{1} = \mathbf{0}, L_{ij} = L_{ji} \leq 0.
  \end{array}
  \label{eq:signal-smoothness2}
\end{equation}

In order for Problems~\eqref{eq:signal-smoothness1} and~\eqref{eq:signal-smoothness2} to be well-defined, \textit{i.e.},
to avoid the trivial solution $\bm{W} = \mathbf{0}$ $(\bm{L} = \mathbf{0})$,
several constraints have been proposed in the literature.
For instance,~\citet{dong2016} proposed one of the first estimators for
graph Laplacian as the following nonconvex optimization program:
\begin{equation}
  \begin{array}{cl}
    \underset{\bm{L} \succeq \mathbf{0}, \bm{Y} \in \mathbb{R}^{n\times p}}{\mathsf{minimize}} &
    \norm{\bm{X} - \bm{Y}}^{2}_{\mathrm{F}} + \alpha \mathsf{tr}\left(\bm{Y}\bm{L}\bm{Y}^\top\right) + \eta\norm{\bm{L}}^2_{\text{F}},\\
    \mathsf{subject~to} & \bm{L}\mathbf{1} = \mathbf{0},~L_{ij} = L_{ji} \leq 0,
  ~\mathsf{tr}\left(\bm{L}\right) = p,
  \end{array}
  \label{eq:signal-smoothness-dong}
\end{equation}
where the constraint $\mathsf{tr}\left(\bm{L}\right) = p$ is imposed to
fix the sum of the degrees of the graph, and $\alpha$ and $\eta$ are positive, real-valued hyperparameters that control
the amount of sparsity in the estimated graph. More precisely, for a given fixed $\alpha$,
increasing (decreasing) $\eta$ leads to sparser (denser) graphs. 
\citet{dong2016} adopted an iterative alternating minimization scheme in order to
find an optimal point of Problem~\eqref{eq:signal-smoothness-dong}, whereby at each iteration one of the variables is fixed
while the solution is found for the other variable. However,
the main shortcoming of formulation~\eqref{eq:signal-smoothness-dong}
is that it does not scale well for big data sets due to the
update of the variable $\bm{Y} \in \mathbb{R}^{n\times p}$, which scales as a function of the number of observations
$n$. In some graph learning problems in financial markets, the number of price recordings may be orders of magnitude larger
than the number of nodes (financial assets), particularly in high frequency trading scenarios~\citep{kirilenko2017}.

Yet following the smooth signal assumption, \citet{kalofolias2016} proposed a convex formulation as follows:
\begin{equation}
  \begin{array}{cl}
    \underset{\bm{W}}{\mathsf{minimize}}&
    \frac{1}{2}\tr\left({\bm{W}\bm{Z}}\right)
    -\alpha\mathbf{1}^\top\log(\bm{W}\mathbf{1}) +
    \frac{\eta}{2}\norm{\bm{W}}^{2}_{\mathrm{F}}, \\
    \mathsf{subject~to} & W_{ij} = W_{ji} \geq 0,~\mathsf{diag}(\bm{W}) = \mathbf{0},
  \end{array}
  \label{eq:smooth_static_graph}
\end{equation}
where $Z_{ij} \triangleq \norm{\bm{x}_{*, i} - \bm{x}_{*, j}}^2_2$, $\alpha$ and $\eta$ are positive, real-valued hyperparameters that control
the amount of sparsity in the estimated graph, with the same interpretation as in Problem~\eqref{eq:signal-smoothness-dong},
and $\log(\bm{W}\mathbf{1})$, assumed to be evaluated element-wise, is a regularization term added to avoid the degrees of the graph from becoming zero.

Problem~\eqref{eq:smooth_static_graph} is convex and can be solved via primal-dual, ADMM-like
algorithms~\citep{komodakis2015}.
It can be seen that the objective function in Problem~\eqref{eq:smooth_static_graph} is actually
an approximation of that of Problem~\eqref{eq:laplacian-learning}. From Hadamard's inequality~\citep{rozanski2017}, we have
\begin{equation}
\log \mathrm{det}^*(\bm{L}) \leq  \log\prod_{i=1}^{p}L_{ii} = \sum_{i=1}^p\log L_{ii} = \mathbf{1}^\top\log(\bm{W}\mathbf{1}).
\end{equation}
Therefore, Problem~\eqref{eq:smooth_static_graph}
can be thought of as an approximation of the penalized maximum likelihood estimator with a Frobenius
norm regularization in order to bound the graph weights.

The graph learning formulations previously discussed are only applicable to learn connected graphs.
Learning graphs with a \textit{prior} structure, \textit{e.g.}, $k$-component graphs,
poses a considerably higher challenge, as the dimension of the nullspace of the Laplacian matrix $\bm{L}$ is equal to the number
of components of the graph~\citep{chung1997}. Therefore, algorithms have to ensure that the algebraic multiplicity of the the
0 eigenvalue is equal to $k$. However, the latter condition is not sufficient to rule out the space of ``trivial" $k$-component graphs,
\textit{i.e.}, graphs with isolated nodes. In addition to the rank (or nullity) constraint on $\bm{L}$,
it is necessary to specify a constraint on the degrees of the graph.

Recent efforts have been made to introduce theoretical results from spectral graph theory~\citep{chung1997} into
practical optimization programs.
For instance, a formulation to estimate $k$-component graphs based on the smooth signal approach was proposed in~\citep{feiping2016}.
More precisely, they proposed the Constrained Laplacian-rank ($\mathsf{CLR}$) algorithm, which works in two-stages. On the first stage it estimates a connected graph using, \textit{e.g.},
the solution to Problem~\eqref{eq:smooth_static_graph}, and then on the second stage it heuristically projects the graph onto the set
of Laplacian matrices of dimension $p$ with rank $p-k$, where $k$ is the given number of graph components.
This approach is summarized in the following two stages:
\begin{enumerate}
  \item Obtain an initial affinity matrix $\bm{A}^\star$ as the optimal value of: 
\begin{equation}
  \begin{array}{cl}
    \underset{\bm{A}}{\mathsf{minimize}}&
    \frac{1}{2}\tr\left({\bm{A}\bm{Z}}\right) +
    \frac{\eta}{2}\norm{\bm{A}}^{2}_{\text{F}}, \\
    \mathsf{subject~to} & \mathsf{diag}\left(\bm{A}\right) = \bm{0},~\bm{A}\mathbf{1} = \mathbf{1},~{A}_{ij} \geq 0 ~ \forall i,j
  \end{array}
  \label{eq:stage1}
\end{equation}
\item Find a projection of $\bm{A}^\star$ such that
$\bm{L}^\star = \Diag(\frac{\bm{B}^{\star\top} + \bm{B}^\star}{2}) - \frac{\bm{B}^{\star\top} + \bm{B}^\star}{2}$
has rank $p-k$:
\begin{equation}
\begin{array}{cl}
  \underset{\bm{B}, \bm{L}\succeq\mathbf{0}}{\mathsf{minimize}} 
  & \Vert\bm{B} - \bm{A}^\star\Vert_\text{F}^2,\\
  \mathsf{subject~to} & \bm{B}\mathbf{1} = \mathbf{1},~\mathsf{rank}(\bm{L}) = p-k,\\
  & \bm{L} = \Diag(\frac{\bm{B}^{\top} + \bm{B}}{2}) - \frac{\bm{B}^{\top} + \bm{B}}{2}
  \end{array}
\end{equation}
where $k$ is the desired number of graph components.
  \label{eq:stage2}
\end{enumerate}

Spectral constraints on the Laplacian matrix are an intuitive way to recover $k$-component graphs as the multiplicity
of its zero eigenvalue, \textit{i.e.}, the nullity of $\bm{L}$, dictates the number of components of a graph. 
The first framework to impose structures on the estimated Laplacian matrix under the LGMRF model
was proposed by Kumar~\textit{et al.}~\citep{kumar20192, kumar2019}, through the use of spectral constraints, as follows:
\begin{equation}
  \begin{array}{cl}
    \underset{\bm{L}, \bm{U}, \boldsymbol\lambda}{\mathsf{minimize}} & \mathsf{tr}\left(\bm{L}\bm{S}\right)
    - \displaystyle\sum_{i=1}^{p-k}\log\left(\lambda_{i}\right) + \frac{\eta}{2} \norm{\bm{L} - \bm{U}\mathsf{Diag}(\boldsymbol\lambda)\bm{U}^\top}^{2}_{\mathrm{F}},\\
    \mathsf{subject~to} & \bm{L} \succeq \mathbf{0},~\bm{L}\mathbf{1} = \mathbf{0},~L_{ij} = L_{ji} \leq 0,\\
    & \bm{U}^\top\bm{U} = \bm{I}, ~\bm{U} \in \mathbb{R}^{p\times (p-k)}, \\
    & \boldsymbol{\lambda}\in\mathbb{R}^{p-k}_{+},~ c_1 < \lambda_{1} < \dots < \lambda_{p-k} < c_2.
  \end{array}
  \label{eq:gmrf}
\end{equation}
where the term $\frac{\eta}{2} \norm{\bm{L} - \bm{U}\mathsf{Diag}(\boldsymbol\lambda)\bm{U}^\top}^{2}_{\mathrm{F}}$, often called
spectral regularization, is added as a penalty term to indirectly promote $\bm{L}$ to have the same rank as
$\bm{U}\mathsf{Diag}(\boldsymbol\lambda)\bm{U}^\top$,
\textit{i.e.}, $p-k$, $k$ is the number of components of the graph to be chosen \textit{a priori}, and $\eta > 0$
is a hyperparameter that controls the penalization on the spectral factorization of $\bm{L}$, and $c_1$ and $c_2$
are positive, real-valued constants employed to promote bounds on the eigenvalues of $\bm{L}$.

Note that Problem~\eqref{eq:gmrf} learns a $k$-component graph without the need for a two-stage algorithm.
However, a clear caveat of this formulation is that it does not control the degrees of the nodes in the graph,
which may result in a trivial solution that contains isolated nodes, turning out not to be useful for
clustering tasks especially when applied to noisy data sets or to data sets that are not significantly Gaussian distributed.
In addition, choosing values for hyperparameters $\eta$, $c_1$, and $c_2$, is often an intricate task.

\section{Interpretations of Graph Laplacian Constraints for Financial Data}
\label{sec:interpretations}
In this section, we present novel interpretations and motivations for the Laplacian constraints
from the point of view of graphs learned from financial markets data. Those interpretations
lead to paramount guidelines that users may benefit from when applying graph learning algorithms
in financial problems. In addition, we provide sound justifications for the usage of the Laplacian
matrix as a model for the inverse correlation matrix of financial assets.

Graphical representations of data are increasingly important tools in financial signal processing
applied to uncover hidden relationships between variables~\citep{deprado2016, marti2017}.
In financial markets, one is generally interested
in learning quantifiable dependencies among assets and how to leverage them
into practical scenarios such as portfolio design and crisis forecasting.

Arguably, one of the most successful methods to estimate sparse graphs is the Graphical Lasso~\citep{friedman2008, banerjee2008},
modeled as the solution to the following convex optimization problem:
\begin{equation}
  \begin{array}{cl}
    \underset{\boldsymbol{\Sigma}^{-1} \succ \mathbf{0}}{\mathsf{minimize}} & \mathsf{tr}\left(\boldsymbol{\Sigma}^{-1}\bm{S}\right)
    - \log\mathrm{det}\left(\boldsymbol{\Sigma}^{-1}\right) + \alpha \Vert\boldsymbol{\Sigma}^{-1}\Vert_{1},
  \end{array}
  \label{eq:ggm}
\end{equation}
where $\bm{S} \propto \bm{X}^\top\bm{X}$ is an empirical covariance (or correlation) matrix.
The solution to Problem~\eqref{eq:ggm} can be efficiently computed via
the well-known glasso algorithm~\citep{friedman2008,sustik2012}.

While this model has been extremely successful in numerous fields, imposing a Laplacian structure onto $\boldsymbol{\Sigma}^{-1}$
brings significant benefits in financial data settings. To see that, we empirically evaluate
the out-of-sample log-likelihood using three models:
\begin{enumerate}
  \item Graphical Lasso as defined in~\eqref{eq:ggm}.
  \item Multivariate Totally Positive of Order 2 (MTP$_{2}$), given as 
  \begin{equation}
    \begin{array}{cl}
      \underset{\substack{\boldsymbol{\Sigma}^{-1} \succ \mathbf{0}, \\ \boldsymbol{\Sigma}^{-1}_{ij} \leq 0, \forall i \neq j}}{\mathsf{minimize}} & \mathsf{tr}\left(\boldsymbol{\Sigma}^{-1}\bm{S}\right)
      - \log\mathrm{det}\left(\boldsymbol{\Sigma}^{-1} \right) + \alpha \Vert\boldsymbol{\Sigma}^{-1}\Vert_{1}.
    \end{array}
    \label{eq:MTP_2}
  \end{equation}
  \item Laplacian GMRF (LGMRF) as defined in~\eqref{eq:laplacian-learning}, without regularization.\footnote{
    We do not regularize the LGMRF model with the $\ell_1$-norm as it leads to denser graphs~\citep{ying2020, ying2020nips}.}
\end{enumerate}
  
We collect log-returns data from $p = 50$ randomly chosen stocks from the S\&P500 index during the period
between Jan. 4th 2005 to Jul. 1st 2020 totalling $n = 3900$ observations. We subdivide the observations into
$26$ sequential datasets each of which containing $150$ observations. For the $i$-th dataset, we estimate the 
models~\eqref{eq:ggm} and \eqref{eq:MTP_2} for different values
of the hyperparameter $\alpha$, and compute their log-likelihood using the $(i+1)$-th dataset. We then average the log-likelihood
measurements over the datasets. In this fashion, we can infer how well these models generalize to unseen data.

Figure~\ref{fig:out-of-sample-ll} shows the log-likelihood measurements in this experiment.
We can readily notice that not only the LGMRF model has the higher explanatory power among the considered models,
but it is also the simplest of them, as it does not contain any hyperparameter.

\begin{figure}[!htb]
  \centering
  \includegraphics[scale=0.6]{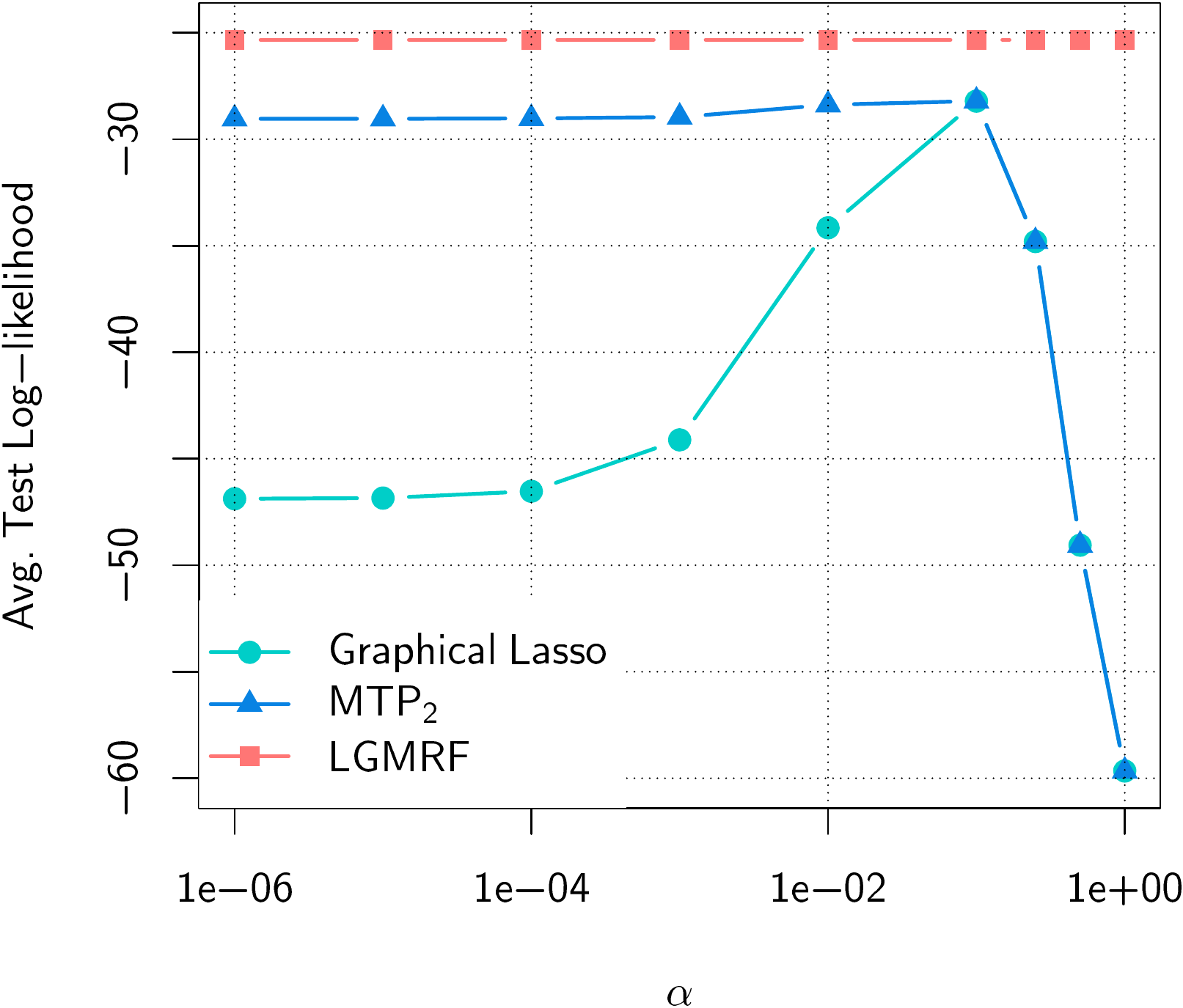}
  \caption{Average log-likelihood in out-of-sample data for different precision
           matrix estimation models as a function of the sparsity promoting hyperparameter $\alpha$.}
  \label{fig:out-of-sample-ll}
\end{figure}

In addition, we plot one instance of the estimated networks from each of the
models. Interestingly, Figure~\ref{fig:glasso-loglike} reveals that Graphical Lasso
estimates most conditional correlations as positive (blue edges),
with only a few negative ones (red edges). In addition, both Graphical Lasso and
$\mathrm{MTP}_2$ (Figure~\ref{fig:mtp2-loglike}) do not clearly uncover strong connections between clearly correlated stocks.
The LGMRF model (Figure~\ref{fig:lap-loglike}), on the other hand, displays vividly the interactions between evidently correlated nodes, \textit{e.g.},
\{$\mathsf{CAH}$ and $\mathsf{MCK}$\} and \{$\mathsf{SIVB}$ and $\mathsf{PBCT}$\},
which are companies in the health care industry and bank holdings, respectively.

\begin{figure}[!htb]
  \captionsetup[subfigure]{justification=centering}
  \centering
  \begin{subfigure}[t]{0.32\textwidth}
      \centering
      \includegraphics[scale=.38]{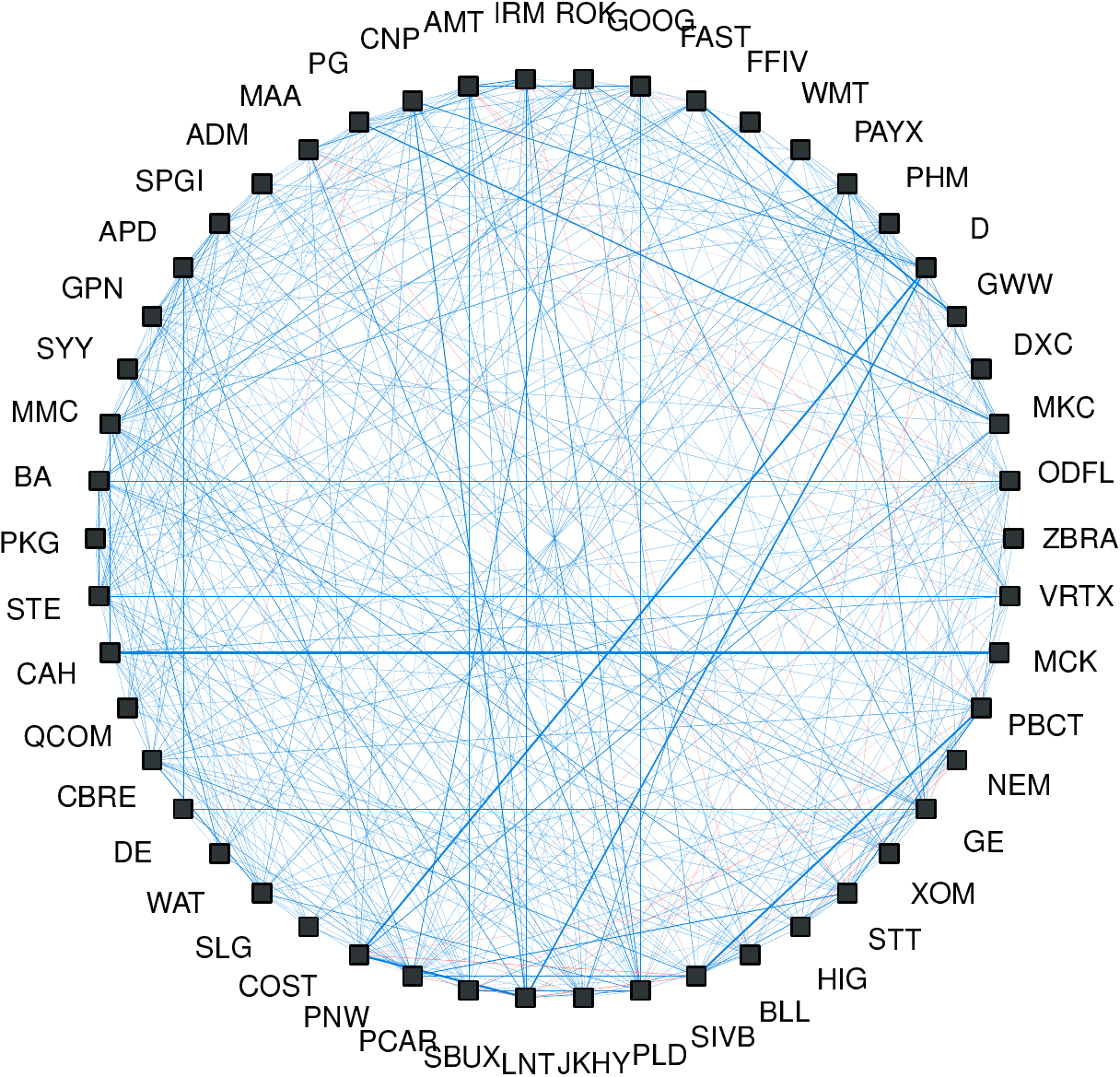}
      \caption{Graphical Lasso.}
      \label{fig:glasso-loglike}
  \end{subfigure}%
  ~
  \begin{subfigure}[t]{0.32\textwidth}
      \centering
      \includegraphics[scale=.38]{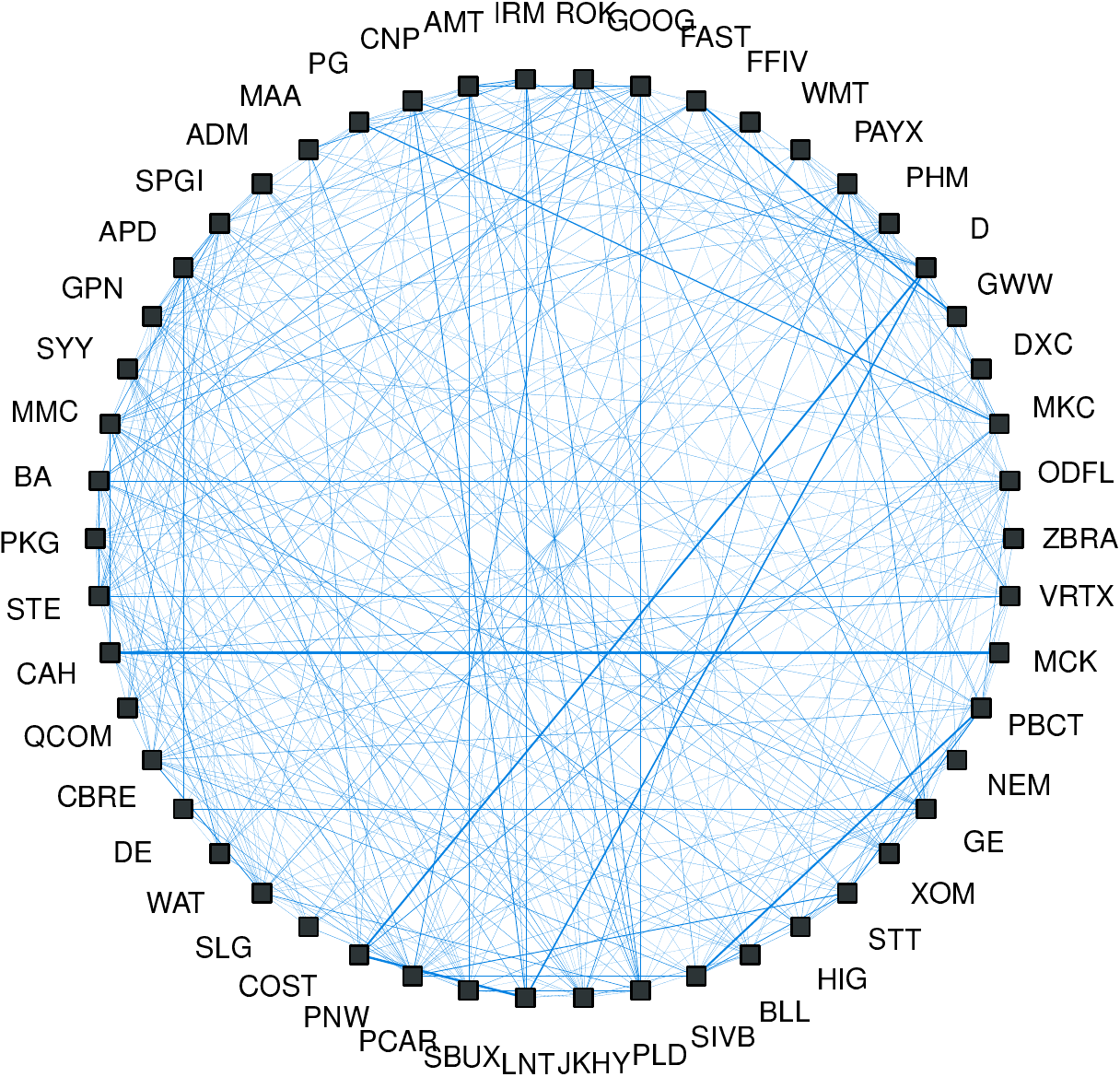}
      \caption{$\mathrm{MTP}_2$.}
      \label{fig:mtp2-loglike}
  \end{subfigure}%
  ~
  \begin{subfigure}[t]{0.32\textwidth}
    \centering
    \includegraphics[scale=.38]{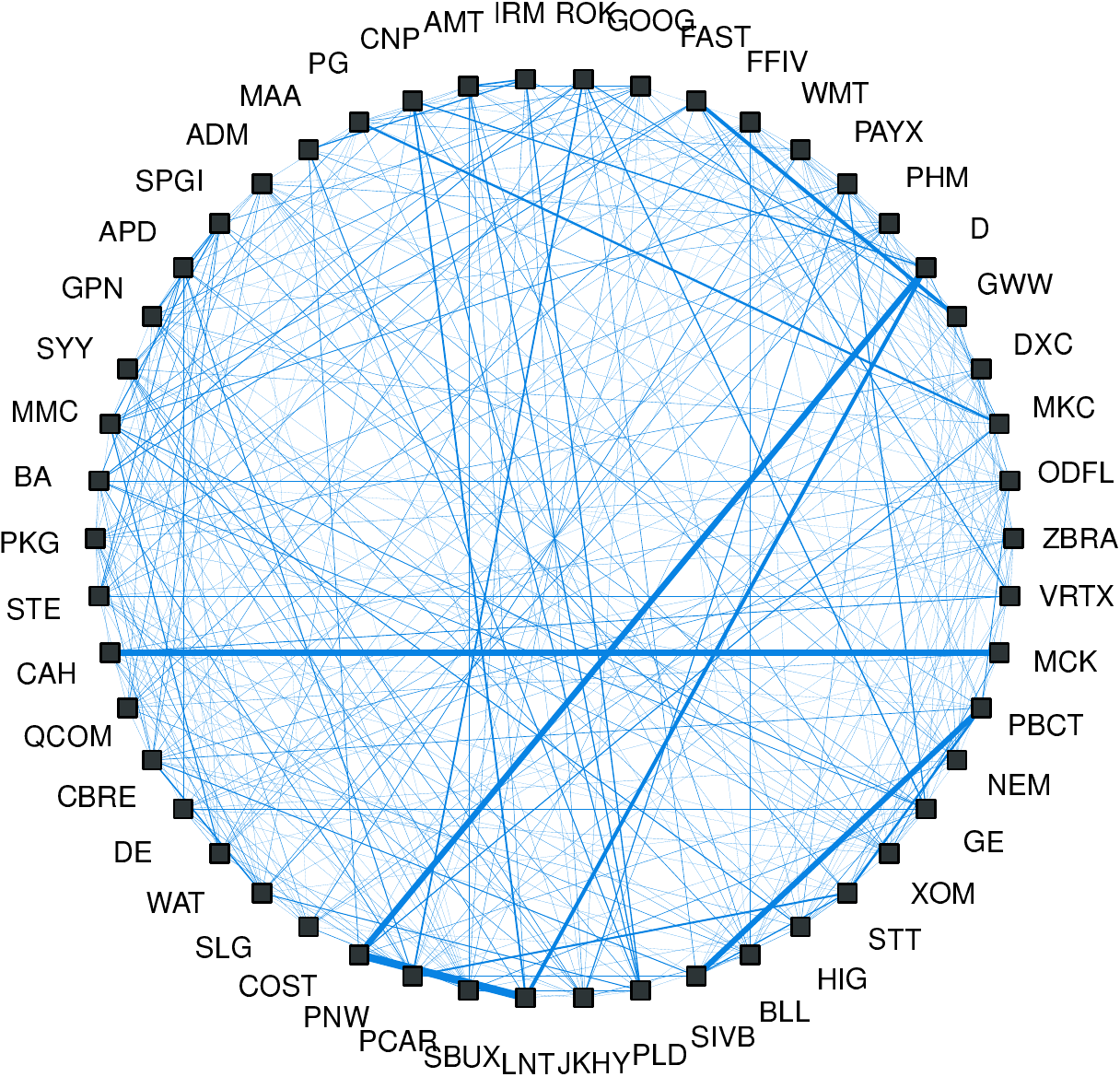}
    \caption{LGMRF.}
    \label{fig:lap-loglike}
  \end{subfigure}%
  \caption{Estimated networks of stocks from (a) Graphical Lasso (b) $\mathrm{MTP}_2$ and (c) LGMRF models
           at their highest average likelihood (Figure~\ref{fig:out-of-sample-ll}, $\alpha = 10^{-1}$).
           The widths of the edges are proportional to the absolute value of the graph weights.
           Blue edges represent positive conditional correlations, while red edges represent negative ones.}
  \label{fig:out-of-sample-graphs}
\end{figure}

From the perspective of the LGMRF model, we would like to estimate a matrix $\bm{L}$ that
enjoys the following two key properties:
\begin{description}
  \item[(P1)] $\bm{L}\mathbf{1} = \mathbf{0}$,
  \item[(P2)] ${L}_{ij} = {L}_{ji} \leq 0 ~\forall ~i \neq j$.
\end{description}

The first property states that the Laplacian matrix $\bm{L}$ is singular and the eigenvector
associated with its zero eigenvalue is given by $a\mathbf{1}, a \in \mathbb{R}$, \textit{i.e.},
the eigenvector is constant along all its components.

Based on the empirical and theoretical discussions about the spectrum of correlation matrices of stock time series~\citep{plerou1999},
we conduct an additional experiment to verify whether the sample inverse correlation matrix of stocks share the aforementioned properties:
we query data from $p = 414$ stocks belonging to the S\&P500 index from January 4th 2005 to June 18th 2020,
totalling $n = 3869$ observations. We then divide this dataset into 19 sequential overlapping datasets each of which containing
$n = 2070$ observations, such that $n/p = 5$. For each dataset, we compute two attributes of the inverse sample
correlation matrix: (i) its condition number, defined as the ratio between its maximum and minimum eigenvalues,
and (ii) the variance of each eigenvector. We observe that the smallest condition number across all datasets
is of the order of $10^{4}$, while the median of the condition numbers is of the order of $10^5$,
indicating that in fact the inverse sample correlation matrices are nearly singular.
In addition, the average variance of the eigenvector associated with the zero eigenvalue is $2\cdot 10^{-4}$,
which is around an order of magnitude smaller than the average variances of any the other eigenvectors, indicating
that it is, in fact, a constant eigenvector. Figure~\ref{fig:constant-eigenvector} illustrates this phenomenon for the
aforementioned dataset, where the constant nature of the market eigenvector ($\mathsf{eigenvector} \#1$)
is clearly observable when compared to the variability of the next two eigenvectors
($\mathsf{eigenvector} \#2$, $\mathsf{eigenvector} \#3$). 
\begin{figure}[!htb]
  \centering
  \includegraphics[scale=0.5]{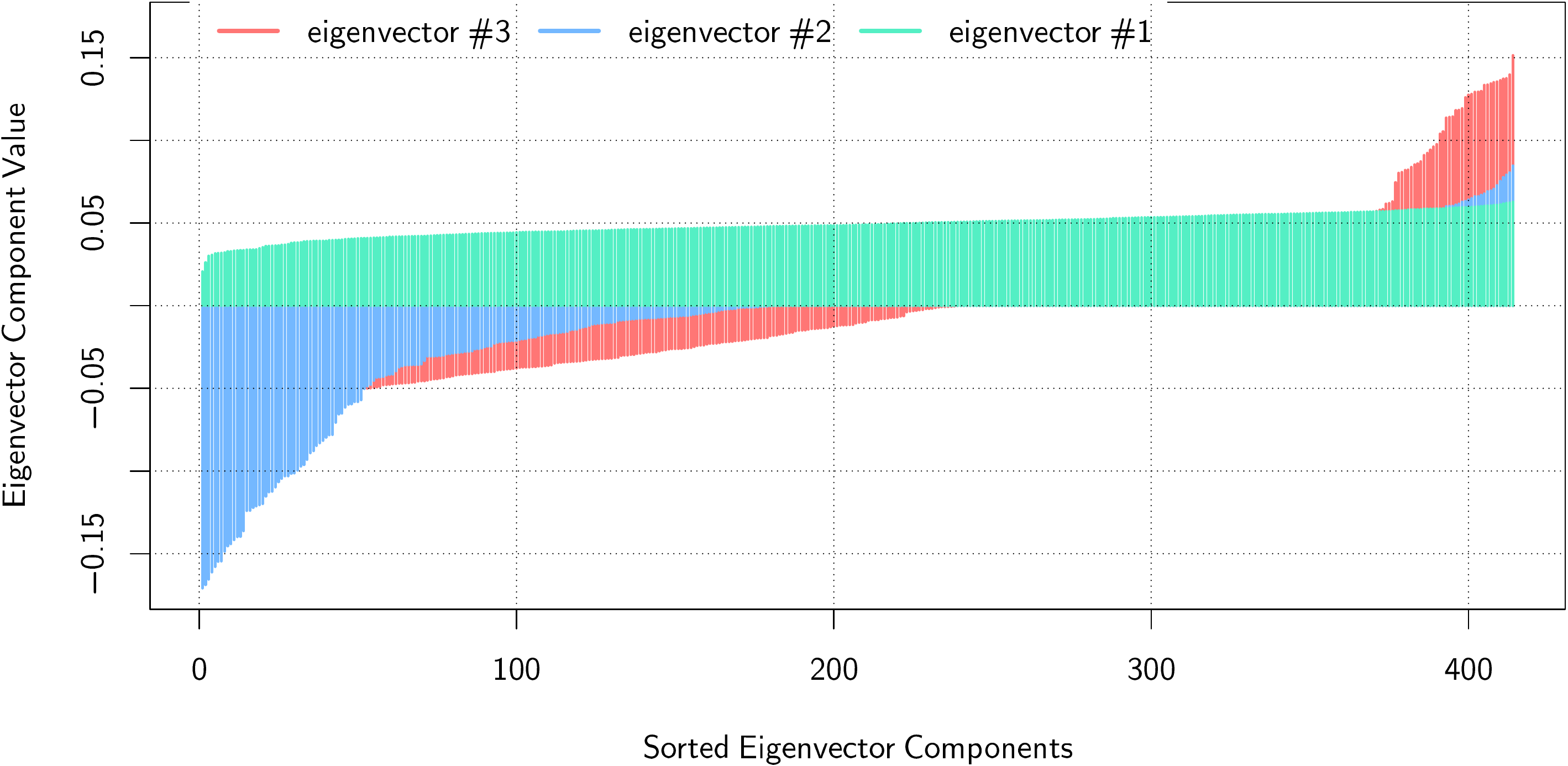}
  \caption{Eigenvectors of the sample correlation matrix of 414 S\&P500 stocks
           corresponding to the largest three eigenvalues over the period between Jan. 2005 to Jun. 2020.
           $\mathsf{eigenvector} \#1$ represents the market factor.  $\mathsf{eigenvector} \#2$ and $\#3$
           are displayed for reference on the expected variability.}
  \label{fig:constant-eigenvector}
\end{figure}

In practice, (P1) implies that signals living in a graph $\mathcal{G}$ have zero graph-mean, \textit{i.e.},
the sum of the graph signals, at a given time, is zero.
From a stock market perspective,
the vector of log-returns of a set of stocks, at a given time $i$, is often assumed to follow a
linear factor model~\citep{sharpe64,fama2004}, \textit{i.e.},
$\bm{x}_{*,i} = \boldsymbol{\beta}x_{\mathsf{mkt}, i} + \boldsymbol\epsilon_i$, where $\bm{x}_{*,i} \in \mathbb{R}^{p\times 1}$ contains the log-returns
of $p$ stocks, $x_{\mathsf{mkt}, i} \in \mathbb{R}$ is the log-return of the market factor,
$\boldsymbol\epsilon_i$ is the vector of idiosyncratic log-returns that is often assumed to be
a Gaussian process with zero mean vector and covariance matrix $\boldsymbol\Psi$, and $\boldsymbol\beta$ is the vector of market factor loadings.
Because $\mathcal{G}$ has zero graph-mean, this implies that a graph designed to accommodate
stock signals, assumed to follow a linear market factor model, will automatically remove the market component
from the learning process of the conditional dependencies among stocks. This is a crucial feature because the
market component would likely be a confounding factor in the estimation of the conditional correlations
due to its strong influence on all the stocks log-returns.

In addition, (P2) together with (P1) implies that $\bm{L}$ is positive semidefinite.
The fact that the off-diagonal entries are symmetric and non-positive means that the Laplacian matrix only represents non-negative conditional
dependencies\footnote{The correlation between any two pair of nodes conditioned on the rest of the graph is given as
$-\frac{L_{ij}}{\sqrt{L_{ii}L_{jj}}}$.}.
This assumption is often met for stock data, as assets are typically positively
dependent~\citep{plerou2002, kazakov2016, agrawal2019, soloff2020, wang2020}.

These two properties along with efficient learning frameworks make the Laplacian-based graphical model
a natural candidate for learning graphs of stock data. As a consequence of using the Laplacian model,
we propose the following guidelines when estimating Laplacian matrices with stock market data:

\begin{itemize}
  \item \textbf{Correlation vs Covariance}: Both the LGMRF and smooth signal approaches rely on the Dirichlet
  energy term $\tr{(\bm{S}\bm{L})} \propto \tr(\bm{W}\bm{Z})$, which quantifies the smoothness of the graph
  signals over the graph weights, where $\bm{S}$ is the sample covariance matrix.
  From the definition of $\bm{Z}$ in~\eqref{eq:smooth_static_graph}, we observe that two perfectly correlated stocks
  but with large Euclidean distances would be translated as largely far apart nodes on the graph. Hence, we advocate the
  use of the sample correlation matrix $\bar{\bm{S}} = \mathsf{Diag}(\bm{S})^{-1/2}\bm{S}\mathsf{Diag}(\bm{S})^{-1/2}$
  (or equivalently scaling the columns of $\bm{X}$ such that they have unit variance)
  in case we would like two highly correlated stocks to have a strong graph connection regardless of their
  individual variances.
  \item \textbf{Removing the market trend}: 
  A widely used and tested model for the returns of the stocks is the linear factor model,
  which explicitly includes the dependency on the market factor:
  $\bm{x}_{*,i} = \boldsymbol{\beta}x_{\mathsf{mkt},i} + \boldsymbol\epsilon_i$.
  Assuming that most of the stocks are heavily dominated by the market index $x_{\mathsf{mkt},i}$, it
  may be convenient to remove that component if we seek to explore the structure of the residual cross-dependency among the
  stocks, $\boldsymbol\epsilon_i$. Thus, an alternative to using the full covariance matrix $\boldsymbol{\Sigma}$ is to use the
  covariance matrix $\boldsymbol{\Psi}$ of the idiosyncratic component.
  However, if one first normalizes each stock, whose variances are $\mathbb{V}(\bm{x}_{*,i}) \approx \beta^2_{i}$,
  we have $\bar{\bm{x}}_{*,i} = \mathbf{1}\bar{x}_{\mathsf{mkt}, i} + \bar{\boldsymbol{\epsilon}}_i$, then
  it turns out that the market factor is automatically removed in the normalized squared distance matrix $\bar{\bm{Z}}$:
  \begin{equation}
  Z_{ij} = \Vert\bar{\bm{x}}_{*,i}-\bar{\bm{x}}_{*,j}\Vert^2_2 =
  \Vert\boldsymbol\bar{\boldsymbol{\epsilon}}_{i}-\boldsymbol\bar{\boldsymbol{\epsilon}}_{j}\Vert^2_2.
\end{equation}
  \item \textbf{Degree control}: Enforcing a rank smaller than $p-1$ on the Laplacian matrix will generate a $k$-component graph,
  which is one desired goal. However, one may get the undesired result of having isolated nodes. One possible strategy to avoid isolated
  nodes is via introducing constraints on the nodes degrees. The LGMRF formulation has the natural penalty term $\log\mathrm{det}^{*}(\bm{L})$
  in the objective, but that does not help in controlling the degrees of the nodes. Instead, some of the graph learning
  formulations from smooth signals include degree control via the constraint $\bm{W}\mathbf{1}=\mathbf{1}$,
  which fixes the degrees of all the nodes to $1$. The regularization term $\mathbf{1}^\top \log(\bm{W}\mathbf{1})$
  also avoids the trivial solution of any degree equals $0$.
  Hence, any graph learning formulation that enforces a $k$-component graph
  (or low-rank Laplacian matrix) should also control the degrees of the nodes to avoid a trivial solution with isolated nodes.
\end{itemize}

\section{Proposed Algorithms}

In this section, we design iterative algorithms for numerous graph learning formulations to
account for $k$-component structures and the heavy-tail nature of financial stock market data.

The proposed algorithms are based on the ADMM~\citep{boyd2011} 
and MM~\citep{ortega2000, sun2017} frameworks. We begin by briefly revisiting ADMM and MM.

\subsection{Alternating Direction Method of Multipliers (ADMM)}

ADMM is a primal-dual framework designed to solve the following class of optimization problems:
\begin{equation}
  \begin{array}{cl}
    \underset{\bm{x}, \bm{z}}{\mathsf{minimize}} & f(\bm{x}) + g(\bm{z}) \\
    \mathsf{subject~to} & \bm{A}\bm{x} + \bm{B}\bm{z} = \bm{c},
  \end{array}
\end{equation}
where $\bm{x} \in \mathbb{R}^n$ and $\bm{z} \in \mathbb{R}^m$ are the optimization variables;
$\bm{A} \in \mathbb{R}^{p \times n}$, $\bm{B} \in \mathbb{R}^{p \times m}$,
and, $\bm{c} \in \mathbb{R}^{p}$ are parameters; and $f$ and $g$ are convex, proper, closed, possibly non-differentiable functions.

The central object in the ADMM framework is the augmented Lagrangian function, which is given as
\begin{align}
  L_{\rho}(\bm{x}, \bm{z}, \bm{y}) = &~ f(\bm{x}) + g(\bm{z}) + \bm{y}^\top(\bm{A}\bm{x} + \bm{B}\bm{z} - \bm{c})
  + \dfrac{\rho}{2}\norm{\bm{A}\bm{x} + \bm{B}\bm{z} - \bm{c}}^2_{2},
\end{align}
where $\rho$ is a penalty parameter.

The basic workflow of the ADMM algorithm is summarized in Algorithm~\ref{alg:ADMM}.

\begin{algorithm}[!htb]
\caption{ADMM framework}
\label{alg:ADMM}
    \SetAlgoLined
    \KwData{$\bm{z}^0$, $\bm{y}^0$, $\bm{A}$, $\bm{B}$, $\bm{c}$, $\rho > 0$}
    \KwResult{$\bm{x}^\star, \bm{z}^\star, \bm{y}^\star$}
    $l \leftarrow 0$\\
    \While{not converged}{
      $\bm{x}^{l+1} \leftarrow \underset{\bm{x} \in \mathcal{X}}{\mathsf{arg min}}~ L_{\rho}\left(\bm{x}, \bm{z}^{l}, \bm{y}^{l}\right)$\\
      $\bm{z}^{l+1} \leftarrow \underset{\bm{z} \in \mathcal{Z}}{\mathsf{arg min}}~ L_{\rho}\left(\bm{x}^{l+1}, \bm{z}, \bm{y}^{l}\right)$\\
      $\bm{y}^{l+1} \leftarrow \bm{y}^{l} + \rho\left(\bm{A}\bm{x}^{l+1} +\bm{B}\bm{z}^{l+1} -\bm{c}\right)$\\
      $i \leftarrow l + 1$
    }
\end{algorithm}

The convergence of ADMM algorithms is attained provided that the following conditions are met:
\begin{enumerate}
  \item $\mathrm{epi}(f) = \{(\bm{x}, t) \in \mathbb{R}^n \times \mathbb{R} : f(\bm{x}) \leq t\}$ and
  $\mathrm{epi}(g) = \{(\bm{z}, s) \in \mathbb{R}^m\times\mathbb{R} : g(\bm{z}) \leq s\}$ are both closed
  nonempty convex sets;
  \item The unaugmented Lagrangian function $L_0$ has a saddle point.
\end{enumerate}
We refer readers to~\citep{boyd2011} where elaborate convergence results are discussed.

\subsection{Majorization-Minimization (MM)}

The MM framework seeks to solve the following general optimization problem:
\begin{equation}
  \begin{array}{cl}
    \underset{\bm{x}}{\mathsf{minimize}} & f(\bm{x}) \\
    \mathsf{subject~to} & \bm{x} \in \mathcal{X},
  \end{array}
\end{equation}
where here we consider $f$ a smooth, possibly non-convex function.

The general idea behind MM is to find a sequence of feasible points $\left\{\bm{x}^{i}\right\}_{i \in \mathbb{N}}$
by minimizing a sequence of carefully constructed global upper-bounds of $f$. The popular expectation-maximization (EM)
algorithm is a special case of MM~\citep{lange2010}.

At point $\bm{x}^i$, we design a continuous global upper-bound function
$g\left(\cdot, \bm{x}^{i}\right): \mathcal{X} \rightarrow \mathbb{R}$ such that
\begin{equation}
  g\left(\bm{x}, \bm{x}^i\right) \geq f(\bm{x}), ~\forall~ \bm{x} \in \mathcal{X}.
\end{equation}

Then, in the minimization step we update $\bm{x}$ as
\begin{equation}
  \bm{x}^{i+1} \in \underset{\bm{x} \in \mathcal{X}}{\mathsf{arg}~\mathsf{min}}~g(\bm{x}, \bm{x}^{i}).
\end{equation}

The global upper-bound function $g(\cdot, \bm{x}^{i})$ must satisfy the following conditions in order
to guarantee convergence:
\begin{enumerate}
  \item $g\left(\bm{x}, \bm{x}^i\right) \geq f(\bm{x}) ~\forall~\bm{x} \in \mathcal{X}$,
  \item $g\left(\bm{x}^i, \bm{x}^i\right) = f\left(\bm{x}^i\right)$,
  \item $\nabla g\left(\bm{x}^i, \bm{x}^i\right) = \nabla f\left(\bm{x}^i\right)$,
  \item $g(\bm{x}, \bm{x}^i)$ is continuous on both $\bm{x}$ and $\bm{x}^i$.
\end{enumerate}

A thorough discussion about MM, along with a significant number of its extensions,
with practical examples, can be found in~\citep{sun2017}.

\subsection{A Reformulation of the Graph Learning Problem}

We formulate the graph learning problem from the LGMRF perspective as the following general optimization program:
\begin{equation}
  \begin{array}{cl}
    \underset{\bm{L} \succeq \mathbf{0}}{\mathsf{minimize}} & \mathsf{tr}\left(\bm{S}\bm{L}\right)
    - \log\mathrm{det}^*\left(\bm{L}\right),\\
    \mathsf{subject~to} & \bm{L} \in \mathcal{C}_{\bm{L}},~ \bm{L}\mathbf{1} = \mathbf{0},~L_{ij} = L_{ji} \leq 0,
  \end{array}
  \label{eq:graph-learning}
\end{equation}
where $\mathcal{C}_{\bm{L}}$ is a set describing additional constraints onto the structure of the estimated Laplacian matrix,
\textit{e.g.}, $\mathcal{C}_{\bm{L}} = \left\{\bm{L} : \mathsf{diag}\left(\bm{L}\right) = d\mathbf{1},~d > 0\right\}$ specifies the
set of $d$-regular graphs.

Now, to split the constraints in Problem~\eqref{eq:graph-learning}, we introduce the following linear transformations:
(a) $\bm{L} = \sL\ww, \ww \in \mathbb{R}^{p(p-1)/2}_{+}$, where $\sL$ is the Laplacian operator
(\textit{cf.} Definition~\eqref{eq:lap-op}) and $\ww$ is the vector of edges weights; and (b) $\bT = \sL\ww$. With this,
we equivalently rewrite Problem~\eqref{eq:graph-learning}
as
\begin{equation}
  \begin{array}{cl}
    \underset{\bm{w} \geq \mathbf{0}, \boldsymbol{\Theta} \succeq \mathbf{0}}{\mathsf{minimize}} & \mathsf{tr}\left(\bm{S}\mathcal{L}\bm{w}\right)
    - \log\mathrm{det}^*\left(\boldsymbol{\Theta}\right),\\
    \mathsf{subject~to} & \boldsymbol{\Theta} = \mathcal{L}\bm{w}, ~\boldsymbol{\Theta} \in \mathcal{C}_{\bT}, ~\bm{w} \in \mathcal{C}_{\bm{w}},
  \end{array}
  \label{eq:lw-admm}
\end{equation}
where $\mathcal{C}_{\bT}$ and $\mathcal{C}_{\bm{w}}$ are sets describing additional constraints onto the structure
of the estimated Laplacian matrix. For example, to estimate connected $d$-regular graphs we can use
$\mathcal{C}_{\boldsymbol\Theta} = \left\{\boldsymbol{\Theta} \in \mathbb{R}^{p \times p} :
\mathsf{rank}(\boldsymbol\Theta) = p-1\right\}$
together with
$\mathcal{C}_{\bm{w}} = \left\{\bm{w} \in \mathbb{R}^{p(p-1)/2}_{+}: \mathfrak{d}\bm{w} = d\mathbf{1}\right\}$,
where $\mathfrak{d}$ is the degree operator (\textit{cf.} Definition~\eqref{eq:deg-op}).

While Problem~\eqref{eq:lw-admm} can be convex for a limited family of graph structures,
convex programming languages, such as $\mathsf{cvxpy}$, have shown to perform poorly even
for considerably small ($p \approx 50$) graphs~\citep{egilmez2017}. Hence, we develop scalable algorithms based on
the ADMM and MM frameworks.

\subsection{Connected Graphs}
\label{subsec:connected-graph}

We first specialize Problem~\eqref{eq:lw-admm} to the class of connected graphs. The rationale for that is twofold:
(1) while this problem has been well studied, we propose a significantly different algorithm than previous
works~\citep{egilmez2017, zhao2019,ying2020nips} by splitting the optimization variables whereby additional constraints
can be easily introduced and handled via ADMM; (2) in addition, the mathematical developments described for this simple class of graphs
will serve as building blocks when we tackle more elaborate classes of graphs such as $k$-component or heavy-tailed.

For connected graphs, we rely on the fact that
$\mathrm{det}^*(\bT) = \mathrm{det}\left(\bT + \bm{J}\right)$~\citep{egilmez2017},
where $\bm{J} = \frac{1}{p}\mathbf{1}\mathbf{1}^\top$,
to formulate the following convex optimization problem:
\begin{equation}
    \begin{array}{cl}
      \underset{\bm{w} \geq \mathbf{0}, \boldsymbol{\Theta} \succeq \mathbf{0}}{\mathsf{minimize}} & \mathsf{tr}\left(\bm{S}\mathcal{L}\bm{w}\right)
      - \log\det\left(\boldsymbol{\Theta} + \bm{J}\right),\\
      \mathsf{subject~to} & \boldsymbol{\Theta} = \mathcal{L}\bm{w}, ~\mathfrak{d}\bm{w} = \bm{d}.
    \end{array}
    \label{eq:lw-admm-regular}
\end{equation}

The partial augmented Lagrangian function of Problem~\eqref{eq:lw-admm-regular} can be written as
\begin{align}
    L_\rho(\bT, \ww, \bm{Y}, \bm{y}) = &~ \mathsf{tr}\left(\bm{S}\mathcal{L}\bm{w}\right)
     - \log\mathrm{det}\left(\boldsymbol{\Theta} + \bm{J}\right)
     + \langle \bm{y}, \mathfrak{d}\bm{w} - \bm{d} \rangle + \frac{\rho}{2}\norm{\mathfrak{d}\bm{w} - \bm{d}}^2_{2}
     \nonumber\\
    & + \langle \bm{Y}, \boldsymbol{\Theta} - \mathcal{L}\bm{w}\rangle
    +\frac{\rho}{2}\norm{\bT - \mathcal{L}\bm{w}}^2_{\mathrm{F}},
\end{align}
where $\bm{Y}$ and $\bm{y}$ are the dual variables associated with the constraints $\bT = \mathcal{L}\bm{w}$ and
$\mathfrak{d}\bm{w} = \bm{d}$, respectively. Note that we will deal with the
constraints $\bm{w} \geq \mathbf{0}$ and $\bT \succeq \mathbf{0}$ directly, hence there are no dual variables associated with them.

The subproblem for $\bT$ can be written as
\begin{align}
      \bT^{l+1} = \underset{\bT \succeq \mathbf{0}}{\mathsf{arg~min}} & - \log\mathrm{det}\left(\bT + \bm{J}\right) + \langle\bT, \bm{Y}^l\rangle
       + \frac{\rho}{2}\norm{\bT - \mathcal{L}\bm{w}^l}^2_{\mathrm{F}}.
\label{eq:lw-admm-regular-theta}
\end{align}
Now, making the simple affine transformation $\boldsymbol \Omega^{l+1} = \bT^{l+1} + \bm{J}$, we have
\begin{align}
  \boldsymbol\Omega^{l+1} = \underset{\boldsymbol \Omega \succ \mathbf{0}}{\mathsf{arg~min}} & - \log\mathrm{det}\left(\boldsymbol\Omega\right)
  + \langle\boldsymbol\Omega, \bm{Y}^l\rangle
   + \frac{\rho}{2}\norm{\boldsymbol\Omega - \mathcal{L}\bm{w}^l - \bm{J}}^2_{\mathrm{F}},
\label{eq:lw-admm-regular-omega}
\end{align}
which can be expressed as a proximal operator~\citep{parikh2014}, \textit{cf.} Definition~\eqref{eq:prox-def},
\begin{equation}
\boldsymbol\Omega^{l+1} = \mathsf{prox}_{\rho^{-1}\left(-\mathrm{log~det}(\cdot) + \langle \bm{Y}^l, \cdot \rangle \right)}\left(\sL\bm{w}^l + \bm{J}\right),
\label{eq:prox}
\end{equation}
whose closed-form solution is given by Lemma~\ref{eq:prox-lema}.
\begin{lemma}
The global minimizer of problem~\eqref{eq:prox} is \citep{witten2009, danaher2014}
\begin{equation}
  \boldsymbol\Omega^{l+1} = \frac{1}{2\rho}\bm{U}\left(\boldsymbol{\Gamma} + \sqrt{\boldsymbol{\Gamma}^2 + 4\rho \bm{I}}\right)\bm{U}^\top,
\end{equation}
where $\bm{U}\boldsymbol{\Gamma}\bm{U}^\top$ is the eigenvalue decomposition of $\rho\left(\mathcal{L}\bm{w}^l + \bm{J}\right) - \bm{Y}^l$.
\label{eq:prox-lema}
\end{lemma}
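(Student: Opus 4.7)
The plan is to exploit strict convexity of the objective on the positive-definite cone and solve the first-order optimality condition via simultaneous diagonalization. Since $-\log\det(\cdot)$ is strictly convex on the PD cone and the remaining terms are linear and quadratic in $\boldsymbol\Omega$, the objective in \eqref{eq:prox} is strictly convex with a unique minimizer, which is fully characterized by its stationarity condition.

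First, I would compute the gradient of the objective with respect to $\boldsymbol\Omega$. Using $\nabla_{\boldsymbol\Omega}[-\log\det(\boldsymbol\Omega)] = -\boldsymbol\Omega^{-1}$, $\nabla_{\boldsymbol\Omega}\langle\boldsymbol\Omega,\bm{Y}^l\rangle = \bm{Y}^l$ (where $\bm{Y}^l$ may be taken symmetric, as it is the dual of a symmetric-matrix constraint), and $\nabla_{\boldsymbol\Omega}\tfrac{\rho}{2}\norm{\boldsymbol\Omega - \mathcal{L}\bm{w}^l - \bm{J}}_{\mathrm F}^2 = \rho\bigl(\boldsymbol\Omega - \mathcal{L}\bm{w}^l - \bm{J}\bigr)$, the stationarity condition reads
\begin{equation}
-\boldsymbol\Omega^{-1} + \bm{Y}^l + \rho\boldsymbol\Omega - \rho(\mathcal{L}\bm{w}^l + \bm{J}) = \mathbf{0}.
\end{equation}
Defining $\bm{M} \triangleq \rho(\mathcal{L}\bm{w}^l + \bm{J}) - \bm{Y}^l$ with spectral factorization $\bm{M}=\bm{U}\boldsymbol{\Gamma}\bm{U}^\top$, this rearranges to $\rho\boldsymbol\Omega - \bm{M} = \boldsymbol\Omega^{-1}$, and after right-multiplication by $\boldsymbol\Omega$ yields the matrix quadratic $\rho\boldsymbol\Omega^2 - \bm{M}\boldsymbol\Omega - \bm{I} = \mathbf{0}$.

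Next, I would observe that $\bm{M} = \rho\boldsymbol\Omega - \boldsymbol\Omega^{-1}$ is a polynomial in $\boldsymbol\Omega$ (up to the inverse), and therefore $\bm{M}$ and $\boldsymbol\Omega$ commute. Since both are symmetric, they are simultaneously diagonalizable in the orthonormal basis $\bm{U}$, so $\boldsymbol\Omega = \bm{U}\boldsymbol{\Lambda}\bm{U}^\top$ with $\boldsymbol{\Lambda}$ diagonal. Substituting into the matrix quadratic decouples it into scalar quadratics $\rho\lambda_i^2 - \gamma_i\lambda_i - 1 = 0$, one per eigenvalue pair $(\lambda_i,\gamma_i)$, whose roots are $\lambda_i = \bigl(\gamma_i \pm \sqrt{\gamma_i^2 + 4\rho}\bigr)/(2\rho)$.

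Finally, I would enforce the constraint $\boldsymbol\Omega \succ \mathbf{0}$, i.e.\ $\lambda_i>0$ for all $i$. Because $\sqrt{\gamma_i^2 + 4\rho} > |\gamma_i|$, only the ``$+$'' branch is positive for every $\gamma_i\in\mathbb{R}$, forcing the plus sign and giving $\boldsymbol\Omega^{l+1} = \tfrac{1}{2\rho}\bm{U}\bigl(\boldsymbol{\Gamma}+\sqrt{\boldsymbol{\Gamma}^2+4\rho\bm{I}}\bigr)\bm{U}^\top$, as claimed. By strict convexity, this stationary point is the global minimizer. The main subtlety is justifying the commutation of $\bm{M}$ and $\boldsymbol\Omega$ from the first-order condition; once that is in hand, the rest reduces to solving scalar quadratics and selecting the positive branch.
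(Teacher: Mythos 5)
Your derivation is correct and complete: the stationarity condition $\rho\boldsymbol\Omega-\boldsymbol\Omega^{-1}=\rho(\mathcal{L}\bm{w}^l+\bm{J})-\bm{Y}^l$ forces commutation and hence simultaneous diagonalization, the scalar quadratics give the two branches, and positive definiteness selects the ``$+$'' root, with strict convexity upgrading the stationary point to the unique global minimizer. The paper itself offers no proof of this lemma --- it only cites \citet{witten2009} and \citet{danaher2014} --- and your argument is precisely the standard one found in those references, so there is nothing to compare against beyond noting that you have supplied the omitted details (including the commutation subtlety, which you correctly flag and resolve).
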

Hence the closed-form solution for~\eqref{eq:lw-admm-regular-theta} is
\begin{equation}
  \bT^{l+1} = \boldsymbol\Omega^{l+1} - \bm{J}.
  \label{eq:admm-Y}
\end{equation}

Now, using the linear properties of adjoint operators, we have that $\tr \left(\bm S\sL\ww\right) = \langle \bm w, \sL^* \bm S\rangle$
and $\norm{\sL\ww}^2_{\mathrm{F}} = \tr \left( \sL\ww \sL\ww \right) = \ww ^\top \sL^* \sL \ww$. Then,
the subproblem for $\ww$ can be written as
\begin{align}
    \ww^{l+1} =
    \underset{\ww \geq \mathbf{0}}{\mathsf{arg~min}} ~ \frac{\rho}{2}\ww^\top\left(\mathfrak{d}^*\mathfrak{d} + \mathcal{L}^*\mathcal{L}\right)\ww +
    \bigg\langle \ww, \sL^*\left(\bm{S} - \bm{Y}^l - \rho \bT^{l+1}\right) + \mathfrak{d}^*\left(\bm{y}^l - \rho\bm{d}\right)\bigg\rangle,
    \label{eq:w-update}
\end{align}
which is a nonnegative, convex quadratic program.
\begin{lemma}
  Problem~\eqref{eq:w-update} is strictly convex.
\end{lemma}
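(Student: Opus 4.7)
The plan is to show that the Hessian of the quadratic objective in Problem~\eqref{eq:w-update} is positive definite; since the constraint set $\{\ww \ge \mathbf{0}\}$ is convex, this is enough to conclude strict convexity of the constrained problem.

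First, I would identify the Hessian of the objective with respect to $\ww$. The linear term contributes nothing to the Hessian, and the quadratic term gives $\rho\,(\mathfrak{d}^*\mathfrak{d} + \mathcal{L}^*\mathcal{L})$, which is symmetric. Because $\rho > 0$, it suffices to show that $\mathfrak{d}^*\mathfrak{d} + \mathcal{L}^*\mathcal{L} \succ \mathbf{0}$. Both summands are positive semidefinite since each is of the form $A^*A$, so the sum is positive semidefinite. To upgrade semidefiniteness to definiteness, I would argue that the two null spaces intersect only at $\ww = \mathbf{0}$; in fact, I claim the stronger statement that $\mathcal{L}^*\mathcal{L}$ alone is already positive definite, which makes the $\mathfrak{d}^*\mathfrak{d}$ term essentially free.

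The key step is therefore injectivity of the Laplacian operator $\mathcal{L}$. This follows directly from its definition: $\mathcal{L}$ maps a vector of edge weights $\ww \in \mathbb{R}^{p(p-1)/2}$ to a symmetric matrix whose off-diagonal entry in position $(i,j)$ equals $-w_{ij}$ (where $w_{ij}$ denotes the component of $\ww$ associated with the edge $\{i,j\}$). Consequently, $\mathcal{L}\ww = \mathbf{0}$ forces every off-diagonal entry, and hence every component of $\ww$, to vanish. Thus for any $\ww \neq \mathbf{0}$,
\begin{equation}
\ww^\top \mathcal{L}^*\mathcal{L} \ww = \norm{\mathcal{L}\ww}_{\mathrm{F}}^{2} > 0,
\end{equation}
so $\mathcal{L}^*\mathcal{L}$ is positive definite. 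Adding the positive semidefinite term $\mathfrak{d}^*\mathfrak{d}$ preserves positive definiteness, and multiplying by $\rho > 0$ likewise.

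The main obstacle here is purely notational rather than mathematical: one has to be careful to distinguish the operator $\mathcal{L}$ (acting on edge-weight vectors) from its adjoint $\mathcal{L}^*$ and to invoke the explicit action of $\mathcal{L}$ on $\ww$ at the right moment. Once injectivity of $\mathcal{L}$ is stated, strict convexity follows in one line, and the conclusion is immediate: the objective is a strictly convex quadratic in $\ww$ restricted to the convex cone $\{\ww \ge \mathbf{0}\}$.
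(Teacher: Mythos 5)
Your proof is correct and follows essentially the same route as the paper: reduce strict convexity to positive definiteness of $\mathfrak{d}^*\mathfrak{d} + \mathcal{L}^*\mathcal{L}$, observe that $\mathfrak{d}^*\mathfrak{d}$ is positive semidefinite, and place the burden on $\mathcal{L}^*\mathcal{L} \succ \mathbf{0}$. The only difference is that the paper cites an external lemma for the positive definiteness of $\mathcal{L}^*\mathcal{L}$, whereas you prove it directly via injectivity of $\mathcal{L}$ (the off-diagonal entries of $\mathcal{L}\ww$ recover $-\ww$), which is a valid and self-contained substitute.
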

\begin{proof}
  It suffices to show that the matrix $\mathfrak{d}^*\mathfrak{d} + \mathcal{L}^*\mathcal{L}$ is positive definite.
  For any $\bm{x} \in \mathbb{R}^{p(p-1)/2}, \bm{x} \neq 0$, we have that
  $\norm{\sD\bm{x}}^2_{\mathrm{F}} = \langle\sD\bm{x}, \sD\bm{x}\rangle = \langle \bm{x}, \sD^*\sD\bm{x}\rangle \geq 0$.
  To see that $\mathcal{L}^*\mathcal{L}$ is positive definite, we refer the readers to~\citep[Lemma 5.3]{ying2020}.
\end{proof}

While the solution to Problem~\eqref{eq:w-update} might seem straightforward to obtain via quadratic programming solvers,
it actually poses an insurmountable scalability issue: the dimension of the matrices $\mathfrak{d}^*\mathfrak{d}$
and $\sL^*\sL$ is $p(p - 1)/2 \times p(p - 1)/2$, implying that the worst-case complexity of a convex QP solver for
this problem is $O(p^6)$~\citep{ye89}, which is impractical.
In addition, no closed-form solution is available.

Given these difficulties, we resort to the MM method, whereby we construct an upper-bound of the objective function
of~\eqref{eq:w-update} at point $\bm{w}^i = \bm{w}^l \in \mathbb{R}^{p(p-1)/2}_{+}$ as
\begin{align}
  g(\ww , \ww^i) = g(\ww^i , \ww^i) + \langle \ww - \ww^i, \nabla_{\ww} f(\ww^i) \rangle
   + \dfrac{\mu}{2}\norm{\ww - \ww^i}^{2}_{2},
  \label{eq:upper-bound-w}
\end{align}
where $f(\cdot)$ is the objective function in the minimization in~\eqref{eq:w-update},
$\mu = \rho\lambda_{\mathsf{max}}\left(\mathfrak{d}^*\mathfrak{d} + \mathcal{L}^*\mathcal{L}\right)$,
and the maximum eigenvalue of $\mathfrak{d}^*\mathfrak{d} + \mathcal{L}^*\mathcal{L}$
is given by Lemma~\ref{lemma:lipschitz}.
\begin{lemma}
  The maximum eigenvalue of the matrix $\mathfrak{d}^*\mathfrak{d} + \mathcal{L}^*\mathcal{L}$
  is given as
  \begin{equation} 
  \lambda_{\mathsf{max}}\left(\mathfrak{d}^*\mathfrak{d} + \mathcal{L}^*\mathcal{L}  \right)
   = 2(2p - 1).
  \end{equation}
  \label{lemma:lipschitz}
\end{lemma}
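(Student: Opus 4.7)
The plan is to exploit the variational characterization
\[
\lambda_{\max}\bigl(\mathfrak{d}^*\mathfrak{d}+\mathcal{L}^*\mathcal{L}\bigr)
= \max_{\|\bm{w}\|_2=1}\bigl\langle \bm{w},(\mathfrak{d}^*\mathfrak{d}+\mathcal{L}^*\mathcal{L})\bm{w}\bigr\rangle
= \max_{\|\bm{w}\|_2=1}\bigl(\|\mathfrak{d}\bm{w}\|_2^{2}+\|\mathcal{L}\bm{w}\|_{\mathrm{F}}^{2}\bigr),
\]
where the second equality is just the adjoint relation. I would then expand both terms on the right using the explicit action of the operators: writing $d_i(\bm{w})=\sum_{j\neq i}w_{ij}$ for the degree induced by $\bm{w}$, one has $\|\mathfrak{d}\bm{w}\|_2^{2}=\sum_i d_i(\bm{w})^2$ and, from the definition of $\mathcal{L}$ (diagonal entries $d_i(\bm{w})$ and off-diagonals $-w_{ij}$), $\|\mathcal{L}\bm{w}\|_{\mathrm{F}}^{2}=\sum_i d_i(\bm{w})^2+2\sum_{i<j}w_{ij}^2=\sum_i d_i(\bm{w})^2+2\|\bm{w}\|_2^{2}$.

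Substituting gives $\langle \bm{w},(\mathfrak{d}^*\mathfrak{d}+\mathcal{L}^*\mathcal{L})\bm{w}\rangle = 2\sum_i d_i(\bm{w})^2+2\|\bm{w}\|_2^{2}$, so the job reduces to bounding $\sum_i d_i(\bm{w})^2$ in terms of $\|\bm{w}\|_2^{2}$. A single application of Cauchy–Schwarz yields $d_i(\bm{w})^2 \le (p-1)\sum_{j\neq i}w_{ij}^{2}$, and summing over $i$ and noting that each off‑diagonal edge weight $w_{ij}$ is counted exactly twice gives $\sum_i d_i(\bm{w})^2\le 2(p-1)\|\bm{w}\|_2^{2}$. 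Combining, $\langle \bm{w},(\mathfrak{d}^*\mathfrak{d}+\mathcal{L}^*\mathcal{L})\bm{w}\rangle\le (4(p-1)+2)\|\bm{w}\|_2^{2}=2(2p-1)\|\bm{w}\|_2^{2}$, which establishes the upper bound.

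To show the bound is tight, I would exhibit a maximizer. The Cauchy–Schwarz step saturates iff, for every $i$, the weights $\{w_{ij}\}_{j\neq i}$ are constant, which forces $\bm{w}$ to be a scalar multiple of the all‑ones vector $\mathbf{1}\in\mathbb{R}^{p(p-1)/2}$. Plugging $\bm{w}=\mathbf{1}$ (the unit-weight complete graph) back in, each $d_i(\mathbf{1})=p-1$, so a direct computation of the entry-wise action $\bigl((\mathfrak{d}^*\mathfrak{d}+\mathcal{L}^*\mathcal{L})\mathbf{1}\bigr)_{ij}=2\bigl(d_i(\mathbf{1})+d_j(\mathbf{1})+w_{ij}\bigr)=2(2p-1)$ certifies that $\mathbf{1}$ is an eigenvector with eigenvalue $2(2p-1)$. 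Together with the upper bound this gives $\lambda_{\max}=2(2p-1)$.

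There is no real obstacle here—the argument is essentially a careful bookkeeping exercise once one rewrites the quadratic form through the adjoints. The only point that deserves attention is checking the factor‑of‑two in $\|\mathcal{L}\bm{w}\|_{\mathrm{F}}^{2}$ (each edge contributing to both $(\mathcal{L}\bm{w})_{ij}$ and $(\mathcal{L}\bm{w})_{ji}$) and, correspondingly, making sure the Cauchy–Schwarz double‑counting is handled correctly so that the extremal vector can be explicitly identified.
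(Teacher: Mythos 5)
Your proof is correct and follows essentially the same route as the paper's: both expand the quadratic form $\Vert\mathfrak{d}\bm{w}\Vert_2^2+\Vert\mathcal{L}\bm{w}\Vert_{\mathrm{F}}^2$ into degree terms plus $2\Vert\bm{w}\Vert_2^2$, bound the degree contribution by a Cauchy--Schwarz/AM--GM step, and identify the all-ones weight vector as the extremizer. The one small improvement in your version is that by bounding the combined quadratic form directly you get tightness for free from the equality case of Cauchy--Schwarz, whereas the paper bounds $\lambda_{\max}(\mathcal{L}^*\mathcal{L})=2p$ and $\lambda_{\max}(\mathfrak{d}^*\mathfrak{d})=2p-2$ separately and must additionally argue that the two operators share the same top eigenvector before the bounds can be added with equality.
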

\begin{proof}
  The proof is deferred to Appendix \ref{sec:proof-max-eigenval}.
\end{proof}

Finally, we have that
$\nabla_{\ww}f(\ww^i) = \bm{a}^i + \bm{b}^i$, where
\begin{align}
  \bm{a}^i &= \sL^*\left(\bm{S} - \bm{Y}^l - \rho \left(\bT^{l+1} - \sL\ww^i\right)\right),\\
  \bm{b}^i &= \mathfrak{d}^*\left(\bm{y}^l - \rho\left(\bm{d} - \mathfrak{d}\ww^i\right)\right).
\end{align}

Thus, we have the following approximate strictly convex subproblem for $\ww$,
\begin{align}
    \ww^{i+1} = \underset{\ww \geq \mathbf{0}}{\mathsf{arg~min}} ~ \rho(2p - 1)\norm{\ww - \ww^i}^{2}_{2} +
    \langle \ww, \bm{a}^i + \bm{b}^i \rangle,
\end{align}
whose solution can be readily obtained via its KKT optimality conditions and its given as
\begin{align}
    \ww^{i+1} = \left(\ww^i - \frac{\bm{a}^i + \bm{b}^i}{2\rho(2p - 1)}\right)^{+},
     \label{eq:w-sol}
\end{align}
which is a projected gradient descent step with learning rate $2\rho(2p - 1)$.
Thus, we iterate~\eqref{eq:w-sol} in order to obtain the unique optimal point, $\ww^{l+1}$, of Problem~\eqref{eq:w-update}.
In practice, we observe that a few ($\approx 5$) iterations are sufficient for convergence.

The dual variables $\bm{Y}$ and $\bm{y}$ are updated as
\begin{equation}
  \bm{Y}^{l+1} = \bm{Y}^l + \rho\left(\bT^{l+1} - \sL\ww^{l+1} \right)
  \label{eq:Y-update}
\end{equation} and
\begin{equation}
  \bm{y}^{l+1} = \bm{y}^l + \rho\left(\mathfrak{d}\ww^{l+1} - \bm{d}\right).
  \label{eq:y-update}
\end{equation}

A practical implementation for the proposed ADMM estimation of connected graphs
is summarized in Algorithm~\ref{alg:connected}, whose convergence is stated in Theorem~\ref{connect-convergence}.

\begin{algorithm}[!htb]
  \caption{Connected graph learning}
    \label{alg:connected}
    \SetAlgoLined
    \KwData{Similarity matrix $\bm{S}$, initial estimate of the graph weights $\ww^{0}$, desired degree vector $\bm{d}$,
    penalty parameter $\rho > 0$, tolerance $\epsilon > 0$}
    \KwResult{Laplacian estimation: $\sL\ww^\star$}
    initialize $\bm{Y} = \mathbf{0}$, $\bm{y} = \mathbf{0}$\\
    $l \leftarrow 0$\\
    \While{$\mathsf{max}\left(\vert\bm{r}^l\vert\right) > \epsilon$ or $\mathsf{max}\left(\vert\bm{s}^l\vert\right) > \epsilon$}{
      $\triangleright$ update $\boldsymbol{\Theta}^{l+1}$ via \eqref{eq:admm-Y}\\
      $\triangleright$ iterate \eqref{eq:w-sol} until convergence so as to obtain $\ww^{l+1}$ \\
      $\triangleright$ update $\bm{Y}^{l+1}$ as in \eqref{eq:Y-update}\\
      $\triangleright$ update $\bm{y}^{l+1}$ as in \eqref{eq:y-update}\\
      $\triangleright$ compute residual $\bm{r}^{l+1} = \bT^{l+1} - \sL\ww^{l+1}$\\
      $\triangleright$ compute residual $\bm{s}^{l+1} = \mathfrak{d}\ww^{l+1} - \bm{d}$\\
      $l \leftarrow l + 1$
    }
\end{algorithm}

\begin{theorem}\label{connect-convergence}
  The sequence $\left\lbrace \left( \boldsymbol{\Theta}^{l}, \ww^{l}, \bm Y^{l}, \bm y^{l} \right)\right\rbrace$
  generated by Algorithm~\ref{alg:connected} converges to the optimal primal-dual solution of Problem~\eqref{eq:lw-admm-regular}.
\end{theorem}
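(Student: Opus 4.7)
The plan is to cast Algorithm~\ref{alg:connected} as a standard two-block ADMM applied to the convex reformulation~\eqref{eq:lw-admm-regular} and then invoke the classical ADMM convergence result of~\citet{boyd2011}. To this end, I would first rewrite the problem in the canonical form
\begin{equation*}
\min_{\boldsymbol{\Theta},\bm{w}}\; f(\boldsymbol{\Theta}) + g(\bm{w}) \quad \text{s.t.} \quad \bm{A}\boldsymbol{\Theta} + \bm{B}\bm{w} = \bm{c},
\end{equation*}
by setting $f(\boldsymbol{\Theta}) = -\log\det(\boldsymbol{\Theta} + \bm{J}) + \iota_{\{\boldsymbol{\Theta}\succeq\mathbf{0}\}}(\boldsymbol{\Theta})$ and $g(\bm{w}) = \langle \mathcal{L}^{\ast}\bm{S},\bm{w}\rangle + \iota_{\{\bm{w}\geq\mathbf{0}\}}(\bm{w})$, and stacking the two affine constraints $\boldsymbol{\Theta} - \mathcal{L}\bm{w} = \mathbf{0}$ and $\mathfrak{d}\bm{w} - \bm{d} = \mathbf{0}$ into a single linear coupling with the pair of multipliers $(\bm{Y},\bm{y})$. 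In this form the partial augmented Lagrangian used in Algorithm~\ref{alg:connected} coincides with the ADMM augmented Lagrangian.

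Next I would check the two regularity conditions required for ADMM convergence. First, $f$ and $g$ are proper, closed, and convex: $-\log\det(\cdot + \bm{J})$ is strictly convex on $\{\boldsymbol{\Theta}:\boldsymbol{\Theta}+\bm{J}\succ\mathbf{0}\}$ and extended by $+\infty$ elsewhere, the PSD cone is closed convex, the linear term in $\bm{w}$ is convex, and the nonnegative orthant is closed convex. Second, I would establish that the unaugmented Lagrangian $L_0$ admits a saddle point: the problem is strictly feasible (any positive-weight graph with $\mathfrak{d}\bm{w} = \bm{d}$ gives $\boldsymbol{\Theta} = \mathcal{L}\bm{w}$ with $\boldsymbol{\Theta}+\bm{J}\succ\mathbf{0}$), the objective is coercive on the feasible set thanks to the $-\log\det$ barrier together with the linear degree constraint $\mathfrak{d}\bm{w} = \bm{d}$ preventing $\bm{w}\to\infty$, and strong duality holds by Slater's condition for convex programs with only affine equality coupling.

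The main obstacle is that the $\bm{w}$-subproblem~\eqref{eq:w-update} is not solved in closed form but rather via the MM inner loop~\eqref{eq:w-sol}. To reconcile this with the standard ADMM template (which assumes exact block minimization), I would argue that since the $\bm{w}$-subproblem is strictly convex (by the preceding lemma establishing positive definiteness of $\mathfrak{d}^{\ast}\mathfrak{d} + \mathcal{L}^{\ast}\mathcal{L}$) and the majorizer~\eqref{eq:upper-bound-w} is a valid quadratic upper bound satisfying the four MM conditions, the MM iterates converge to the unique minimizer $\bm{w}^{l+1}$ of~\eqref{eq:w-update} by standard MM theory~\citep{sun2017}. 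Hence the outer iteration is effectively an exact two-block ADMM update; alternatively, one may appeal to results on ADMM with summable inexact subproblem errors, provided the inner MM loop is run to a tolerance that shrinks across outer iterations.

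Finally, having verified the two conditions and the exact (or asymptotically exact) block minimizations together with the closed-form $\boldsymbol{\Theta}$-update from Lemma~\ref{eq:prox-lema} and the dual updates~\eqref{eq:Y-update}--\eqref{eq:y-update}, I would invoke the ADMM convergence theorem of~\citet[Section 3.2]{boyd2011} to conclude that the primal residuals $\bm{r}^{l} = \boldsymbol{\Theta}^{l} - \mathcal{L}\bm{w}^{l}$ and $\bm{s}^{l} = \mathfrak{d}\bm{w}^{l} - \bm{d}$ vanish, the objective value converges to the optimum, and the sequence $\{(\boldsymbol{\Theta}^{l}, \bm{w}^{l}, \bm{Y}^{l}, \bm{y}^{l})\}$ converges to an optimal primal-dual solution of~\eqref{eq:lw-admm-regular}.
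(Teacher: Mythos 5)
Your proposal takes essentially the same route as the paper's own (very terse) proof: recognize Algorithm~\ref{alg:connected} as a standard two-block ADMM applied to the convex problem~\eqref{eq:lw-admm-regular}, with primal blocks $\boldsymbol{\Theta}$ and $\ww$ and the stacked multiplier $(\bm{Y},\bm{y})$, and then invoke the convergence result of~\citet{boyd2011}. You are in fact more careful than the paper, which simply asserts the standard form and cites the reference: your verification of the closed/proper/convex conditions and the saddle-point existence, and especially your explicit handling of the fact that the $\ww$-subproblem is solved only via the inner MM loop~\eqref{eq:w-sol} (arguing it reaches the unique minimizer of the strictly convex subproblem, or falling back on inexact-ADMM results), fills genuine gaps that the paper leaves implicit.
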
 
\begin{proof}
 The proof is deferred to Appendix \ref{sec:proof-connected}.
\end{proof}

\subsection{\texorpdfstring{$k$}~-component Graphs}

As discussed in Section~\ref{sec:interpretations}, in addition to a rank constraint,
some form of control of the node degrees is necessary to learn meaningful $k$-component graphs.
Here we choose \linebreak $\mathcal{C}_{\bm{L}} = \left\{\bm{L}: \mathsf{diag}\left(\bm{L}\right) = \bm{d},~\bm{d} \in \mathbb{R}^{p}_{++},
~\mathsf{rank}(\bm{L}) = p - k\right\}$, which is translated to the framework of Problem~\eqref{eq:lw-admm} as
\begin{align}
&\mathcal{C}_{\bT} = \{\bT \succeq \mathbf{0} : \mathsf{rank}(\bT) = p - k\},\\
&\mathcal{C}_{\ww} = \{\ww : \mathfrak{d}\ww = \bm{d}, ~\mathsf{rank}(\sL\ww) = p-k, \bm{d} \in \mathbb{R}^p_{++}\}.
\end{align}

\noindent \textit{Remark:} Although the rank constraints on both variables $\bT$ and $\ww$ may seem redundant, we have observed that it greatly improves
the empirical convergence of the algorithm. In addition, the rank constraint on $\bT$ does not incur any additional
computational cost, as will be shown in the numerical algorithmic derivations bellow.

Thus, Problem~\eqref{eq:lw-admm} can be specialized for the task of learning a $k$-component graph as the following non-convex
optimization program:
\begin{equation}
    \begin{array}{cl}
      \underset{\bm{w} \geq \mathbf{0}, \boldsymbol{\Theta} \succeq \mathbf{0}}{\mathsf{minimize}} & \mathsf{tr}\left(\bm{S}\mathcal{L}\bm{w}\right)
      - \log\mathrm{det}^*\left(\boldsymbol{\Theta}\right),\\
      \mathsf{subject~to} & \boldsymbol{\Theta} = \mathcal{L}\bm{w}, ~\mathsf{rank}(\bT) = p - k, ~\mathfrak{d}\bm{w} = \bm{d},~\mathsf{rank}(\sL\ww) = p - k.
    \end{array}
    \label{eq:lw-admm-regular-k-comp}
\end{equation}

However, unlike the rank constraint in the subproblem associated with $\bT$, the constraint $\mathsf{rank}(\sL\ww) = p-k$
cannot be directly dealt with. An alternative is to move this constraint to the objective function by approximating it
by noting that it is equivalent to having the sum of the $k$ smallest eigenvalues of $\sL\ww$ equals zero, \textit{i.e.},
$\sum_{i=1}^{k}\lambda_{i}\left(\sL\ww\right) = 0$~\citep{feiping2016}, assuming the sequence of eigenvalues
$\{\lambda_{i}(\sL\ww)\}_{i=1}^p$ in increasing order.
By Fan's theorem~\citep{fan1949}, we have
\begin{equation}\sum_{i=1}^{k}\lambda_{i}\left(\sL\ww\right) =
\underset{\bm{V} \in \mathbb{R}^{p\times k}, \bm{V}^\top \bm{V} = \bm{I}}{\mathsf{minimize}}
\mathsf{tr}\left(\bm{V}^\top \sL\ww \bm{V}\right).
\label{eq:ky-fan}
\end{equation}

Thus, moving~\eqref{eq:ky-fan} into the objective function of Problem~\eqref{eq:lw-admm-regular-k-comp}, we have the following relaxed
problem:
\begin{equation}
  \begin{array}{cl}
    \underset{\bm{w} \geq \mathbf{0}, \bT, \bm{V}}{\mathsf{minimize}} &
    \mathsf{tr}\left(\sL\ww(\bm{S} + \eta\bm{V}\bm{V}^\top)\right)
    - \log\mathrm{det}^*\left(\bT\right),\\
    \mathsf{subject~to} & \bT = \sL\ww,~\mathsf{rank}(\bT) = p-k, ~\mathfrak{d}\ww = \bm{d}, ~\bm{V}^\top\bm{V} = \bm{I},~ \bm{V} \in \mathbb{R}^{p\times k},
  \end{array}
  \label{eq:k-comp-proposed-relaxed}
\end{equation}
where $\eta > 0$ is a hyperparameter that controls how much importance is given to the term
$\mathsf{tr}\left(\bm{V}^\top \sL\ww \bm{V}\right)$, which indirectly promotes $\mathsf{rank}(\sL\ww) = p-k$.
Therefore, via~\eqref{eq:ky-fan}, we are able to incorporate the somewhat intractable constraint $\mathsf{rank}(\sL\ww) = p-k$
as a simple term in the optimization program.

The partial augmented Lagrangian function of Problem~\eqref{eq:k-comp-proposed-relaxed} can be written as
\begin{align}
    L_\rho(\bT, \ww, \bm{V}, \bm{Y}, \bm{y}) = & ~ \mathsf{tr}\left(\sL\ww(\bm{S} + \eta\bm{V}\bm{V}^\top)\right)
    - \log\mathrm{det}^*\left(\boldsymbol{\Theta}\right)
     + \langle \bm{y}, \mathfrak{d}\bm{w} - \bm{d}\rangle + \frac{\rho}{2}\norm{\mathfrak{d}\bm{w} - \bm{d}}^2_{2} \nonumber \\
    & + \langle \bm{Y}, \boldsymbol{\Theta} - \mathcal{L}\bm{w}\rangle  + \frac{\rho}{2}\norm{\bT - \mathcal{L}\bm{w}}^2_{\mathrm{F}}.
    \label{eq:k-component-Lagrangian}
\end{align}

The subproblem for $\bT$ can be written as
\begin{align}
      \bT^{l+1} = \underset{\substack{\mathsf{rank}(\bT) = p-k \\ \bT \succeq \mathbf{0}}}{\mathsf{arg~min}}  ~ - \log\mathrm{det}^*(\bT) + \langle\bT, \bm{Y}^l\rangle +
      \frac{\rho}{2}\norm{\bT - \mathcal{L}\bm{w}^{l}}^2_{\mathrm{F}},
    \label{eq:lw-admm-regular-theta-k-comp}
\end{align}
which is tantamount to that of~\eqref{eq:lw-admm-regular-omega}. Its solution is also given as
\begin{equation}
  \bT^{\star} = \frac{1}{2\rho}\bm{U}\left(\boldsymbol{\Gamma} + \sqrt{\boldsymbol{\Gamma}^2 + 4\rho \bm{I}}\right)\bm{U}^\top,
  \label{eq:bt-update-k-comp}
\end{equation}
except that now
$\bm{U}\boldsymbol{\Gamma}\bm{U}^\top$ is the eigenvalue decomposition of $\rho \mathcal{L}\bm{w}^l - \bm{Y}^l$,
with $\boldsymbol{\Gamma}$ having the largest $p - k$ eigenvalues along its diagonal and
$\bm{U} \in \mathbb{R}^{p\times (p-k)}$ contains the corresponding eigenvectors.

The update for $\ww$ is carried out similarly to that of~\eqref{eq:w-sol}, \textit{i.e.},
\begin{align}
  \ww^{i+1} = \left(\ww^i - \frac{\bm{a}^i + \bm{b}^i}{2\rho(2p - 1)}\right)^{+},
   \label{eq:w-sol-k-comp}
\end{align}
except that the coefficient $\bm{a}^i$ is given as
\begin{equation}
  \bm{a}^i = \sL^*\left(\bm{S} + \eta \bm{V}^l\bm{V}^{l\top} - \bm{Y}^l - \rho \left(\bT^{l+1} - \sL\ww^i\right)\right).
  \label{eq:al-update-k-comp}
\end{equation}

We have the following subproblem for $\bm{V}$:
\begin{equation}
  \begin{array}{cl}
    \underset{\bm{V} \in \mathbb{R}^{p\times k}}{\mathsf{minimize}} &
    \mathsf{tr}\left(\bm{V}^\top \sL\ww^{l+1} \bm{V}\right),\\
    \mathsf{subject~to} & \bm{V}^\top\bm{V} = \bm{I},
  \end{array}
  \label{eq:k-comp-proposed-relaxed-sub-V}
\end{equation}
whose closed-form solution is given by the $k$ eigenvectors associated with the $k$ smallest eigenvalues of
$\sL\ww^{l+1}$~\citep{horn1985,absil2008}. 

The updates for the dual variables $\bm{Y}$ and $\bm{y}$ are exactly the same as in~\eqref{eq:Y-update} and~\eqref{eq:y-update},
respectively.

A practical implementation for the proposed ADMM estimation of $k$-component graphs is summarized in
Algorithm~\ref{alg:k-component-ky-fan}, named $\mathsf{kGL}$. Its complexity is bounded by the complexity of the eigenvalue
decomposition in line~\ref{code:evd1}. Its convergence is stated in Theorem~\ref{k-component-convergence}.

\begin{algorithm}[!htb]
  \caption{$k$-component graph learning ($\mathsf{kGL}$)}
    \label{alg:k-component-ky-fan}
    \SetAlgoLined
    \KwData{Similarity matrix $\bm{S}$, initial estimate of the graph weights $\ww^{0}$,
    desired number of graph components $k$, desired degree vector $\bm{d}$,
    penalty parameter $\rho > 0$, tolerance $\epsilon > 0$}
    \KwResult{Laplacian estimation: $\sL\ww^\star$}
    initialize $\bm{Y} = \mathbf{0}$, $\bm{y} = \mathbf{0}$\\
    $l \leftarrow 0$\\
    \While{$\mathsf{max}\left(\vert\bm{r}^l\vert\right) > \epsilon$ or $\mathsf{max}\left(\vert\bm{s}^l\vert\right) > \epsilon$}{
      $\triangleright$ update $\boldsymbol{\Theta}^{l+1}$ via \eqref{eq:bt-update-k-comp}\label{code:evd1}\\
      $\triangleright$ iterate \eqref{eq:w-sol-k-comp} until convergence with $\bm{a}^i$ given as in \eqref{eq:al-update-k-comp} so as to obtain $\ww^{l+1}$\\
      $\triangleright$ update $\bm{V}^{l+1}$ as in \eqref{eq:k-comp-proposed-relaxed-sub-V}\\
      $\triangleright$ update $\bm{Y}^{l+1}$ as in \eqref{eq:Y-update}\\
      $\triangleright$ update $\bm{y}^{l+1}$ as in \eqref{eq:y-update}\\
      $\triangleright$ compute residual $\bm{r}^{l+1} = \bT^{l+1} - \sL\ww^{l+1}$\\
      $\triangleright$ compute residual $\bm{s}^{l+1} = \mathfrak{d}\ww^{l+1} - \bm{d}$\\
      $l \leftarrow l + 1$
    }
\end{algorithm}

\begin{theorem}\label{k-component-convergence}
  Algorithm~\ref{alg:k-component-ky-fan} subsequently converges for any sufficiently large $\rho$,
  that is, the sequence $\left\lbrace \left( \boldsymbol{\Theta}^{l}, \ww^{l}, \bm V^{l}, \bm Y^{l}, \bm y^{l} \right)\right\rbrace$
  generated by Algorithm~\ref{alg:k-component-ky-fan} has at least one limit point, and each limit point is a stationary point of
  \eqref{eq:k-component-Lagrangian}.
\end{theorem}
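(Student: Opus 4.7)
The plan is to adopt the standard non-convex ADMM convergence framework (in the spirit of Wang-Yin-Zeng): establish a Lyapunov-type descent of the augmented Lagrangian $L_\rho$ defined in \eqref{eq:k-component-Lagrangian} plus a suitable correction, together with boundedness of the iterates, and then conclude that every limit point satisfies the first-order optimality conditions. Because the constraint $\bm{V}^\top\bm{V}=\bm{I}$ is absorbed into the $\bm{V}$-update and does not involve a dual multiplier, the only two constraints that must be driven to feasibility in the limit are $\bT=\sL\ww$ and $\mathfrak{d}\ww=\bm{d}$, and these are exactly the two constraints whose multipliers are updated by \eqref{eq:Y-update}--\eqref{eq:y-update}.

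The first block of work is to establish sufficient decrease across each primal update. For the $\bT$-update \eqref{eq:lw-admm-regular-theta-k-comp} the objective is $-\log\det^*(\bT) + \langle\bT,\bm{Y}^l\rangle + \tfrac{\rho}{2}\|\bT-\sL\ww^l\|_{\mathrm{F}}^2$; since the quadratic penalty is $\rho$-strongly convex in $\bT$ and the closed-form \eqref{eq:bt-update-k-comp} is its global minimizer over the (non-convex) rank-$(p-k)$ PSD set, I would obtain a decrease of the form $L_\rho(\bT^l,\cdot)-L_\rho(\bT^{l+1},\cdot)\ge \tfrac{\rho}{2}\|\bT^{l+1}-\bT^l\|_{\mathrm{F}}^2$ by comparing the Lagrangian at $\bT^l$ and $\bT^{l+1}$ and using that $-\log\det^*$ on the fixed-rank PSD cone is lower semicontinuous and bounded below by restricting $\bT$ to stay bounded. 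For the $\ww$-update I would use the MM majorizer \eqref{eq:upper-bound-w} with $\mu=2\rho(2p-1)$, which gives the standard MM descent estimate $L_\rho(\ww^l,\cdot)-L_\rho(\ww^{l+1},\cdot)\ge \tfrac{\mu-\rho\lambda_{\max}(\mathfrak{d}^*\mathfrak{d}+\sL^*\sL)}{2}\|\ww^{l+1}-\ww^l\|_2^2$, which by Lemma~\ref{lemma:lipschitz} is non-negative. For the $\bm V$-update the decrease is non-strict but suffices since \eqref{eq:k-comp-proposed-relaxed-sub-V} is a global minimum of its subproblem.

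The second block is to control the dual ascent. The dual update \eqref{eq:Y-update} raises $L_\rho$ by $\rho^{-1}\|\bm Y^{l+1}-\bm Y^l\|_{\mathrm{F}}^2$ and similarly $\rho^{-1}\|\bm y^{l+1}-\bm y^l\|_2^2$ for \eqref{eq:y-update}. The key is to bound these by primal increments. Using the first-order optimality condition of the $\bT$-update, $-(\bT^{l+1})^\dagger + \bm Y^l + \rho(\bT^{l+1}-\sL\ww^l)=\bm 0$ on the tangent space of the rank-$(p-k)$ manifold, combined with $\bm Y^{l+1}=\bm Y^l+\rho(\bT^{l+1}-\sL\ww^{l+1})$, yields $\bm Y^{l+1}=(\bT^{l+1})^\dagger + \rho\,\sL(\ww^l-\ww^{l+1})$, so $\|\bm Y^{l+1}-\bm Y^l\|_{\mathrm{F}}$ can be bounded in terms of $\|\ww^{l+1}-\ww^l\|_2$, $\|\bT^{l+1}-\bT^l\|_{\mathrm{F}}$ and $\|(\bT^{l+1})^\dagger-(\bT^l)^\dagger\|_{\mathrm{F}}$; the last term is controlled provided the smallest positive eigenvalue of $\bT^l$ is bounded away from zero, which I would enforce by an inductive argument using the initial value of $L_\rho$ and its boundedness below. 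An analogous argument with $\mathfrak{d}^*$ bounds $\|\bm y^{l+1}-\bm y^l\|_2$ by $\|\ww^{l+1}-\ww^l\|_2$. Combining these bounds with the primal decreases from the first block shows that, for $\rho$ sufficiently large, the quantity $L_\rho$ (or a small correction thereof) is monotonically non-increasing and each increment dominates $\|\bT^{l+1}-\bT^l\|_{\mathrm{F}}^2+\|\ww^{l+1}-\ww^l\|_2^2$.

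The final block is boundedness and subsequential convergence. Feasibility of the simplex-like set $\{\ww\ge\mathbf{0}:\mathfrak{d}\ww=\bm d\}$ is compact, so $\{\ww^l\}$ is bounded; $\{\bT^l\}$ is bounded by the $\bT=\sL\ww$ proximity term; $\{\bm V^l\}$ lies on a Stiefel manifold, hence compact; boundedness of the dual sequences then follows from the $\bm Y,\bm y$--primal bounds derived above. Combined with monotone decrease and a lower bound on $L_\rho$ (which requires verifying that $-\log\det^*(\bT)$ remains bounded below along the iterates, using the spectral lower bound implied by the descent), Bolzano-Weierstrass yields a convergent subsequence; passing to the limit in the optimality conditions of the three primal subproblems and the dual updates shows the limit point is a KKT point of \eqref{eq:k-comp-proposed-relaxed}. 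The hard part will be showing that $\bT^l$ remains uniformly bounded away from the boundary of the rank-$(p-k)$ PSD cone so that $(\bT^l)^\dagger$ stays bounded; I expect this to follow by contradiction, since $-\log\det^*$ would blow up otherwise, violating the monotone descent anchored at a finite initial value.
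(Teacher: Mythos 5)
Your high-level architecture (boundedness of the iterates, monotone descent of $L_\rho$ for large $\rho$, then passing to the limit in the subproblem optimality conditions) matches the paper's proof, which proceeds via Lemma~\ref{Lem: bounded-T} and Lemma~\ref{Lem: bounded-L}. However, two of your key mechanisms have genuine gaps. First, your boundedness argument for $\{\ww^l\}$ is wrong: the iterates do \emph{not} lie in the set $\{\ww\geq\mathbf{0}:\mathfrak{d}\ww=\bm d\}$, because the degree constraint is dualized and only satisfied asymptotically; the $\ww$-subproblem is minimized over the whole nonnegative orthant. The paper instead proves boundedness by induction, using coercivity of the $\ww$-subproblem objective, which follows from $\lambda_{\min}(\mathcal{L}^*\mathcal{L})=2>0$ so that the quadratic term dominates as $\norm{\ww}\to\infty$. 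Second, your control of the dual increments relies on the full matrix identity $-(\bT^{l+1})^\dagger+\bm Y^l+\rho(\bT^{l+1}-\sL\ww^l)=\bm 0$, but the $\bT$-subproblem is minimized over the non-convex set $\{\bT\succeq\mathbf{0},\,\mathsf{rank}(\bT)=p-k\}$, so stationarity holds only after projection onto the tangent space of the fixed-rank manifold; the normal-space component is not zero, and the substitution $\bm Y^{l+1}=(\bT^{l+1})^\dagger+\rho\,\sL(\ww^l-\ww^{l+1})$ does not follow. For the same reason your claimed sufficient decrease $L_\rho(\bT^l,\cdot)-L_\rho(\bT^{l+1},\cdot)\geq\tfrac{\rho}{2}\norm{\bT^{l+1}-\bT^l}_{\mathrm{F}}^2$ is unjustified: the strong-convexity inequality for the minimizer requires a convex feasible set, and the paper only uses (and only needs) the non-strict decrease \eqref{eq:Lag-l1} coming from global optimality of $\bT^{l+1}$.

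The paper sidesteps the dual-control difficulty entirely: it does not bound $\norm{\bm Y^{l+1}-\bm Y^l}_{\mathrm{F}}$ by primal increments, but instead imposes the implicit condition \eqref{eq:rho_bound} on $\rho$ under which the dual ascent terms $\rho^{-1}(\norm{\bm Y^{l+1}-\bm Y^l}_{\mathrm{F}}^2+\norm{\bm y^{l+1}-\bm y^l}_2^2)$ are dominated by the primal decrease $\tfrac{\rho}{2}(\norm{\mathfrak{d}\ww^{l+1}-\mathfrak{d}\ww^l}_2^2+\norm{\sL\ww^{l+1}-\sL\ww^l}_{\mathrm{F}}^2)$ (noting that in practice one may increase $\rho$ adaptively until the condition holds). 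The analogue of your optimality-condition substitution is used only for the \emph{convex} $\ww$-block, where the KKT system \eqref{eq:kkt1}--\eqref{eq:kkt3} is valid and yields \eqref{eq:L-kkt}. Finally, your worry about $(\bT^l)^\dagger$ approaching the boundary of the rank-$(p-k)$ cone is moot: the closed form \eqref{eq:bt-update-k-comp} shows the positive eigenvalues of $\bT^{l+1}$ are $\tfrac{1}{2\rho}\bigl(\gamma_i+\sqrt{\gamma_i^2+4\rho}\bigr)$, which are bounded away from zero whenever $\boldsymbol\Gamma^l$ is bounded, so no contradiction argument via blow-up of $-\log\mathrm{det}^*$ is needed.
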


\begin{proof}
  The proof is deferred to Appendix \ref{sec:proof-k-component}.
\end{proof}

\subsection{Connected heavy-tailed graphs}

Following the LGMRF framework,~\citet{ying2020,ying2020nips} recently proposed non-convex regularizations so as to obtain 
sparse representations of the resulting estimated graphs. Enforcing sparsity is one possible way to remove spurious conditional
correlations  between nodes in the presence of data with outliers. However, we advocate that assuming a principled, heavy-tailed
statistical distribution has more benefits for the financial data setting, rather than simply imposing arbitrary, non-convex
regularizations, because they are often cumbersome to deal with from a theoretical perspective and, in practice,
they bring the additional task of tunning hyperparameters, which is often repetitive.

In order to address the inherently heavy-tailed nature of financial stock data~\citep{resnick2008},
we consider the Student-t distribution under the Improper Markov Random Field assumption~\citep{rue2005}
with Laplacian structural constraints, that is,
we assume the data generating process to be modeled a multivariate zero-mean Student-t distribution,
whose probability density function can be written as
\begin{equation}
p(\bm{x}) \propto \sqrt{\mathrm{det}^*(\bT)}\left(1 + \dfrac{\bm{x}^\top\bT\bm{x}}{\nu}\right)^{-\frac{\nu + p}{2}},~\nu > 2,
\end{equation}
where $\bT$ is a positive-semidefinite inverse scatter matrix modeled as a combinatorial graph Laplacian matrix.

This results
in a robustified version of the penalized MLE for connected graph learning, \textit{i.e.},
\begin{equation}
  \begin{array}{cl}
    \underset{\ww \geq \mathbf{0}, \bT \succ \mathbf{0}}{\mathsf{minimize}} &
    \dfrac{p+\nu}{n}\sum_{i=1}^{n}\log\left(1 + \dfrac{\bm{x}^\top_{i,*}\sL\ww{\bm{x}_{i,*}}}{\nu}\right)
    - \log\mathrm{det}\left(\bT + \bm{J}\right),\\
    \mathsf{subject ~to} & \bT = \sL\ww, ~\mathfrak{d}\ww = \bm{d}.
  \end{array}
  \label{eq:student-t}
\end{equation}

Problem~\eqref{eq:student-t} is non-convex due to the terms involving the $\log(\cdot)$ function
and hence it is difficult to be dealt with directly.
To tackle this issue, we leverage the MM framework whereby the concave terms in~\eqref{eq:student-t}
are linearized, which essentially results in a weighted Gaussian likelihood~\citep{sun2016, sun2017, wiesel2019}.

We start by following the exposition in the preceding sections, then the partial augmented Lagrangian function
of Problem~\eqref{eq:student-t} is given as
\begin{align}
  L_\rho(\bT, \ww, \bm{Y}, \bm{y}) = &~ \dfrac{p + \nu}{n}\sum_{i=1}^{n}\log\left(1 + \frac{\bm{x}^\top_{i,*}\sL\ww{\bm{x}_{i,*}}}{\nu} \right)
  - \log\mathrm{det}\left(\boldsymbol{\Theta} + \bm{J}\right)
   + \langle \bm{y}, \mathfrak{d}\bm{w} - \bm{d} \rangle 
   \nonumber\\
  & + \frac{\rho}{2}\norm{\mathfrak{d}\bm{w} - \bm{d}}^2_{2} + \langle \bm{Y}, \boldsymbol{\Theta} - \mathcal{L}\bm{w}\rangle
  +\frac{\rho}{2}\norm{\bT - \mathcal{L}\bm{w}}^2_{\mathrm{F}}.
\end{align}

The subproblem for $\bT$ is identical to that of \eqref{eq:lw-admm-regular-theta}.

The subproblem for $\ww$ can be written as
\begin{align}
    \underset{\ww \geq \mathbf{0}}{\mathsf{minimize}} & ~ \frac{\rho}{2}\ww^\top\left(\mathfrak{d}^*\mathfrak{d} + \mathcal{L}^*\mathcal{L}\right)\ww
    - \bigg\langle \ww, \sL^*\left(\bm{Y}^l + \rho \bT^{l+1}\right) - \mathfrak{d}^*\left(\bm{y}^l - \rho\bm{d}\right)\bigg\rangle
    \nonumber \\ & +
    \dfrac{p + \nu}{n}\sum_{i=1}^{n}\log\left(1 + \frac{\bm{x}^\top_{i,*}\sL\ww{\bm{x}_{i,*}}}{\nu} \right),
    \label{eq:w-update-heavy-tail}
\end{align}
which is similar to that of subproblem~\eqref{eq:w-update}, except it contains the additional concave term
$\dfrac{p + \nu}{n}\sum_{i=1}^{n}\log\left(1 + \frac{\bm{x}^\top_{i,*}\sL\ww{\bm{x}_{i,*}}}{\nu} \right)$
in place of the linear term $\langle \bm{S}, \sL\ww\rangle$.

Similarly to subproblem~\eqref{eq:w-update}, we employ the MM framework to formulate an efficient iterative algorithm to
obtain a stationary point of
Problem~\eqref{eq:w-update-heavy-tail}. 
We proceed by constructing a global upper bound of Problem~\eqref{eq:w-update-heavy-tail}.
Using the fact that the logarithm is globally
upper-bounded by its first-order Taylor expansion, we have
\begin{equation}
  \log\left(1 + \dfrac{t}{b}\right) \leq \log\left(1 + \dfrac{a}{b}\right) + \dfrac{t - a}{a + b}, \forall a \geq 0, t \geq 0, b > 2,
\end{equation}
which results in the following upper bound:
\begin{equation}
 \log\left(1 + \dfrac{\langle \ww, \sL^*{\bm{x}_{i,*}}\bm{x}^\top_{i,*}\rangle}{\nu}\right) \leq
 \dfrac{\langle \ww, \sL^*{\bm{x}_{i,*}}\bm{x}^\top_{i,*}\rangle}{\langle \ww^j, \sL^*{\bm{x}_{i,*}}\bm{x}^\top_{i,*}\rangle + \nu}  + c_1
 \label{eq:ub-student}
\end{equation}
where $c_1 = \log\left(1 + \dfrac{\langle \ww^j, \sL^*{\bm{x}_{i,*}}\bm{x}^\top_{i,*}\rangle}{\nu}\right)
- \dfrac{\langle \ww^j, \sL^*{\bm{x}_{i,*}}\bm{x}^\top_{i,*}\rangle}{\langle \ww^j, \sL^\star{\bm{x}_{i,*}}\bm{x}^\top_{i,*}\rangle + \nu}
$
is a constant.

By upper-bounding the objective function of Problem~\eqref{eq:w-update-heavy-tail}, at point $\bm{w}^j = \bm{w}^l$,
with \eqref{eq:ub-student},
the vector of graph weights $\ww$ can then be updated by solving the following nonnegative, quadratic-constrained, strictly convex problem:
\begin{align}
  \ww^{j+1} & =  
  \underset{\ww \geq \mathbf{0}}{\mathsf{arg~min}} ~ \frac{\rho}{2}\ww^\top\left(\mathfrak{d}^*\mathfrak{d} + \mathcal{L}^*\mathcal{L}\right)\ww
  - \bigg\langle \ww, \sL^*\left(\bm{Y}^l + \rho \bT^{l+1}\right) - \mathfrak{d}^*\left(\bm{y}^l - \rho\bm{d}\right)\bigg\rangle
  \nonumber\\ &\hspace{1.7cm}+ \dfrac{p + \nu}{n}
  \sum_{i=1}^{n}\dfrac{\langle \ww, \sL^*{\bm{x}_{i,*}}\bm{x}^\top_{i,*}\rangle}{\langle \ww^j, \sL^*{\bm{x}_{i,*}}\bm{x}^\top_{i,*}\rangle + \nu}
  \nonumber\\
  & = \underset{\ww \geq \mathbf{0}}{\mathsf{arg~min}} ~ \frac{\rho}{2}\ww^\top\left(\mathfrak{d}^*\mathfrak{d} + \mathcal{L}^*\mathcal{L}\right)\ww
  + \bigg\langle \ww, \sL^*\left(\bm{\tilde{S}}^j - \bm{Y}^l - \rho \bT^{l+1}\right)
  + \mathfrak{d}^*\left(\bm{y}^l - \rho\bm{d}\right)\bigg\rangle,
  \label{eq:w-update-heavy-tail-upper-bound}
\end{align}
where $\bm{\tilde{S}}^j \triangleq \displaystyle \dfrac{1}{n}\sum_{i=1}^{n} \dfrac{(p + \nu)}{{\langle \ww^j,
\sL^*({\bm{x}_{i,*}}\bm{x}^\top_{i,*})\rangle + \nu}}{\bm{x}_{i,*}}\bm{x}^\top_{i,*}$ is a weighted sample covariance matrix.

The objective function of Problem~\eqref{eq:w-update-heavy-tail-upper-bound} can be upper-bounded once again
following the same steps as the ones taken for Problem~\eqref{eq:upper-bound-w}, which results in a projected gradient descent step 
as in \eqref{eq:w-sol} with
\begin{equation}
  \bm{a}^j \triangleq \sL^*\left(
    \bm{\tilde{S}}^j
  - \bm{Y}^l - \rho \left(\bT^{l+1} - \sL\ww^j\right)\right).
  \label{eq:al-update}
\end{equation}

The dual variables $\bm{Y}$ and $\bm{y}$ are updated exactly as in~\eqref{eq:Y-update} and~\eqref{eq:y-update},
respectively.

Algorithm~\ref{alg:heavy-tail}, named $\mathsf{tGL}$, summarizes the implementation to solve Problem~\eqref{eq:student-t}.
The complexity of Algorithm~\ref{alg:heavy-tail} is bounded by the complexity of the eigenvalue decomposition in line~\ref{code:evd}
and its convergence is stated by Theorem~\ref{Connected heavy-tailed graphs}.
\begin{algorithm}
  \caption{Connected Student-$t$ graph learning ($\mathsf{tGL}$)}
    \label{alg:heavy-tail}
    \SetAlgoLined
    \KwData{Data matrix $\bm{X} \in \mathbb{R}^{n\times p}$, initial estimate of the graph weights $\ww^{0}$,
            desired degree vector $\bm{d}$, penalty parameter $\rho > 0$, degrees of freedom $\nu$, tolerance $\epsilon > 0$}
    \KwResult{Laplacian estimation: $\sL\ww^\star$}
    initialize $\bm{Y} = \mathbf{0}$, $\bm{y} = \mathbf{0}$\\
    $l \leftarrow 0$\\
    \While{$\mathsf{max}\left(\vert\bm{r}^l\vert\right) > \epsilon$ or $\mathsf{max}\left(\vert\bm{s}^l\vert\right) > \epsilon$}{
      $\triangleright$ update $\boldsymbol{\Theta}^{l+1}$ via \eqref{eq:admm-Y}\label{code:evd}\\
      $\triangleright$ iterate \eqref{eq:w-sol} with $\bm{a}^j$ given as in \eqref{eq:al-update} so as to obtain $\ww^{l+1}$\\
      $\triangleright$ update $\bm{Y}^{l+1}$ as in \eqref{eq:Y-update}\\
      $\triangleright$ update $\bm{y}^{l+1}$ as in \eqref{eq:y-update}\\
      $\triangleright$ compute residual $\bm{r}^{l+1} = \bT^{l+1} - \sL\ww^{l+1}$\\
      $\triangleright$ compute residual $\bm{s}^{l+1} = \mathfrak{d}\ww^{l+1} - \bm{d}$\\
      $l \leftarrow l + 1$
    }
\end{algorithm}

\noindent \textit{Remark:} in practical code implementations, the rank-1 data matrices
$\bm{x}_{i,*}\bm{x}^\top_{i,*}, i=1, 2, ..., n,$ involved in the computation of~\eqref{eq:al-update},
are only necessary through the terms $\sL^* \left(\bm{x}_{i,*}\bm{x}^\top_{i,*}\right)$, which can
be readily pre-computed before the starting of the iterative process.

\begin{theorem}\label{Connected heavy-tailed graphs}
  Algorithm~\ref{alg:heavy-tail} subsequently converges for any sufficiently large $\rho$, that is, the sequence
  $\left\lbrace \left( \boldsymbol{\Theta}^{l}, \ww^{l}, \bm Y^{l}, \bm y^{l} \right)\right\rbrace$ generated by
  Algorithm~\ref{alg:heavy-tail} has at least one limit point, and each limit point is a stationary point of
  \eqref{eq:student-t}.
\end{theorem}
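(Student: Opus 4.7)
The plan is to transplant the nonconvex ADMM convergence argument used for Theorem~\ref{k-component-convergence} into the Student-$t$ setting, taking care of the new concave term $\tfrac{p+\nu}{n}\sum_{i}\log(1+\bm{x}_{i,*}^\top\sL\ww\bm{x}_{i,*}/\nu)$ that appears in~\eqref{eq:student-t}. Because the $\bT$-subproblem, the dual updates, and the degree constraint are structurally identical to those of Algorithm~\ref{alg:connected}, the only genuinely new ingredient is the $\ww$-block, which is handled by the MM majorizer~\eqref{eq:ub-student} composed with the quadratic upper bound of Lemma~\ref{lemma:lipschitz} already used in earlier sections.

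Concretely, I would proceed in four steps. First, show that the partial augmented Lagrangian $L_\rho$ is bounded below on the feasible set $\{\ww\ge\mathbf{0},\,\bT\succeq\mathbf{0},\,\sD\ww=\bm{d}\}$: the Student-$t$ log-sum is nonnegative, $-\log\det(\bT+\JJ)$ is coercive in $\bT$, and the degree equality together with nonnegativity of $\ww$ keeps $\sL\ww$ in a compact set, so the quadratic consistency penalties dominate at infinity. Second, establish a sufficient-decrease inequality of the form $L_\rho(\bT^{l},\ww^{l},\bm{Y}^{l},\bm{y}^{l})-L_\rho(\bT^{l+1},\ww^{l+1},\bm{Y}^{l+1},\bm{y}^{l+1})\ge c\bigl(\|\ww^{l+1}-\ww^{l}\|^2+\|\bT^{l+1}-\bT^{l}\|^2\bigr)$: the $\bT$-update decreases $L_\rho$ by strong convexity of $-\log\det(\cdot+\JJ)+\tfrac{\rho}{2}\|\cdot\|^2$; the $\ww$-update decreases it because the MM majorizer is tight at $\ww^{l}$ and the outer quadratic surrogate has curvature $\rho\lambda_{\max}(\sD^*\sD+\sL^*\sL)=2\rho(2p-1)$ by Lemma~\ref{lemma:lipschitz}; and the dual ascent is absorbed by choosing $\rho$ large enough that $\|\bm{Y}^{l+1}-\bm{Y}^{l}\|^2\le C\|\bT^{l+1}-\bT^{l}\|^2$, which follows from the first-order optimality of the $\bT$-subproblem identifying $\bm{Y}^{l+1}$ with the gradient of $-\log\det(\bT^{l+1}+\JJ)$ and the local Lipschitzness of that gradient on compact spectral sets. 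Third, summing the sufficient-decrease inequality gives $\sum_l \|\ww^{l+1}-\ww^{l}\|^2<\infty$ and the analogous bound for $\bT$, so primal and dual residuals vanish. Fourth, by Bolzano-Weierstrass extract a limit point $(\bT^\star,\ww^\star,\bm{V}^\star,\bm{Y}^\star,\bm{y}^\star)$ and pass to the limit in the KKT system of each subproblem; the tangency identity $\nabla g(\ww^\star,\ww^\star)=\nabla f(\ww^\star)$ of the MM majorizer converts the inner stationarity condition into the true stationarity condition of~\eqref{eq:student-t} at the limit.

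The main obstacle, as in the $k$-component case, lies in the $\ww$-step: the true objective in $\ww$ contains the nonconvex Student-$t$ log-sum, and we solve it only approximately through nested MM (linearizing each log term) followed by projected-gradient iterations. Producing a clean sufficient-decrease inequality for $L_\rho$ in this block requires that (i) the MM inner loop is iterated to its fixed point, which exists and is unique by strict convexity of each inner majorizer exactly as for Problem~\eqref{eq:w-update}, and (ii) $\rho$ is chosen large enough to dominate both the Lipschitz constant $2(2p-1)$ from Lemma~\ref{lemma:lipschitz} and the extra curvature contributed by the weighted sample covariance $\bm{\tilde S}^{j}$ on the compact feasible region. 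Once these two points are secured, the standard MM tangency and first-order arguments translate majorizer descent into descent of the true Lagrangian, and the rest of the proof collapses onto the template of Theorem~\ref{k-component-convergence}, with only the dual bound adjusted to account for the smoothness of the Student-$t$ log-density rather than the linear trace term $\langle\bm{S},\sL\ww\rangle$.
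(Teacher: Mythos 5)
Your overall template is the same as the paper's: establish boundedness of the iterates, show that the partial augmented Lagrangian $L_\rho(\bT,\ww,\bm Y,\bm y)$ is lower bounded and monotonically decreasing for $\rho$ large enough, conclude that the residuals $\bT^{l}-\sL\ww^{l}$ and $\sD\ww^{l}-\bm d$ vanish, and pass to the limit in the subproblem optimality conditions. Your treatment of the Student-$t$ term also matches in substance: the paper does not invoke the MM tangency conditions explicitly but instead uses the variational-inequality characterization of the stationary point returned by the inner MM loop together with concavity and $L_r$-Lipschitz continuity of the gradient of $r(\bm L)=\frac{p+\nu}{n}\sum_i\log(1+\bm x_{i,*}^\top\bm L\bm x_{i,*}/\nu)$, which yields the same $-\frac{L_r}{2}\norm{\sL\ww^{l+1}-\sL\ww^{l}}_{\mathrm{F}}^2$ penalty that your "extra curvature" condition on $\rho$ is meant to absorb. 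Two of your side claims are inaccurate but not load-bearing: the $\ww$-subproblem \eqref{eq:w-update-heavy-tail} is nonconvex (strictly convex quadratic plus a concave log-sum), so the MM fixed point need not be unique, only stationary; and lower-boundedness of $L_\rho$ must be argued from boundedness of the actual iterates (which do not satisfy $\sD\ww=\bm d$ exactly), which the paper obtains by induction from coercivity of each $\ww$-subproblem, not from the feasible set.

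The one genuinely different step is your control of the dual increments, and as stated it does not go through. Because $\bT$ is updated \emph{before} $\ww$, the first-order condition of the $\bT$-subproblem gives $\bm Y^{l}+\rho(\bT^{l+1}-\sL\ww^{l})=(\bT^{l+1}+\JJ)^{-1}$, whereas the dual update uses $\sL\ww^{l+1}$; hence $\bm Y^{l+1}=(\bT^{l+1}+\JJ)^{-1}+\rho\sL(\ww^{l}-\ww^{l+1})$, and $\norm{\bm Y^{l+1}-\bm Y^{l}}_{\mathrm{F}}$ is controlled by the Lipschitz modulus of $(\cdot+\JJ)^{-1}$ times $\norm{\bT^{l+1}-\bT^{l}}_{\mathrm{F}}$ \emph{plus} terms in $\norm{\sL(\ww^{l}-\ww^{l+1})}_{\mathrm{F}}$ and $\norm{\sL(\ww^{l-1}-\ww^{l})}_{\mathrm{F}}$, not by $\norm{\bT^{l+1}-\bT^{l}}_{\mathrm{F}}$ alone; the lagged term forces either a Lyapunov function with memory or a reordering of the blocks. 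Moreover, the Lipschitz bound on $(\cdot+\JJ)^{-1}$ needs the spectra of $\bT^{l}+\JJ$ bounded away from zero, which presupposes the boundedness you are trying to establish. The paper sidesteps all of this by simply requiring $\rho$ to satisfy the explicit condition \eqref{eq:rho-ht}, which bounds the ratio of dual to primal increments along the trajectory (with an adaptive increase of $\rho$ as a fallback); this is less self-contained than your route but avoids the circularity. If you want to keep your derivation, you must add the lagged $\ww$-difference terms to the sufficient-decrease inequality and verify they telescope.
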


\begin{proof}
  The proof is deferred to Appendix~\ref{sec:proof-connected-heavy-tail}.
\end{proof}

\subsection{\texorpdfstring{$k$-}~component heavy-tailed graphs}

Extending Problem~\eqref{eq:student-t} for $k$-component graphs follows the same strategy as in
Problem~\eqref{eq:k-comp-proposed-relaxed}, which results in the following optimization program
\begin{equation}
  \begin{array}{cl}
    \underset{\ww \geq \mathbf{0}, \bT \succeq 0, \bm{V}}{\mathsf{minimize}} &
    \dfrac{p + \nu}{n}\displaystyle\sum_{i=1}^{n}\log\left(1 + \dfrac{\bm{x}^\top_{i,*}\sL\ww{\bm{x}_{i,*}}}{\nu} \right)
    - \log\mathrm{det^*}\left(\bT\right) + \eta\mathsf{tr}(\sL\ww\bm{V}\bm{V}^\top),\\
    \mathsf{subject~to} & \bT = \sL\ww,~\mathsf{rank}(\bT) = p-k, ~\mathfrak{d}\ww = \bm{d}, ~\bm{V}^\top\bm{V} = \bm{I},~ \bm{V} \in \mathbb{R}^{p\times k}.
  \end{array}
  \label{eq:k-comp-heavy-tail}
\end{equation}

Following the exposition in the preceding sections, the partial augmented Lagrangian function
of Problem~\eqref{eq:k-comp-heavy-tail} is given as
\begin{align}\label{eq:student-t-k-comp}
  L_\rho(\bT, \ww, \bm{Y}, \bm{y}) = &~ \dfrac{p + \nu}{n}\sum_{i=1}^{n}\log\left(1 + \frac{\bm{x}^\top_{i,*}\sL\ww{\bm{x}_{i,*}}}{\nu} \right) 
  - \log\mathrm{det}^*\left(\boldsymbol{\Theta}\right)
   + \eta\mathsf{tr}\left(\sL\ww \bm{V}\bm{V}^\top\right)
   \nonumber\\
  &  + \langle \bm{y}, \mathfrak{d}\bm{w} - \bm{d} \rangle + \frac{\rho}{2}\norm{\mathfrak{d}\bm{w} - \bm{d}}^2_{2} + \langle \bm{Y}, \boldsymbol{\Theta} - \mathcal{L}\bm{w}\rangle
  +\frac{\rho}{2}\norm{\bT - \mathcal{L}\bm{w}}^2_{\mathrm{F}}.
\end{align}

The subproblems for the variables $\bT$ and $\bm{V}$ are identical to those of Problem~\eqref{eq:k-comp-proposed-relaxed}, hence they
follow the same update expressions.

The subproblem for $\bm{w}$ is virtually the same as in~\eqref{eq:w-update-heavy-tail-upper-bound}, except for the additional
term $\eta\mathsf{tr}(\sL\ww\bm{V}^l\bm{V}^{l\top}) = \eta \langle \ww, \sL^* \left(\bm{V}^l\bm{V}^{l\top}\right)\rangle$.
Hence, its update is also
a projected gradient descent step, alike~\eqref{eq:w-sol} where
\begin{equation}
  \bm{a}^j \triangleq \sL^*\left(\bm{\tilde{S}}^j + \eta\bm{V}^l\bm{V}^{l\top}
  - \bm{Y}^l - \rho \left(\bT^{l+1} - \sL\ww^j\right)\right).
  \label{eq:al-update-k-comp-heavy-tail}
\end{equation}

The dual variables $\bm{Y}$ and $\bm{y}$ are updated as in~\eqref{eq:Y-update} and~\eqref{eq:y-update},
respectively.

Algorithm~\ref{alg:heavy-tail-k-comp}, named $\mathsf{ktGL}$,
summarizes the implementation to solve Problem~\eqref{eq:k-comp-heavy-tail}.
\begin{algorithm}
  \caption{$k$-component Student-$t$ graph learning ($\mathsf{ktGL}$)}
    \label{alg:heavy-tail-k-comp}
    \SetAlgoLined
    \KwData{Data matrix $\bm{X} \in \mathbb{R}^{n\times p}$, initial estimate of the graph weights $\ww^{0}$,
            desired number of graph components $k$, desired degree vector $\bm{d}$, degrees of freedom $\nu$, penalty parameter $\rho > 0$,
            tolerance $\epsilon > 0$}
    \KwResult{Laplacian estimation: $\sL\ww^\star$}
    initialize $\bm{Y} = \mathbf{0}$, $\bm{y} = \mathbf{0}$\\
    $l \leftarrow 0$\\
    \While{$\mathsf{max}\left(\vert\bm{r}^l\vert\right) > \epsilon$ or $\mathsf{max}\left(\vert\bm{s}^l\vert\right) > \epsilon$}{
      $\triangleright$ update $\boldsymbol{\Theta}^{l+1}$ via \eqref{eq:admm-Y}\\
      $\triangleright$ update $\ww^{l+1}$ as in \eqref{eq:w-sol} with $\bm{a}^j$ given as in \eqref{eq:al-update-k-comp-heavy-tail}\\
      $\triangleright$ update $\bm{V}^{l+1}$ as in \eqref{eq:k-comp-proposed-relaxed-sub-V}\\
      $\triangleright$ update $\bm{Y}^{l+1}$ as in \eqref{eq:Y-update}\\
      $\triangleright$ update $\bm{y}^{l+1}$ as in \eqref{eq:y-update}\\
      $\triangleright$ compute residual $\bm{r}^{l+1} = \bT^{l+1} - \sL\ww^{l+1}$\\
      $\triangleright$ compute residual $\bm{s}^{l+1} = \mathfrak{d}\ww^{l+1} - \bm{d}$\\
      $l \leftarrow l + 1$
    }
\end{algorithm}

\begin{theorem}\label{heavy-tail-k-comp-convergence}
  Algorithm~\ref{alg:heavy-tail-k-comp} subsequently converges for any sufficiently large $\rho$, that is, the sequence
  $\left\lbrace \left( \boldsymbol{\Theta}^{l}, \ww^{l}, \bm V^{l}, \bm Y^{l}, \bm y^{l} \right)\right\rbrace$ generated by
  Algorithm~\ref{alg:heavy-tail-k-comp} has at least one limit point, and each limit point is a stationary point of
  \eqref{eq:k-comp-heavy-tail}.
\end{theorem}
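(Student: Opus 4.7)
The plan is to reduce the convergence analysis of Algorithm~\ref{alg:heavy-tail-k-comp} to a careful combination of the arguments already used for Theorem~\ref{k-component-convergence} (which handled the rank constraint through the Ky Fan variable $\bm V$) and Theorem~\ref{Connected heavy-tailed graphs} (which handled the concave Student-$t$ term through an MM upper bound). The augmented Lagrangian in~\eqref{eq:student-t-k-comp} differs from~\eqref{eq:k-component-Lagrangian} only by replacing the linear term $\tr(\bm S \sL\ww)$ by $\frac{p+\nu}{n}\sum_i \log(1+\bm x_{i,*}^\top \sL\ww \bm x_{i,*}/\nu)$, so once I verify that this additional nonconvex but smooth term is handled cleanly by the MM majorizer used in~\eqref{eq:w-update-heavy-tail-upper-bound}, the rest of the proof template carries over.

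First, I would set up the block structure of the algorithm and verify the standard ingredients for nonconvex ADMM convergence: (i) closedness/continuity of the subproblems in $\bT$, $\ww$, $\bm V$; (ii) uniqueness or well-definedness of the $\bT$-update (via the closed form~\eqref{eq:bt-update-k-comp} of Lemma~\ref{eq:prox-lema}, restricted to the top $p-k$ eigenpairs); (iii) the $\bm V$-update is a Ky Fan eigenvector problem whose minimizer lies in a compact Stiefel manifold, hence the sequence $\{\bm V^l\}$ is automatically bounded; and (iv) the $\ww$-subproblem, although solved inexactly by an inner projected-gradient MM loop on the upper bound~\eqref{eq:upper-bound-w} applied to~\eqref{eq:w-update-heavy-tail-upper-bound}, can be shown to produce a stationary point of its strictly convex quadratic outer majorizer using Lemma~\ref{lemma:lipschitz}; the inner MM scheme itself converges by the monotonicity of MM, so at the outer level we may treat $\ww^{l+1}$ as the exact minimizer of a strictly convex problem.

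Next comes the core of the argument, which is to exhibit a sufficient decrease of the augmented Lagrangian along iterations. The plan is:
\begin{enumerate}
\item Show that each of the primal updates for $\bT$, $\ww$, $\bm V$ produces a decrease of $L_\rho$ of at least a quadratic amount in the successive differences $\Vert\bT^{l+1}-\bT^l\Vert_\mathrm{F}^2$, $\Vert\ww^{l+1}-\ww^l\Vert_2^2$, $\Vert\bm V^{l+1}-\bm V^l\Vert_\mathrm{F}^2$. For $\ww$ this uses strong convexity of its MM surrogate with modulus tied to $\rho\lambda_{\mathsf{max}}(\mathfrak d^*\mathfrak d+\sL^*\sL)=2\rho(2p-1)$; for $\bT$ this uses strong convexity induced by the quadratic penalty and the convexity of $-\log\mathrm{det}^*$ restricted to the rank-$(p-k)$ subspace where $\bT^{l+1}$ lies; for $\bm V$ it is a standard bound on the Stiefel manifold.
\item Bound the dual ascent terms $\Vert\bm Y^{l+1}-\bm Y^l\Vert_\mathrm{F}^2$ and $\Vert\bm y^{l+1}-\bm y^l\Vert_2^2$ by the primal successive differences, using the KKT optimality conditions of the $\ww$-subproblem (so that $\nabla_\ww L_\rho=0$ on the active set) together with the Lipschitz continuity of the gradient of the new smooth nonconvex term $\frac{p+\nu}{n}\sum_i\log(1+\bm x_{i,*}^\top\sL\ww\bm x_{i,*}/\nu)$, which is Lipschitz since its Hessian is globally bounded on the feasible set (where $\sL\ww$ has bounded spectrum given $\mathfrak d\ww=\bm d$). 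This is where $\rho$ needs to be taken sufficiently large to dominate the Lipschitz constants.
\item Combine the above with lower boundedness of $L_\rho$ (which follows from coercivity of $-\log\mathrm{det}^*(\bT)$ on the rank-constrained cone together with positivity of the Student-$t$ log term) to conclude that $L_\rho(\bT^l,\ww^l,\bm V^l,\bm Y^l,\bm y^l)$ is monotonically decreasing and bounded below, forcing successive differences of all variables to vanish.
\end{enumerate}

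Finally, I would argue boundedness of the iterates: $\bm V^l$ lies in a compact Stiefel manifold; $\ww^l$ is bounded through $\mathfrak d\ww^l\to\bm d$ together with $\ww\geq\mathbf 0$; $\bT^l$ is bounded by $\bT^l=\sL\ww^l+o(1)$ via the primal residual; and $\bm Y^l,\bm y^l$ are bounded via the stationarity relations coupling them to $\nabla_\ww L_\rho$. Then Bolzano--Weierstrass yields at least one limit point, and passing to the limit in the KKT conditions of each subproblem (with vanishing successive differences and vanishing primal residuals) shows that any limit point satisfies the KKT conditions of Problem~\eqref{eq:k-comp-heavy-tail}, hence is a stationary point. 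The main obstacle I expect is the third step above: making the sufficient-decrease lemma rigorous simultaneously for the nonsmooth indicator of the rank-$(p-k)$ set, the concave Student-$t$ term, and the Ky Fan penalty; in particular, quantifying the precise threshold on $\rho$ in terms of the Lipschitz constant of the weighted-covariance-type gradient and the spectral bound from Lemma~\ref{lemma:lipschitz} will require the most care. The detailed calculation is deferred to Appendix~\ref{sec:proof-k-component-heavy-tail}, and follows mutatis mutandis the scheme established in Appendices~\ref{sec:proof-k-component} and~\ref{sec:proof-connected-heavy-tail}.
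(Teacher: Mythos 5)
Your overall architecture — combine the boundedness/monotone-decrease template of Theorem~\ref{k-component-convergence} with the Lipschitz-gradient handling of the Student-$t$ term from Theorem~\ref{Connected heavy-tailed graphs}, then pass to the limit in the per-block optimality conditions — is exactly what the paper does (its own proof of Theorem~\ref{heavy-tail-k-comp-convergence} is literally ``combine Appendices~\ref{sec:proof-k-component} and~\ref{sec:proof-connected-heavy-tail}''). However, there is one step in your plan that would fail as written: the lower boundedness of $L_\rho$. You claim it ``follows from coercivity of $-\log\mathrm{det}^*(\bT)$ on the rank-constrained cone together with positivity of the Student-$t$ log term,'' but $-\log\mathrm{det}^*(\bT)$ is \emph{not} bounded below — it tends to $-\infty$ as the nonzero eigenvalues of $\bT$ grow — and the linear dual terms $\langle \bm y, \mathfrak{d}\ww-\bm d\rangle$ and $\langle \bm Y, \bT-\sL\ww\rangle$ are unbounded below unless the multipliers are bounded. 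So the lower bound on $L_\rho$ genuinely requires boundedness of the iterates (including the duals), yet in your plan boundedness of the iterates is argued \emph{last}, and is derived from $\mathfrak{d}\ww^l\to\bm d$ and the vanishing residuals, which themselves are consequences of the convergence of $L_\rho$ — i.e., of the very lower bound you are trying to establish. This is circular.

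The paper breaks the circle by establishing boundedness of the full primal-dual sequence $\left\lbrace(\bT^l,\ww^l,\bm V^l,\bm Y^l,\bm y^l)\right\rbrace$ \emph{first}, by induction on $l$: given bounded iterates at step $l-1$, the closed-form $\bT$-update produces a bounded $\bT^l$; the $\ww$-subproblem objective is coercive (its quadratic part has modulus at least $\rho\,\lambda_{\min}(\mathfrak{d}^*\mathfrak{d}+\sL^*\sL)=2\rho$ and the Student-$t$ term is nonnegative), so $\ww^l$ is bounded; $\bm V^l$ lies on a compact Stiefel manifold; and the dual updates then stay bounded one step at a time. Only after that does it deduce that $L_\rho$ evaluated along this bounded sequence is bounded below, and proceed to the sufficient-decrease and limit-point arguments you describe. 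If you reorder your proof accordingly and replace the coercivity claim for $-\log\mathrm{det}^*$ by this induction, the rest of your plan goes through and matches the paper's intended argument. A minor additional remark: the paper does not establish a quadratic decrease for the $\bT$- and $\bm V$-blocks (it only uses that each is a minimizer of its subproblem, hence non-increase), and its threshold on $\rho$ is stated as an iterate-dependent ratio rather than the explicit Lipschitz-constant bound you aim for; your more explicit route is fine but is not required.
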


\begin{proof}
  See Appendix~\ref{sec:proof-heavy-tail-k-comp}.
\end{proof}

\section{Experimental Results}

We perform experiments with price data queried from S\&P500 stocks.
In such real-world data experiments, where the a ground-truth graph cannot possibly be known, we evaluate the
performance of the learned graphs by visualizing the resulting estimated graph network and verifying whether
it is aligned with prior, expert knowledge available, \textit{e.g.}, the GICS sector information
of each stock\footnote{It is important to notice that the GICS sector classification system might itself be prone to
misclassifications specially for companies that serve many markets.}.
In addition, we employ measures such as graph modularity (\textit{cf.} Definition~\eqref{eq:modularity})
and density as an objective criterion to evaluate the quality of the estimated graphs.

\noindent\textit{Baseline algorithms}: We compare the proposed algorithms (Table~\ref{tab:proposed-methods}) with
state-of-the-art, baseline algorithms, depending on the specific graph structure that they are
suitable to estimate. In the existing literature, it is a common practice not to compare graph algorithms
that adopt distinct operational assumptions, \textit{i.e.}, the LGMRF approach and the smooth signal approach.
This separation is certainly useful from a theoretical perspective. In this work, however, we are mostly interested
in the applicability of graph learning algorithms in practical scenarios and whether the estimated graphs are aligned
with prior expert knowledge available irrespective of their underlying assumptions.
Therefore, in our experimental analysis, we consider algorithms from
both operational assumptions. A summary of the baseline algorithms along with their target graph
structure is illustrated in Table~\ref{tab:methods}. For a fair comparison among algorithms,
we set the degree vector $\bm{d}$ equal to $\bm{1}$ for the proposed algorithms,
\textit{i.e.}, we do not consider any prior information on the degree of nodes.

\noindent\textit{Initial graph}: Because the algorithms proposed in this paper work in an iterative fashion,
they naturally require an initial estimate of the graph. An appropriate initial
estimation is critical to obtain a meaningful solution, especially in cases when
the optimization problem is non-convex. However, obtaining an initial estimate inherently
involves  a trade-off between computational efficiency and quality. The latter being measured
by how far the initial point is from an actual optimal point. Since the computational complexity
of the proposed algorithms are bounded below by the eigenvalue decomposition $O(p^3)$,
we are interested in simple strategies. Here we consider the strategy used by \citet{kumar20192},
where the initial graph $\bm{w}^{0}$ is set as $(\tilde{\bm w})^+$, where $\tilde{\bm w}$ is the upper triangular
part of the pseudo sample inverse correlation matrix $\bm{S}^\dagger$.

\begin{table}[!htb]
  \centering
  \caption{Proposed algorithms, their target graph structure, operational assumption, and computational complexity.}
  \begin{tabular}{lllc}
  \hline
  Algorithm & Graph Structure & Assumption & Complexity\\ \hline
  $\mathsf{tGL}$ & connected & Laplacian Student-$t$ MRF & $O(p^3)$ \\
  $\mathsf{kGL}$ & $k$-component & LGMRF & $O(p^3)$ \\
  $\mathsf{ktGL}$ & $k$-component & Laplacian Student-$t$ MRF & $O(p^3)$ \\
  \end{tabular}
  \label{tab:proposed-methods}
\end{table}

\begin{table}[!htb]
  \centering
  \caption{Baseline algorithms, their target graph structure, operational assumption, and computational complexity.}
  \begin{tabular}{lllc}
  \hline
  Algorithm & Graph Structure & Assumption & Complexity\\ \hline
  $\mathsf{GL}$-$\mathsf{SigRep}$~\citep{dong2016} & connected & smooth signals & $O(np^2)$ \\
  $\mathsf{SSGL}$~\citep{kalofolias2016} & connected & smooth signals & $O(p^2)$\\
  $\mathsf{GLE}$-$\mathsf{ADMM}$~\citep{zhao2019} & connected & LGMRF & $O(p^3)$\\
  $\mathsf{NGL}$-$\mathsf{MCP}$~\citep{ying2020} & connected & LGMRF & $O(p^3)$\\
  $\mathsf{SGL}$~\citep{kumar20192, kumar2019} & $k$-component & LGMRF & $O(p^3)$\\
  $\mathsf{CLR}$~\citep{feiping2016} & $k$-component & smooth signals & $O(p^3)$ \\
  \end{tabular}
  \label{tab:methods}
\end{table}

In the experiments that follow, we use daily price time series data of stocks belonging to the
S\&P500 index. We start by constructing the log-returns data matrix, \textit{i.e.}, a matrix
$\bm{X} \in \mathbb{R}^{n\times p}$, where $n$ is the number of price observations and $p$ is the
number of stocks, such that the $j$-th column contains the time series of log-returns of the $j$-th stock,
which can be computed as
\begin{equation}
  X_{i,j} = \log P_{i,j} - \log P_{i-1,j} ,
\end{equation}
where $P_{i,j}$ is the closing price of the $j$-th stock on the $i$-th day.

\subsection{\texorpdfstring{$k$}~-component graphs: degree control is crucial}
\label{sec:degree-control}

To illustrate the importance of controlling the nodes degrees while learning $k$-component graphs, we conduct
a comparison between the spectral constraints algorithm proposed in~\citep{kumar2019}, denoted as $\mathsf{SGL}$,
and the proposed $k$-component graph learning (Algorithm~\ref{alg:k-component-ky-fan}) on the basis
of the sample correlation matrix. To that end, we set up experiments with two datasets: (i) 
stocks from four sectors, namely, $\mathsf{Health}~\mathsf{Care}$, $\mathsf{Consumer}~\mathsf{Staples}$,
$\mathsf{Energy}$, and $\mathsf{Financials}$, from the period starting from Jan. 1st 2014 to Jan. 1st 2018.
This datasets results in $n = 1006$ stock price observations of $p = 181$ stocks; (ii) we expand the dataset
by including two more sectors, namely, $\mathsf{Industrials}$ and $\mathsf{Information}~\mathsf{Technology}$.
In addition, we collect data from Jan. 1st 2010 to Jan. 1st 2018, resulting in $p = 292$ stocks and $n = 2012$
observations.

Figure~\ref{fig:isolated-nodes} shows the estimated financial stocks networks with
$k=4$ (Figures~\ref{fig:sgl-1} and~\ref{fig:dgl-1}) and $k=6$ (Figures~\ref{fig:sgl-2} and~\ref{fig:dgl-2}).
Clearly, the absence of degrees constraints in the learned graph by $\mathsf{SGL}$ (benchmark)~\citep{kumar2019}
shows evidence that the algorithm is unable to recover a non-trivial $k$-component
solution, \textit{i.e.}, a graph without isolated nodes. 
In addition, the learned graphs by $\mathsf{SGL}$ present a high number
of inter-cluster connections (grey-colored edges), which is not expected from prior expert knowledge of the sectors.
The proposed algorithm not only avoids isolated nodes via graph degree constraints, but
most importantly learns graphs with meaningful representations, \textit{i.e.}, they are aligned with
the available sector information.

\begin{figure}[!htb]
  \captionsetup[subfigure]{justification=centering}
  \centering
  \begin{subfigure}[t]{0.48\textwidth}
      \centering
      \includegraphics[scale=.58]{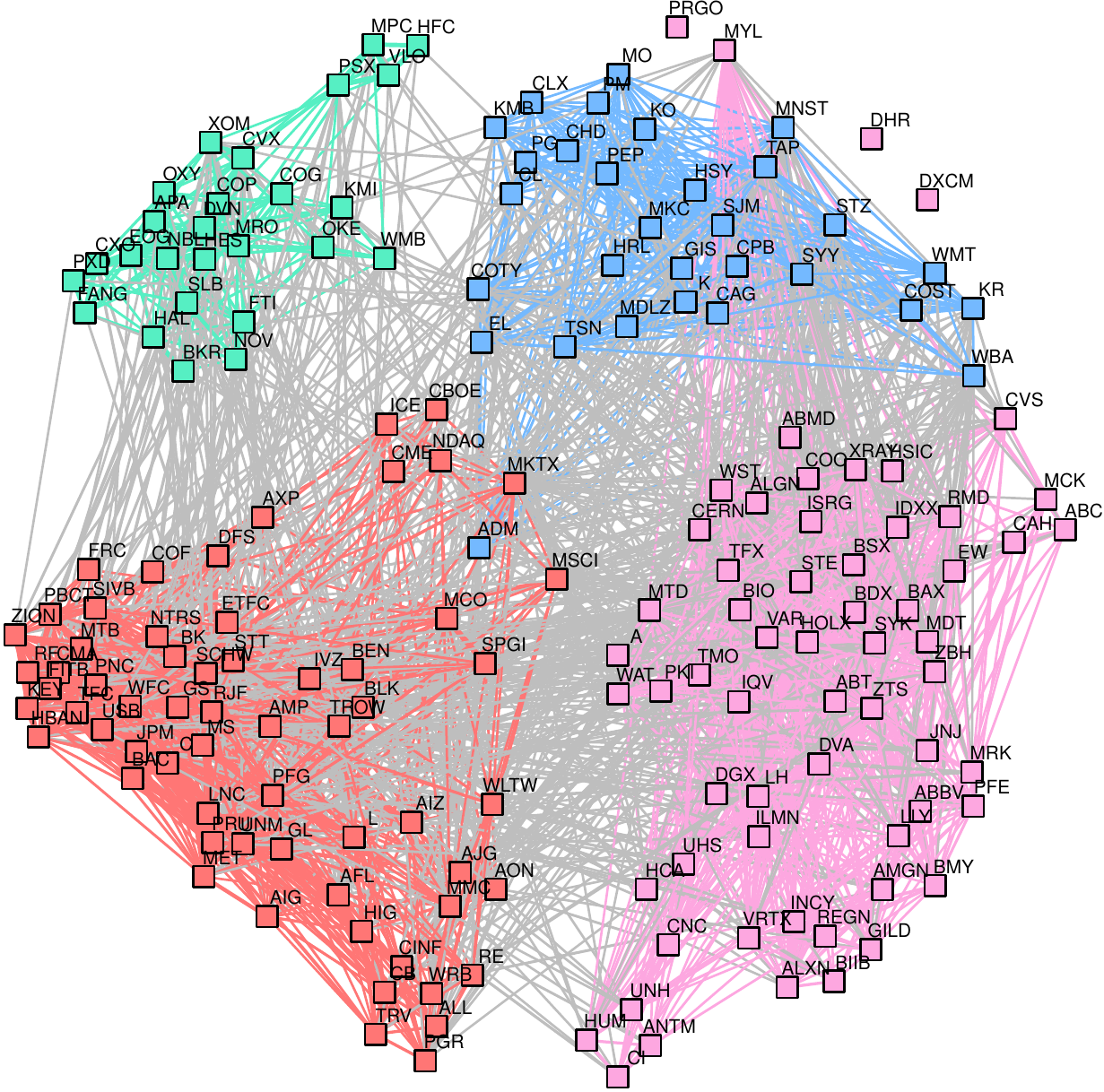}
      \caption{$4$-comp graph learned via $\mathsf{SGL}$. $\eta = 10$.}
      \label{fig:sgl-1}
  \end{subfigure}%
  ~
  \begin{subfigure}[t]{0.48\textwidth}
      \centering
      \includegraphics[scale=.58]{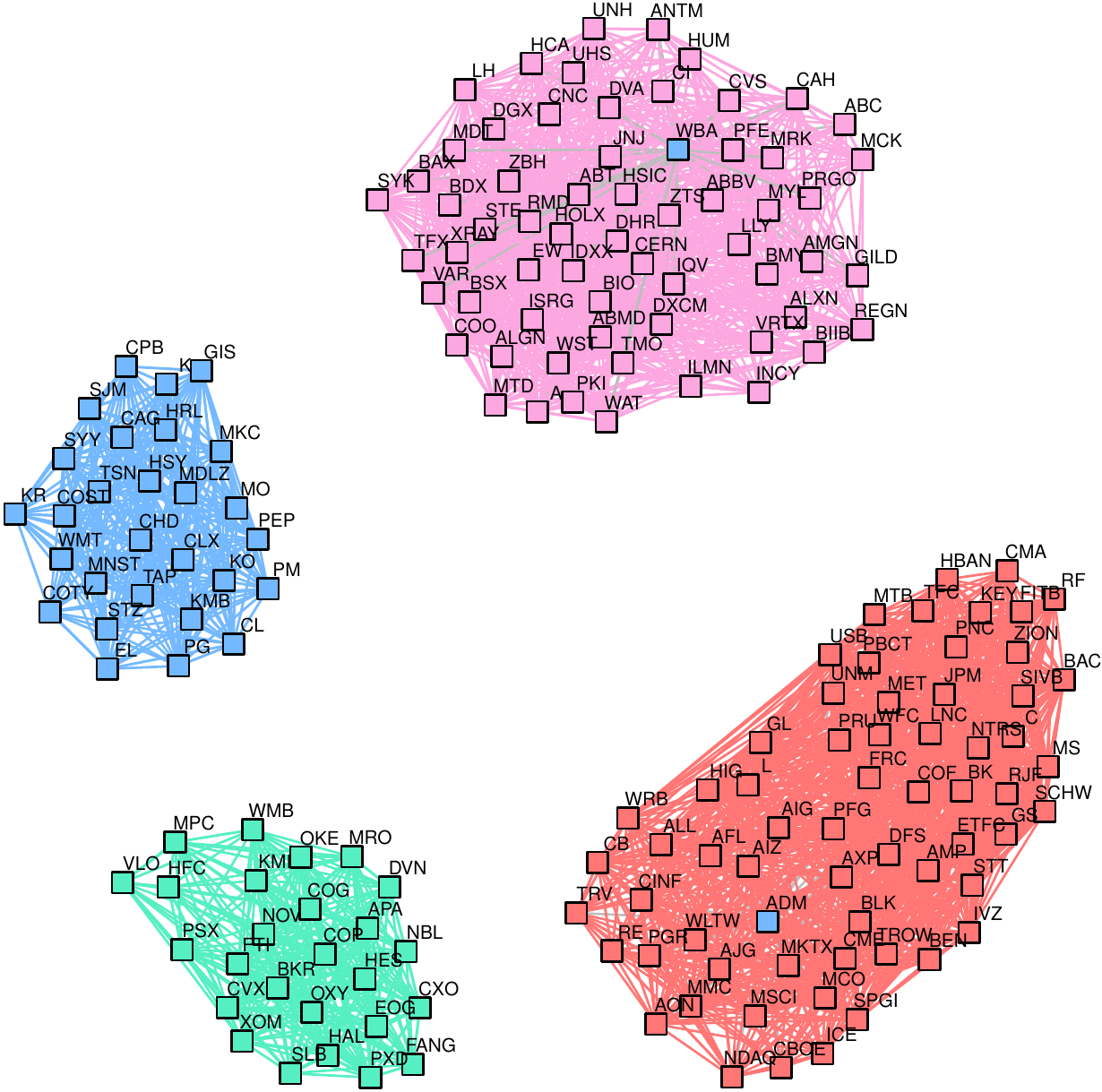}
      \caption{$4$-comp graph learned via the proposed $\mathsf{kGL}$ algorithm.}
      \label{fig:dgl-1}
  \end{subfigure}%
  \\
\begin{subfigure}[t]{0.48\textwidth}
    \centering
    \includegraphics[scale=.58]{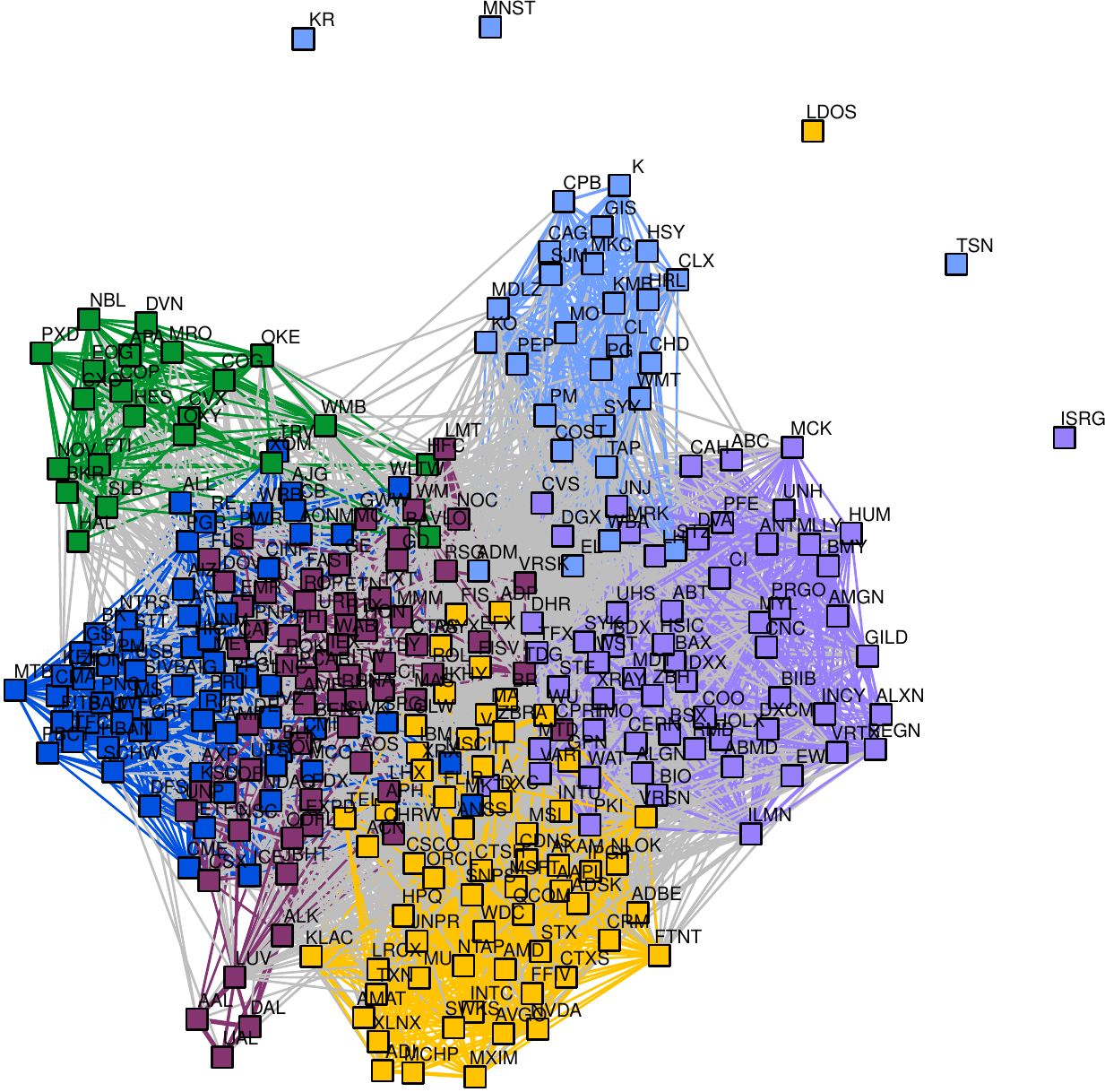}
    \caption{$6$-comp graph learned via $\mathsf{SGL}$. $\eta = 10$.}
    \label{fig:sgl-2}
\end{subfigure}%
~
\begin{subfigure}[t]{0.48\textwidth}
  \centering
  \includegraphics[scale=.58]{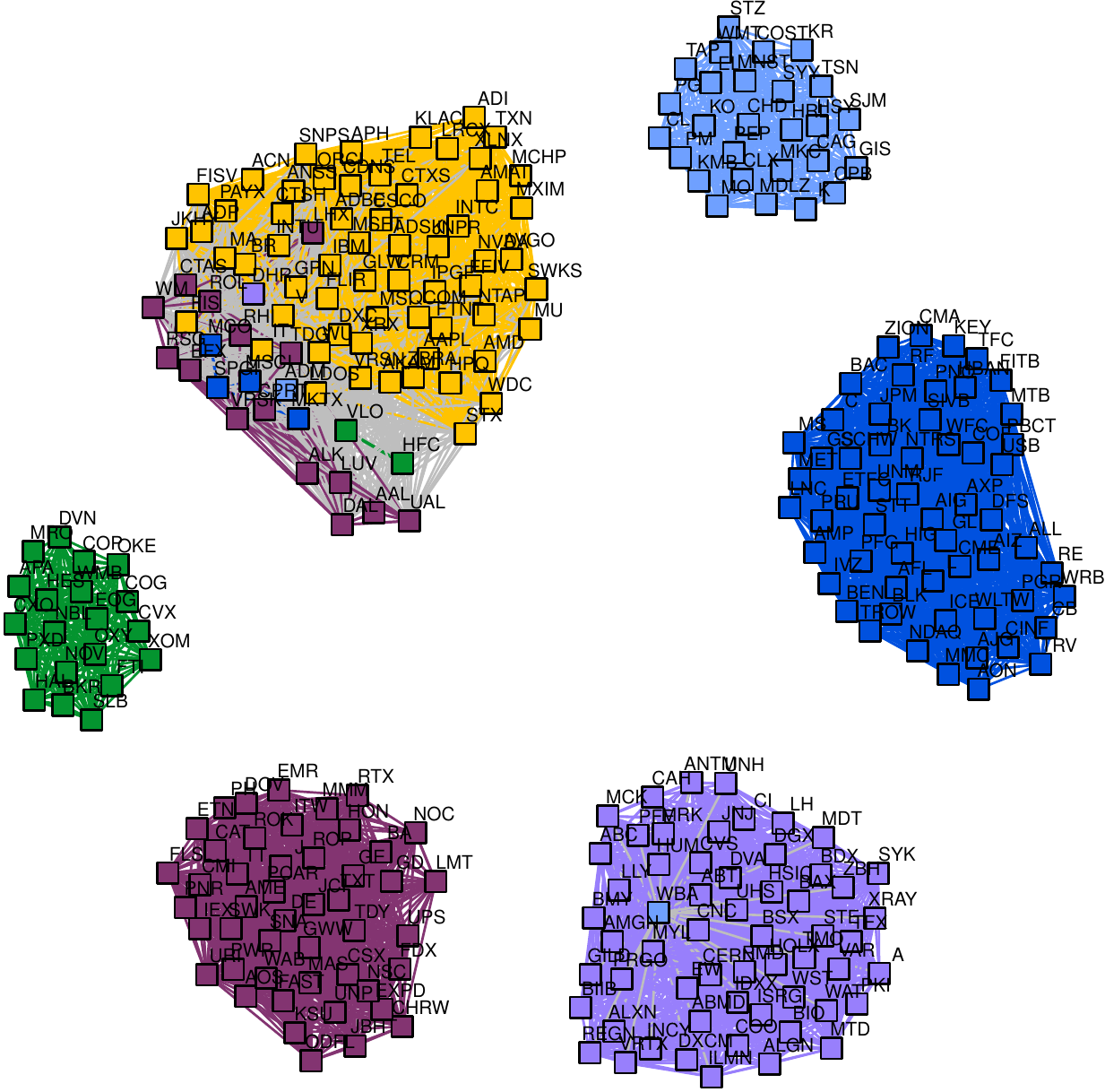}
  \caption{$6$-comp graph learned via the proposed $\mathsf{kGL}$ algorithm.}
      \label{fig:dgl-2}
\end{subfigure}
  \caption{Rank constraints are met by the $\mathsf{SGL}$ algorithm~\citep{kumar2019} (Figures~\ref{fig:sgl-1}, \ref{fig:sgl-2}), nonetheless the learned graph conveys little information due to the lack
           of control on the degrees, which allows the learning of trivial $k$-component graphs, \textit{i.e},
           those containing isolated nodes.}
  \label{fig:isolated-nodes}
\end{figure}

\subsection{Effects of market factor and data preprocessing}
\label{sec:market-factor-effect}

Removing the market factor prior to performing analysis on a set of stock prices is
a common practice~\citep{mantegna1999, laloux2000}. The market factor is the component of stock signals
associated with the strongest spectral coefficient.
As we have argued in Section~\ref{sec:interpretations}, removing the market when learning
graph matrices is implicitly done via the constraint $\bm{L}\mathbf{1} = \mathbf{0}$, for the
estimation of the Laplacian matrix, or via the construction of the $\bm{Z}$ matrix
for the estimation of the adjacency matrix. Therefore, it is not necessary to remove the market
factor when learning graphs. Another guideline presented in Section~\ref{sec:interpretations}
is that one should use the correlation matrix of the stock times series (or, equivalently, rescale the data
such that each stock time series has unit variance) so as to obtain a meaningful cluster representation.

In order to verify these claims in practice,
we set up an experiment were we collected price data from Jan. 3rd 2014 to Dec. 29th 2017
($n = 1006$ observations) of 82 selected stocks from three sectors: $28$ from
$\mathsf{Utilities}$, 31 from $\mathsf{Real~Estate}$, and $23$ from $\mathsf{Communication} \mathsf{Services}$.

We then proceed to learn four graphs, using the proposed $\mathsf{kGL}$ algorithm, with the following
settings for the input data:
\begin{enumerate}
  \item No data scaling and with market signal removed (Figure~\ref{fig:covariance-with-market}).
  \item No data scaling and with market signal included, \textit{i.e.}, no data preprocessing (Figure~\ref{fig:covariance-without-market}).
  \item Scaled data and with market signal removed (Figure~\ref{fig:correlation-with-market}).
  \item Scaled data and with market signal included (Figure~\ref{fig:correlation-without-market}).
\end{enumerate}

The market signal is removed via eigenvalue decomposition of the sample correlation (covariance) matrix,
where the largest eigenvalue is set to be zero.

\begin{figure}[!htb]
  \captionsetup[subfigure]{justification=centering}
  \centering
  \begin{subfigure}[t]{0.49\textwidth}
   \includegraphics[scale=0.5]{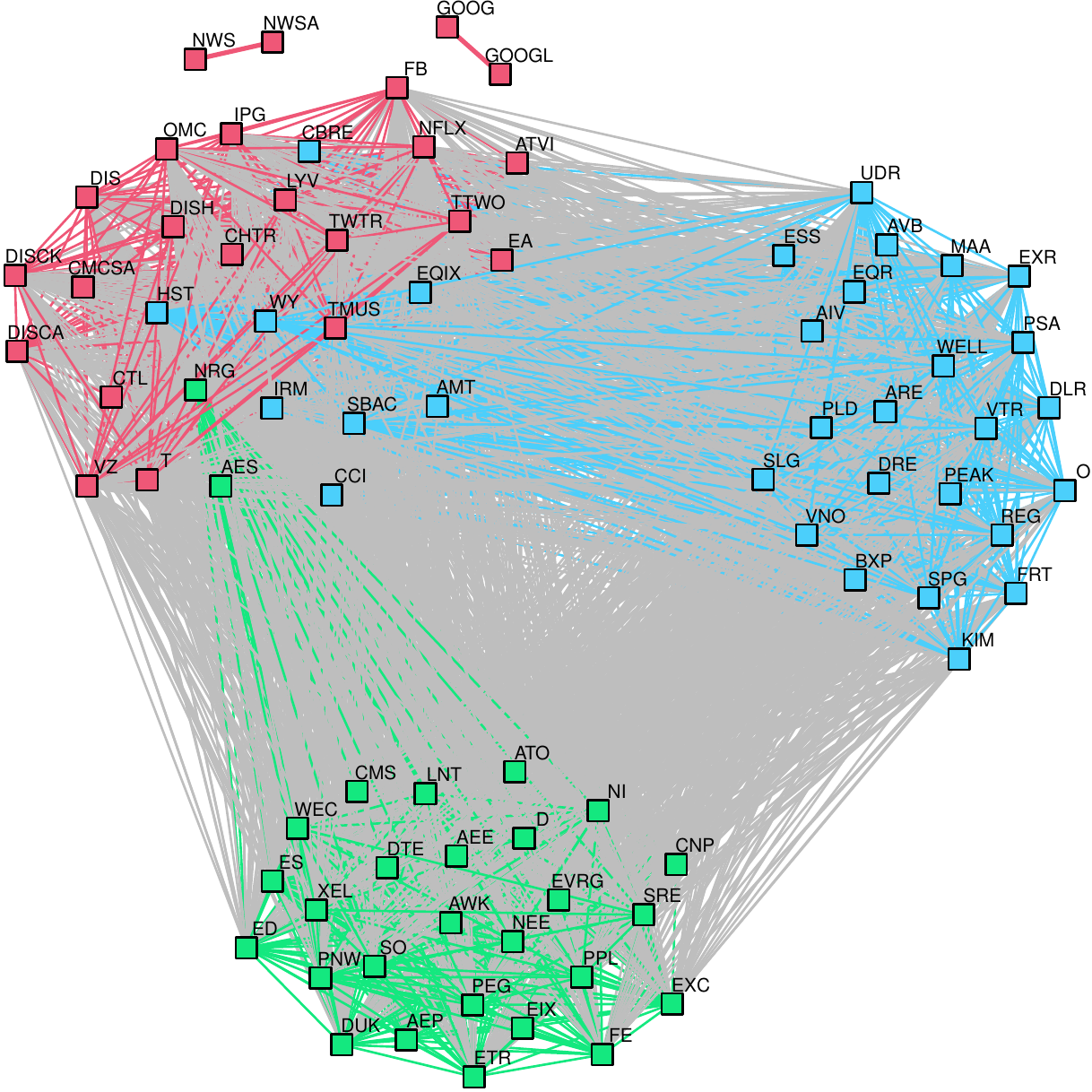}
   \caption{Learned graph without data scaling and removing the market signal.}
   \label{fig:covariance-with-market}
  \end{subfigure}%
  ~
  \begin{subfigure}[t]{0.49\textwidth}
    \centering
    \includegraphics[scale=0.5]{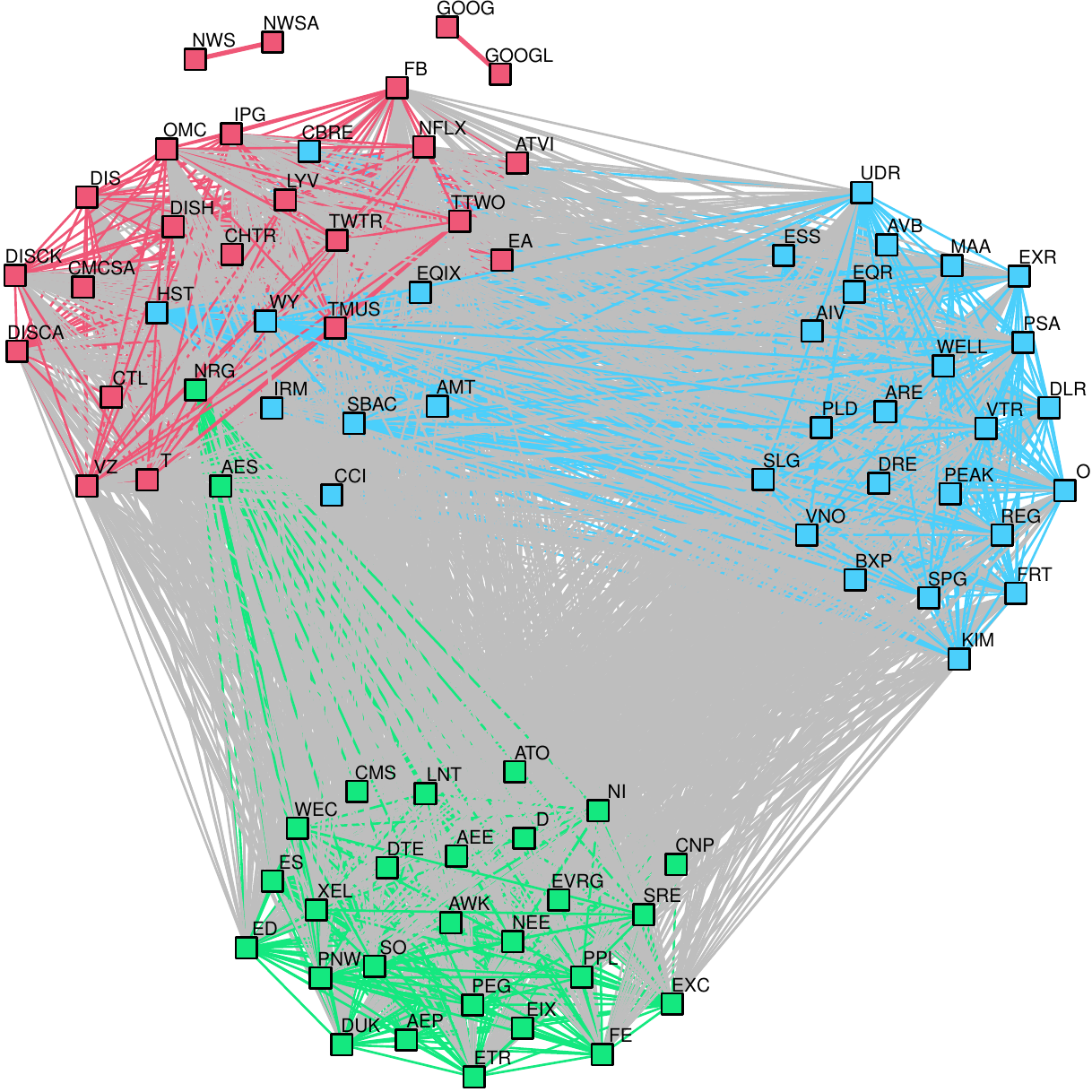}
   \caption{Learned graph without data scaling and without removing the market signal.}
    \label{fig:covariance-without-market}
  \end{subfigure}%
  \\
  \begin{subfigure}[t]{0.49\textwidth}
    \includegraphics[scale=0.5]{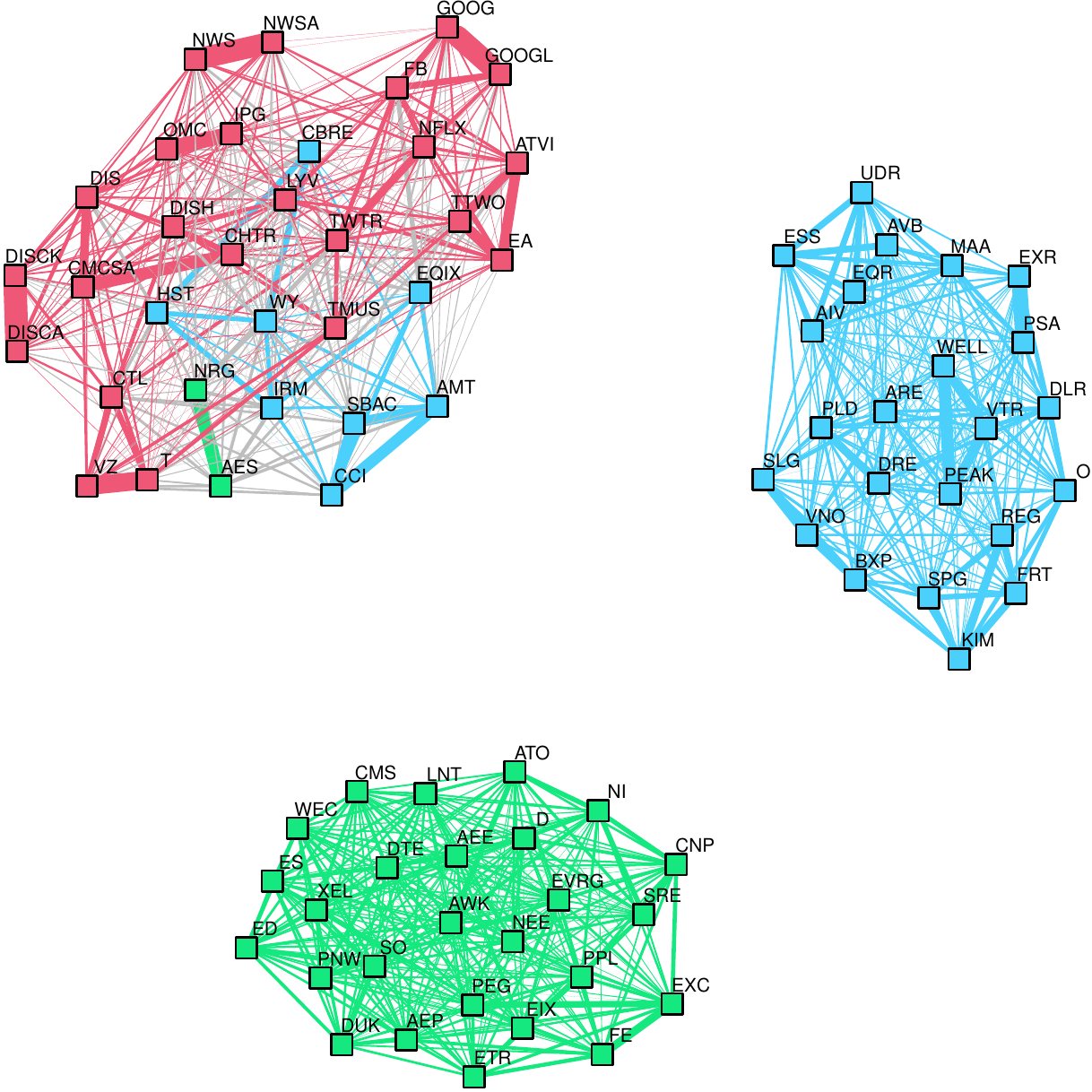}
    \caption{Learned graph with data scaling and without removing the market signal.}
    \label{fig:correlation-with-market}
   \end{subfigure}%
   ~
   \begin{subfigure}[t]{0.49\textwidth}
     \centering
     \includegraphics[scale=0.5]{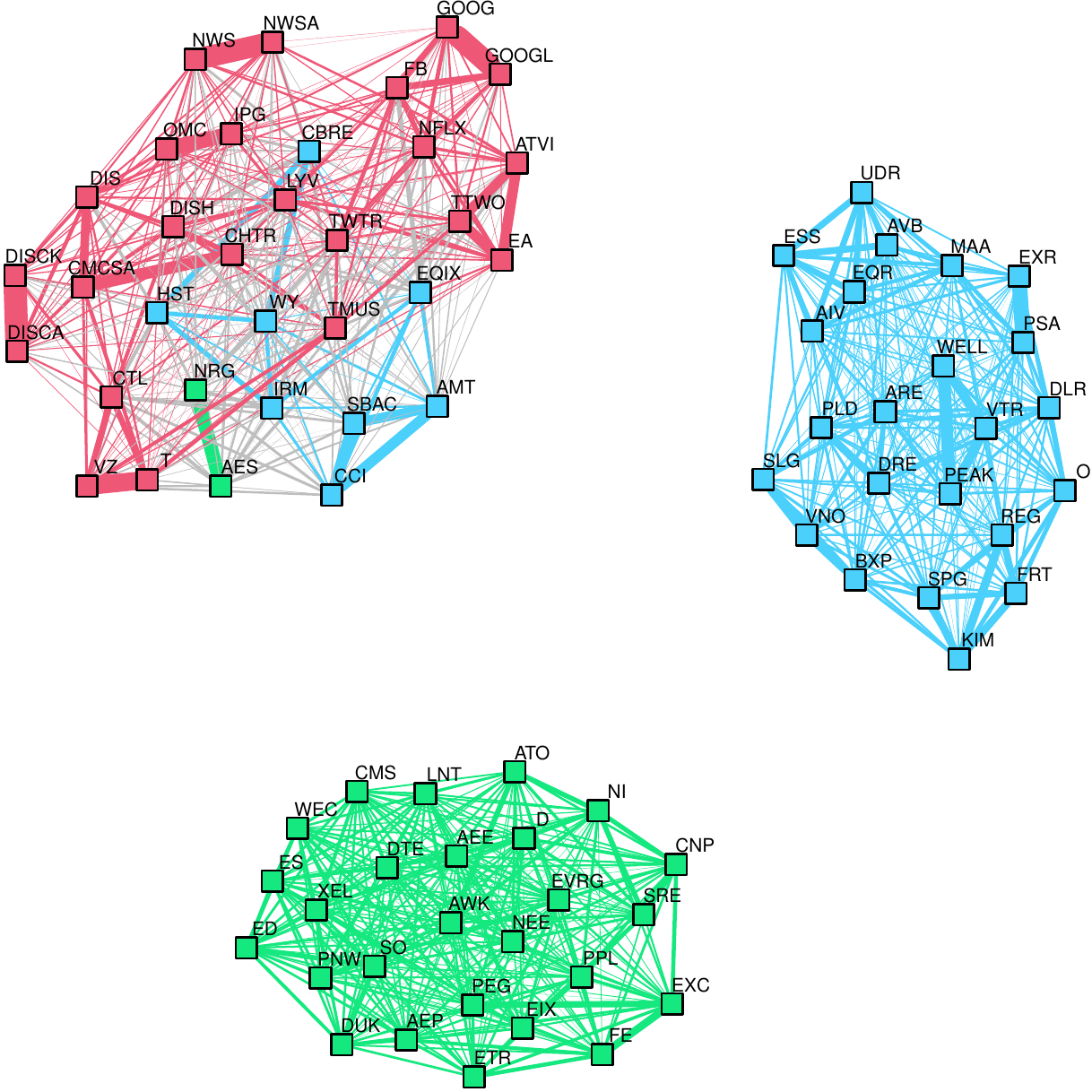}
    \caption{Learned graph with data scaling and removing the market signal.}
     \label{fig:correlation-without-market}
   \end{subfigure}
  \caption{Effects of data preprocessing on the learned graphs.}
  \label{fig:effects-of-data-preprocessing}
\end{figure}

Figures~\ref{fig:covariance-with-market}~and~\ref{fig:covariance-without-market} depict evidence that using the sample
covariance matrix (or equivalently, not scaling the input data), regardless of whether the market signal has been removed, 
leads to a graph with possibly many spurious connections (grey edges) that is not in agreement with the GICS sector classification.
Figures~\ref{fig:correlation-with-market}~and~\ref{fig:correlation-without-market} show that using the sample correlation matrix
(or equivalently, scaling the stock time series such that they have the same variance) prior to learning
the graph clearly shows meaningful graphical representations from stocks belonging to three distinct sectors, regardless of whether
the market signal has been removed. In addition, the relative error between the estimated graphs in
Figures~\ref{fig:covariance-with-market}~and~\ref{fig:covariance-without-market} is 0.09, whereas the relative error between the
estimated graphs in Figures~\ref{fig:correlation-with-market}~and~\ref{fig:correlation-without-market} is $4.6\cdot 10^{-5}$.
Those relative error measurements further confirm that removing the market has little effect on the estimated graph
due to the constraint $\bm{L}\mathbf{1} = \mathbf{0}$, as explained in Section~\ref{sec:interpretations}.

In addition, it has been argued that the sample correlation matrix may not always be a good measure of dependency
for highly noisy, often non-linear dependent signals such as log-returns of stocks~\citep{deprado2020}. 
In the proposed framework, other measures of similarities can be used in place
of the sample correlation matrix. For instance, we learn a graph under the same settings as the aforementioned experiment,
but using the normalized mutual information, $\bar{\bm{I}}$, between the log-return signals (assuming they follow a Gaussian distribution)
as the input similarity matrix, which may be computed as
\begin{equation}
  \bar{\bm{I}}_{ij} = 
  \left\{
  \begin{array}{ll}
    -\frac{1}{2}\log(1 - \bm{\bar{S}}^2_{ij}),  & \mbox{if } i \neq j, \\
    1, & \mbox{else},
  \end{array}
\right.
\end{equation}
where $\bm{\bar{S}}^2_{ij}$ is the sample correlation coefficient between the log-returns of stock $i$ and $j$.

Figure~\ref{fig:mutual-information-graph} depicts the graph structure learned using the normalized mutual information.
As it can be observed, the structure of Figure~\ref{fig:mutual-information-graph} is very similar to that of
Figure~\ref{fig:correlation-with-market}. Objectively, the $\mathsf{f}$-$\mathsf{score}$ between the learned graphs
is $0.91$ while the relative error is $0.35$, which may indicate that using either the normalized mutual information
or the sample correlation matrix are equally acceptable inputs for the learning algorithm.
\begin{figure}[!htb]
  \centering
  \includegraphics[scale=0.7]{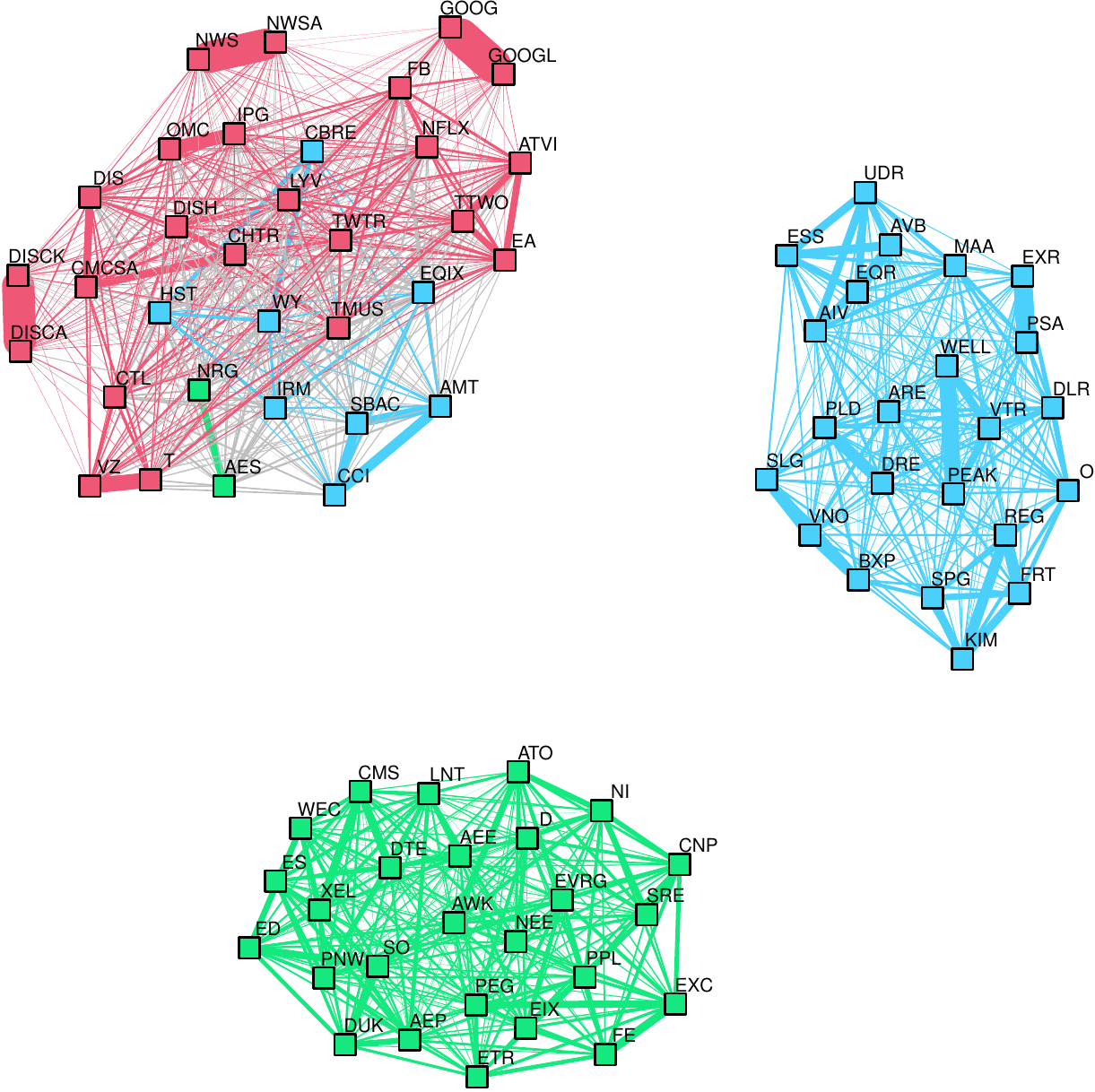}
  \caption{Learned graph with the proposed $\mathsf{kGL}$ algorithm~\ref{alg:k-component-ky-fan} using the normalized mutual information as input matrix.} 
  \label{fig:mutual-information-graph}
\end{figure} 

Finally, we use the state-of-the-art, two-stage $\mathsf{CLR}$ algorithm~\citep{feiping2016} to learn a $3$-component
graph for the selected stocks on the basis of the scaled input data matrix. Figure~\ref{fig:clr-graph} depicts the learned
graph network. As it can be observed, unlike the proposed algorithm, $\mathsf{CLR}$
clusters together most of the stocks belonging to the $\mathsf{Real~State}$ and $\mathsf{Communication~Services}$ sectors,
which is not expected from an expert \textit{prior} information such as GICS.
\begin{figure}[!htb]
  \centering
  \includegraphics[scale=0.7]{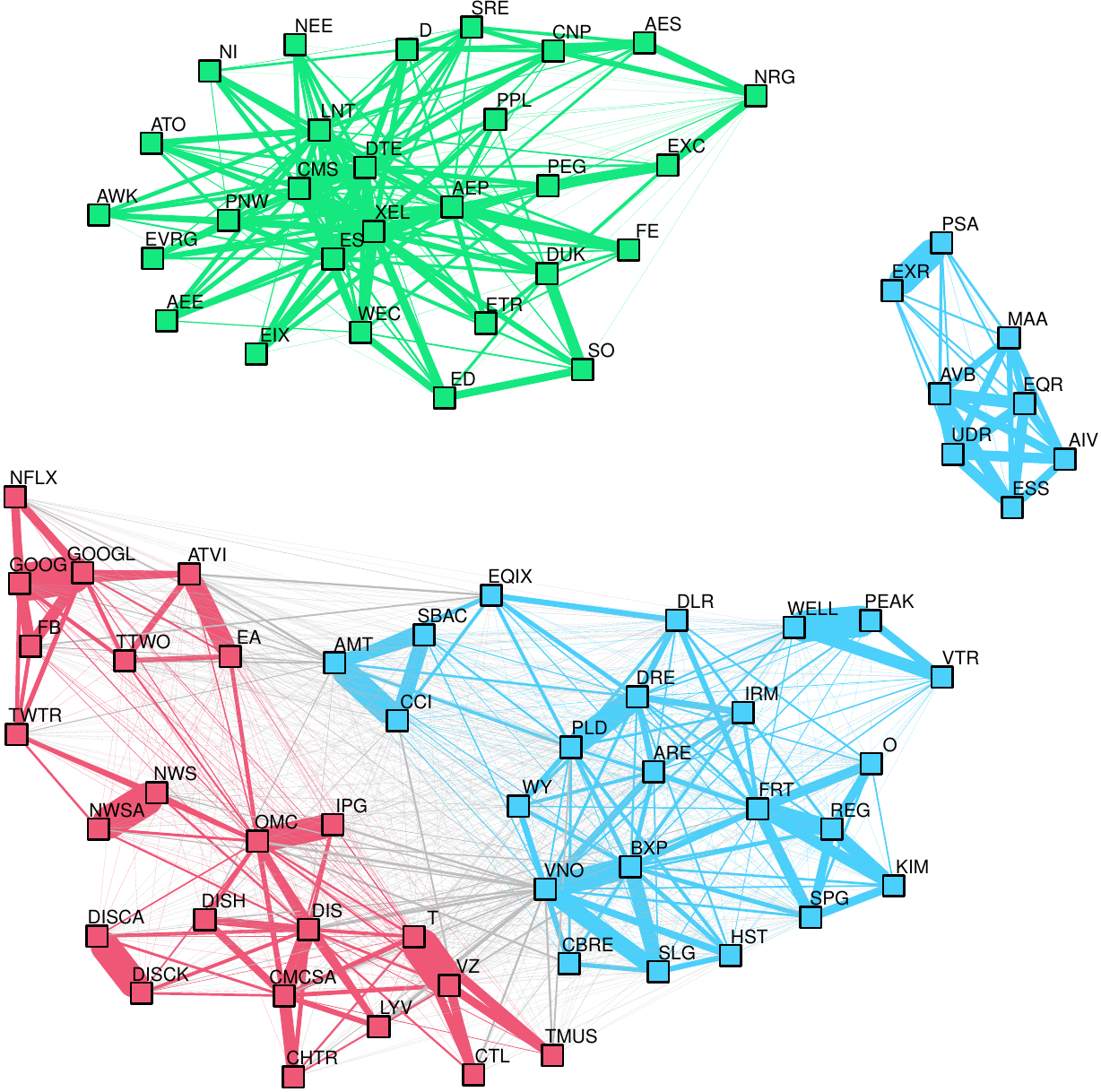} 
  \caption{Learned graph with the $\mathsf{CLR}$ algorithm~\citep{feiping2016} with unit-variance, scaled log-return data matrix as input.} 
  \label{fig:clr-graph}
\end{figure} 

\subsection{Heavy-tails effects: warm-up}

In this experiment, we would like to convey the advantages of using graph learning algorithms based on the assumptions
that the data is heavy-tailed. To that extent, we compare three Laplacian-constrained models: (1) Gaussian
($\mathsf{GLE}$-$\mathsf{ADMM}$~\citep{zhao2019}) (2)
Gaussian with minimax concave sparsity penalty ($\mathsf{NGL}$-$\mathsf{MCP}$ \citep{ying2020nips}), and (3) Student-$t$
($\mathsf{tGL}$ Algorithm~\ref{alg:heavy-tail}).
These models are investigated under two scenarios: (i) strong ($\nu \approx 4$)
and (ii) weak ($\nu \approx 10$) presence of heavy-tails. On both scenarios, we selected stocks belonging
to five different sectors, namely: $\mathsf{Consumer~Staples}$,
$\mathsf{Consumer~Discretionary}$, $\mathsf{Industrials}$, $\mathsf{Energy}$, and $\mathsf{Information~Technology}$.

\noindent\textit{Remark on hyperparameters}: For the Gaussian model with minimax concave penalty, we tune the sparsity hyperparameter
so that the estimated graph obtains the highest modularity. While tunning an one-dimensional hyperparameter
may not pose issues while performing post-event analysis, it does compromise the performance of real-world online
systems where the value of such hyperparameter is often unknown and data-dependent. For the Student-$t$ model,
the degrees of freedom $\nu$ can be computed in a prior stage directly from the data using, \textit{e.g.}, the methods
in~\citep{liu2019a}, or in a sliding-window fashion for the case of real-time systems.

\noindent \textbf{Strong heavy-tails}: for this experiment, we queried data from 222 stocks from Jan. 3rd 2008
to Dec. 31st 2009, which represents 504 data observations per stock, resulting in a sample-parameter size ratio of $n/p \approx 2.27$.
This particular time-frame presents a high amount of volatility due to the 2008 US depression. To quantify the extent of
heavy-tails in the data, we fit a multivariate Student-$t$ distribution using the matrix of
log-returns $\bm{X}$, where we obtain $\nu \approx 4.06$, which indeed indicates a high presence of heavy-tailed data points.
In addition, we measured the average annualized volatility across all stocks and obtained $\mathsf{volatility} \approx 0.53$.
Figure~\ref{fig:strong-ht-exp-1} provides a summary of this market scenario.

\begin{figure}[!htb]
  \captionsetup[subfigure]{justification=centering}
  \centering
  \begin{subfigure}[t]{0.67\textwidth}
   \includegraphics[scale=0.56]{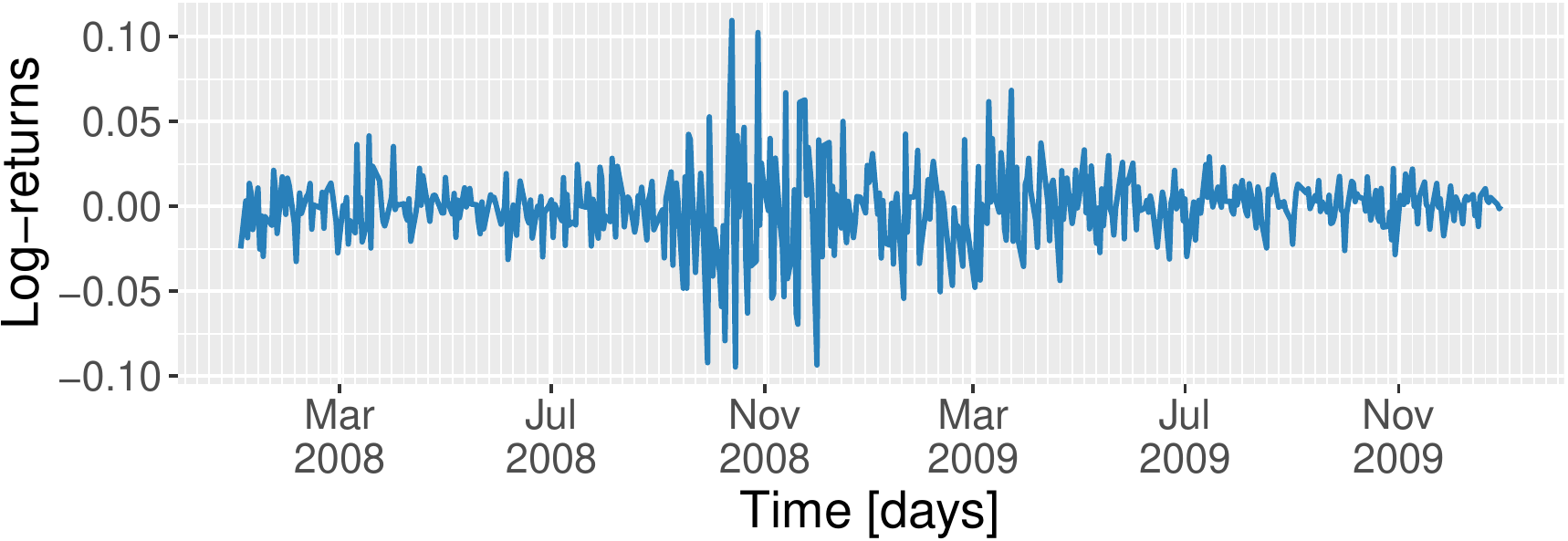}
   \caption{S\&P500 log-returns.}
   \label{fig:strong-returns-exp-1}
  \end{subfigure}%
  ~
  \begin{subfigure}[t]{0.3\textwidth}
    \centering
    \includegraphics[scale=0.56]{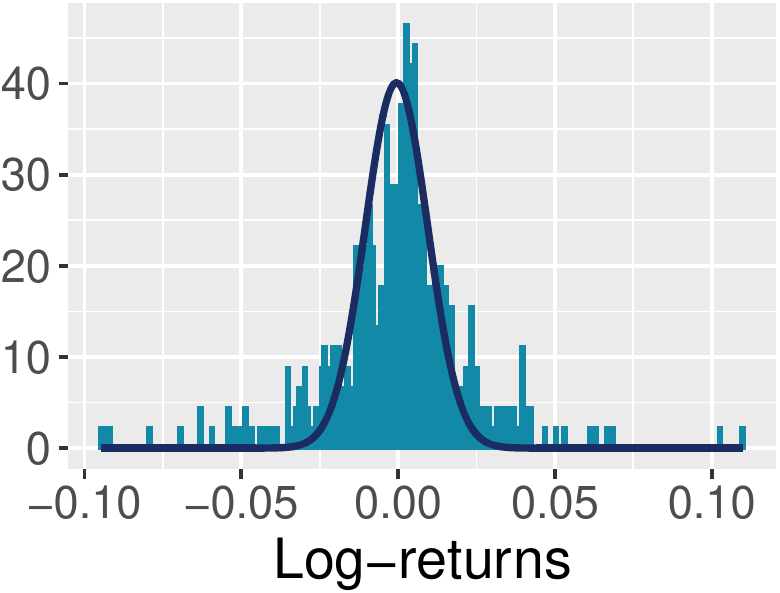}
    \caption{Histogram of S\&P500 log-returns.}
    \label{fig:strong-hist-exp-1}
  \end{subfigure}%
  \caption{State of the US stock market, as captured by the S\&P500 index, on the \textbf{strong heavy-tails} scenario,
  which starts from Jan. 3rd 2008 until Dec. 31st 2009.
  Figure~\ref{fig:strong-returns-exp-1} shows the S\&P500 log-returns time series, where the increase in volatility due to the global
  financial crisis in 2008 is clearly noticeable. Figure~\ref{fig:strong-hist-exp-1} shows a histogram of the S\&P500
  log-returns during the aforementioned time period, where the solid curve represents a Gaussian fit. It can be noticed that the tails
  of the Gaussian decays much faster than the tails of the empirical histogram, indicating the presence of heavy-tails or outliers.}
  \label{fig:strong-ht-exp-1}
\end{figure}

\noindent \textbf{Weak heavy-tails}: in this scenario, we collected data from 204 stocks from Jan. 5th 2004
to Dec. 30th 2006, which represents 503 data points per stock, resulting in a sample-parameter size ratio of $n/p \approx 2.47$.
During this time-window, the market was operating relatively nominal. By fitting a multivariate Student-$t$ distribution
to the matrix of log-returns, we obtain $\nu \approx 10.11$, which indicates little presence of outliers, and that the data is
nearly Gaussian. The average
annualized volatility measured across all stocks is $\mathsf{volatility} \approx 0.27$, which is half of the annualized
volatility in the strong heavy-tails case. Figure~\ref{fig:moderate-ht-exp-1} provides a summary of this market scenario.

\begin{figure}[!htb]
  \captionsetup[subfigure]{justification=centering}
  \centering
  \begin{subfigure}[t]{0.67\textwidth}
   \includegraphics[scale=0.56]{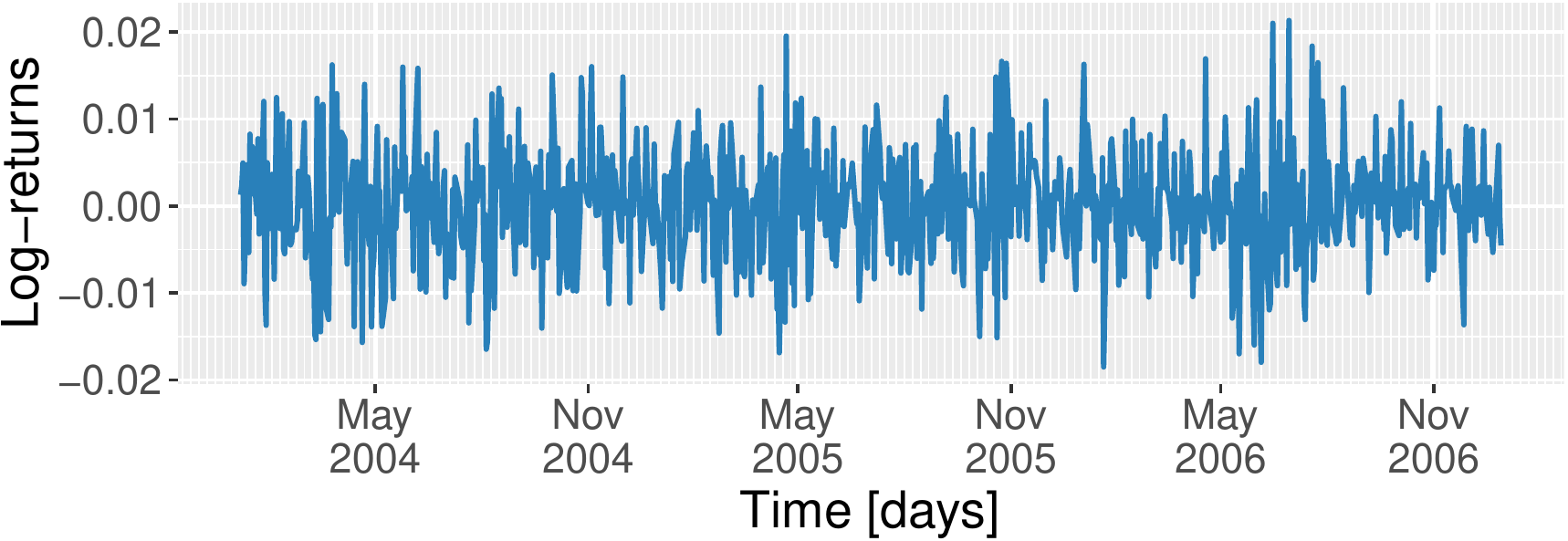}
   \caption{S\&P500 log-returns.}
   \label{fig:moderate-returns-exp-1}
  \end{subfigure}%
  ~
  \begin{subfigure}[t]{0.3\textwidth}
    \centering
    \includegraphics[scale=0.56]{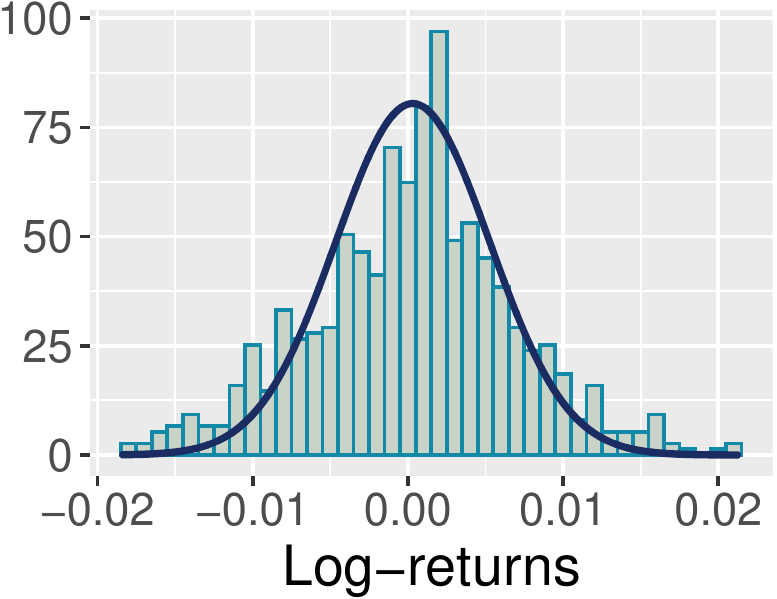}
    \caption{Histogram of S\&P500 log-returns.}
    \label{fig:moderate-hist-exp-1}
  \end{subfigure}%
  \caption{State of the US stock market, as captured by the S\&P500 index, on the \textbf{weak heavy-tails} scenario,
  which starts from Jan. 5th 2004 until Dec. 30th 2006.
  Figure~\ref{fig:moderate-returns-exp-1} shows the S\&P500 log-returns time series, where no noticeable volatility
  clustering event is present, while Figure~\ref{fig:moderate-ht-exp-1} depicts its histogram along with a Gaussian
  fit that closely matches the empirical distribution.}
  \label{fig:moderate-ht-exp-1}
\end{figure}

Figure~\ref{fig:five-sectors-heavy-tail} depicts the learned stock graphs on these scenarios.
In either scenario, it can be readily noticed that the graphs learned with the Student-$t$ distribution are
sparser than those learned with the Gaussian assumption, which results from the fact that the Gaussian distribution
is more sensitive to outliers. As for the Gaussian graphs with sparsity, they present a significant improvement
when compared to the non-sparse counterpart.
The Student-$t$ graphs, on the other hand, present the highest degree of interpretability
as measured by their higher modularity value and ratio between the number of intra-sector edges
and inter-sector edges (\textit{cf.} Tables~\ref{tab:edge-distribution-strong}~and~\ref{tab:edge-distribution-moderate}),
which is the expected behavior from stock sector classification systems such as GICS. 

Among the learned Gaussian graphs
(Figures~\ref{fig:ht-graph-gauss-strong}~and~\ref{fig:ht-graph-gauss-moderate}),
it can be seen that the learned graph in the weak heavy-tailed scenario presents a cleaner graphical
representation, by having less inter-sector and intra-sector edges, while also having a higher graph modularity 
(\textit{cf.} Tables~\ref{tab:edge-distribution-strong}~and~\ref{tab:edge-distribution-moderate}),
than that of the Gaussian graph in the strong heavy-tailed scenario.

\begin{figure}[!htb]
  \captionsetup[subfigure]{justification=centering}
  \centering
  \begin{subfigure}[t]{0.32\textwidth}
      \centering
      \includegraphics[scale=.37]{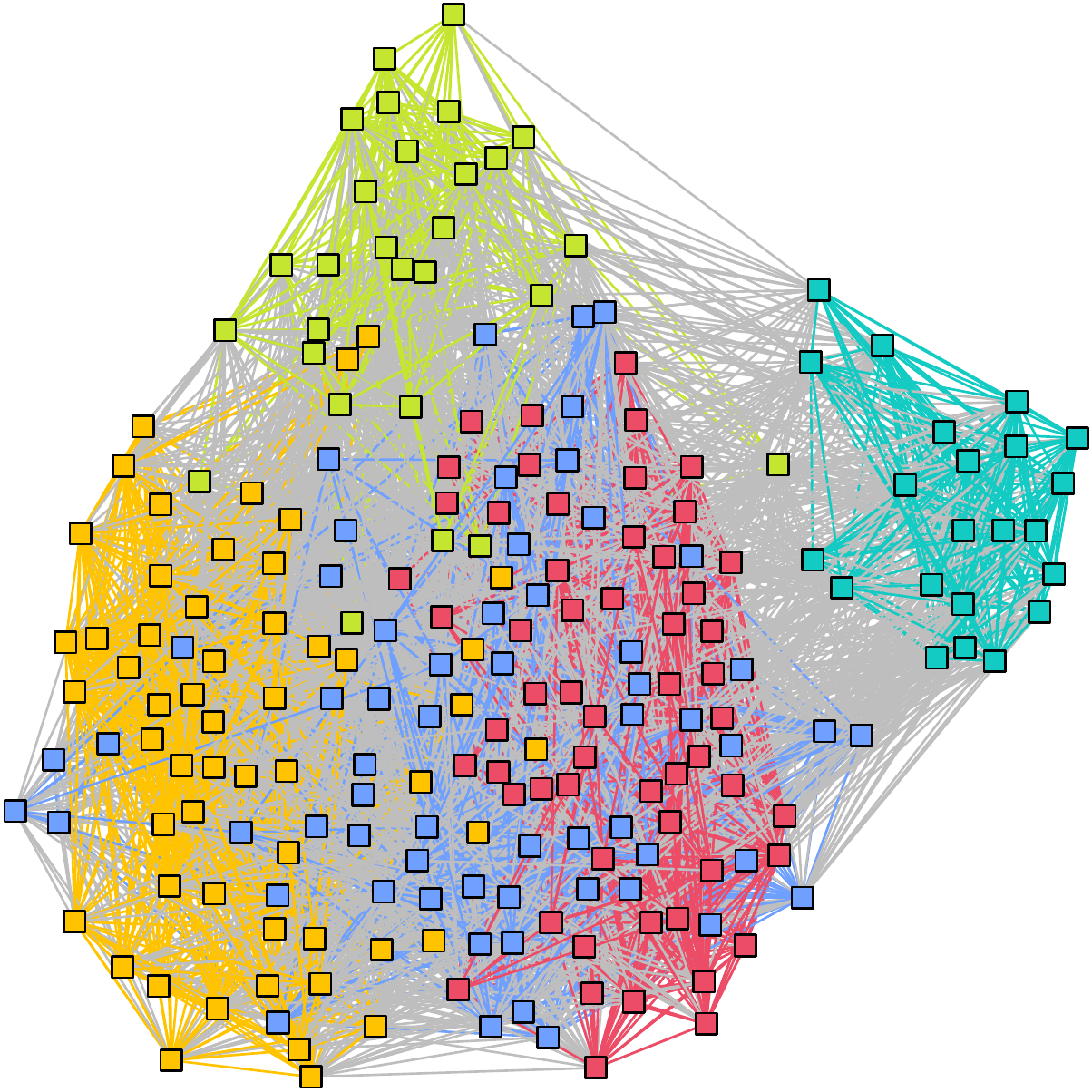}
      \caption{Learned Gaussian graph on strong heavy-tails. $Q = 0.23$.}
      \label{fig:ht-graph-gauss-strong}
  \end{subfigure}%
  ~
  \begin{subfigure}[t]{0.32\textwidth}
    \centering
    \includegraphics[scale=.37]{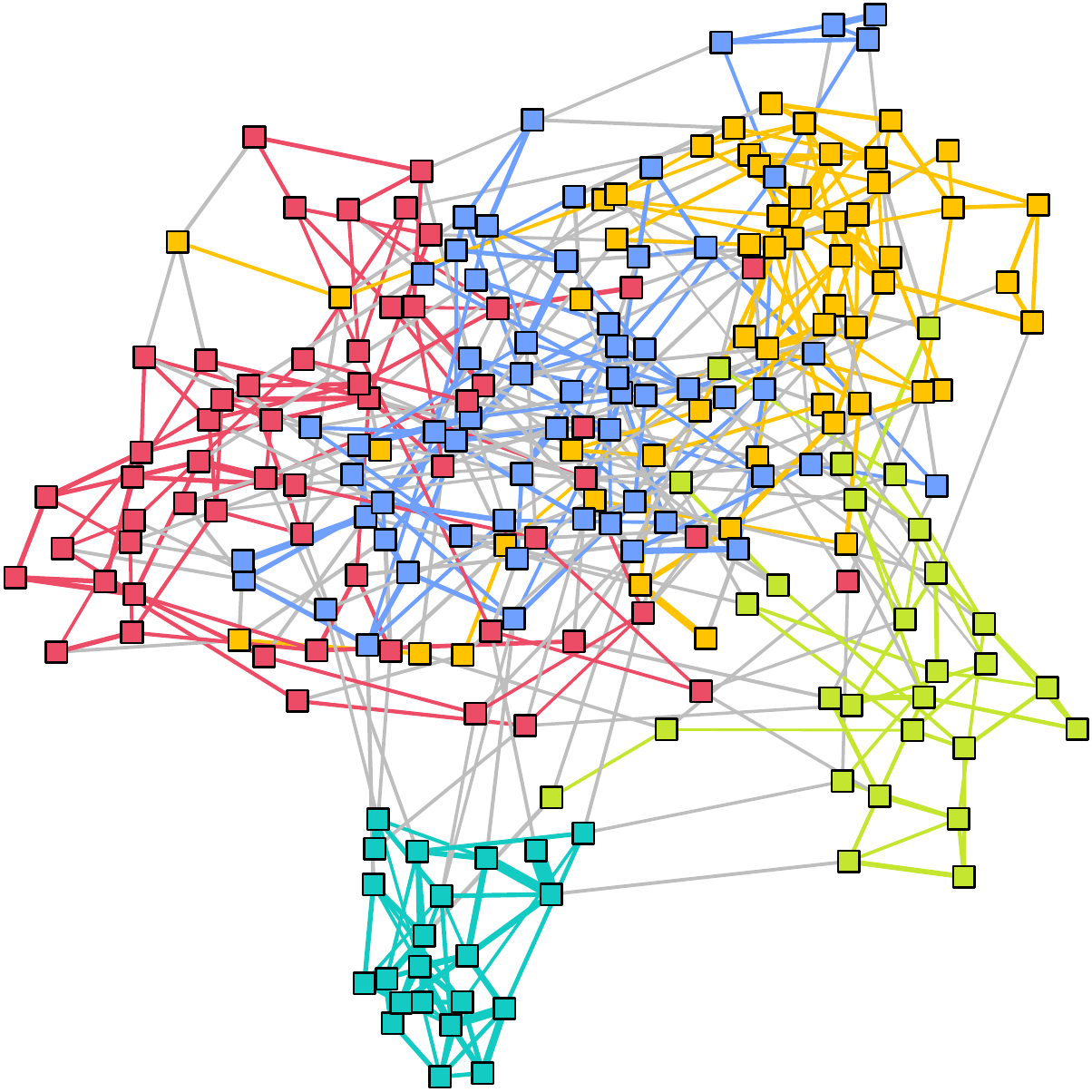}
    \caption{Learned Gaussian graph with sparsity on strong heavy-tails. $Q = 0.46$.}
    \label{fig:ht-graph-mcp-strong}
  \end{subfigure}%
  ~
  \begin{subfigure}[t]{0.32\textwidth}
      \centering
      \includegraphics[scale=.37]{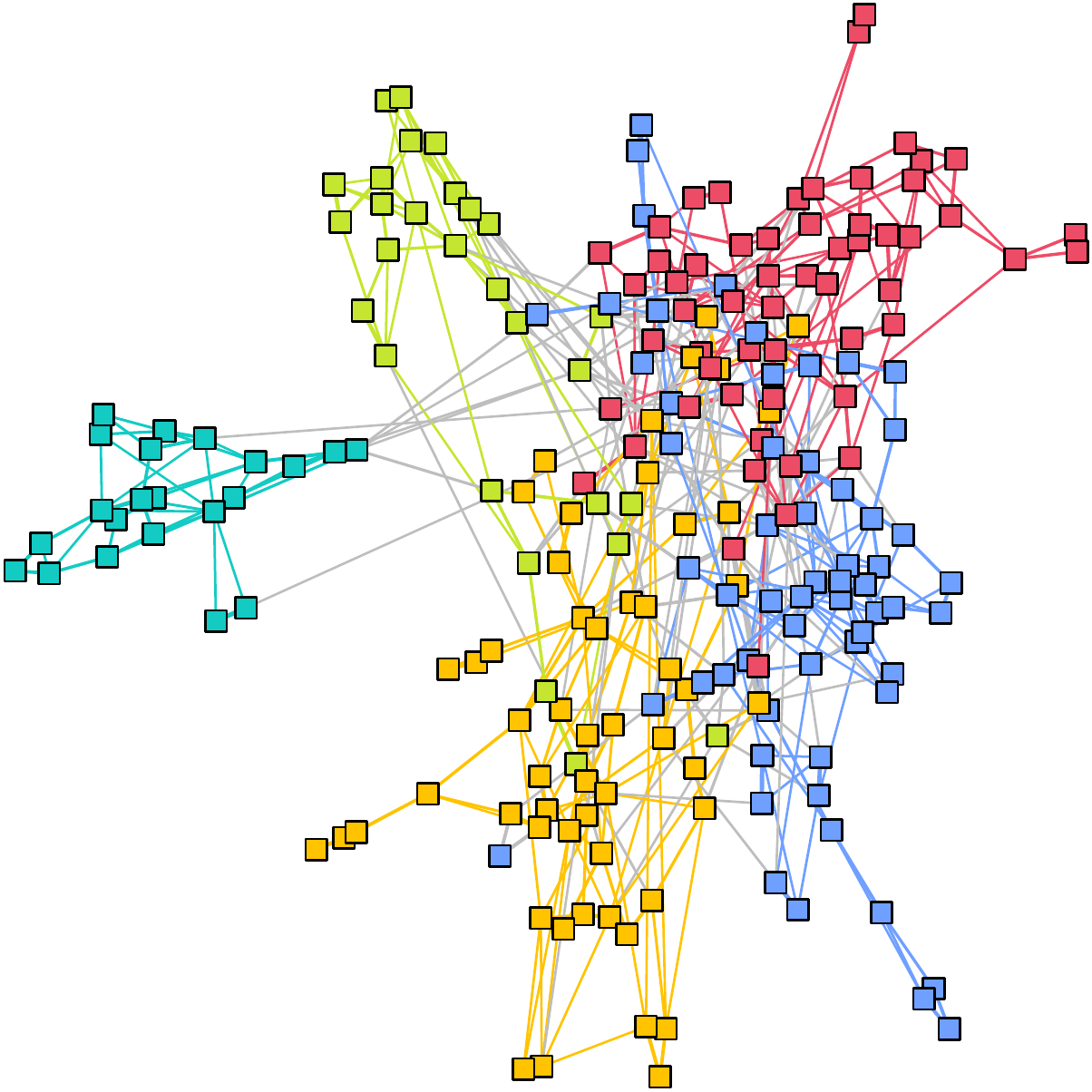}
      \caption{Learned Student-$t$ graph on strong heavy-tails. $Q = 0.51$.}
      \label{fig:ht-graph-t-strong}
  \end{subfigure}%
  \\
  \begin{subfigure}[t]{0.32\textwidth}
      \centering
      \includegraphics[scale=.37]{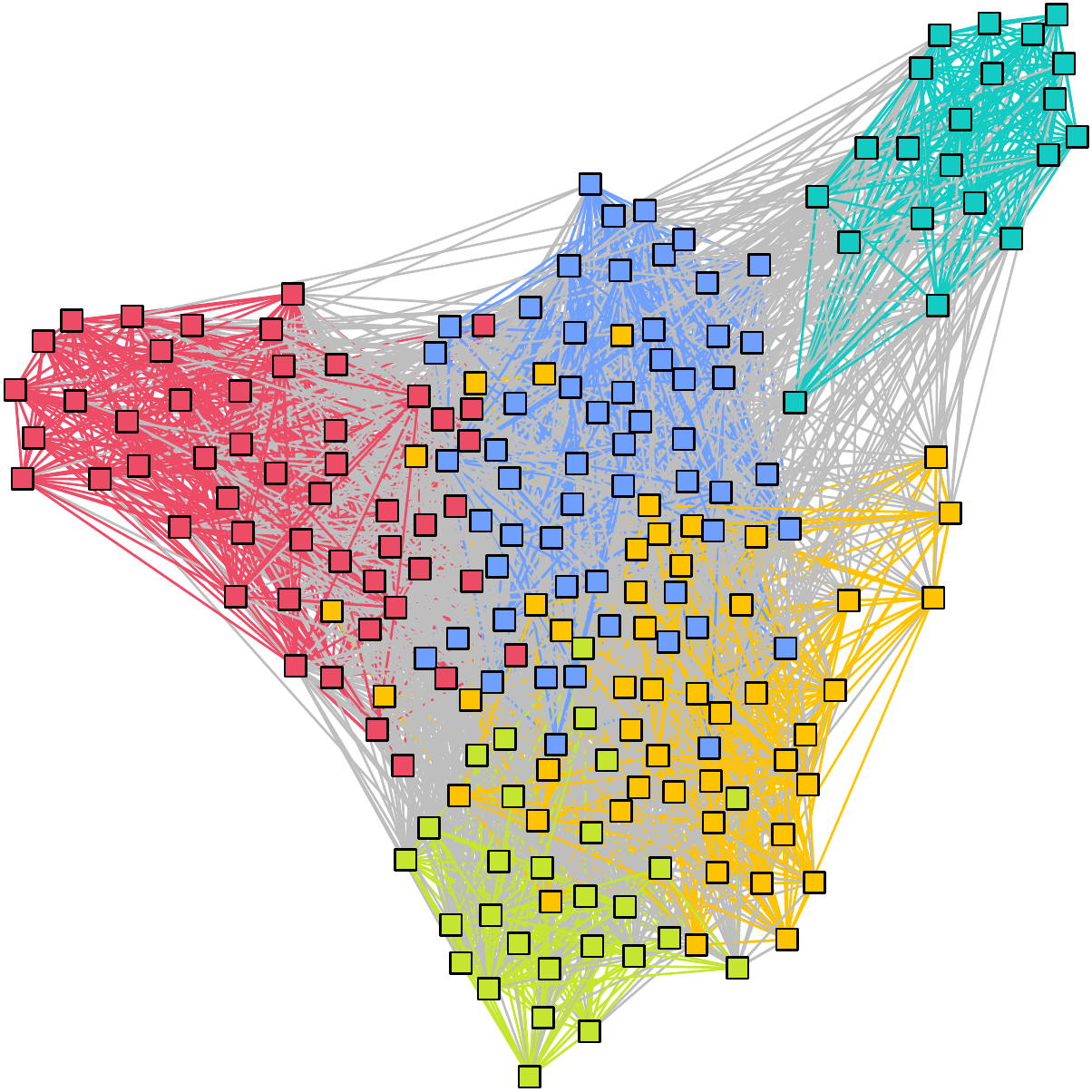}
      \caption{Learned Gaussian graph on weak heavy-tails. $Q = 0.26$.}
      \label{fig:ht-graph-gauss-moderate}
  \end{subfigure}%
  ~
  \begin{subfigure}[t]{0.32\textwidth}
      \centering
      \includegraphics[scale=.37]{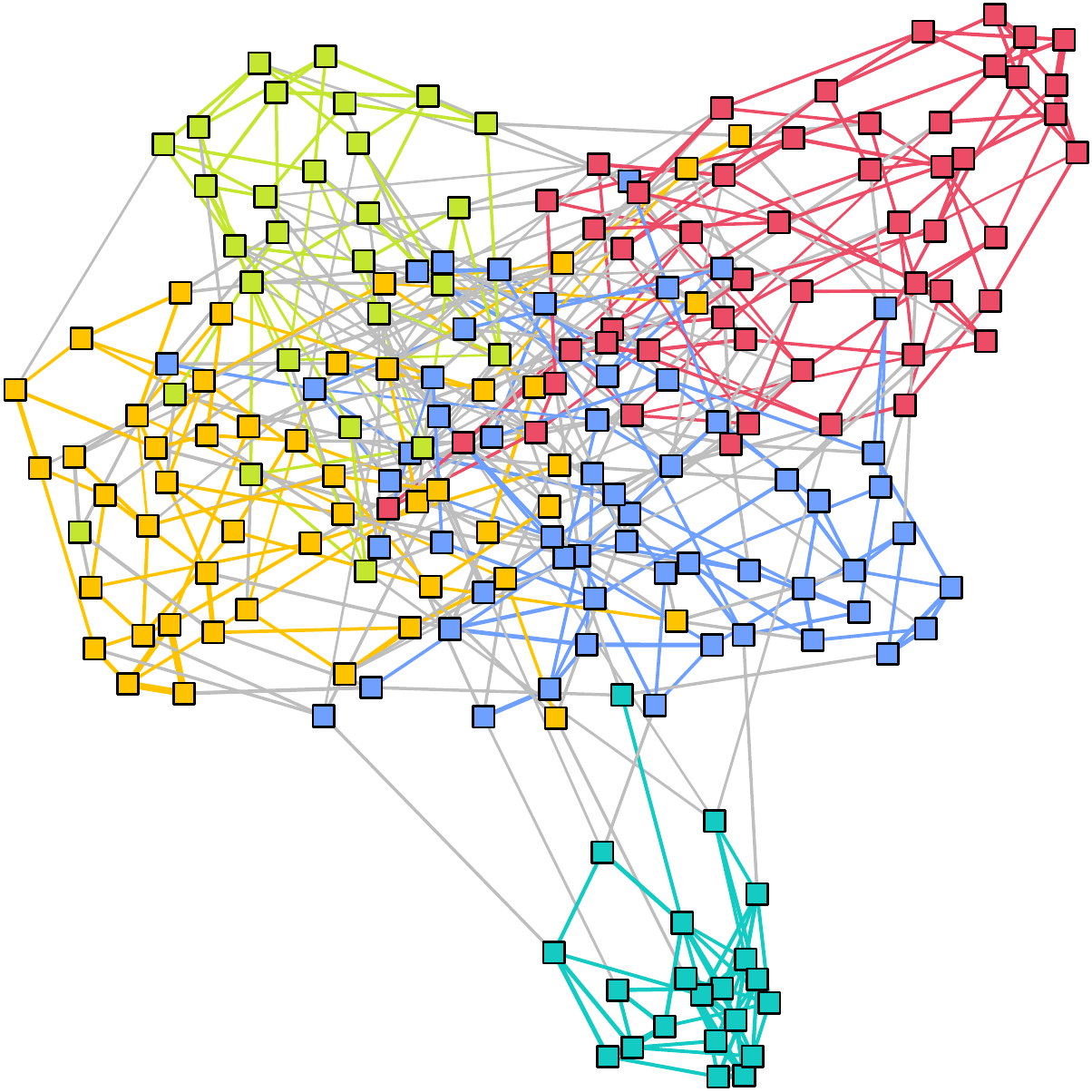}
      \caption{Learned Gaussian graph with sparsity on weak heavy-tails. $Q = 0.44$.}
      \label{fig:ht-graph-mcp-moderate}
  \end{subfigure}%
  ~
  \begin{subfigure}[t]{0.32\textwidth}
      \centering
      \includegraphics[scale=.37]{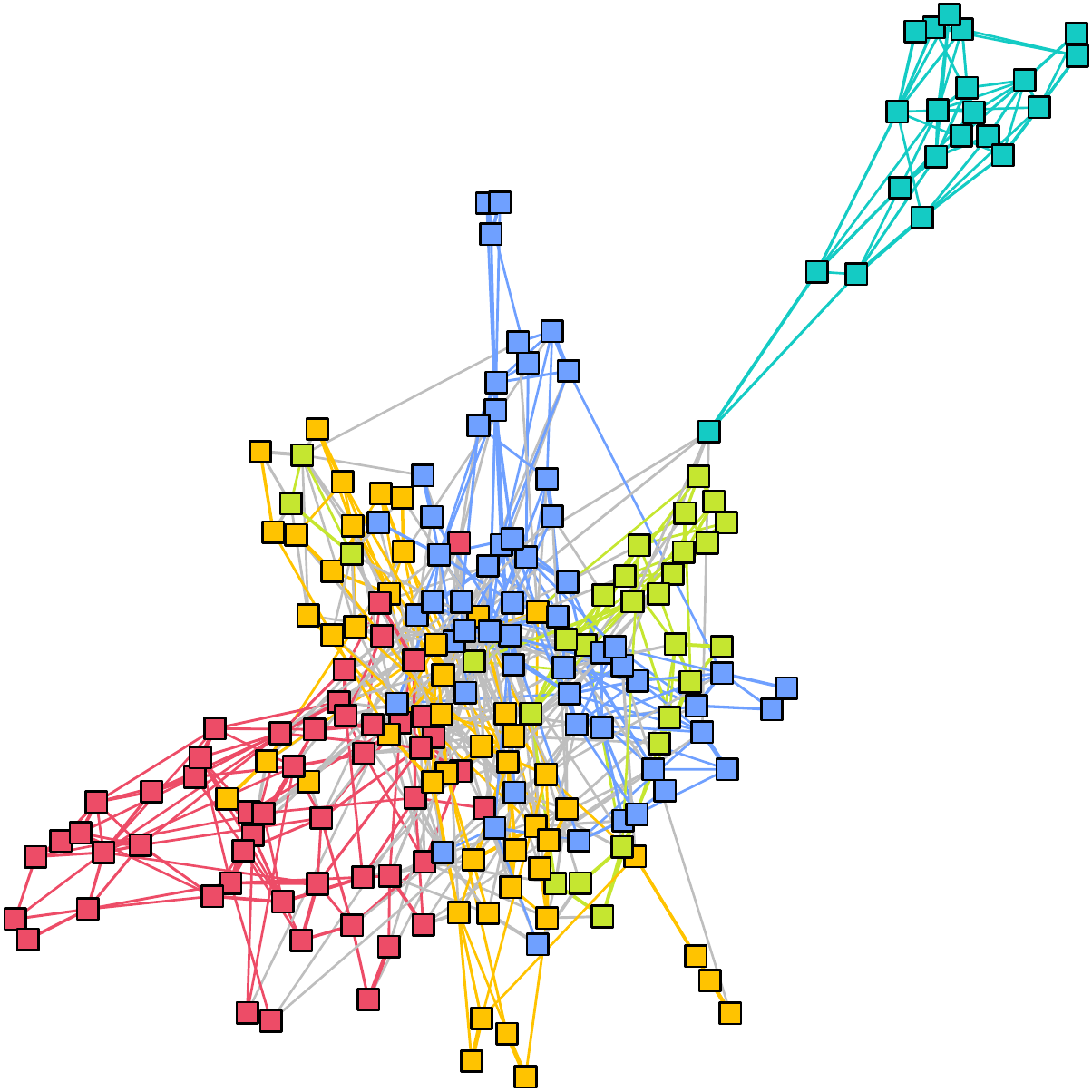}
      \caption{Learned Student-$t$ graph on weak heavy-tails. $Q = 0.46$.}
      \label{fig:ht-graph-t-moderate}
  \end{subfigure}%
  \caption{Learned graph networks with
  Gaussian (Figures~\ref{fig:ht-graph-gauss-strong}~and~\ref{fig:ht-graph-gauss-moderate}),
  Gaussian with sparsity (Figures~\ref{fig:ht-graph-mcp-strong}~and~\ref{fig:ht-graph-mcp-moderate}),
  and Student-$t$ (Figures~\ref{fig:ht-graph-t-strong}~and~\ref{fig:ht-graph-t-moderate}),
  for contrasting heavy-tail scenarios.}
  \label{fig:five-sectors-heavy-tail}
\end{figure}

\begin{table}[!htb]
  \centering
  \caption{Edge distribution for the \textbf{strong} heavy-tails case.}
 \begin{tabular}{llll}
  \specialrule{.05em}{.05em}{.2em} 
  model & inter-sector edges & intra-sector edges & modularity ($Q$) \\
  Gaussian & 2918 & 2615 & 0.23\\
  Gaussian w/ sparsity & 158 & 339 & 0.46\\
  Student-$t$ & 137 & 384 & \textbf{0.51}
 \end{tabular}
 \label{tab:edge-distribution-strong}
\end{table}

\begin{table}[!htb]
  \centering
  \caption{Edge distribution for the \textbf{weak} heavy-tails case.}
 \begin{tabular}{llll}
  \specialrule{.05em}{.05em}{.2em} 
  model & inter-sector edges & intra-sector edges & modularity ($Q$) \\
  Gaussian & 2028 & 1966 & 0.26\\  
  Gaussian w/ sparsity & 173 & 325 & 0.44\\
  Student-$t$ & 197 & 438 & \textbf{0.46}
 \end{tabular}
 \label{tab:edge-distribution-moderate}
\end{table}

\subsection{Heavy-tails effects: additional analysis}

In this section, we perform a similar analysis as in the previous experiment,
except that we consider stocks from the sectors $\mathsf{Industrials}$,
$\mathsf{Consumer~Staples}$, $\mathsf{Consumer~Discretionary}$,
$\mathsf{Information~Technology}$, $\mathsf{Energy}$, $\mathsf{Health~Care}$, and $\mathsf{Real~State}$.

\noindent \textbf{Strong heavy-tails}: for this experiment, we queried data from 347 stocks from Jan. 5th 2016
to Dec. 23rd 2020, which represents 1253 data observations per stock, resulting in a sample-parameter ratio of
$n/p \approx 3.61$. This particular time-frame presents an extreme high amount of volatility around the beginning of 2020
due to the financial crisis caused by the COVID-19 pandemic. To quantify the amount of outliers in this time frame,
we fit a multivariate Student-$t$ distribution using the matrix of
log-returns $\bm{X}$, where we obtain $\nu \approx 4.15$, which indeed indicates a high presence of heavy-tailed data points.
In addition, we measured the average annualized volatility across all stocks and obtained $\mathsf{volatility} \approx 0.34$.
Figure~\ref{fig:strong-ht-exp-2} provides a summary of this market scenario.

\begin{figure}[!htb]
  \captionsetup[subfigure]{justification=centering}
  \centering
  \begin{subfigure}[t]{0.67\textwidth}
   \includegraphics[scale=0.56]{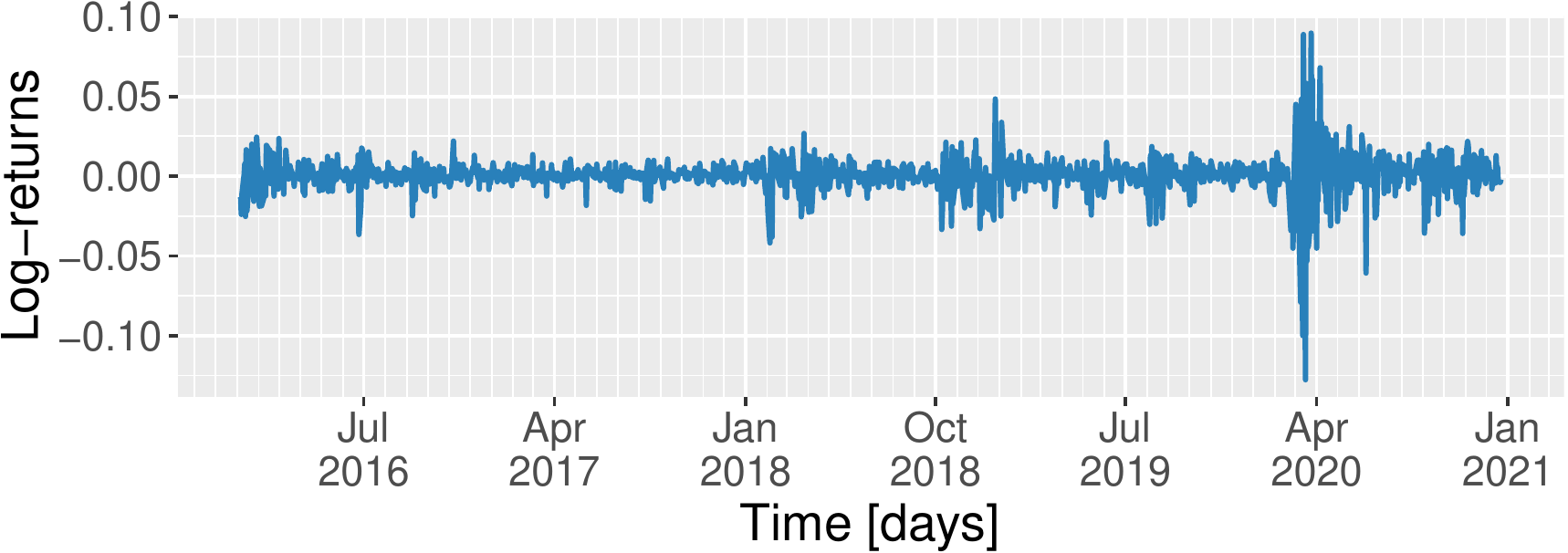}
   \caption{S\&P500 log-returns.}
   \label{fig:strong-returns-exp-2}
  \end{subfigure}%
  ~
  \begin{subfigure}[t]{0.3\textwidth}
    \centering
    \includegraphics[scale=0.57]{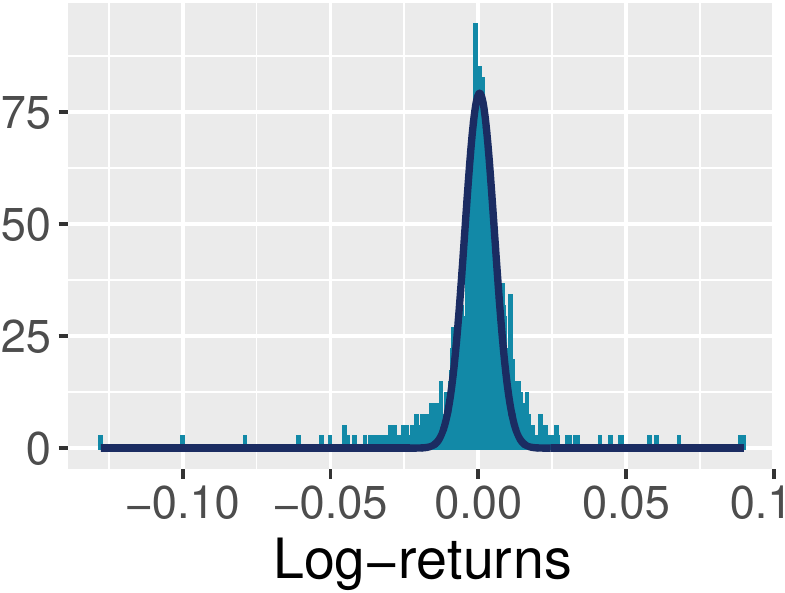}
    \caption{Histogram of S\&P500 log-returns.}
    \label{fig:strong-hist-exp-2}
  \end{subfigure}%
  \caption{State of the US stock market, as captured by the S\&P500 index, on the \textbf{strong heavy-tails} scenario,
  which starts from Jan. 5th 2016 until Jul. 20th 2020.
  Figure~\ref{fig:strong-returns-exp-2} shows the S\&P500 log-returns time series, where the increase in volatility due to the COVID-19
  pandemic is prominent. Figure~\ref{fig:strong-hist-exp-2} illustrates the empirical distribution of the S\&P500 log-returns along with
  its Gaussian fit. It can be noticed that events far beyond the tails decay are present.}
  \label{fig:strong-ht-exp-2}
\end{figure}

\noindent \textbf{Moderate heavy-tails}: for this setting, we queried data from 332 stocks from Jan. 2nd 2013
to Jun. 29th 2018, which represents 1383 data observations per stock, resulting in a sample-parameter ratio of
$n/p \approx 4.17$.
We fit a multivariate Student-$t$ distribution using the matrix of log-returns $\bm{X}$,
where we obtain $\nu \approx 7.11$.
In addition, we measured the annualized average volatility across all stocks and obtained $\mathsf{volatility} \approx 0.25$.
Figure~\ref{fig:moderate-ht-exp-2} provides a summary of this market scenario.

\begin{figure}[!htb]
  \captionsetup[subfigure]{justification=centering}
  \centering
  \begin{subfigure}[t]{0.67\textwidth}
   \includegraphics[scale=0.56]{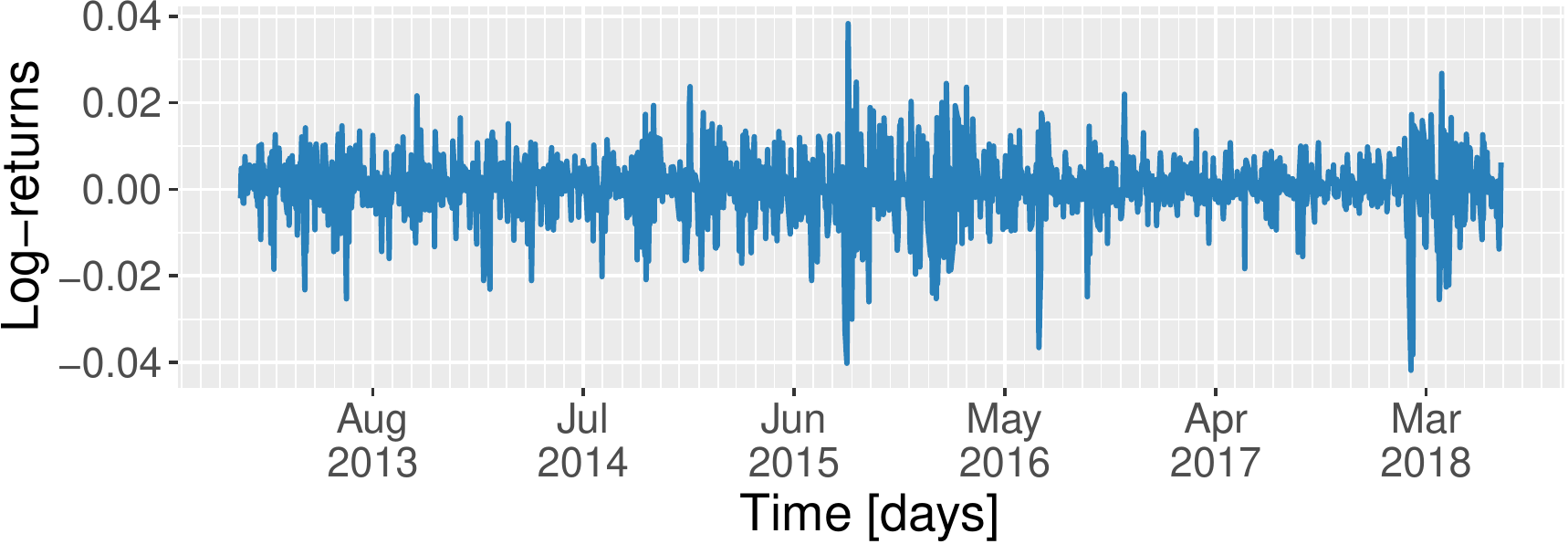}
   \caption{S\&P500 log-returns.}
   \label{fig:moderate-returns-exp-2}
  \end{subfigure}%
  ~
  \begin{subfigure}[t]{0.3\textwidth}
    \centering
    \includegraphics[scale=0.56]{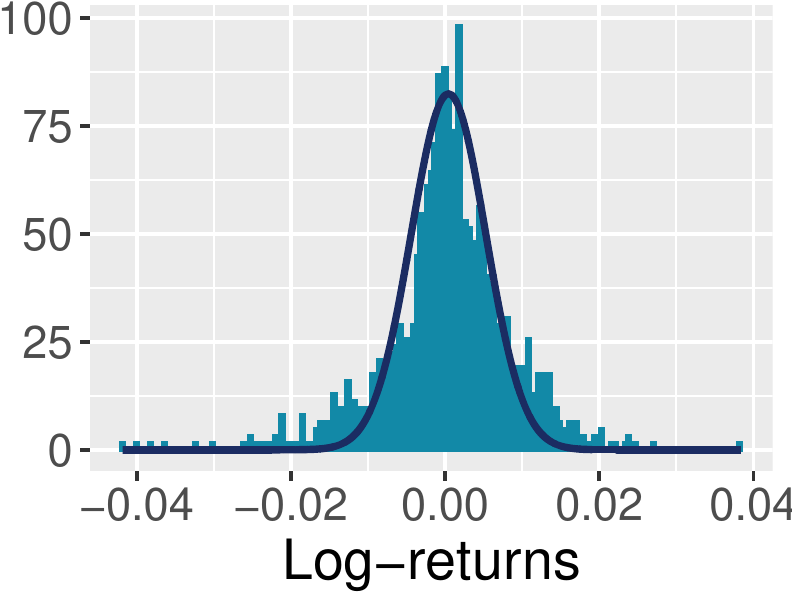}
    \caption{Histogram of S\&P500 log-returns.}
    \label{fig:moderate-hist-exp-2}
  \end{subfigure}%
  \caption{State of the US stock market, as captured by the S\&P500 index, on the \textbf{moderate heavy-tails} scenario,
  which starts from Jan. 2nd 2013 until Jun. 29th 2018.
  Figure~\ref{fig:moderate-returns-exp-2} shows the S\&P500 log-returns time series, where a few significant heavy-tailed
  observations are noticeable, along with its histogram (Figure~\ref{fig:moderate-hist-exp-2}) whose Gaussian fit
  indicates that the presence of outlier data points are mostly concentrated around the turning points of the tails.}
  \label{fig:moderate-ht-exp-2}
\end{figure}

Figure~\ref{fig:all-sectors-heavy-tail} depicts the learned graphs on the aforementioned scenarios.
Similarly from the previous experiment, it can be noticed that in either scenario the graphs learned with
Student-$t$ are indeed sparser, more modular, and hence, more interpretable, than those learned with the Gaussian assumption
(\textit{cf.} Tables~\ref{tab:edge-distribution-strong-full}~and~\ref{tab:edge-distribution-moderate-full}).

The learned Gaussian graphs
(Figures~\ref{fig:ht-graph-gauss-strong-full}~and~\ref{fig:ht-graph-gauss-moderate-full}),
are very dense and present a high number of spurious connections (grey edges) among stocks that are arguably not related in practice.
However, it can be noticed that the learned graph in the heavy-tailed scenario presents a cleaner graphical
representation, by having less inter-sector edges, while also having a higher graph modularity 
(\textit{cf.} Tables~\ref{tab:edge-distribution-moderate-full}~and~\ref{tab:edge-distribution-strong-full})
than that of the Gaussian graph in the strong heavy-tailed scenario, which is consistent with the previous experiment.
The usage of sparsity in does improve the Gaussian graphs
(Figures~\ref{fig:ht-graph-mcp-strong-full}~and~\ref{fig:ht-graph-mcp-moderate-full}), but only to limited extent when compared to the
graphs learned using the Student-$t$ assumption.

The Student-$t$ graphs, on the other hand, present a high degree of modularity, where most of the sectors can easily be
identified. We also measured that the Student-$t$ distribution outputs graphs
(Figure~\ref{fig:ht-graph-t-strong-full}~and~\ref{fig:ht-graph-t-moderate-full}) with higher graph modularity.

\begin{figure}[!htb]
  \captionsetup[subfigure]{justification=centering}
  \centering
  \begin{subfigure}[t]{0.32\textwidth}
      \centering
      \includegraphics[scale=.37]{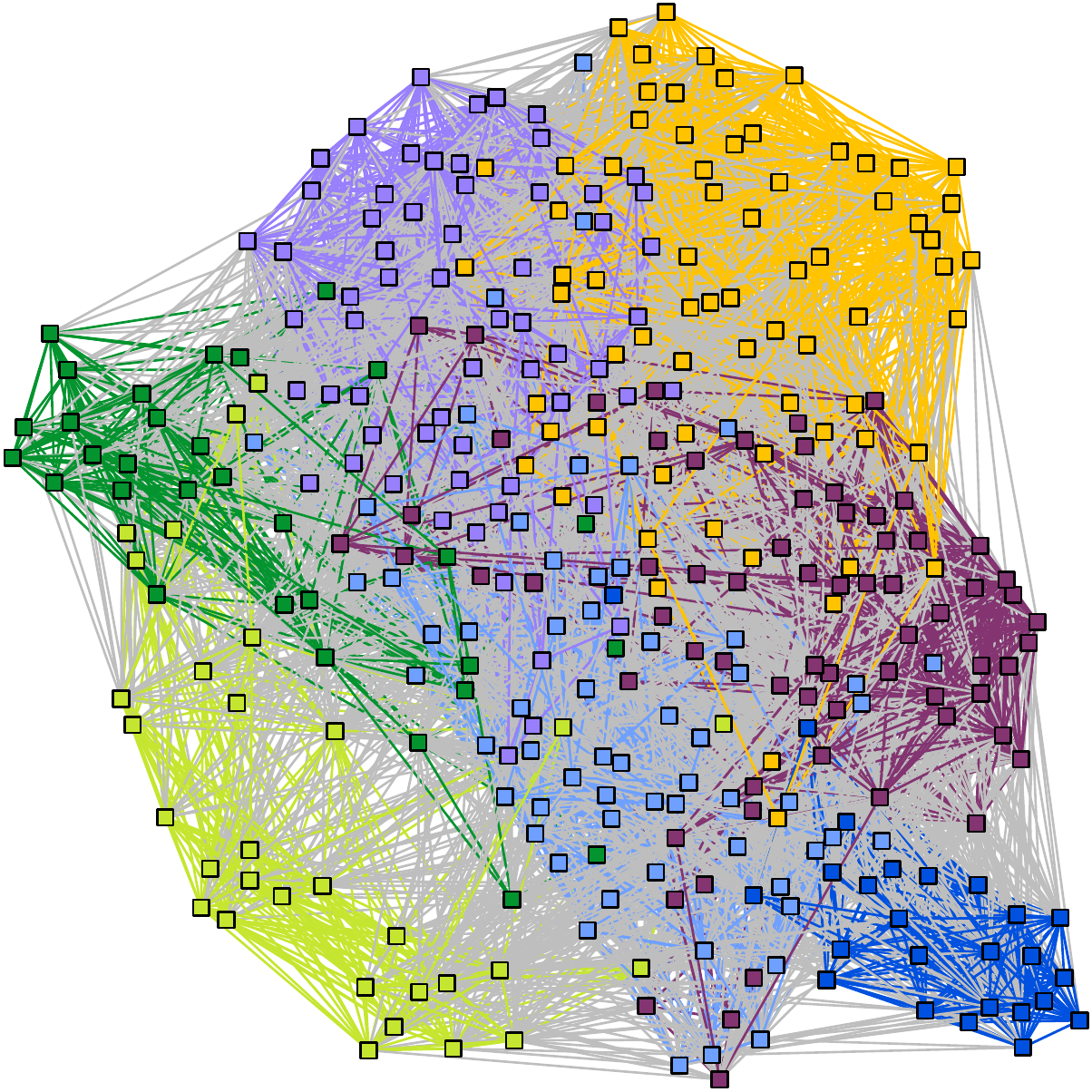}
      \caption{Learned Gaussian graph on strong heavy-tails. $Q = 0.31$.}
      \label{fig:ht-graph-gauss-strong-full}
  \end{subfigure}%
  ~
  \begin{subfigure}[t]{0.32\textwidth}
    \centering
    \includegraphics[scale=.37]{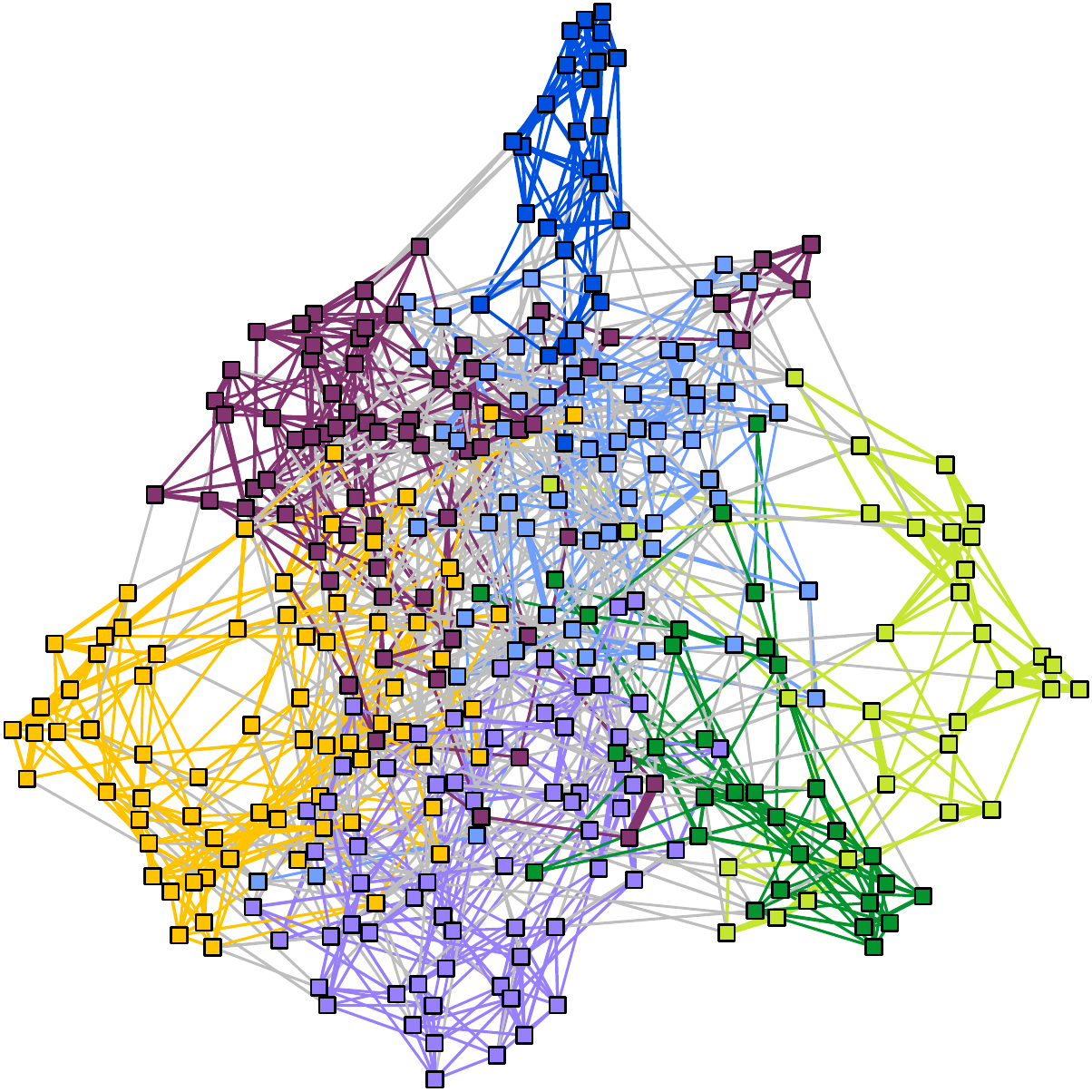}
    \caption{Learned Gaussian graph with sparsity on strong heavy-tails. $Q = 0.54$.}
    \label{fig:ht-graph-mcp-strong-full}
  \end{subfigure}%
  ~
  \begin{subfigure}[t]{0.32\textwidth}
      \centering
      \includegraphics[scale=.37]{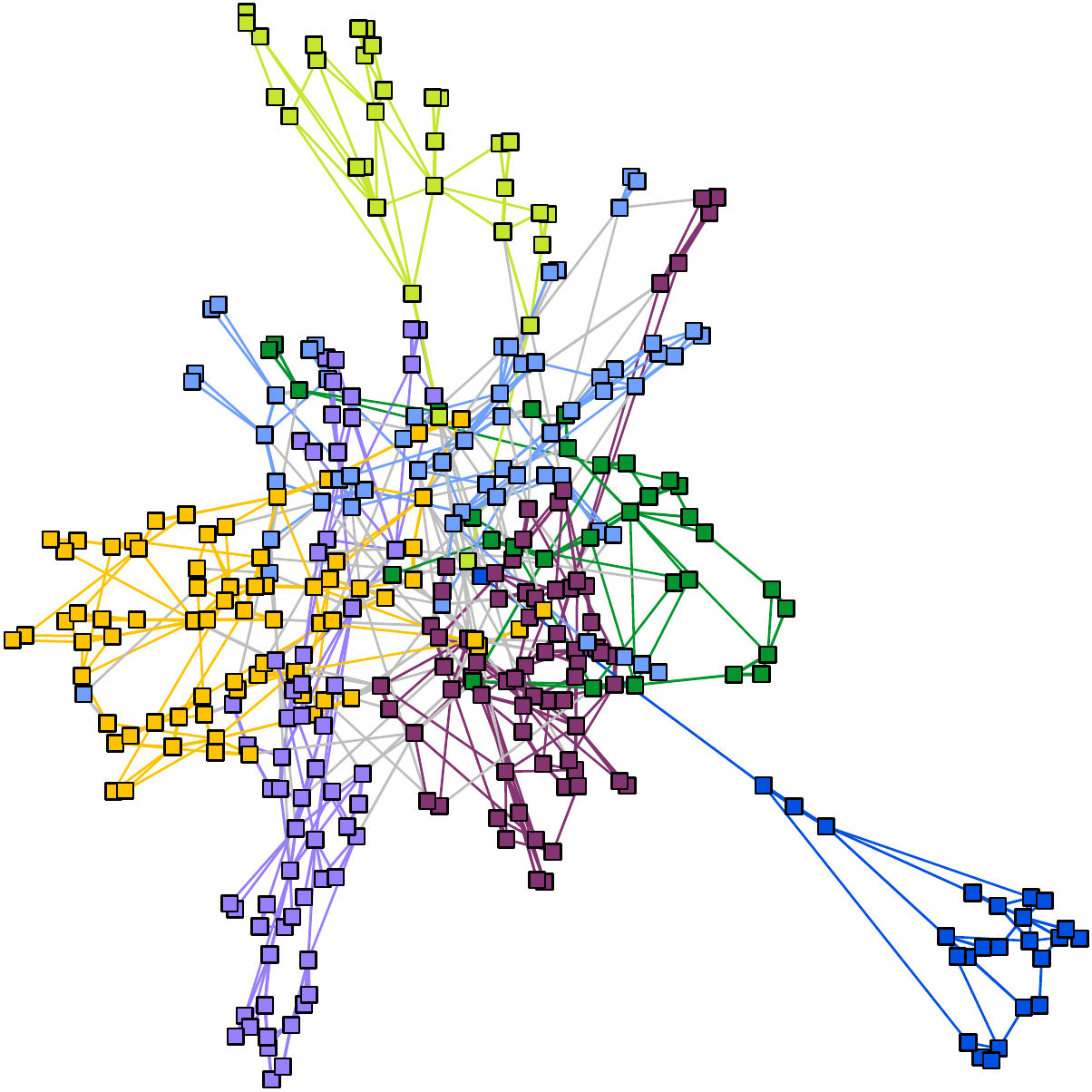}
      \caption{Learned Student-$t$ graph on strong heavy-tails. $Q = 0.66$.}
      \label{fig:ht-graph-t-strong-full}
  \end{subfigure}%
  \\
  \begin{subfigure}[t]{0.32\textwidth}
      \centering
      \includegraphics[scale=.37]{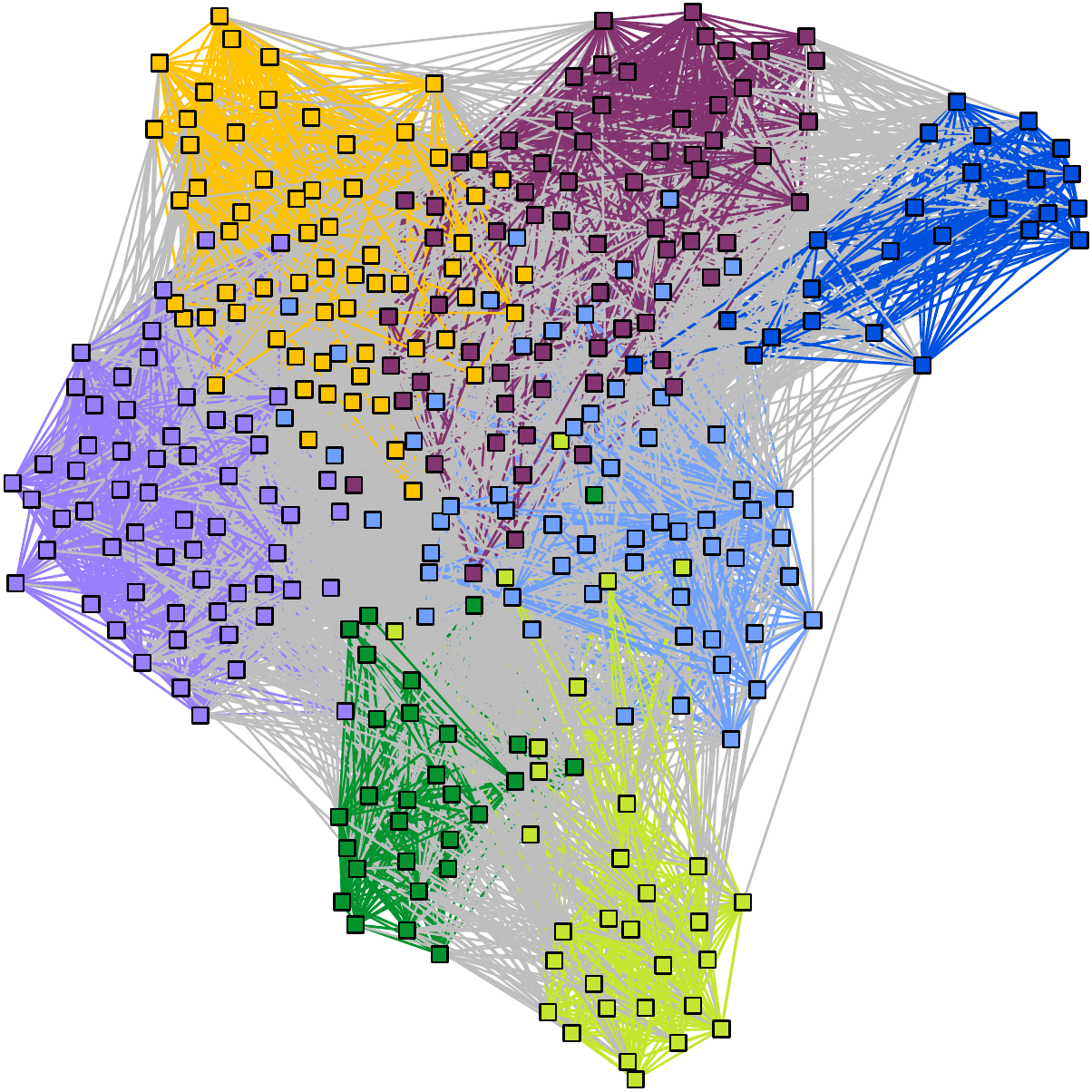}
      \caption{Learned Gaussian graph on moderate heavy-tails. $Q = 0.32$.}
      \label{fig:ht-graph-gauss-moderate-full}
  \end{subfigure}%
  ~
  \begin{subfigure}[t]{0.32\textwidth}
    \centering
    \includegraphics[scale=.37]{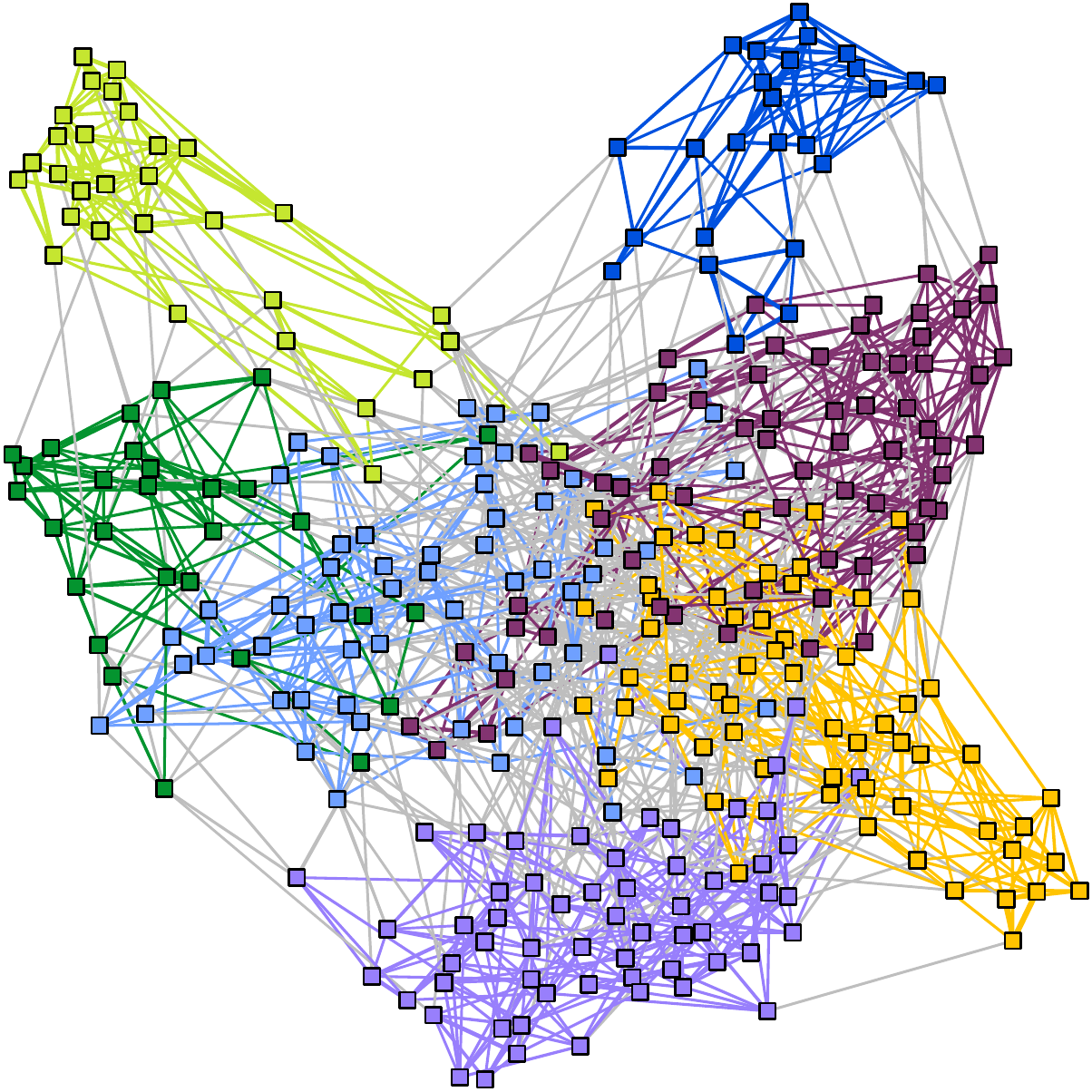}
    \caption{Learned Gaussian graph with sparsity on moderate heavy-tails. $Q = 0.57$.}
    \label{fig:ht-graph-mcp-moderate-full}
  \end{subfigure}%
  ~
  \begin{subfigure}[t]{0.32\textwidth}
      \centering
      \includegraphics[scale=.37]{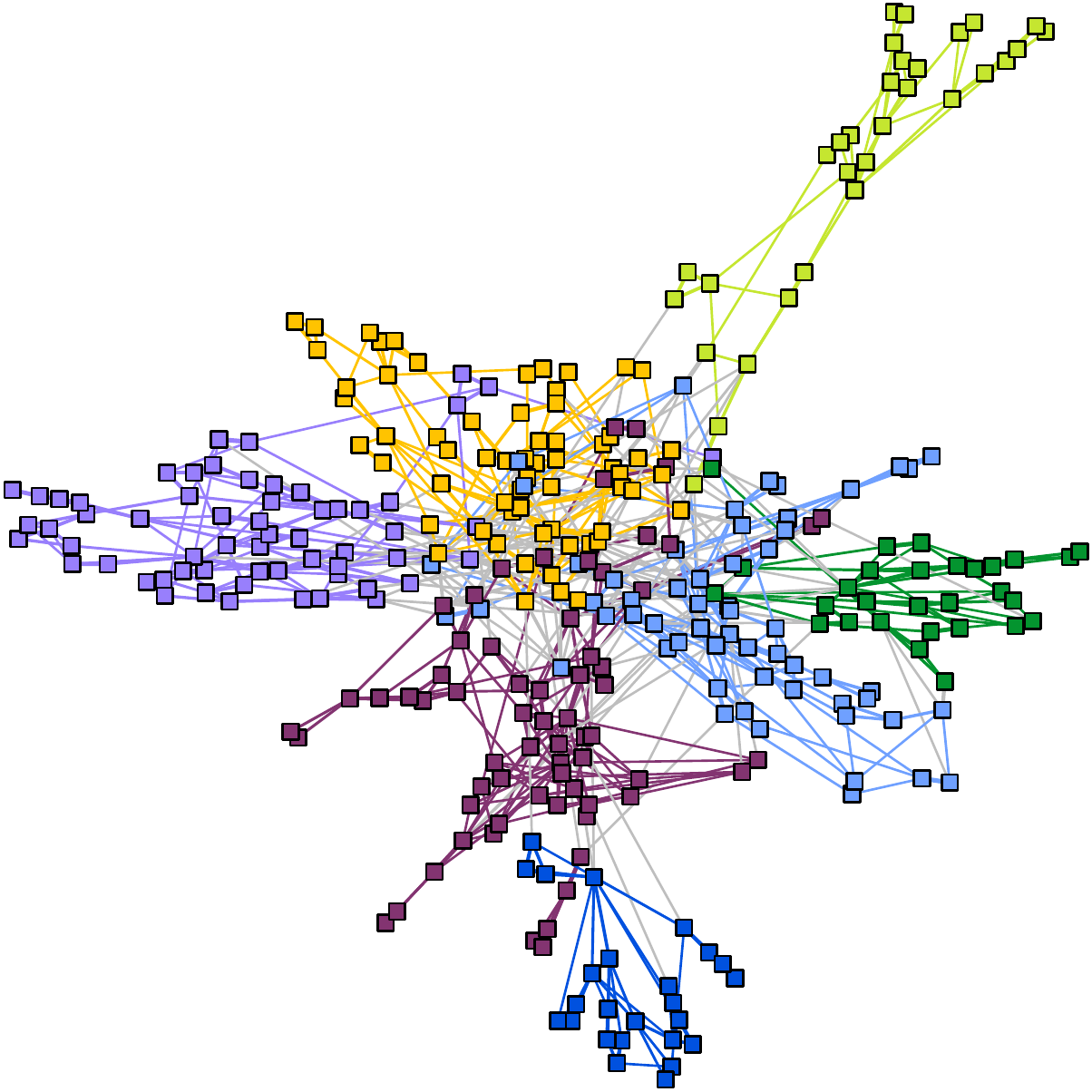}
      \caption{Learned Student-$t$ graph on moderate heavy-tails. $Q = 0.66$.}
      \label{fig:ht-graph-t-moderate-full}
  \end{subfigure}%
  ~
  \caption{Learned graph networks with
  Gaussian (Figures~\ref{fig:ht-graph-gauss-strong-full}~and~\ref{fig:ht-graph-gauss-moderate-full}),
  Gaussian with sparsity (Figures~\ref{fig:ht-graph-mcp-strong-full}~and~\ref{fig:ht-graph-mcp-moderate-full}),
  and Student-$t$ (Figures~\ref{fig:ht-graph-t-strong-full}~and~\ref{fig:ht-graph-t-moderate-full}),
  for contrasting heavy-tail scenarios.
  }
  \label{fig:all-sectors-heavy-tail}
\end{figure}

\begin{table}[!htb]
  \centering
  \caption{Edge distribution for the \textbf{strong} heavy-tails case.}
 \begin{tabular}{llll}
  \specialrule{.05em}{.05em}{.2em} 
  model & inter-sector edges & intra-sector edges & modularity $(Q)$ \\
  Gaussian & 4262 & 3947 & 0.31  \\
  Gaussian w/ sparsity & 405 & 986 & 0.54  \\
  Student-$t$ & 124 & 579 & \textbf{0.66}
 \end{tabular}
 \label{tab:edge-distribution-strong-full}
\end{table}

\begin{table}[!htb]
  \centering
  \caption{Edge distribution for the \textbf{moderate} heavy-tails case.}
 \begin{tabular}{llll}
  \specialrule{.05em}{.05em}{.2em} 
  model & inter-sector edges & intra-sector edges & modularity ($Q$) \\
  Gaussian & 4198 & 4063 & 0.32  \\
  Gaussian w/ sparsity & 375 & 1021 & 0.57  \\
  Student-$t$ & 131 & 627 & \textbf{0.66}
 \end{tabular}
 \label{tab:edge-distribution-moderate-full}
\end{table}

\subsection{Heavy-tails and \texorpdfstring{$k$}~-component graphs}

In order to verify the learning of heavy-tail and $k$-component graphs jointly, we estimate
a graphs of stocks using the datasets described in subsections~\ref{sec:degree-control}~\ref{sec:market-factor-effect}
via the $\mathsf{ktGL}$ algorithm.
Figure~\ref{fig:ktGL-1} depicts the learned graphs where we can observe sparse characteristics
that agree with the connected graphs estimated in the preceding section. In addition,
when compared to the Gaussian case (Figures~\ref{fig:dgl-1} and~\ref{fig:correlation-with-market}),
the graphs estimated in Figures~\ref{fig:ktGL-1} and~\ref{fig:ktGL-4-comp}
reveal a finer, possibly more accurate description of the actual underlying stock market scenario.

\begin{figure}[!htb]
  \begin{subfigure}[t]{0.48\textwidth}
    \centering
     \includegraphics[scale=.45]{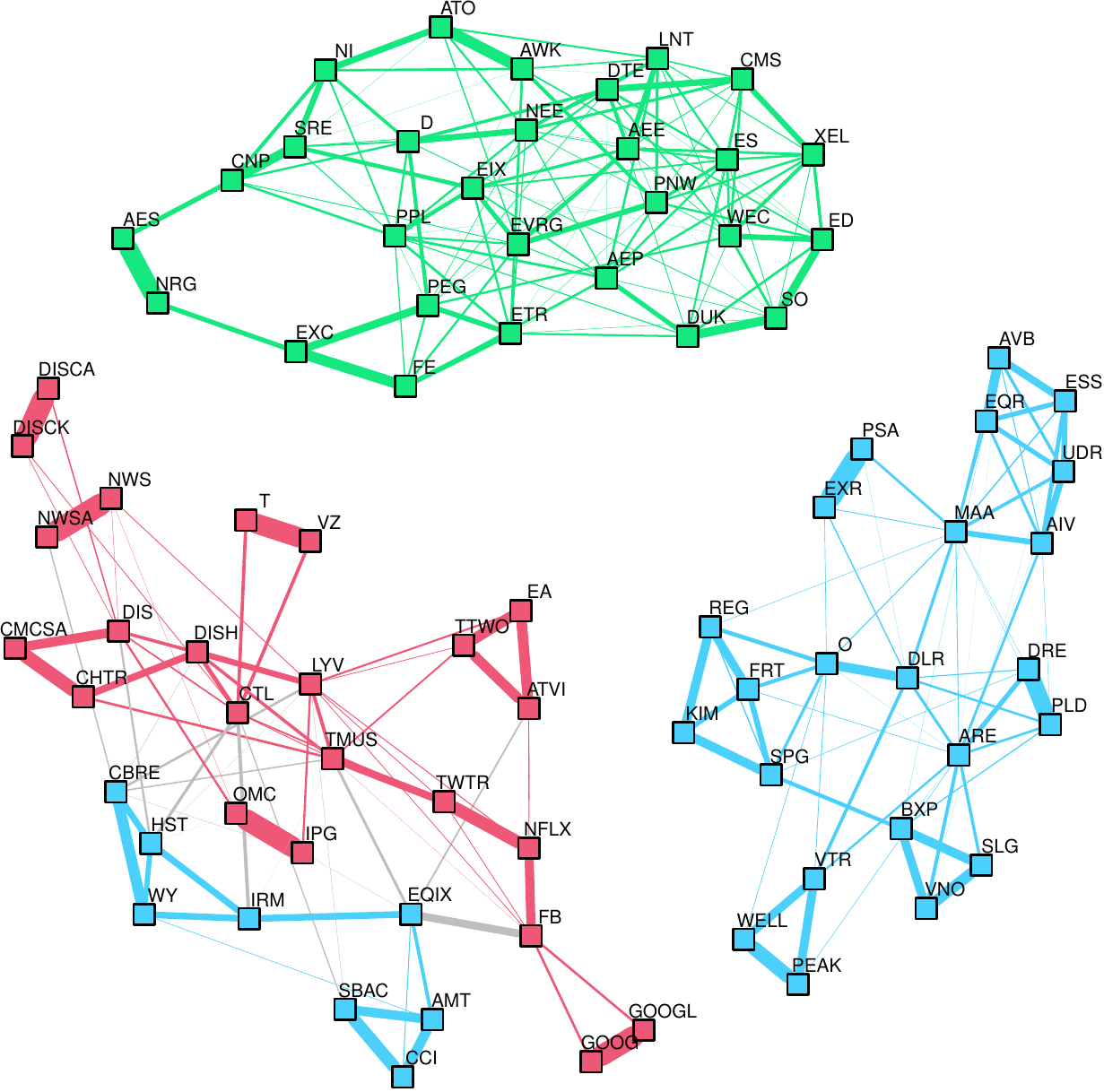}
    \caption{$3$-component graph learned via $\mathsf{ktGL}$.}
      \label{fig:ktGL-3-comp}
  \end{subfigure}%
  ~
  \begin{subfigure}[t]{0.48\textwidth}
      \centering
      \includegraphics[scale=.45]{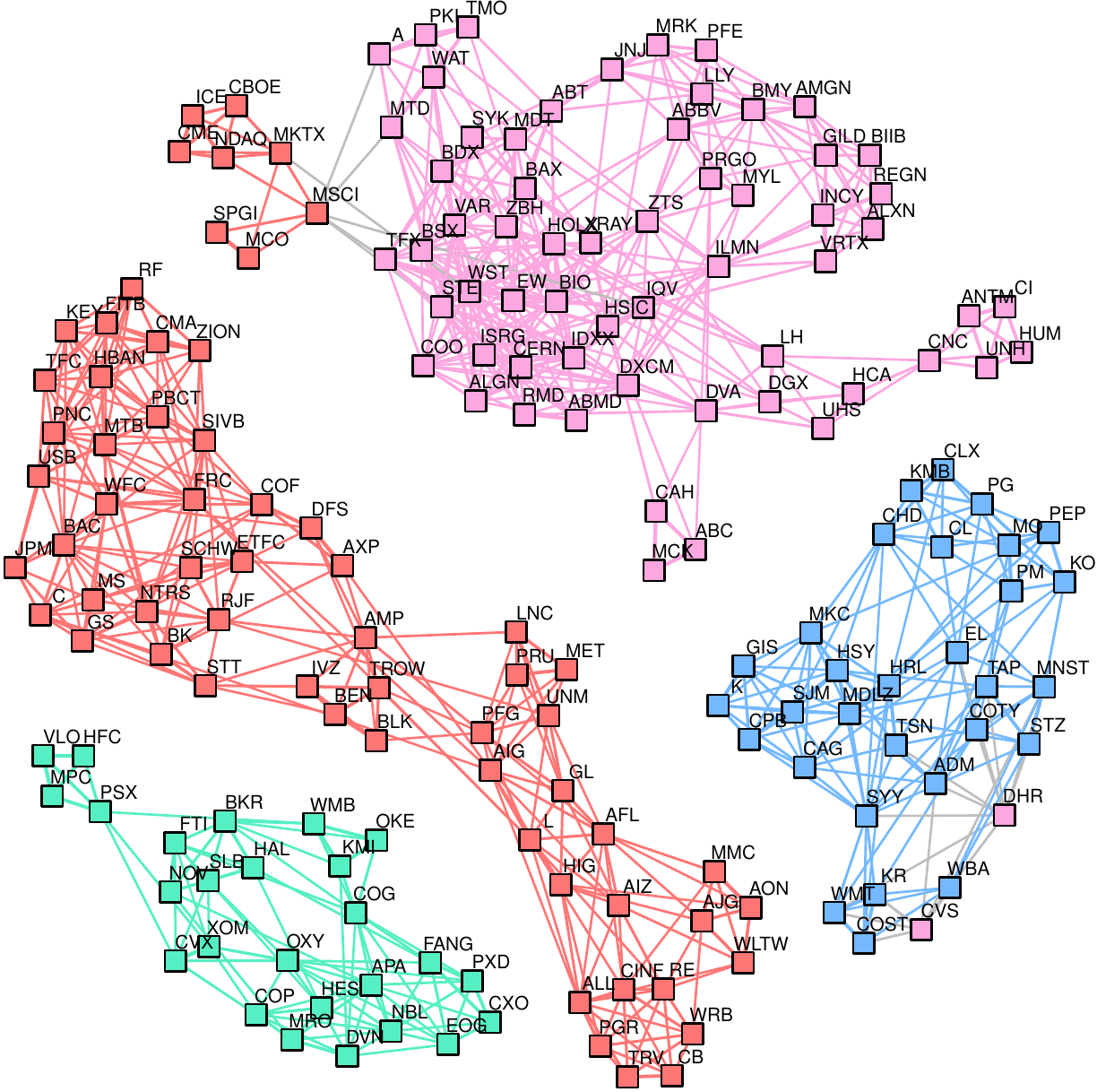}
      \caption{$4$-component graph learned via $\mathsf{ktGL}$.}
      \label{fig:ktGL-4-comp}
  \end{subfigure}%
  \caption{Learned graphs from different stock market scenarios taking into account both $k$-component requirements
  and heavy-tail assumptions.}
  \label{fig:ktGL-1}
\end{figure}

\subsection{Effect of crisis on the learned graphs: COVID-19 case of study}

In the experiments that follow, we focus on illustrating the impact of the COVID-19 economic crisis
on the learned graphs from the S\&P500 stock market and the foreign exchange market.
The COVID-19 pandemic affected the US stock market quite significantly especially throughout the month of March 2020.

\subsubsection{Stocks}

In this experiment, we investigate the effects of the financial crisis caused by the COVID-19 pandemic
on the learned graphs of stocks.

We start by selecting 97 stocks across all 11 sectors of the S\&P500
and computing their log-returns during two time frames:
(i) from Apr. 22nd 2019 to Dec. 31st 2019 and
(ii) from Jan. 2nd 2020 to Jul. 31st 2020.

Out of those stocks, nine of them showed growth in returns
over the period of 24 days starting from Feb. 18th 2020 to March 20th 2020.
Their symbols along with their monthly return during this period is
summarized in Table~\ref{tab:stocks-doing-well-during-covid}.
Figure~\ref{fig:linear-returns-stocks-covid} shows the log-returns of the selected stocks
over the considered time period. In Figure~\ref{fig:returns-covid}, the economic crisis
is noticeable from the increase in the spread of the log-returns, throughout the month of March 2020,
caused by the COVID-19 pandemic.

\begin{figure}[!htb]
  \captionsetup[subfigure]{justification=centering}
  \centering
  \begin{subfigure}[t]{0.98\textwidth}
      \centering
      \includegraphics[scale=.55]{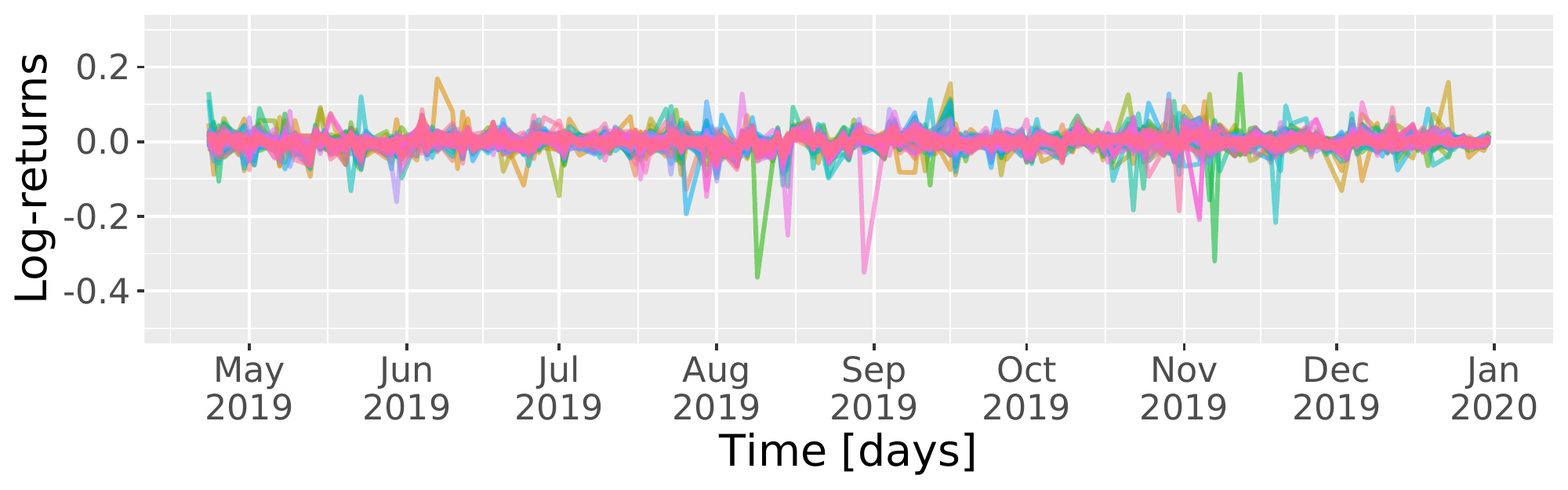}
      \caption{Log-returns of the selected stocks prior to the COVID-19 pandemic.}
      \label{fig:returns-prior-covid}
  \end{subfigure}%
  \\
  \vspace{.2in}
  \begin{subfigure}[t]{0.98\textwidth}
      \centering 
      \includegraphics[scale=.55]{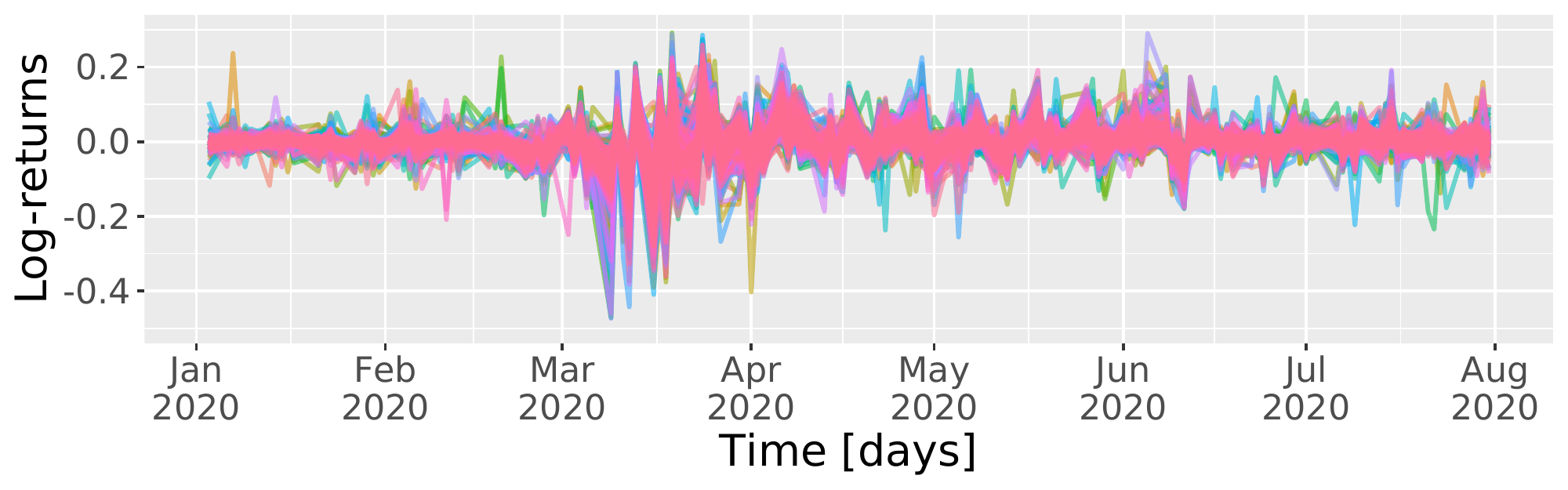}
      \caption{Log-returns of the selected stocks during the COVID-19 pandemic.}
      \label{fig:returns-covid}
  \end{subfigure}%
  \caption{Log-returns of 97 selected stocks from April 22nd 2019 to July 31st 2020.
  The vertical axis is fixed on panels in order to better illustrate the change in volatility
  as a result of the COVID-19 pandemic.}
  \label{fig:linear-returns-stocks-covid}
\end{figure}

\begin{table}[!htb]
  \centering
  \caption{Stocks with positive monthly return from 2020-02-15 to 2020-03-20.}
 \begin{tabular}{lllc}
  \specialrule{.05em}{.05em}{.2em} 
   & Symbol & GICS Sector & monthly return \\
   & $\mathsf{CLX}$ & $\mathsf{Consumer~Staples}$ & 4.8\%  \\
   & $\mathsf{COG}$ & $\mathsf{Energy}$ & 2.5\%\\
   & $\mathsf{CTXS}$ & $\mathsf{Information~Technology}$ & 0.4\%\\
   & $\mathsf{DPZ}$ & $\mathsf{Consumer~Discretionary}$ & 3.9\% \\
   & $\mathsf{GILD}$ & $\mathsf{Health~Care}$ & 5.8\% \\
   & $\mathsf{GIS}$ & $\mathsf{Consumer~Staples}$ & 0.9\% \\
   & $\mathsf{KR}$ & $\mathsf{Consumer~Staples}$ & 6.9\% \\
   & $\mathsf{REGN}$ & $\mathsf{Health~Care}$ & 6.1\% \\
   & $\mathsf{ZM}$ & $\mathsf{-}$ & 19.4\%
 \end{tabular}
 \label{tab:stocks-doing-well-during-covid}
\end{table}

Figure~\ref{fig:covid-stocks} shows the learned networks using the proposed $\mathsf{tGL}$ algorithm~\ref{alg:heavy-tail}
with Student-$t$ assumption.
Figure~\ref{fig:graph-prior-covid} shows that the stocks with positive average linear return (blue squares)
during the COVID-19 pandemic are not particularly correlated on the period before the pandemic.
Figure~\ref{fig:graph-during-covid}, on the other hand, shows that the learned graph is able to correctly cluster
those stocks. In particular, it can be observed that the network of Figure~\ref{fig:graph-during-covid}
is objectively more modular than that of Figure~\ref{fig:graph-prior-covid}.
This experiment shows evidence that the graph models can be employed as a tool to identify events of interest
in the network.

\begin{figure}[!htb]
  \captionsetup[subfigure]{justification=centering}
  \centering
  \begin{subfigure}[t]{0.49\textwidth}
      \centering
      \includegraphics[scale=.45]{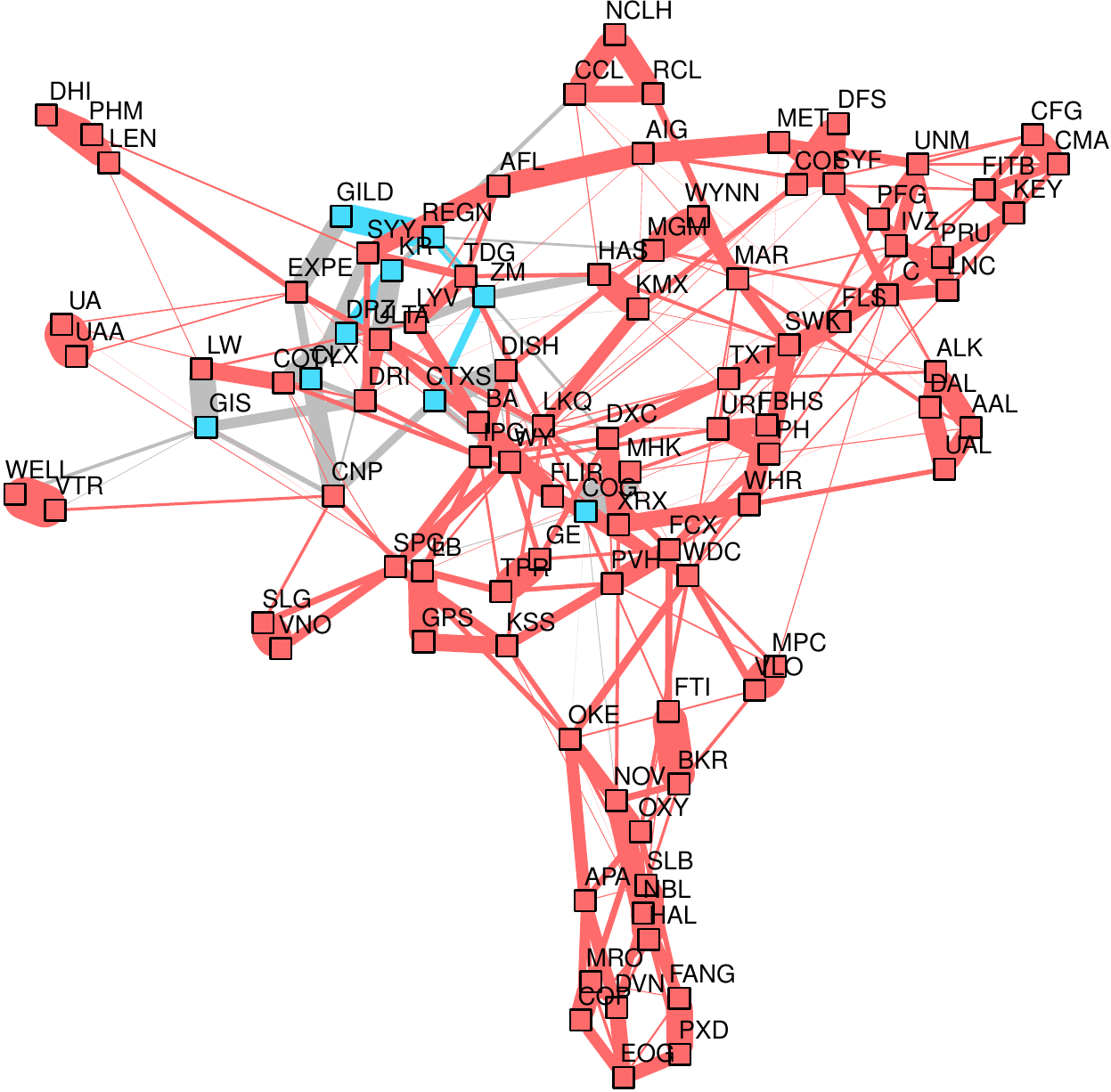}
      \caption{Learned Student-$t$ graph from data comprising the time window
               from Apr. 22nd 2019 to Dec. 31st 2019. $Q = 0.024$.}
      \label{fig:graph-prior-covid}
  \end{subfigure}%
  ~
  \begin{subfigure}[t]{0.49\textwidth}
      \centering
      \includegraphics[scale=.45]{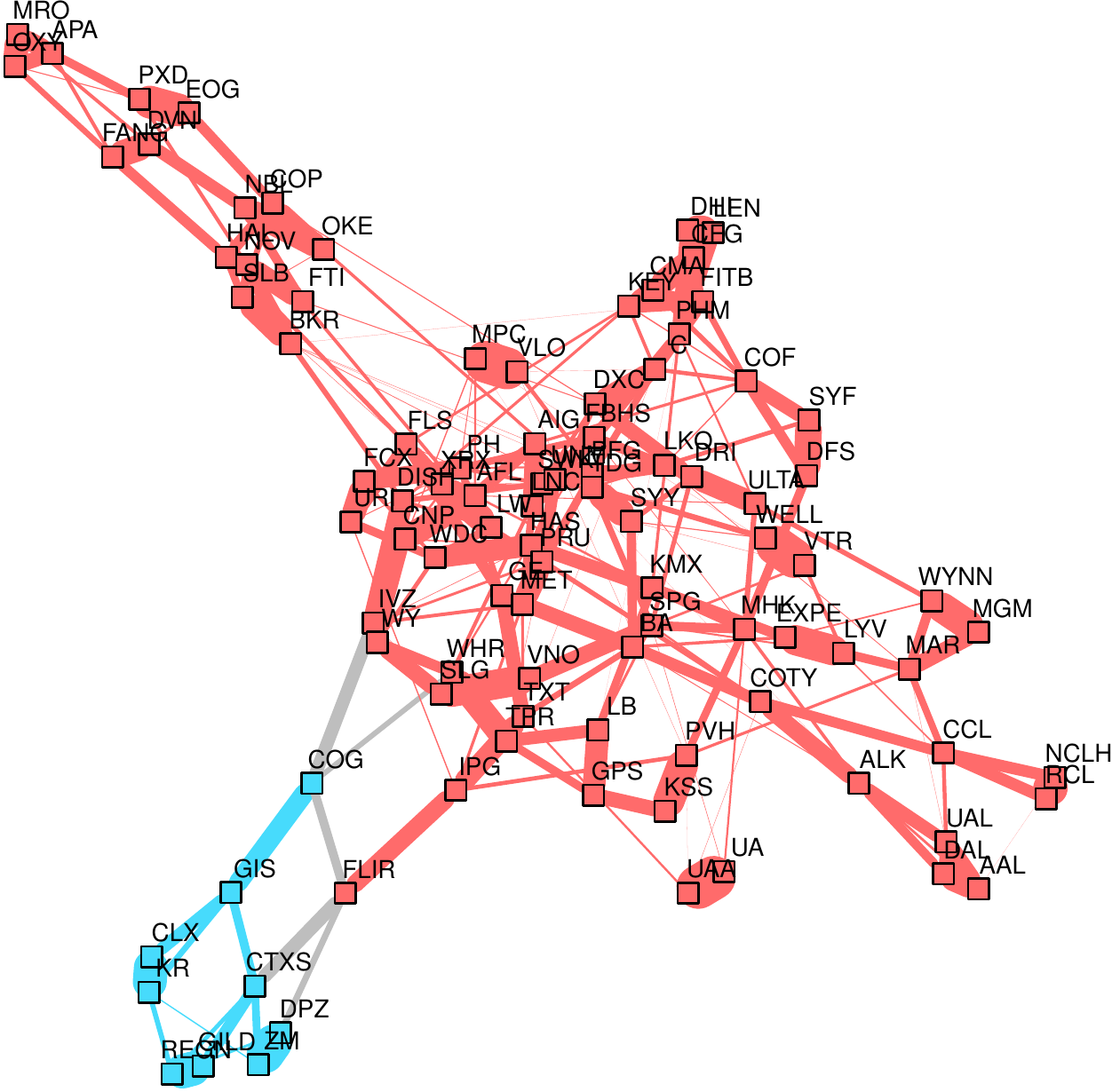}
      \caption{Learned Student-$t$ graph from stock data during the financial
               crisis caused by COVID-19 from Jan. 2nd 2020 to Jul. 31st 2020. $Q = 0.10$.}
      \label{fig:graph-during-covid}
  \end{subfigure}%
  \caption{Graphs of stocks learned with data prior and during the financial crisis caused by the COVID-19 pandemic in 2020.
  Figure~\ref{fig:graph-prior-covid} shows that before the COVID-19 pandemic the nodes in blue are somewhat independent among
  themselves. Figure~\ref{fig:graph-during-covid} shows that during the COVID-19 pandemic the stocks highlighted in blue
  are strongly connected and separated from the rest of the graph, as in fact they showed a positive monthly return during
  the severe economic period between Feb. 18th and Mar. 20th, 2020.}
  \label{fig:covid-stocks}
\end{figure}

In addition, we compare the proposed learned graphs in Figure~\ref{fig:covid-stocks} with graphs learned from algorithms that
employ the smooth signal approach. Figure~\ref{fig:smooth-signal-covid-stocks} shows the learned graphs from
$\mathsf{SSGL}$~\citep{kalofolias2016} and $\mathsf{GL}$-$\mathsf{SigRep}$~\citep{dong2016}. As we can observe from both
Figure~\ref{fig:kalo-covid} and~\ref{fig:dong-covid}, the learned networks do not present a meaningful graph representation in
this setting. While the clustering property of the stocks with positive monthly return is preserved
in the network learned with $\mathsf{GL}$-$\mathsf{SigRep}$, it does not capture the fine dependencies between pairs of stocks
like the ones shown by the proposed $\mathsf{kGL}$ algorithm in Figure~\ref{fig:covid-stocks}. In addition, tunning the hyperparameters in
the smooth signal algorithms is an involved task.

\begin{figure}[!htb]
  \captionsetup[subfigure]{justification=centering}
  \centering
  \begin{subfigure}[t]{0.49\textwidth}
      \centering
      \includegraphics[scale=.45]{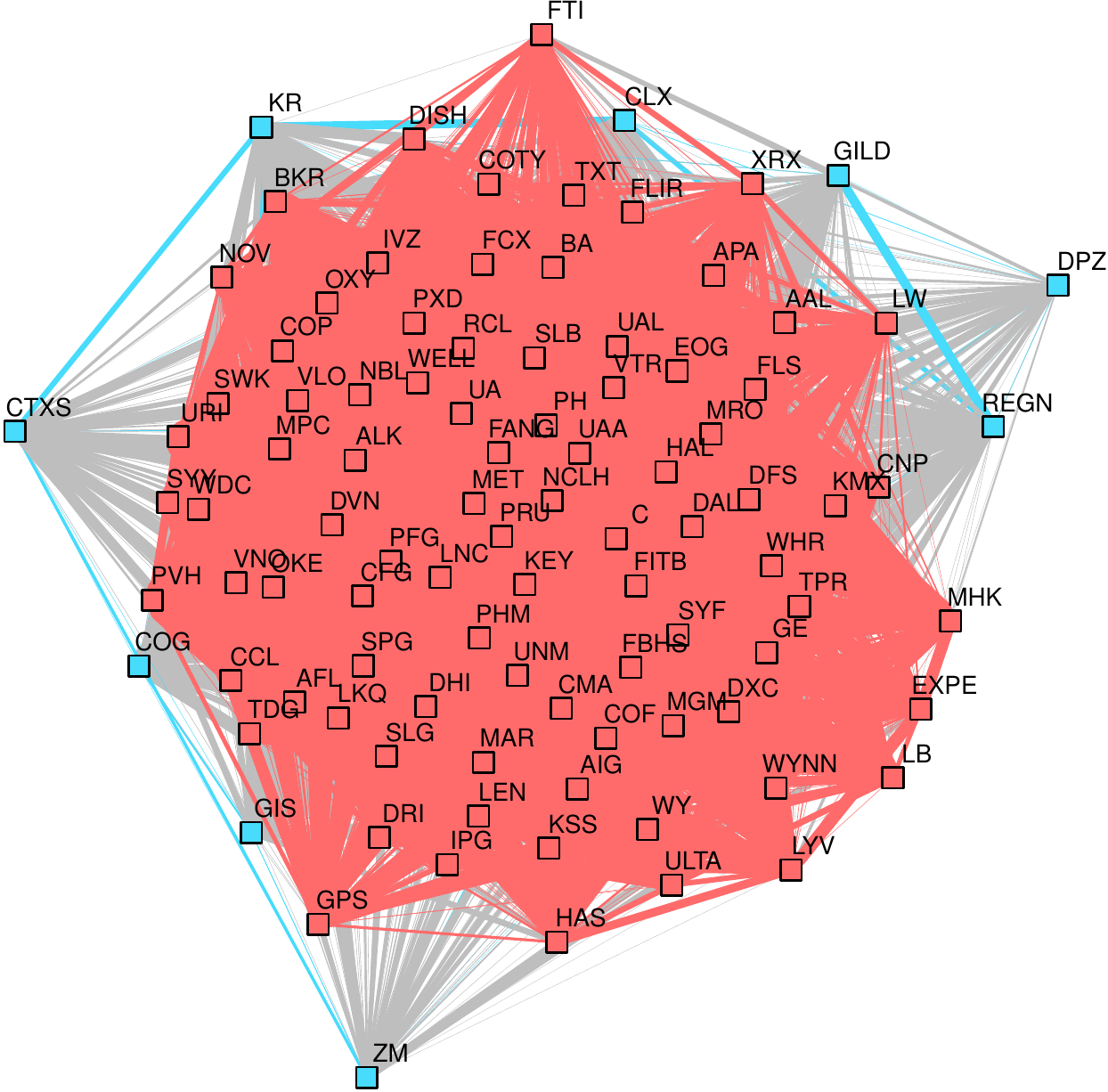}
      \caption{$\mathsf{SSGL}$ algorithm~\eqref{eq:smooth_static_graph}~\citep{kalofolias2016} with $\alpha = 10^{-2}$ and $\gamma = 10^{-4}$.}
      \label{fig:kalo-covid}
  \end{subfigure}%
  ~
  \begin{subfigure}[t]{0.49\textwidth}
      \centering
      \includegraphics[scale=.45]{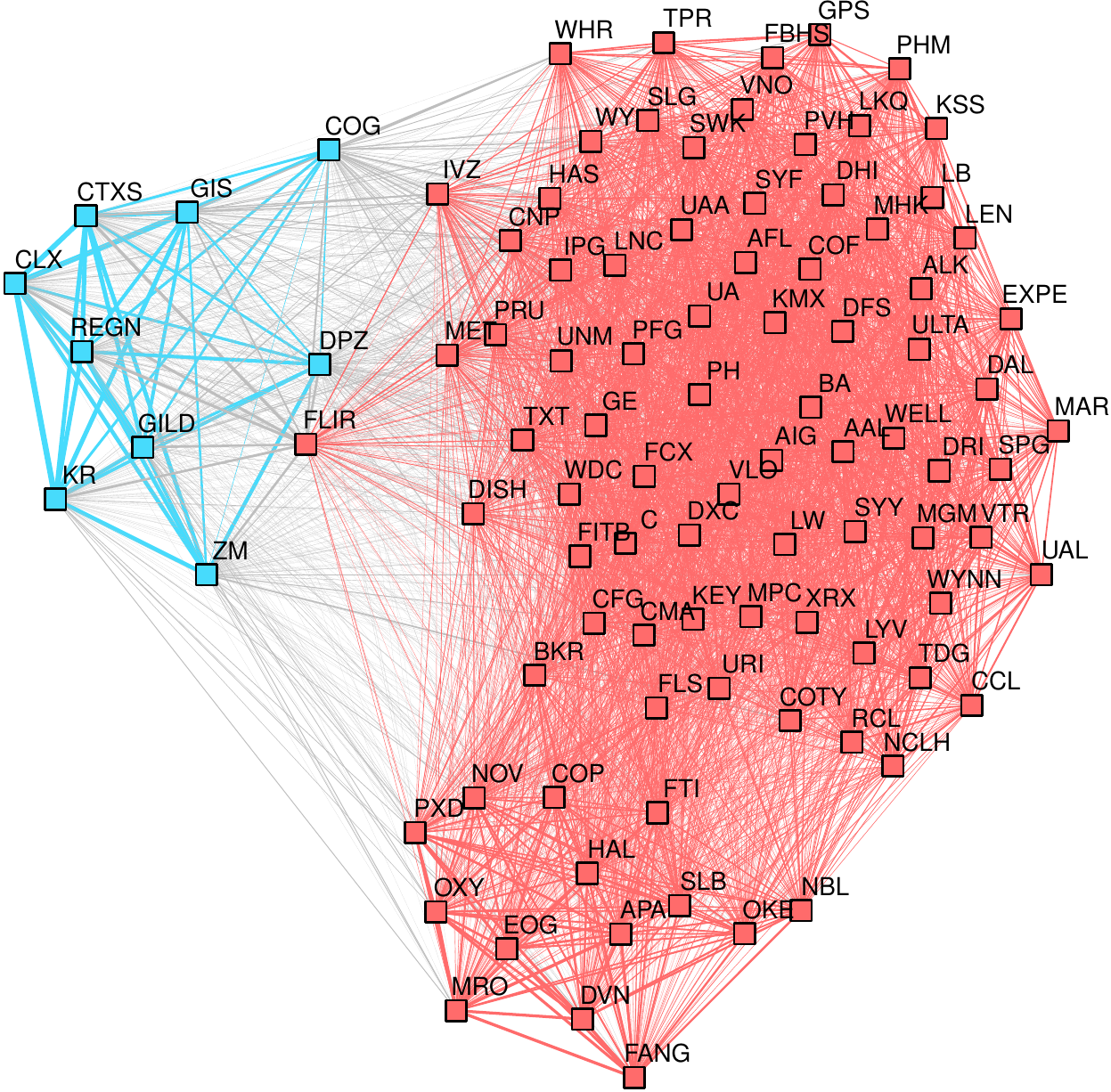}
      \caption{$\mathsf{GL}$-$\mathsf{SigRep}$ algorithm~\eqref{eq:signal-smoothness-dong}~\citep{dong2016} with $\alpha = 10^{-3}$, $\gamma = 0.5$.}
      \label{fig:dong-covid}
  \end{subfigure}%
  \caption{Learned graphs with existing smooth signal-based algorithms from stock data during the financial crisis caused by COVID-19.}
  \label{fig:smooth-signal-covid-stocks}
\end{figure}

\subsubsection{Foreign exchange}

We set up an experiment with data from the foreign exchange (FX) market, where, similarly to the previous experiment,
we would like to investigate whether the learned graph is able to identify currencies that became more valuable with respect to the
US dollar during the COVID-19 pandemic. To that end, we query FX data of the 34 most traded currencies as of 2019
in two time windows:
(i) from Feb. 1st 2019 to May 1st 2019 and (ii) from Feb. 3rd 2020 to May 1st 2020.

We then obtain the list of currencies for which the US dollar became less valuable during the period
from Feb. 15th to Apr. 15th 2020. Table~\ref{tab:currencies-return-usd} shows the list of such currencies
along with the annualized return of the ratio USD/CUR, where CUR is a given currency.

\begin{table}[!htb]
  \centering
  \caption{Currencies for which the US dollar had negative annualized return from 2020-02-15 to 2020-04-15.}
 \begin{tabular}{cllr}
  \specialrule{.05em}{.05em}{.2em} 
   & Symbol & Name & Annualized return \\
   & $\mathsf{EUR}$ & $\mathsf{Euro}$ & $-7.2$\%  \\
   & $\mathsf{JPY}$ & $\mathsf{Japanese~Yen}$ & $-13.4$\% \\
   & $\mathsf{CHF}$ & $\mathsf{Swiss~Franc}$ & $-10.8\%$ \\
   & $\mathsf{HKD}$ & $\mathsf{Hong~Kong~Dollar}$ & $-1.1$\% \\
   & $\mathsf{DKK}$ & $\mathsf{Danish~Krone}$ & $-8.0$\% \\
 \end{tabular}
 \label{tab:currencies-return-usd}
\end{table}

We then use the proposed heavy-tail graph learning framework ($\mathsf{tGL}$ Algorithm~\ref{alg:heavy-tail})
to learn the graph networks of foreign exchange data for the two aforementioned time frames.
Figure~\ref{fig:covid-fx-data} depicts the learned graphs, where in Figure~\ref{fig:fx-covid}
clearly shows that the currencies that had an increase in value during the pandemic
are clustered together (red edges), except for the HKD, which was the currency that presented
the smallest increase. On the other hand, prior to the pandemic, the FX market behaved in a
somewhat random fashion, which is seen by the smaller value in graph modularity.

\begin{figure}[!htb]
  \captionsetup[subfigure]{justification=centering}
  \centering
  \begin{subfigure}[t]{0.49\textwidth}
      \centering
      \includegraphics[scale=.5]{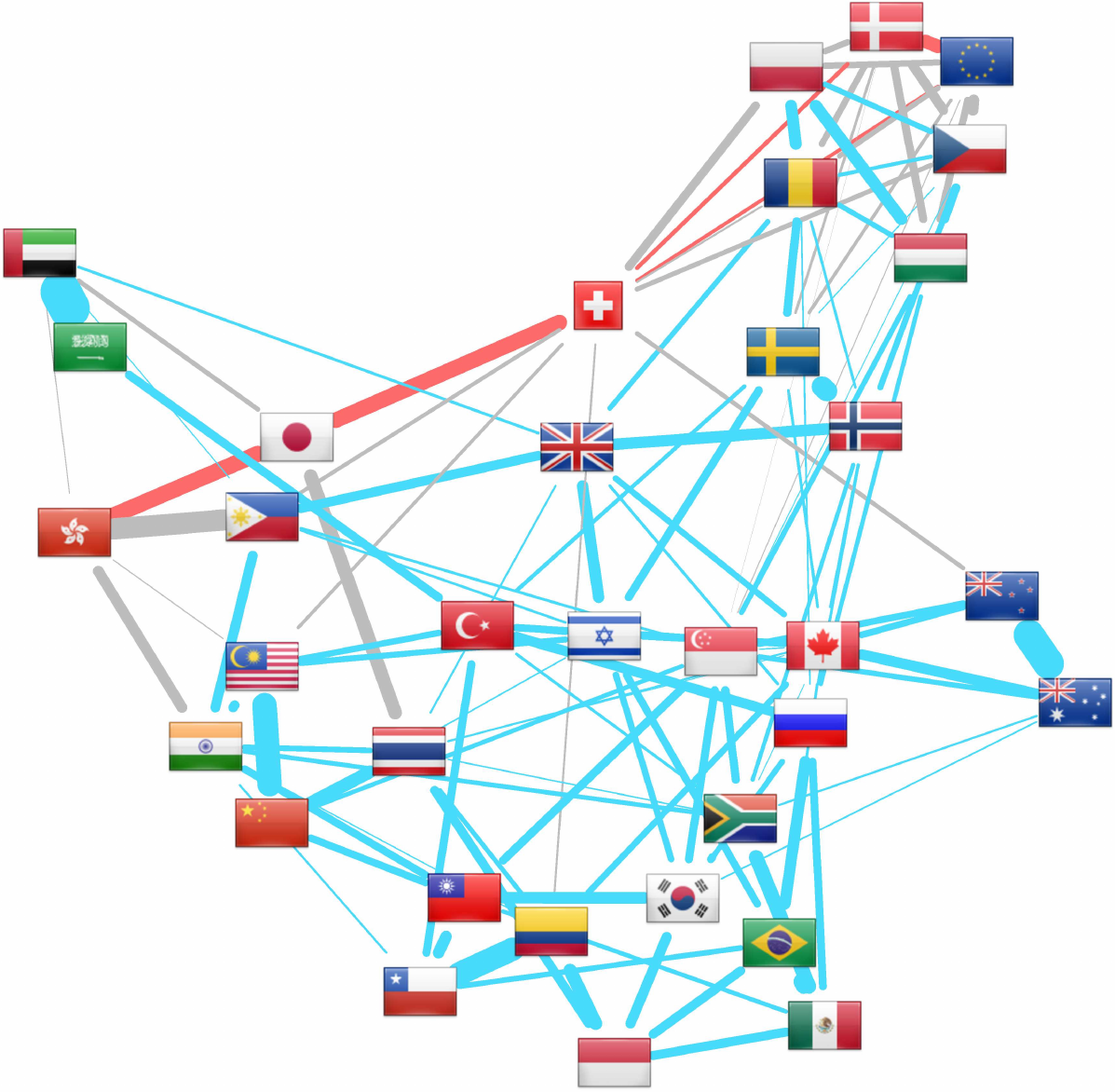}
      \caption{Learned Student-$t$ graph of FX data one-year prior COVID-19 financial crisis. $Q = 0.039$.}
      \label{fig:fx-before-covid}
  \end{subfigure}%
  ~
  \begin{subfigure}[t]{0.49\textwidth}
      \centering
      \includegraphics[scale=.5]{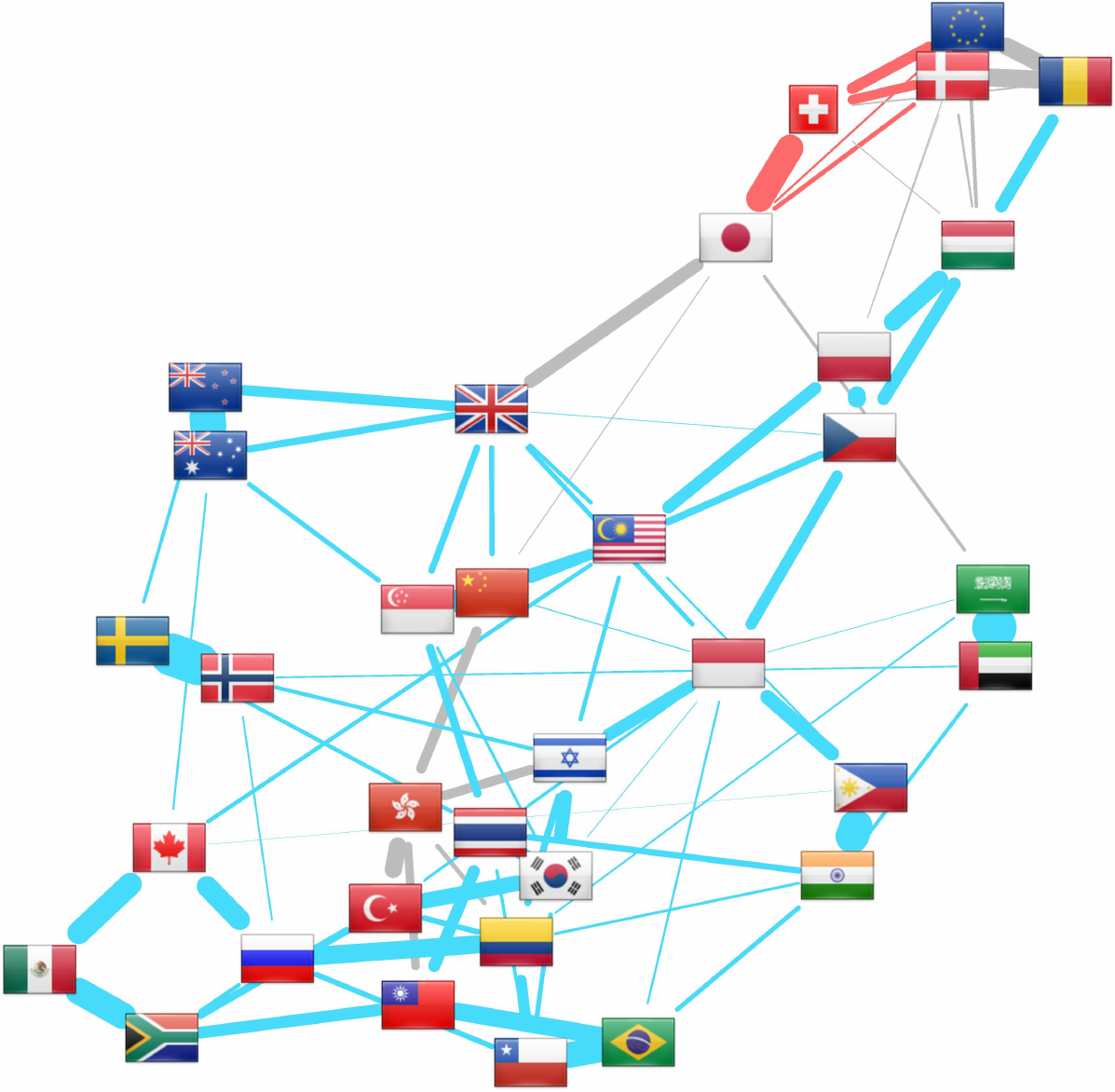}
      \caption{Learned Student-$t$ graph of FX data during COVID-19 financial crisis. $Q = 0.085$.}
      \label{fig:fx-covid}
  \end{subfigure}%
  \caption{Learned graphs from FX data.}
  \label{fig:covid-fx-data}
\end{figure}


\section{Conclusions}

This paper has presented novel interpretations for Laplacian constraints of graphs from the perspective of financial data.
Those interpretations serve as guidelines for users when it comes to apply graph learning algorithms to estimate networks
of financial instruments such as stocks and currencies. We have also proposed novel algorithms based on the ADMM and MM frameworks
for learning graphs from data. Those algorithms fill major gaps in the literature, especially on what concerns learning
heavy-tailed and $k$-component graphs. In particular, the heavy-tail graph learning framework is paramount for financial data,
which exceptionally outperforms conventional state-of-the-art algorithms derived on the assumption that the input data is Gaussian.
Another feature of heavy-tail graphs is that they are naturally sparse. State-of-the-art sparse graph learning frameworks,
while useful in many contexts beyond finance, are cumbersome to tune due to their hyperparameters.
We, on the other hand, advocate that sparsity provided from heavy-tail distributions is a more principled
way to obtain interpretable graph representations. In the case of $k$-component graphs, we proposed a principled, versatile framework
that avoids isolated nodes via a simple linear constraint on the degree of the nodes. This extension allows, for instance, the
estimation of particular types of graphs such as regular graphs. Moreover, the proposed graph algorithms have shown significant
potential to capture nuances in the data caused by, \textit{e.g.}, a financial crisis event. Finally, it is worth noting that
the methods developed in this paper may be applicable to scenarios beyond financial markets, in particular,
we envision benefits for practical applications where the data distributions significantly departs from that of Gaussian.

\acks{The numerical algorithms proposed in this work were implemented in the
$\mathsf{R}$ language and made use of softwares such as $\mathsf{CVXR}$~\citep{fu2020},
$\mathsf{Rcpp}$~\citep{eddelbuettel2011},
$\mathsf{RcppEigen}$~\citep{bates2013},
$\mathsf{RcppArmadillo}$~\citep{eddelbuettel2014},
and $\mathsf{igraph}$~\citep{csardi2019}.
This work was supported by the Hong Kong GRF 16207019 research grant.}


\newpage

\appendix

\section{Empirical Convergence}

In this supplementary section, we illustrate the empirical convergence performance of the proposed algorithms for
the experimental settings considered.
All the experiments were carried out in a MacBook Pro 13in. 2019 with Intel Core i7 2.8GHz,
16GB of RAM.

The quantities $\bm{r}^l$ and $\bm{s}^l$, which are defined as
  $\bm r^l  = \bT^{l} - \sL\ww^l$,
  $\bm s^l  = \sD\ww^{l} - \bm d$,
are the primal residuals and
  $\bm v^l = \rho\sL^*\left(\bT^{l} - \bT^{l-1}\right)$
is the dual residual.

From Figures~\ref{fig:convergence-warm-up-heavy-tail}--\ref{fig:convergence-forex},
we can observe that the norm of the residuals quantities quickly approach zero after
a transient phase typical of ADMM-like algorithms.

\begin{figure}[!htb]
  \captionsetup[subfigure]{justification=centering}
  \centering
  \begin{subfigure}[t]{0.25\textwidth}
      \centering
      \includegraphics[scale=.3]{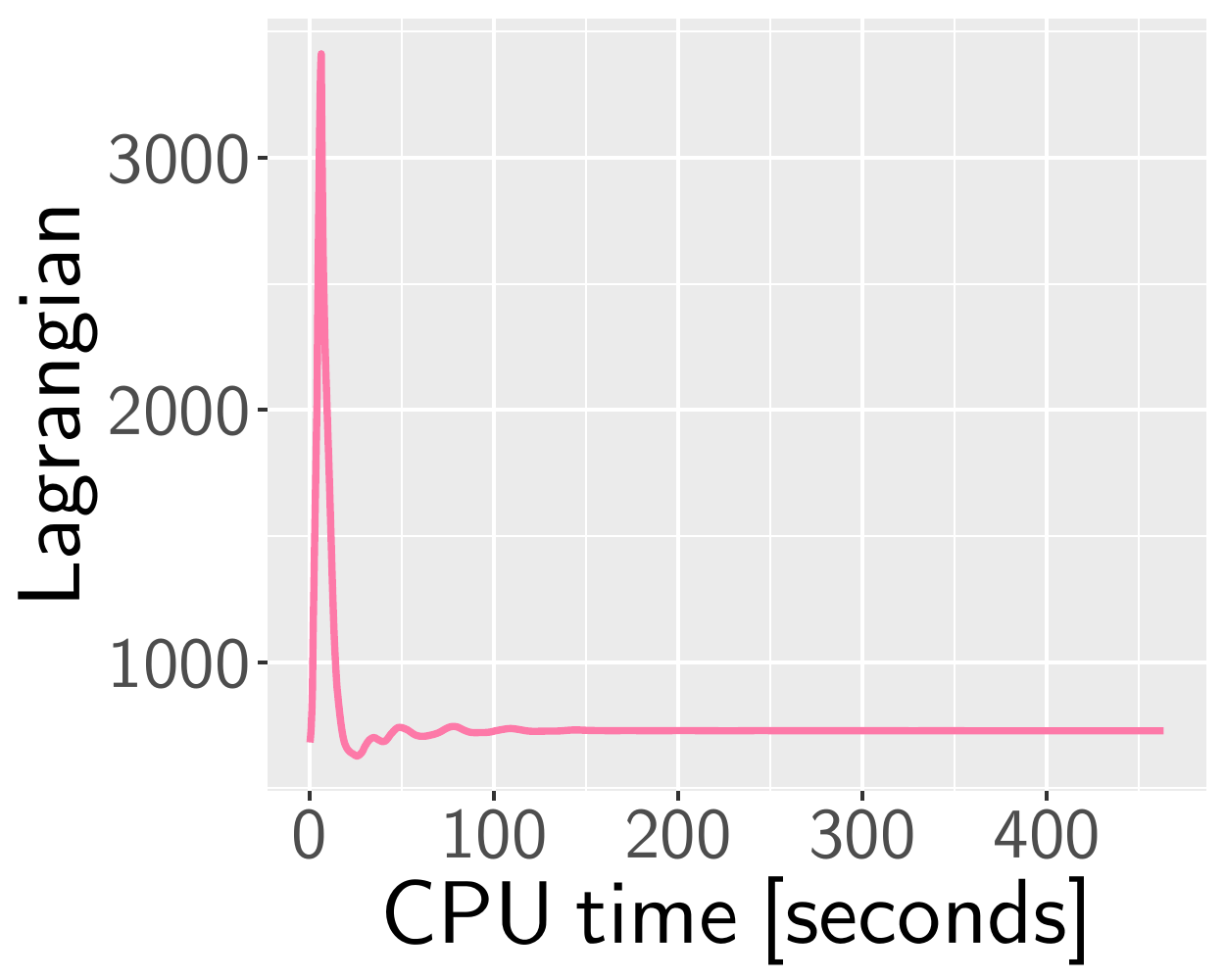}
  \end{subfigure}%
  \begin{subfigure}[t]{0.25\textwidth}
      \centering
      \includegraphics[scale=.3]{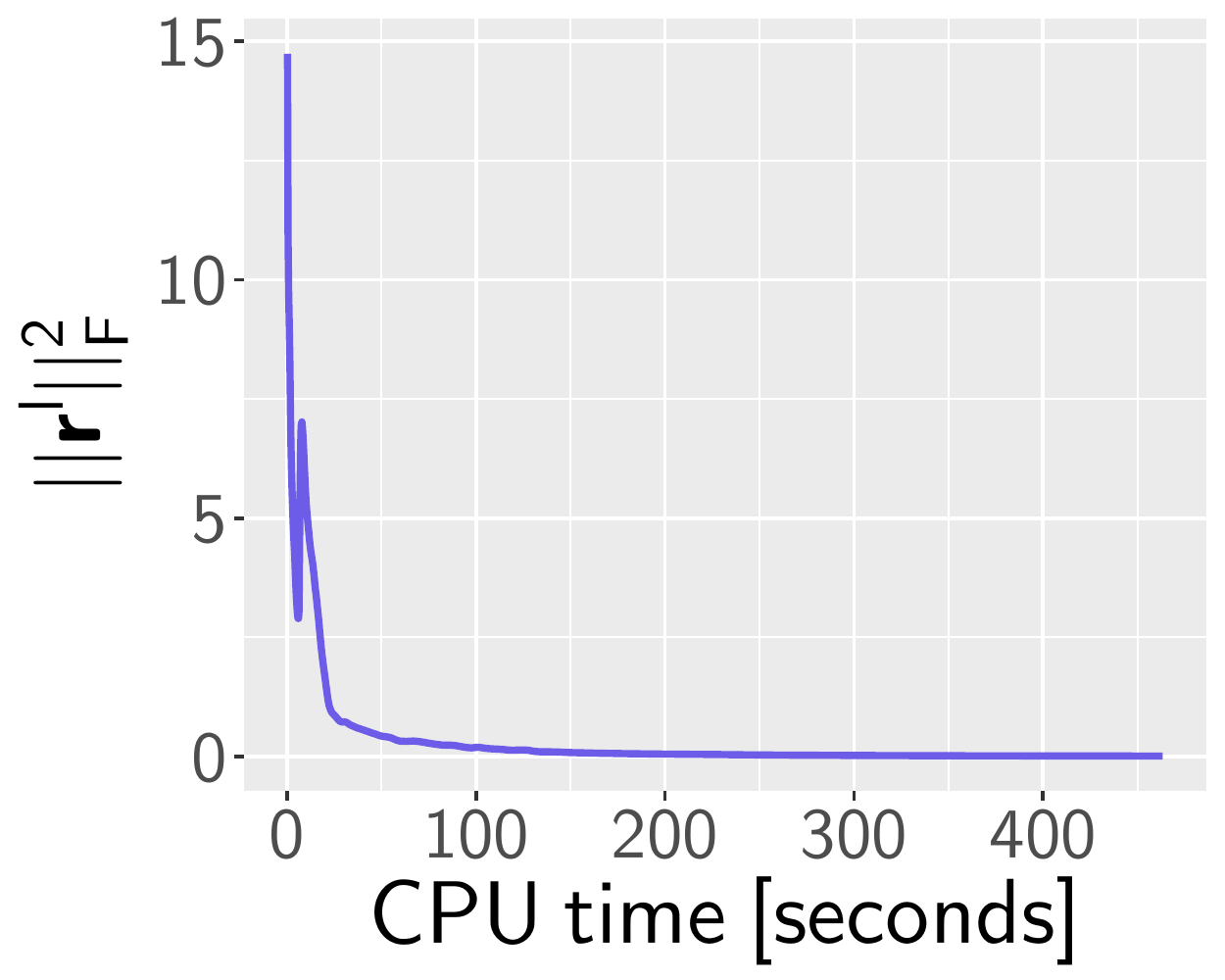}
  \end{subfigure}%
  \begin{subfigure}[t]{0.25\textwidth}
    \centering
    \includegraphics[scale=.3]{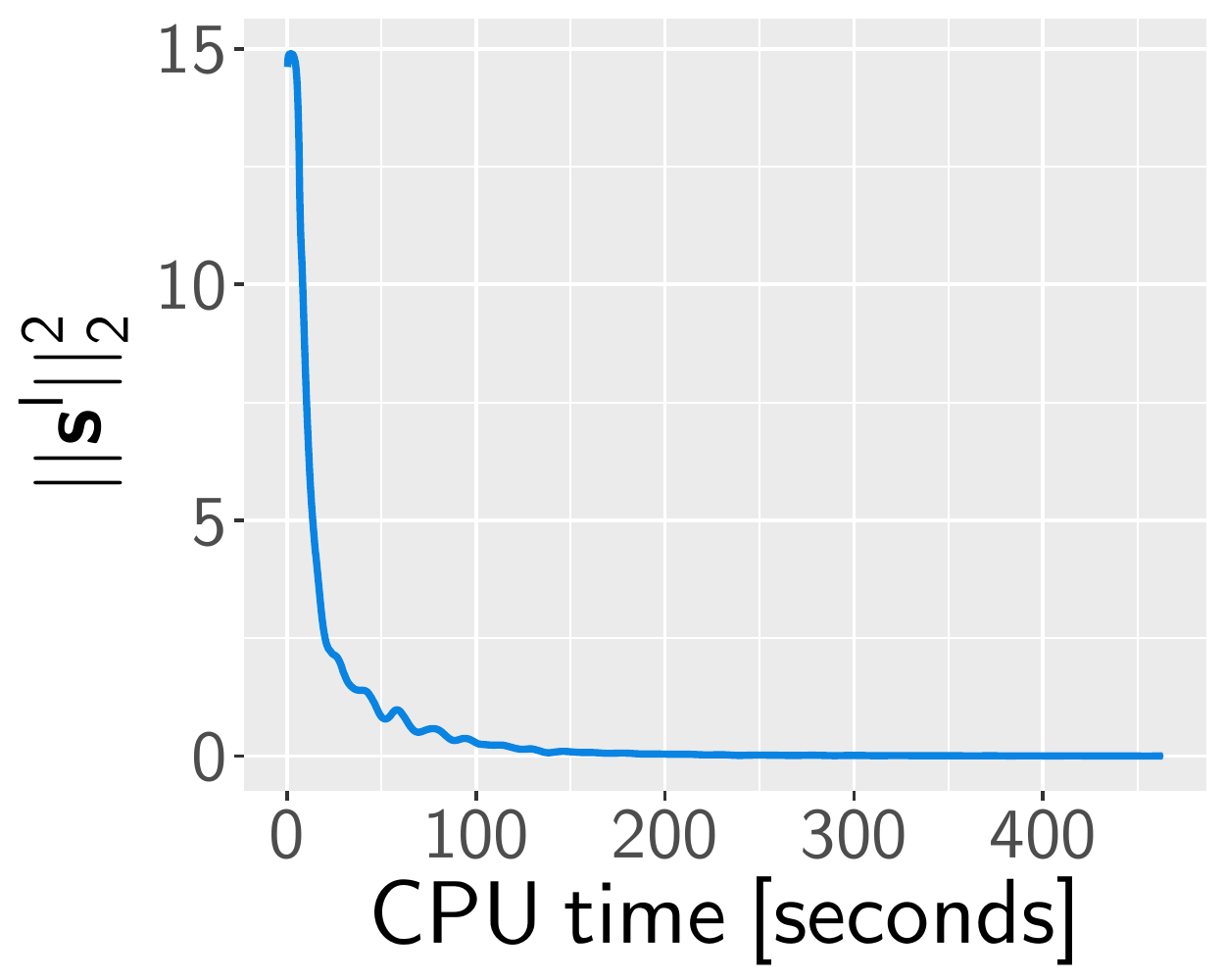}
  \end{subfigure}%
  \begin{subfigure}[t]{0.25\textwidth}
    \centering
    \includegraphics[scale=.3]{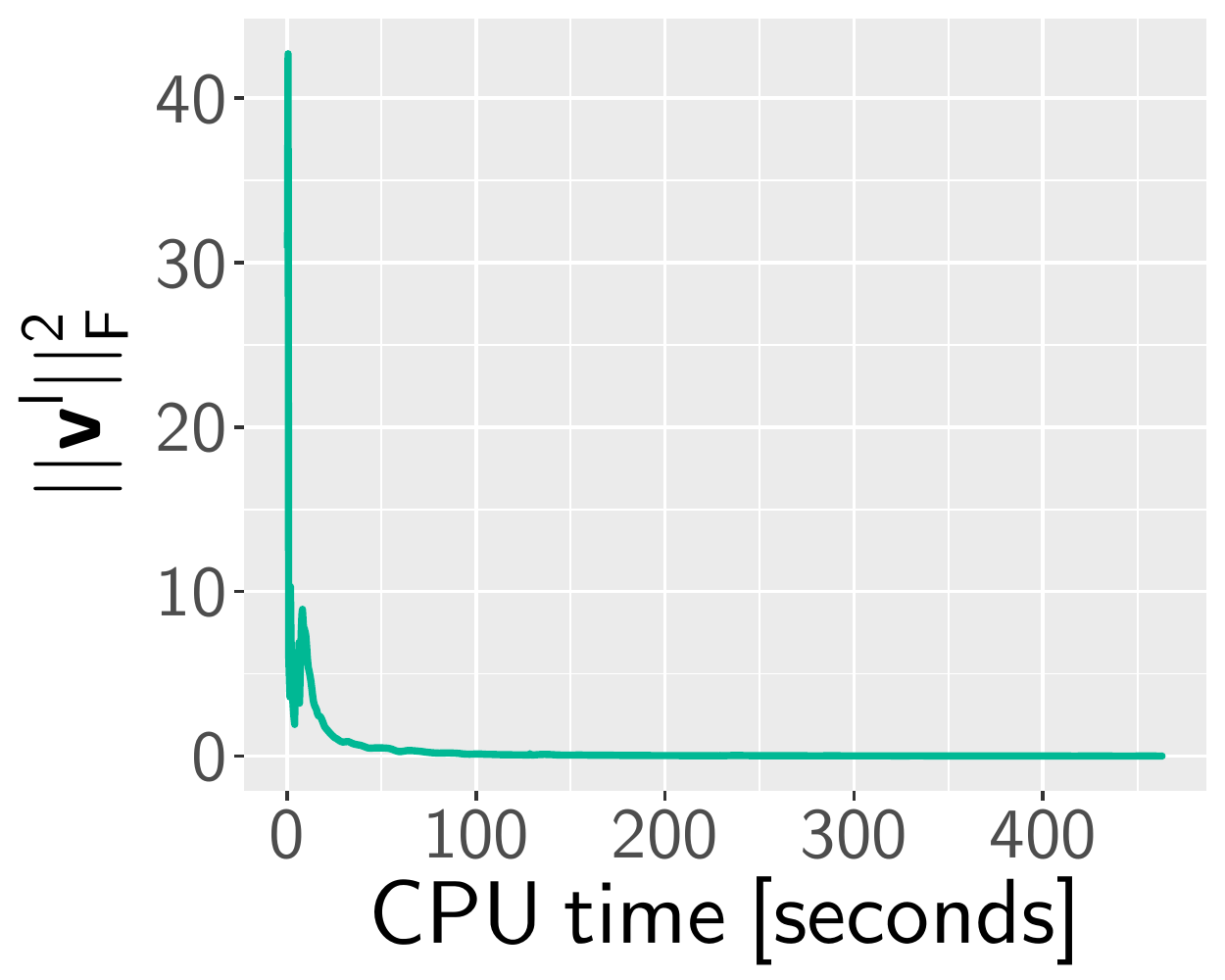}
\end{subfigure}%
  \caption{Empirical convergence for ``warm-up'' heavy-tail experiment data with Student-$t$ model.}
  \label{fig:convergence-warm-up-heavy-tail}
\end{figure}

\begin{figure}[!htb]
  \captionsetup[subfigure]{justification=centering}
  \centering
  \begin{subfigure}[t]{0.25\textwidth}
      \centering
      \includegraphics[scale=.3]{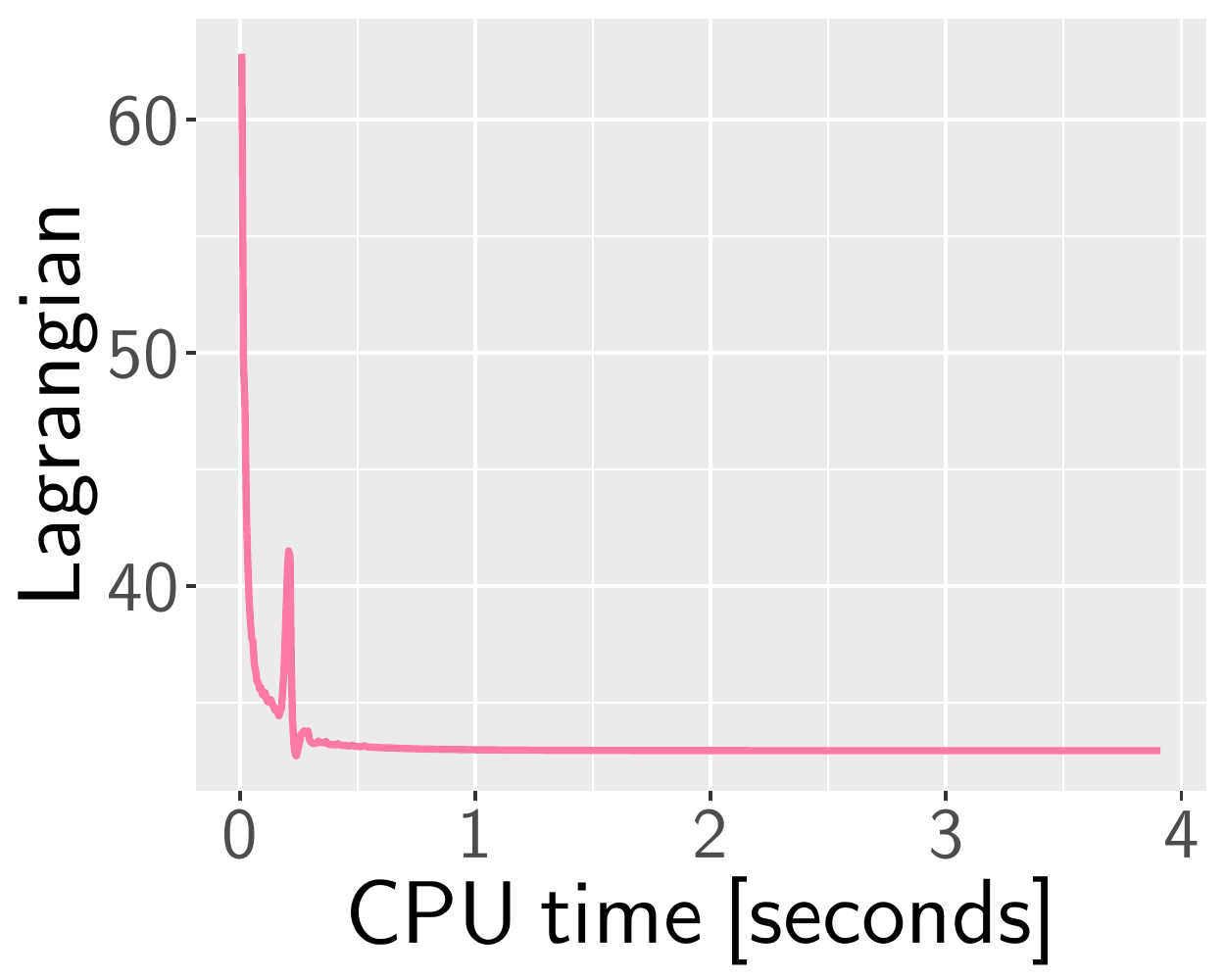}
  \end{subfigure}%
  \begin{subfigure}[t]{0.25\textwidth}
      \centering
      \includegraphics[scale=.3]{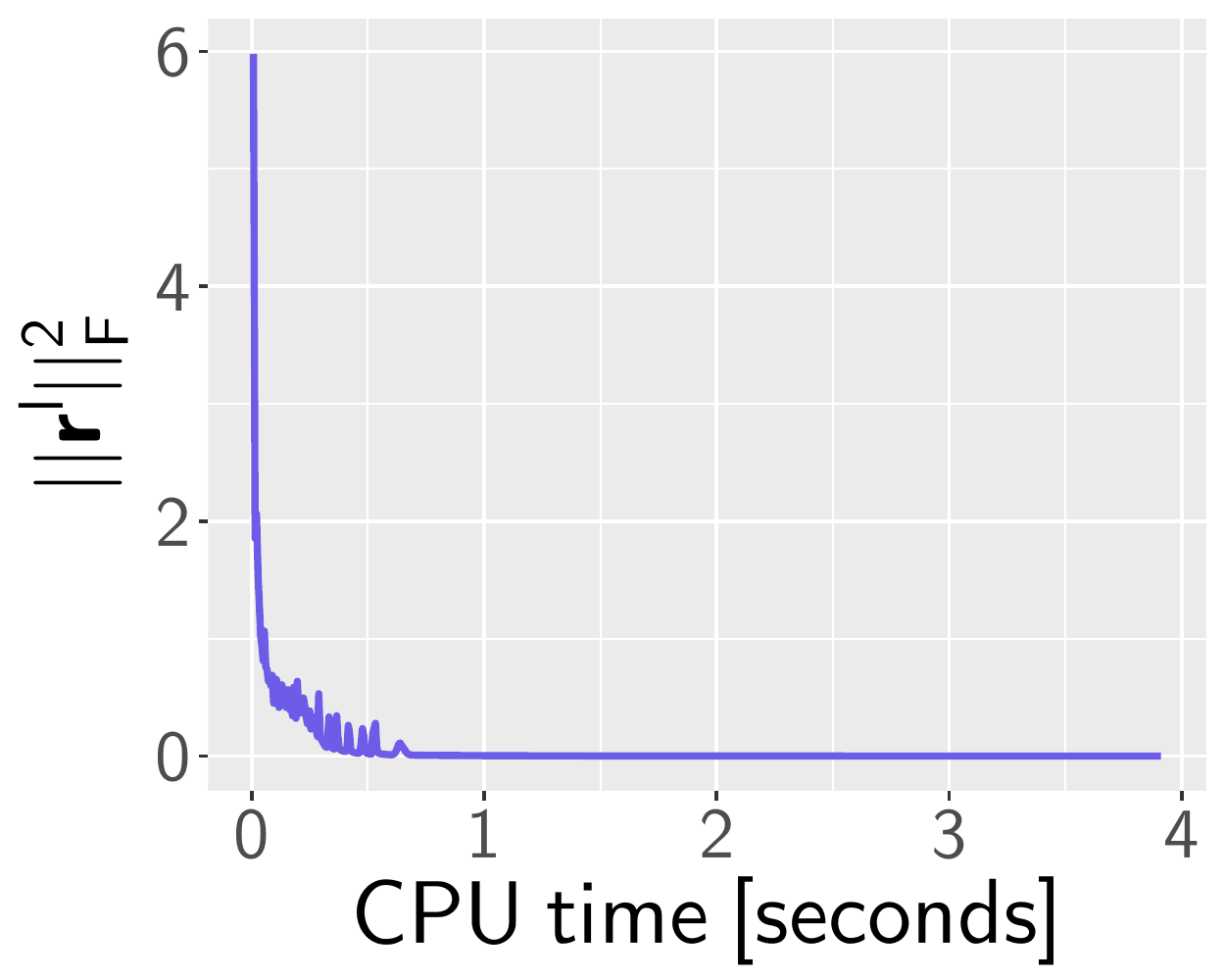}
  \end{subfigure}%
  \begin{subfigure}[t]{0.25\textwidth}
    \centering
    \includegraphics[scale=.3]{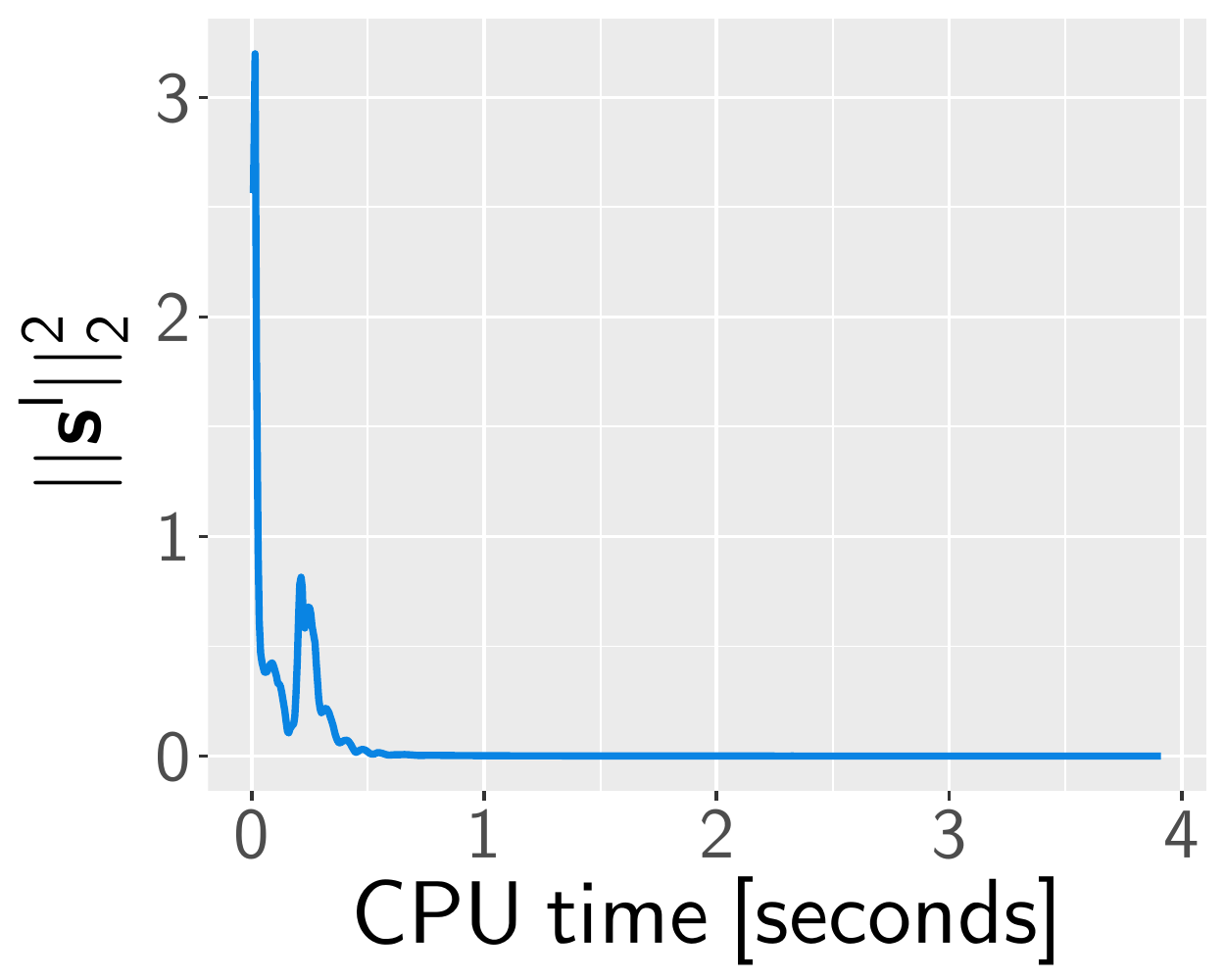}
  \end{subfigure}%
  \begin{subfigure}[t]{0.25\textwidth}
    \centering
    \includegraphics[scale=.3]{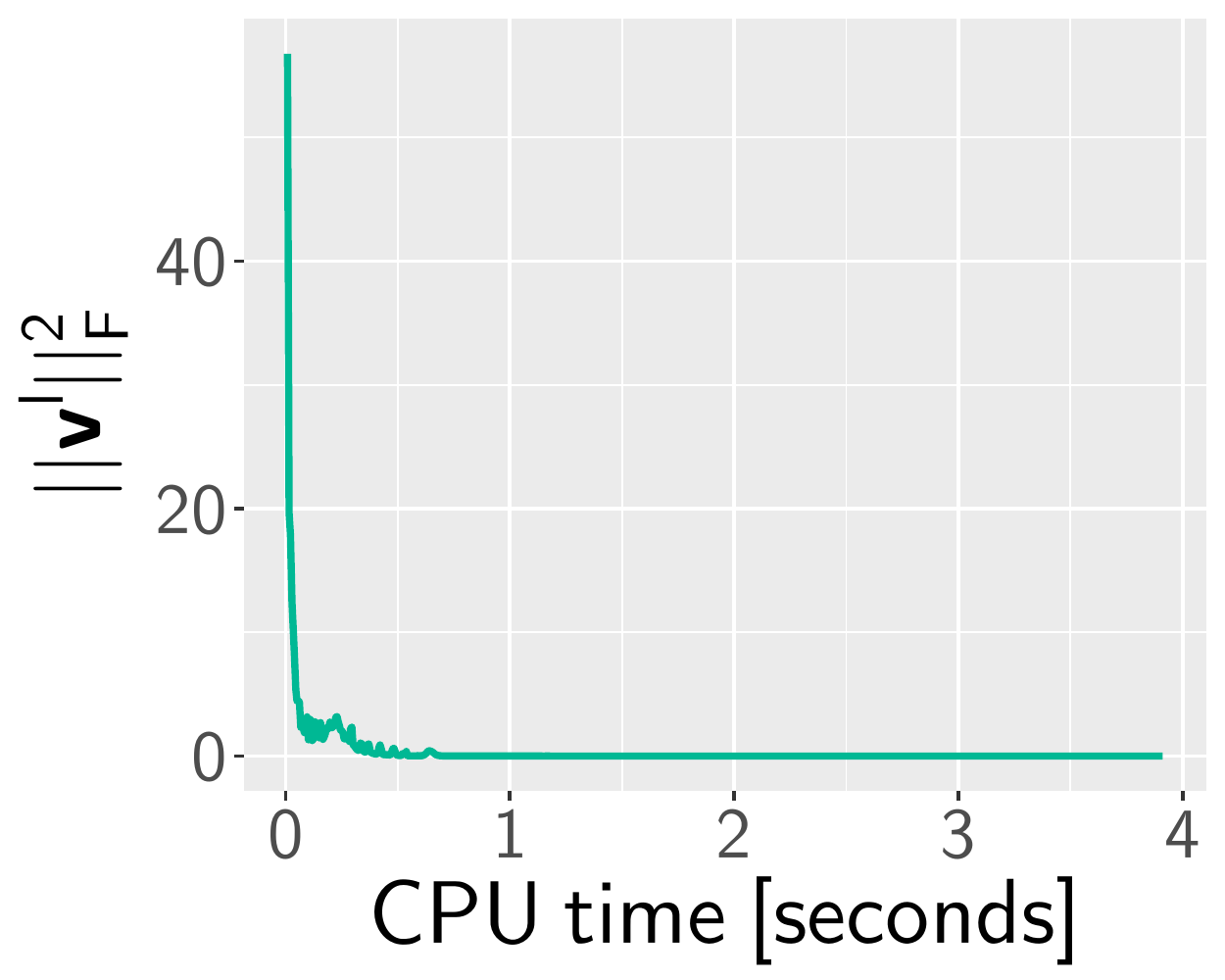}
\end{subfigure}%
  \caption{Empirical convergence for stock clustering with three sectors.}
  \label{fig:convergence-stock-clustering-three-sectors}
\end{figure}

\begin{figure}[!htb]
  \captionsetup[subfigure]{justification=centering}
  \centering
  \begin{subfigure}[t]{0.25\textwidth}
      \centering
      \includegraphics[scale=.3]{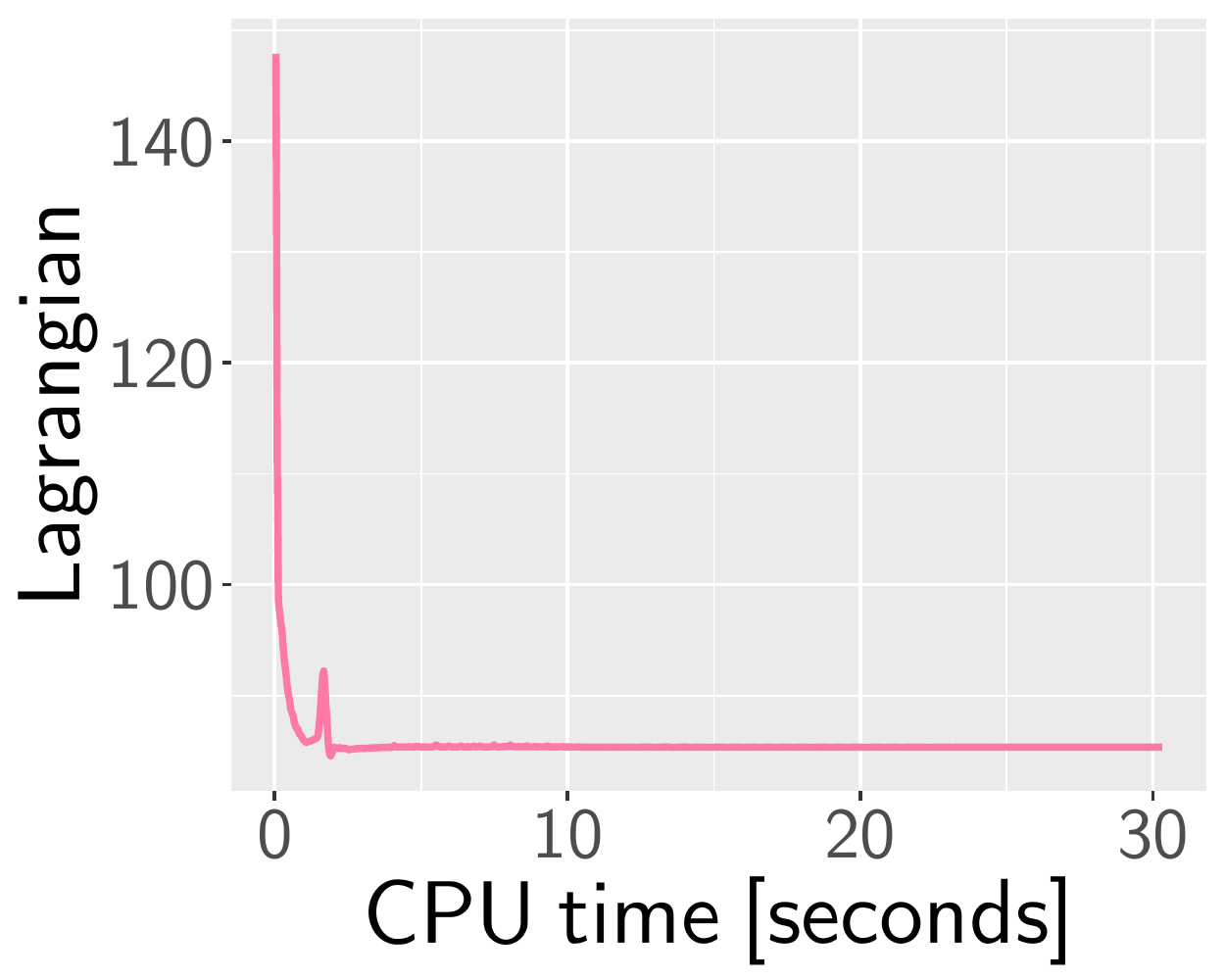}
  \end{subfigure}%
  \begin{subfigure}[t]{0.25\textwidth}
      \centering
      \includegraphics[scale=.3]{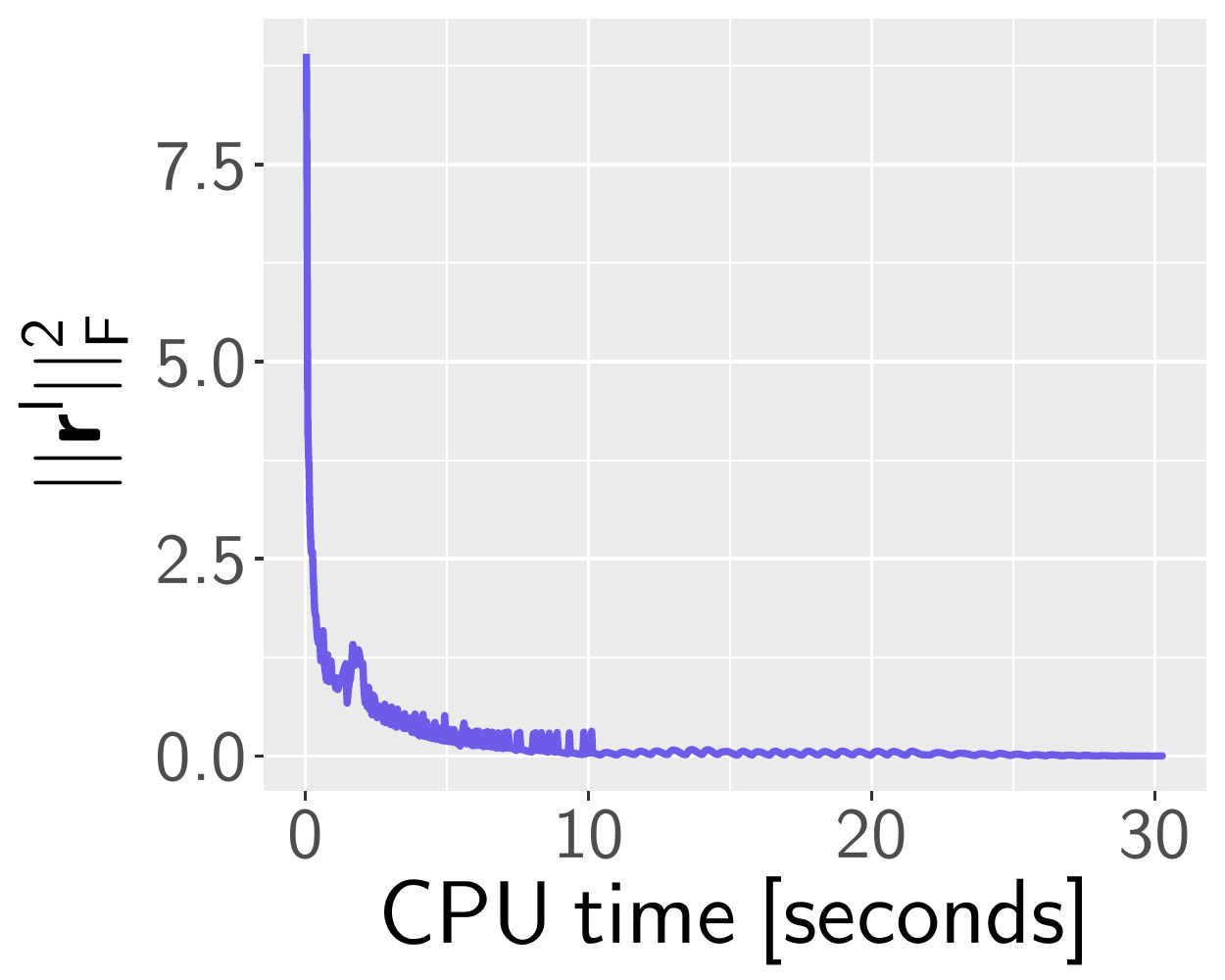}
  \end{subfigure}%
  \begin{subfigure}[t]{0.25\textwidth}
    \centering
    \includegraphics[scale=.3]{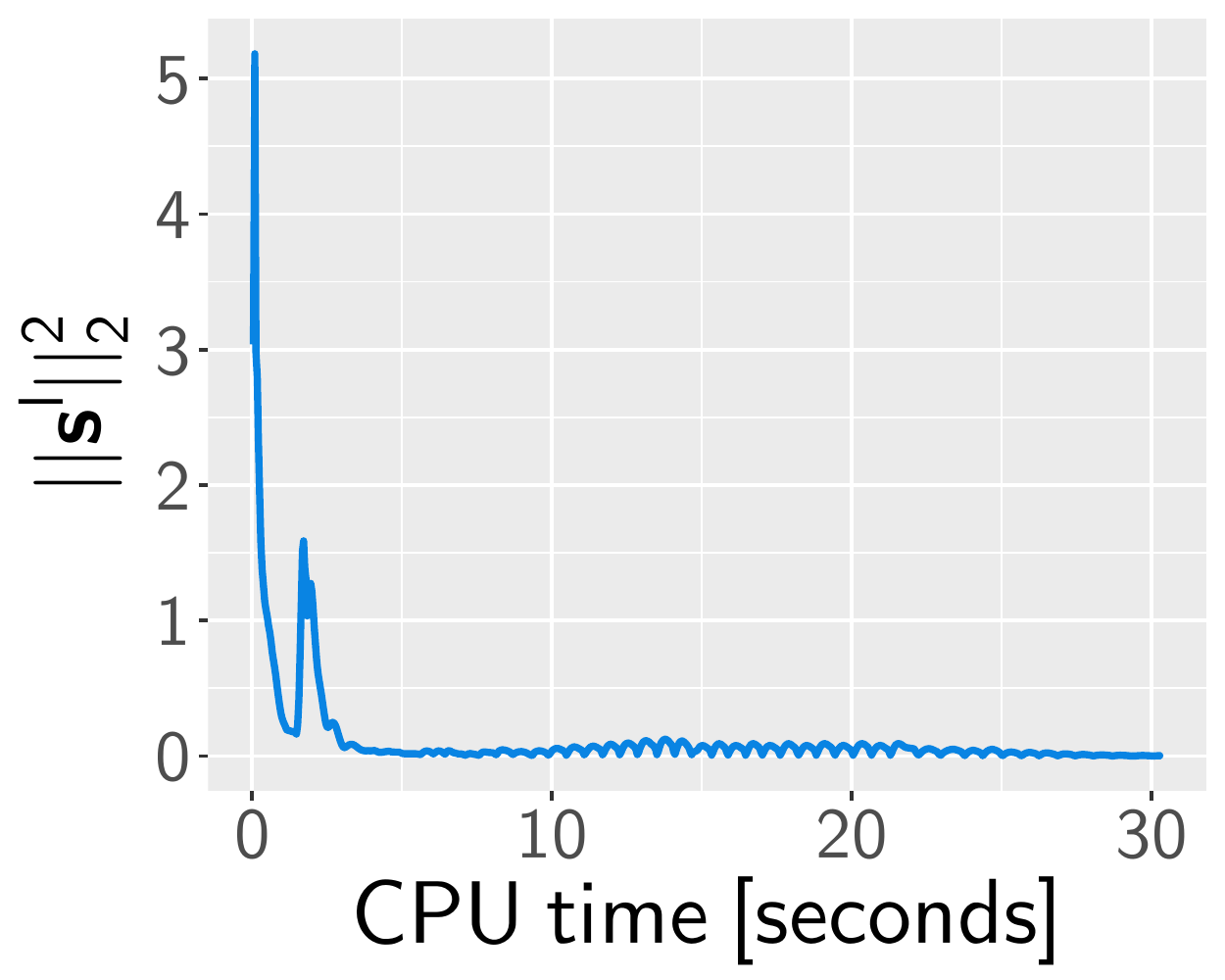}
  \end{subfigure}%
  \begin{subfigure}[t]{0.25\textwidth}
    \centering
    \includegraphics[scale=.3]{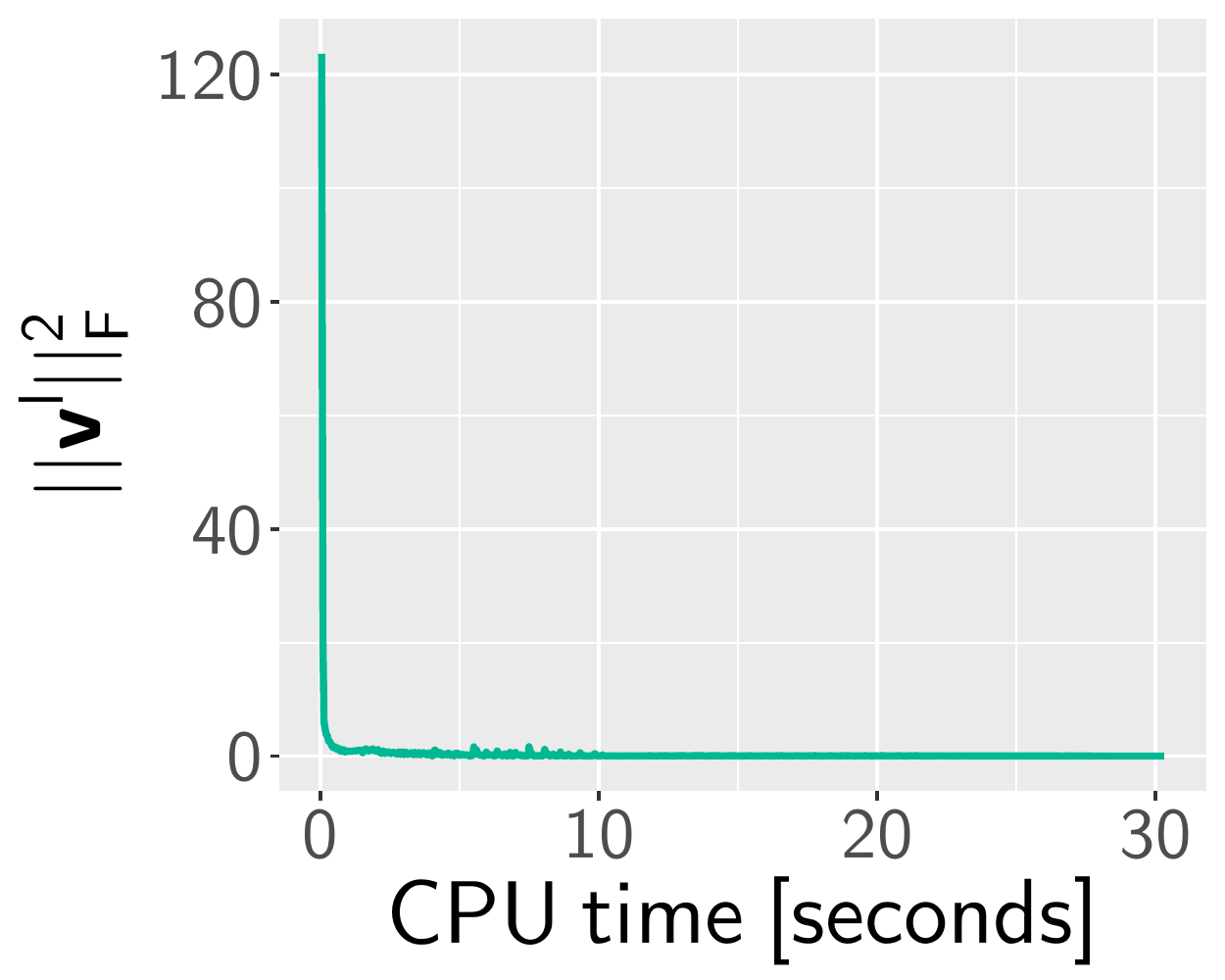}
\end{subfigure}%
  \caption{Empirical convergence for stock clustering with four sectors.}
  \label{fig:convergence-stock-clustering-four-sectors}
\end{figure}

\begin{figure}[!htb]
  \captionsetup[subfigure]{justification=centering}
  \centering
  \begin{subfigure}[t]{0.25\textwidth}
      \centering
      \includegraphics[scale=.3]{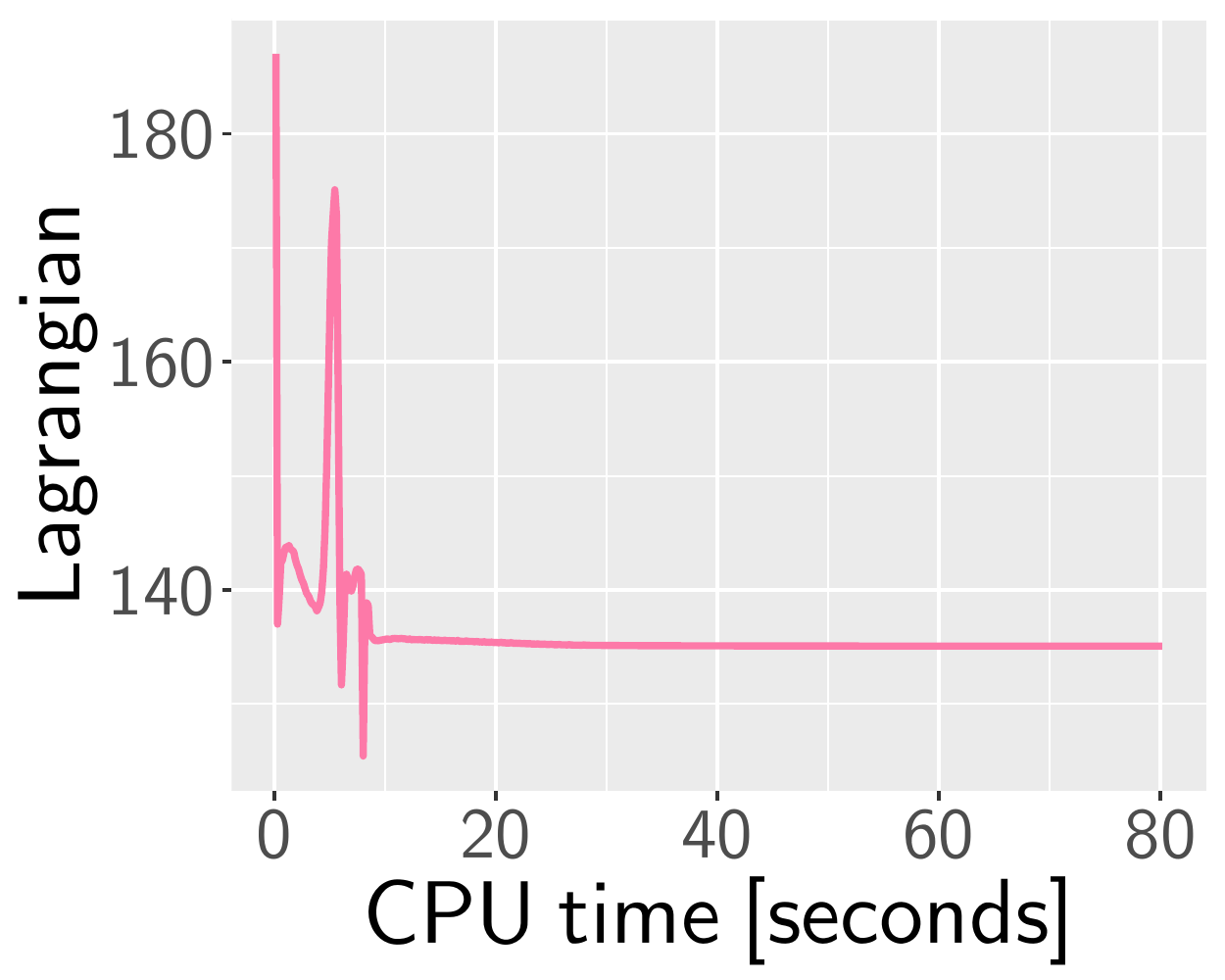}
  \end{subfigure}%
  \begin{subfigure}[t]{0.25\textwidth}
      \centering
      \includegraphics[scale=.3]{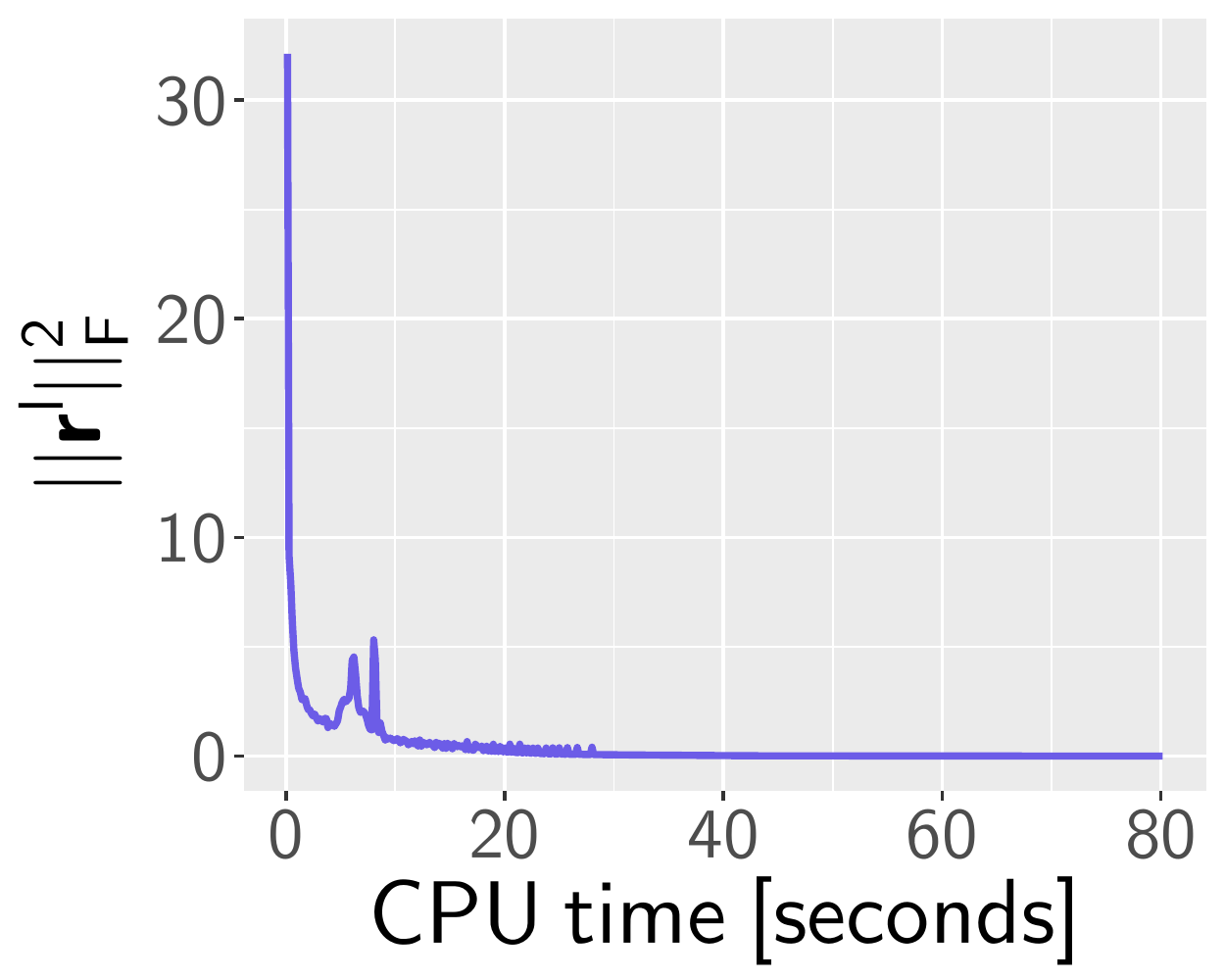}
  \end{subfigure}%
  \begin{subfigure}[t]{0.25\textwidth}
    \centering
    \includegraphics[scale=.3]{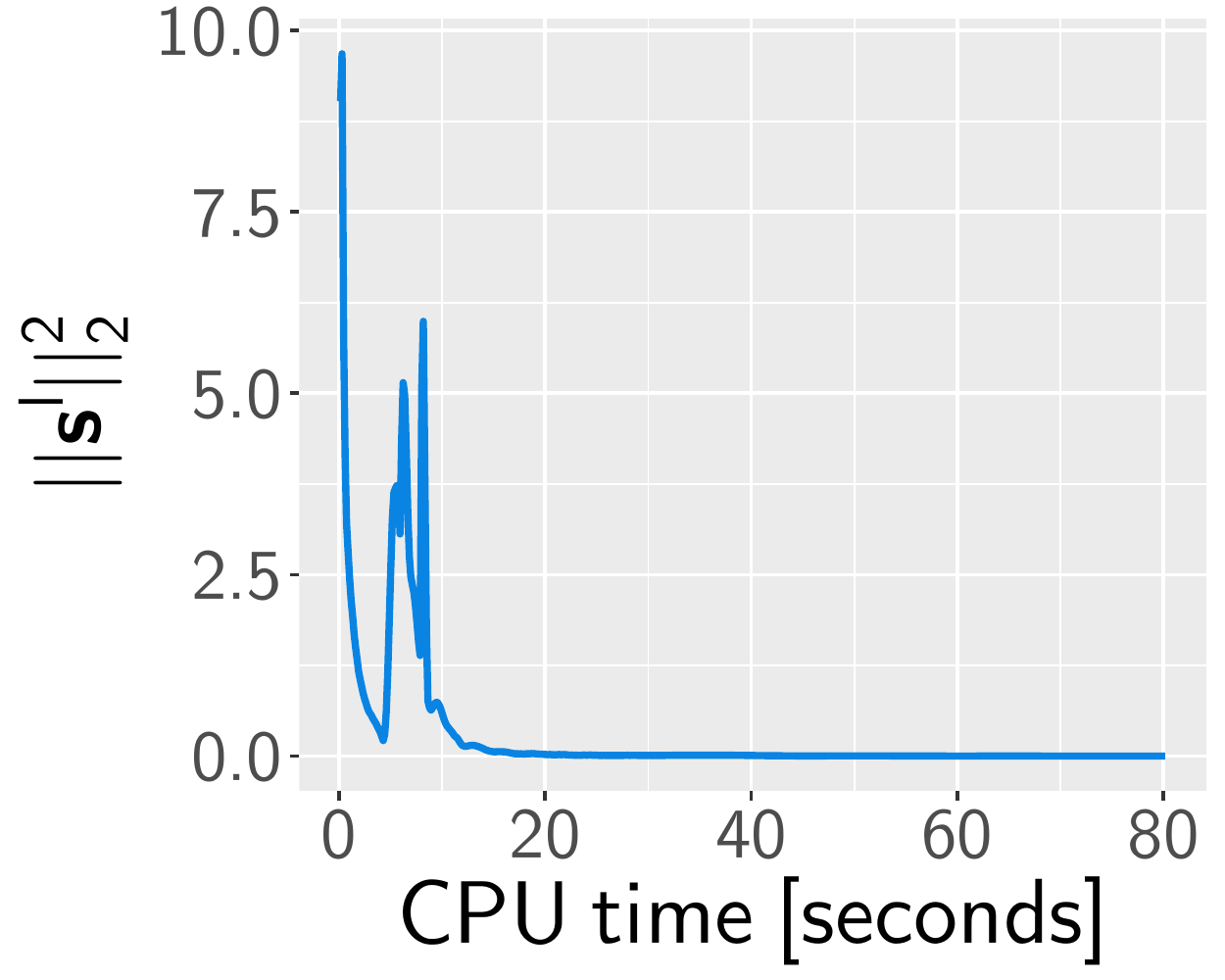}
  \end{subfigure}%
  \begin{subfigure}[t]{0.25\textwidth}
    \centering
    \includegraphics[scale=.3]{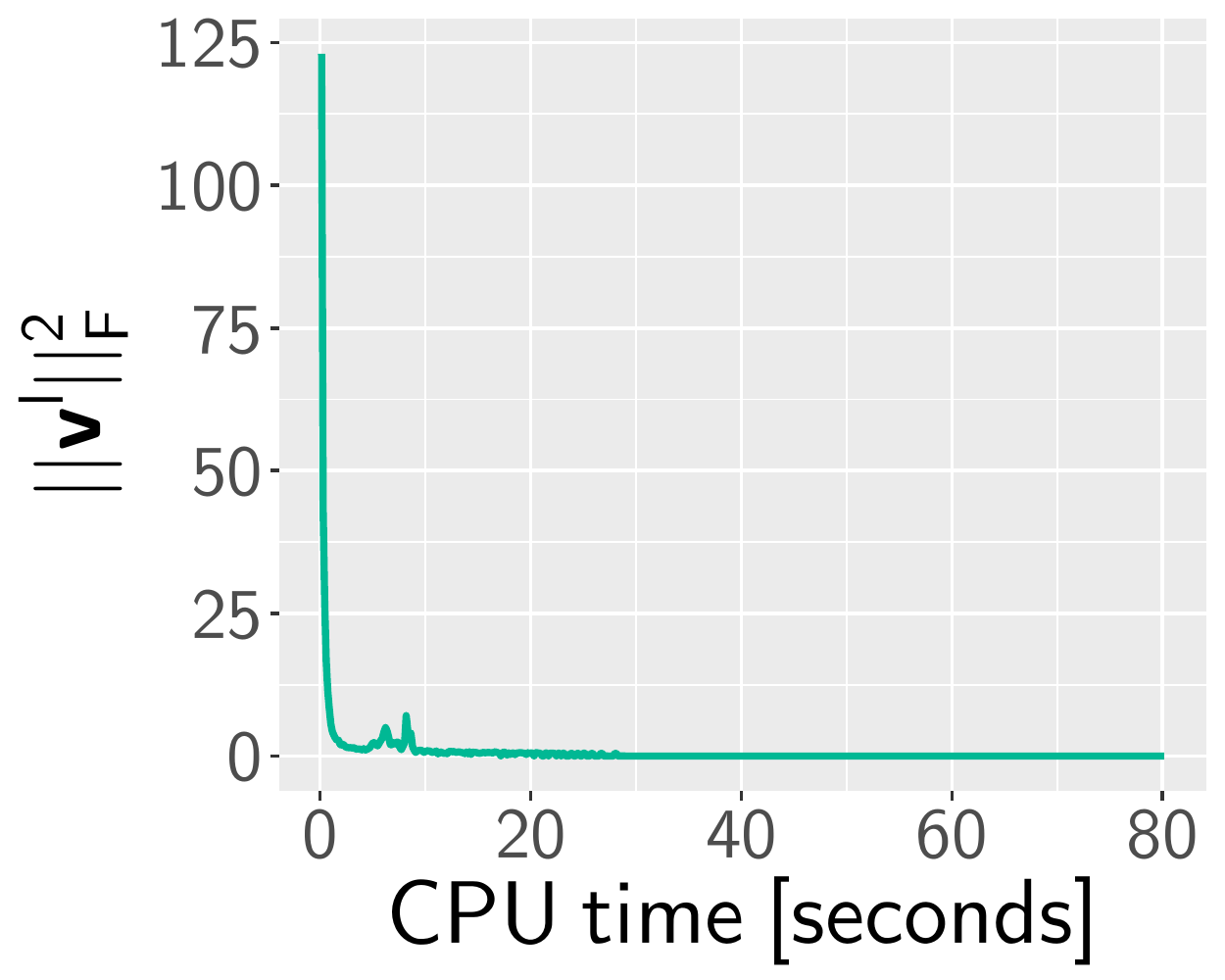}
\end{subfigure}%
  \caption{Empirical convergence for stock clustering with six sectors.}
  \label{fig:convergence-stock-clustering-six-sectors}
\end{figure}

\begin{figure}[!htb]
  \captionsetup[subfigure]{justification=centering}
  \centering
  \begin{subfigure}[t]{0.25\textwidth}
      \centering
      \includegraphics[scale=.3]{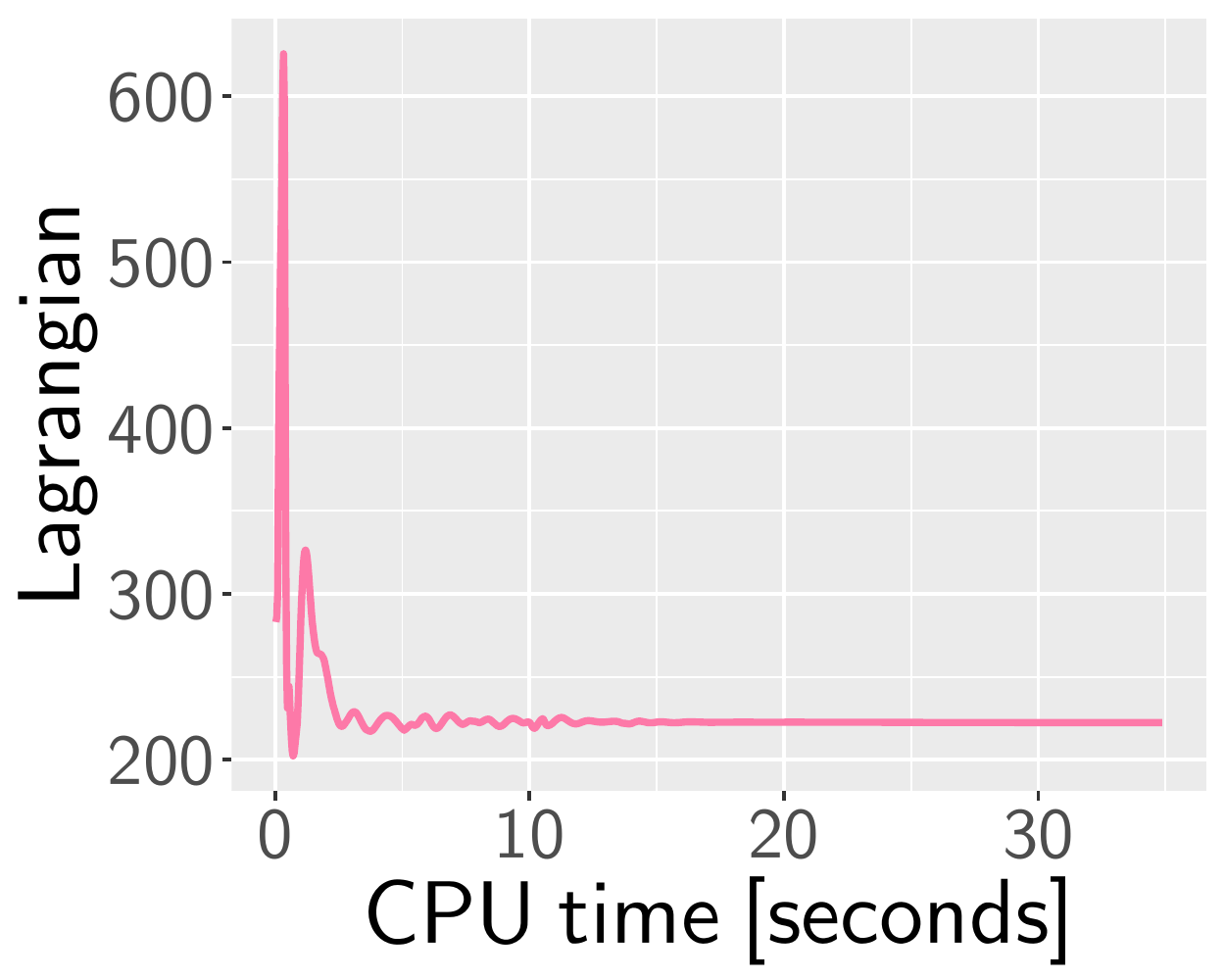}
  \end{subfigure}%
  \begin{subfigure}[t]{0.25\textwidth}
      \centering
      \includegraphics[scale=.3]{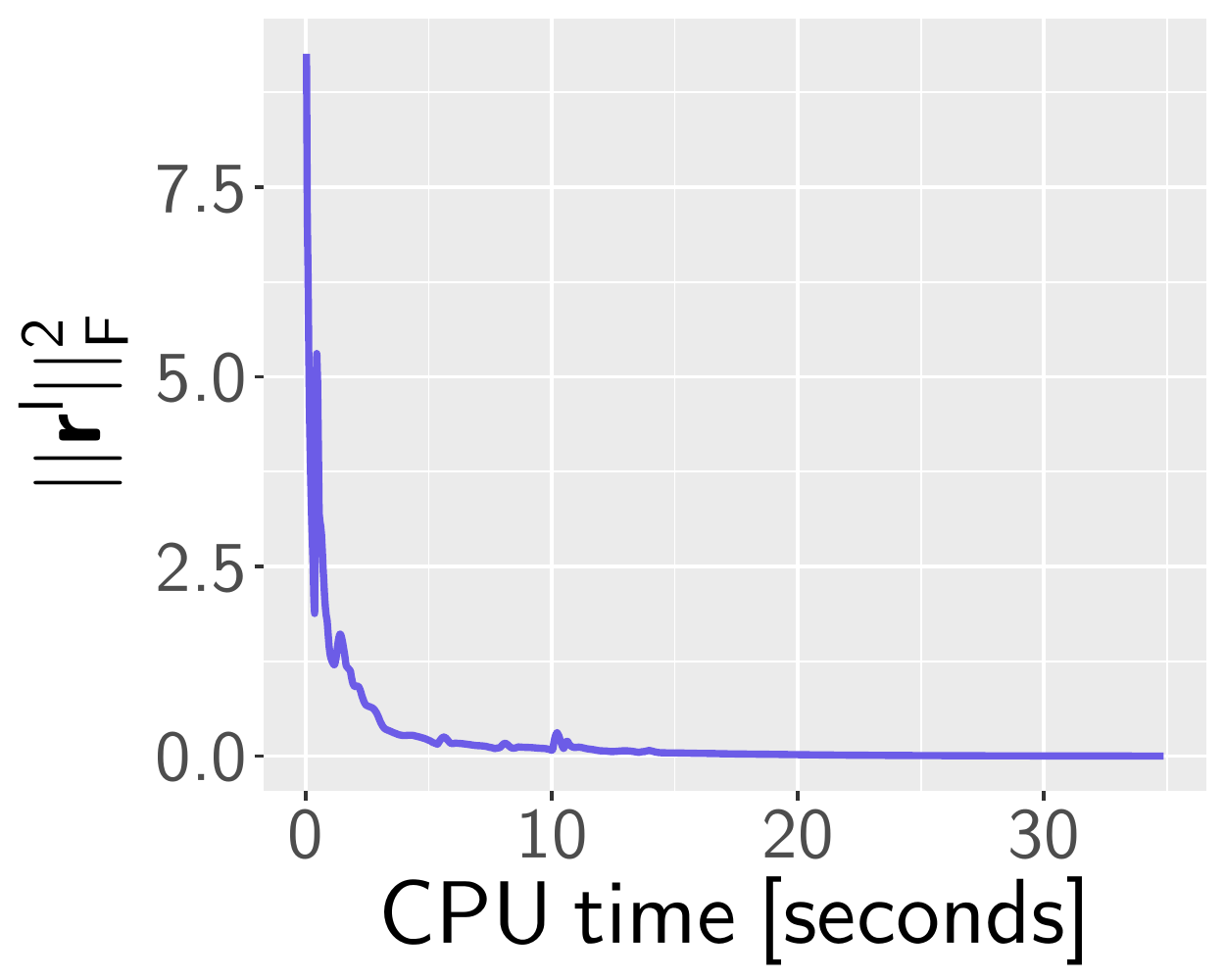}
  \end{subfigure}%
  \begin{subfigure}[t]{0.25\textwidth}
    \centering
    \includegraphics[scale=.3]{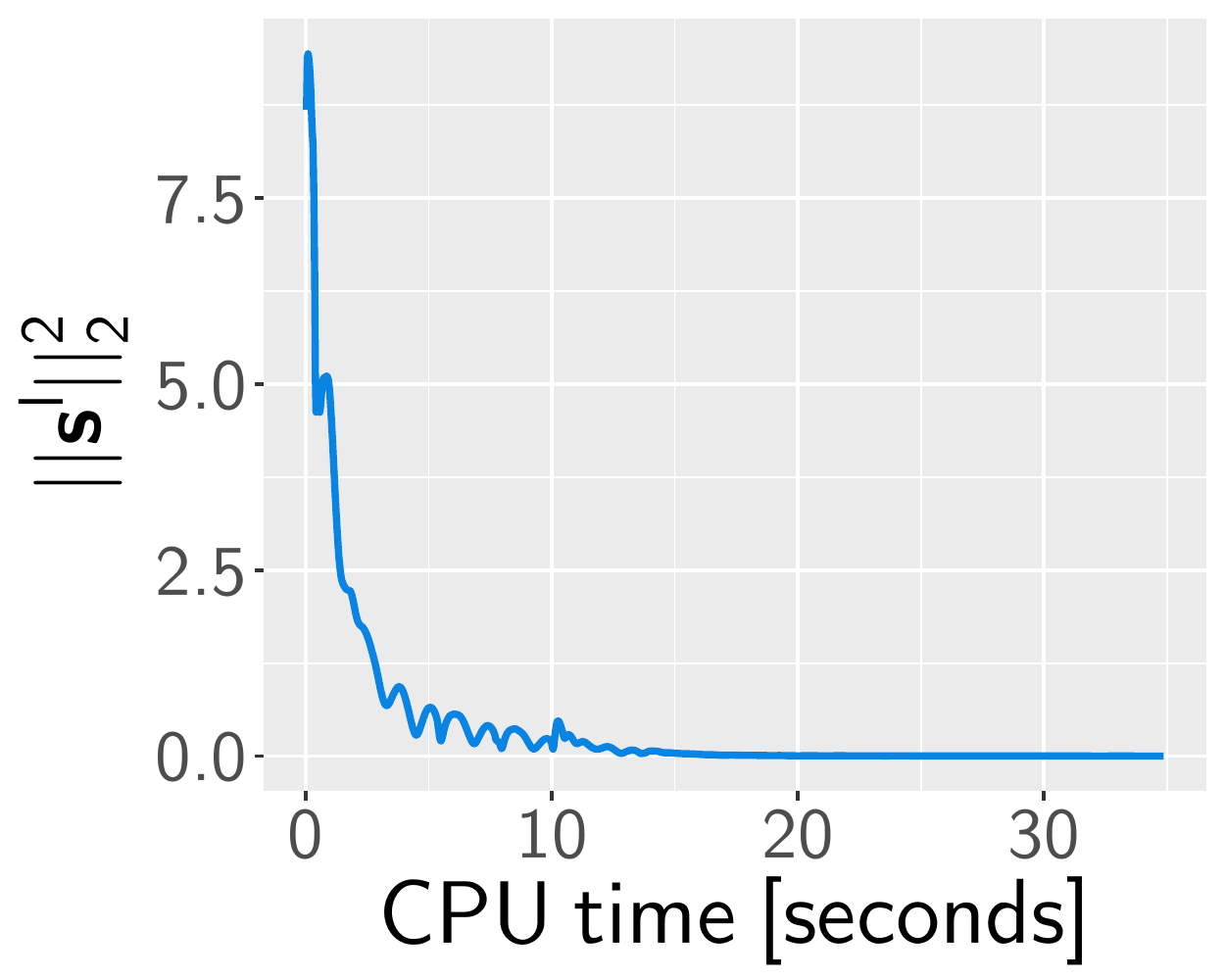}
  \end{subfigure}%
  \begin{subfigure}[t]{0.25\textwidth}
    \centering
    \includegraphics[scale=.3]{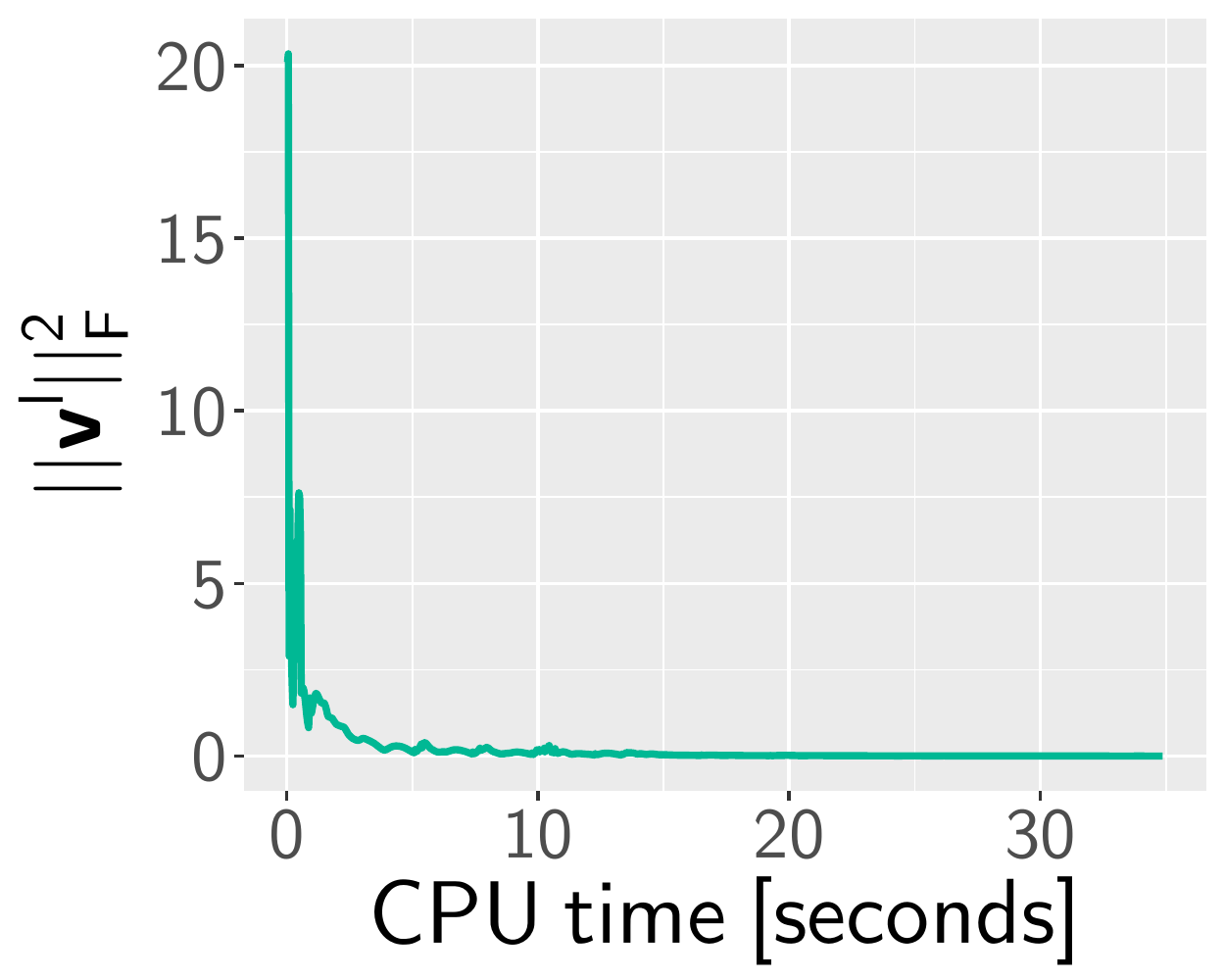}
\end{subfigure}%
  \caption{Empirical convergence for COVID-19 data experiment.}
  \label{fig:convergence-covid}
\end{figure}

\begin{figure}[!htb]
  \captionsetup[subfigure]{justification=centering}
  \centering
  \begin{subfigure}[t]{0.25\textwidth}
      \centering
      \includegraphics[scale=.3]{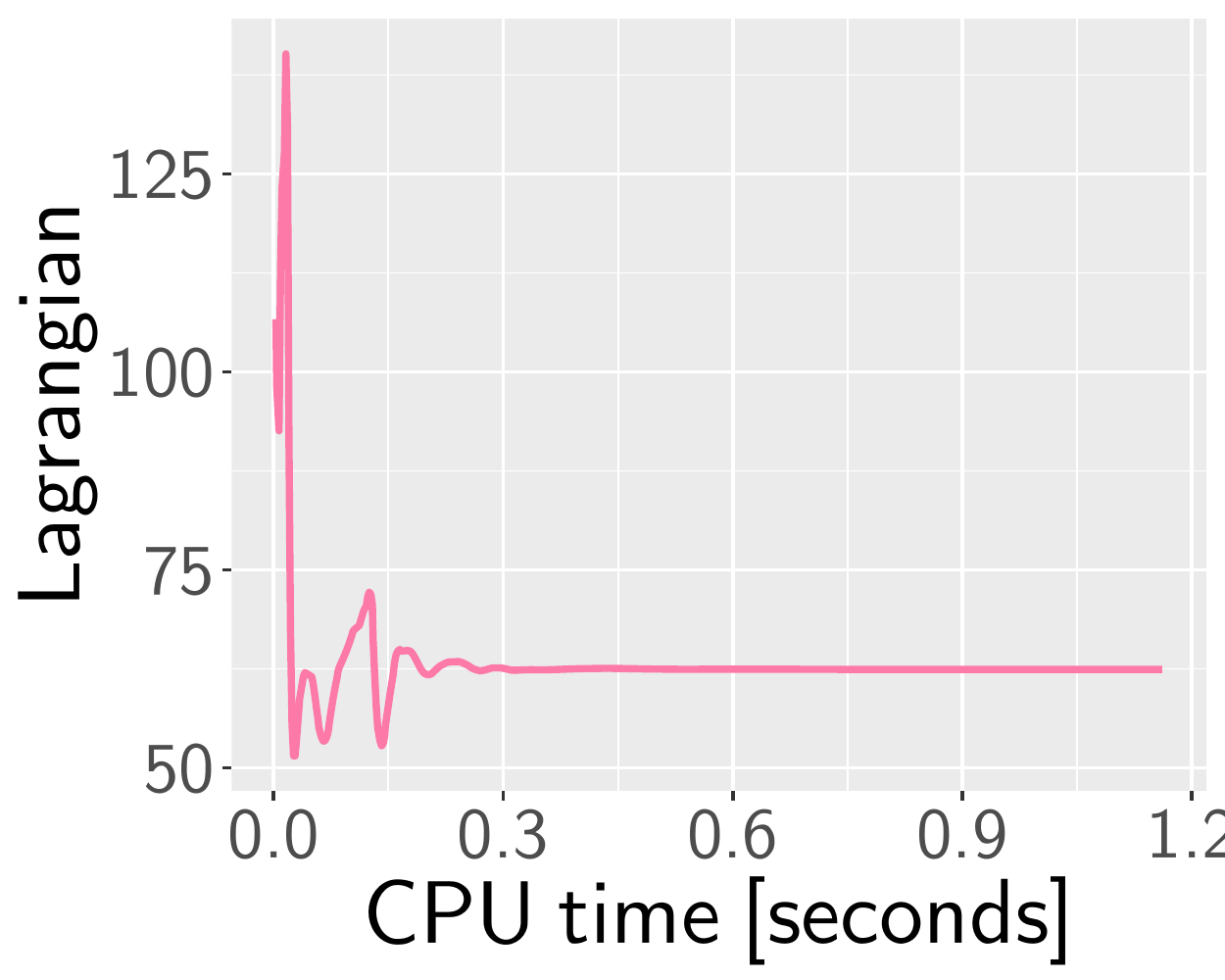} 
  \end{subfigure}%
  \begin{subfigure}[t]{0.25\textwidth}
      \centering
      \includegraphics[scale=.3]{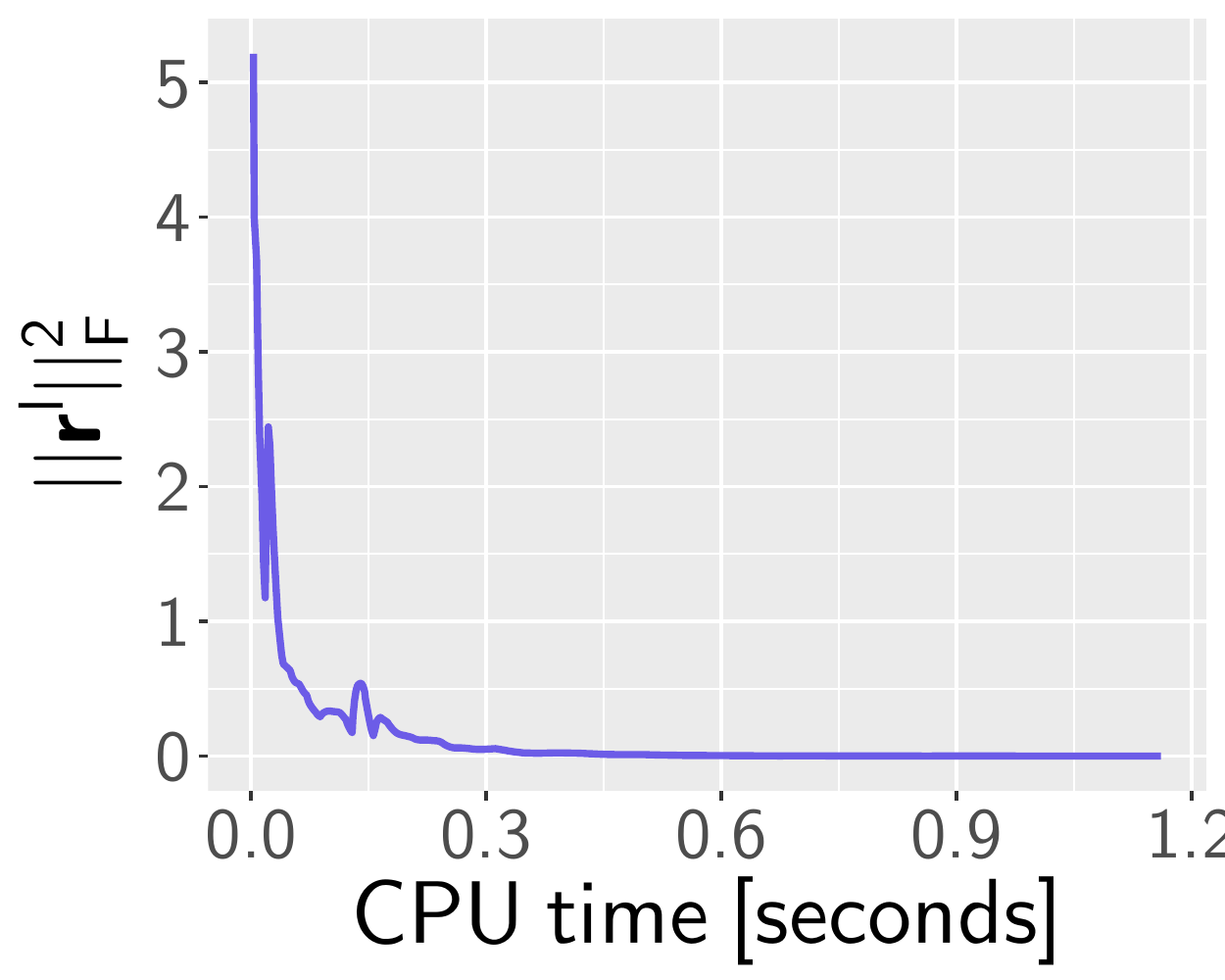}
  \end{subfigure}%
  \begin{subfigure}[t]{0.25\textwidth}
    \centering
    \includegraphics[scale=.3]{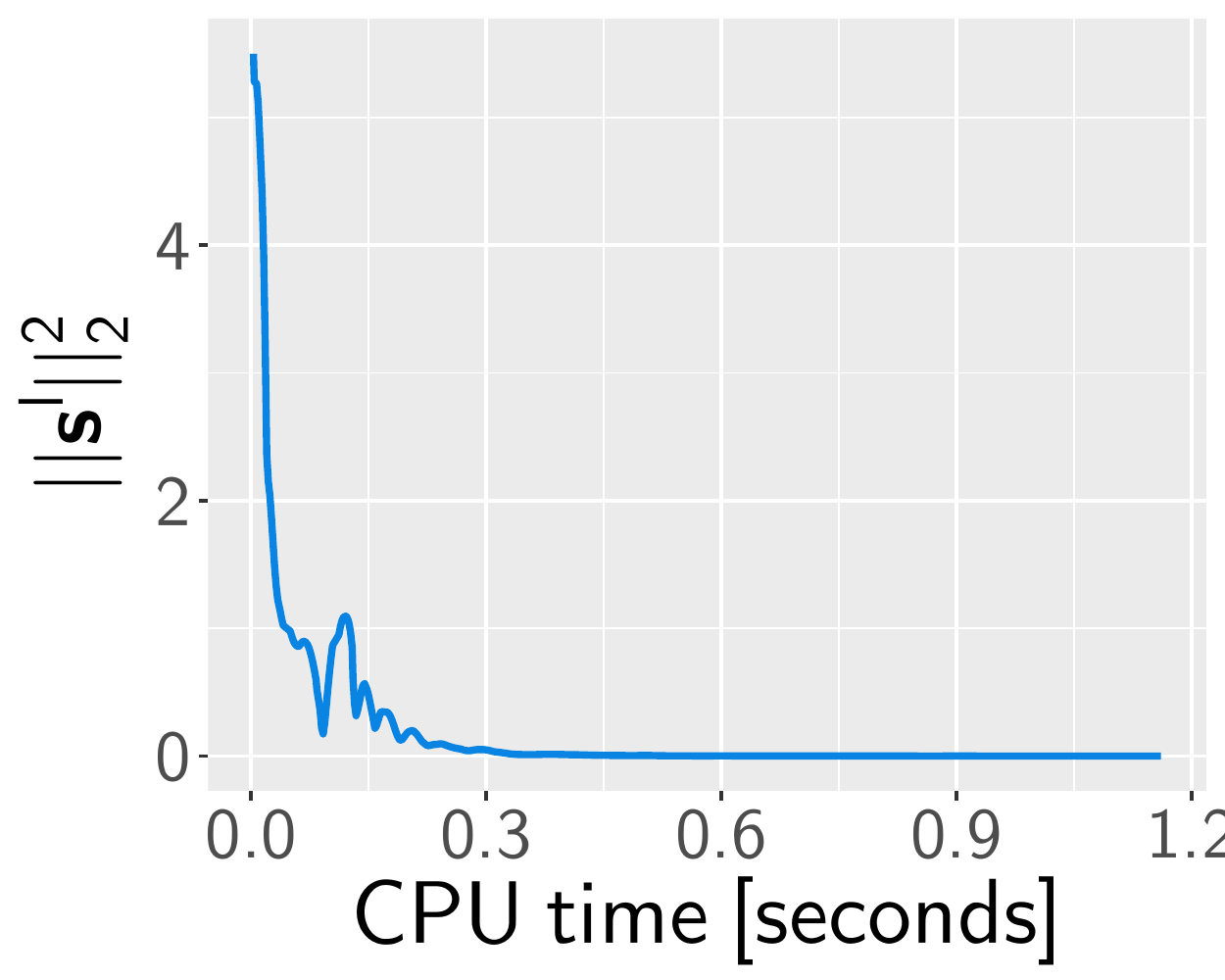}
  \end{subfigure}%
  \begin{subfigure}[t]{0.25\textwidth}
    \centering
    \includegraphics[scale=.3]{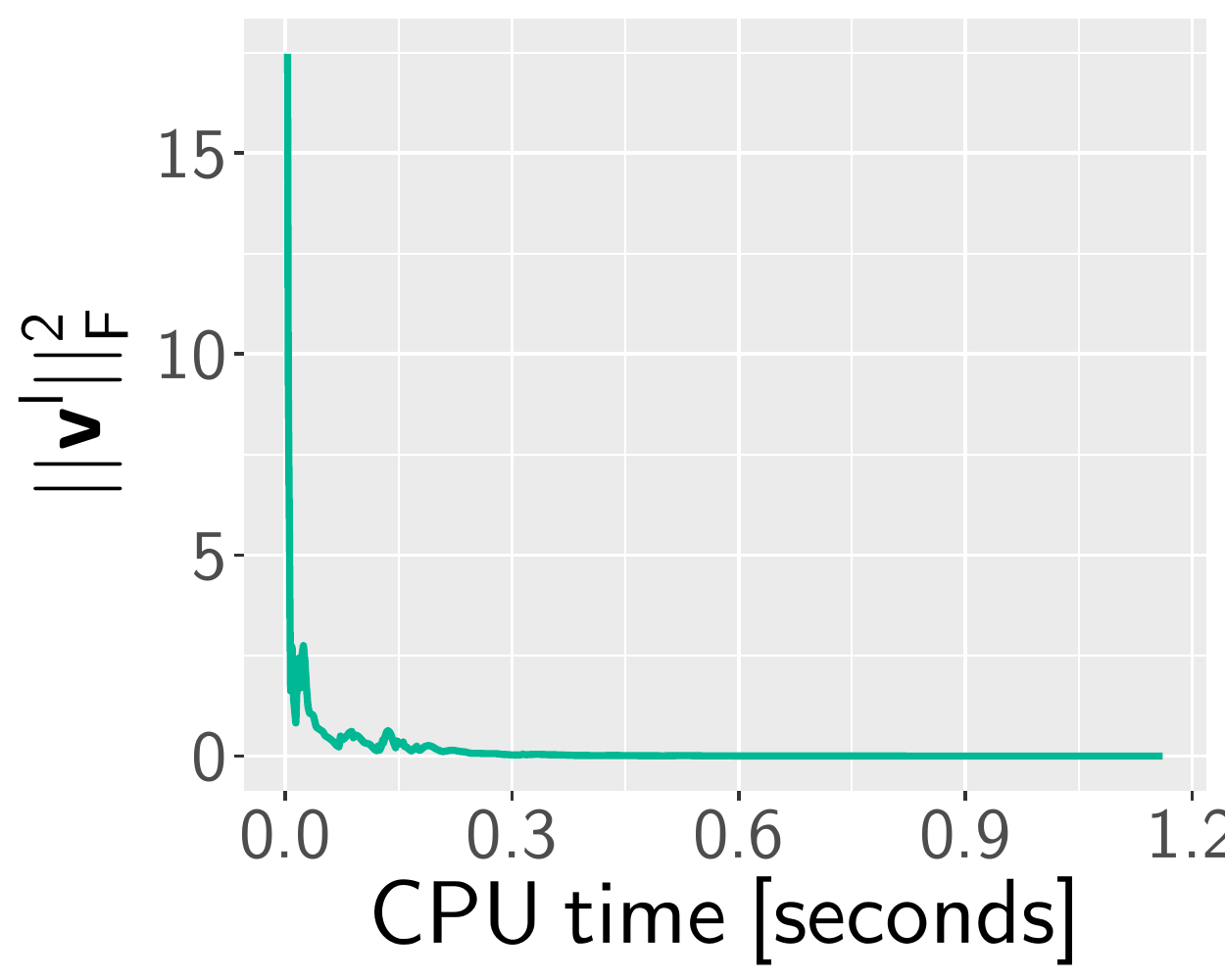}
\end{subfigure}%
  \caption{Empirical convergence for FX data experiment.}
  \label{fig:convergence-forex}
\end{figure}

\newpage
\section{Definitions}

\begin{definition}{\textbf{(Laplacian operator)}}
  The Laplacian operator~\citep{kumar20192} $\mathcal{L}: \mathbb{R}_{+}^{p(p-1)/2} \rightarrow \mathbb{R}^{p\times p}$,
  which takes a nonnegative vector $\bm{w}$ and outputs a Laplacian matrix $\sL\ww$, is defined as
  \begin{equation}
    [\sL\bm{w}]_{ij} =
    \begin{cases}
      - w_{i+s(j)}, & \textrm{if}\quad i > j,\\
      [\sL\bm{w}]_{ji}, & \textrm{if}\quad i < j,\\
      -\sum_{i\neq j} [\sL\bm{w}]_{ij}, & \textrm{if}\quad i = j,
    \end{cases}
    \label{eq:lap-op}
  \end{equation}
  where $s(j) = \frac{j-1}{2}(2p - j) - j$.
\end{definition}

\begin{definition}{\textbf{(Adjacency operator)}}
  The adjacency operator~\citep{kumar2019} $\mathcal{A}: \mathbb{R}_{+}^{p(p-1)/2} \rightarrow \mathbb{R}^{p}$,
  which takes a nonnegative vector $\bm{w}$ and outputs an Adjacency matrix $\mathcal{A}\ww$, is defined as
  \begin{equation}
    [\sA\bm{w}]_{ij} =
    \begin{cases}
      w_{i+s(j)}, & \textrm{if}\quad i > j,\\
      [\sA\ww]_{ji}, & \textrm{if}\quad i < j,\\
      0, & \textrm{if}\quad i = j,
    \end{cases}
    \label{eq:adj-op}
  \end{equation}
  where $s(j) = \frac{j-1}{2}(2p - j) - j$.
\end{definition}

\begin{definition}{\textbf{(Degree operator)}}
  The degree operator $\mathfrak{d}: \mathbb{R}^{p(p-1)/2} \rightarrow \mathbb{R}^{p}$, which takes a nonnegative
  vector $\bm{w}$ and outputs the diagonal of a Degree matrix, is defined as
  \begin{equation}
    \mathfrak{d}\bm{w} = \left(\mathcal{A}\bm{w}\right)\mathbf{1}.
    \label{eq:deg-op}
  \end{equation}
\end{definition}

\begin{definition}{\textbf{(Adjoint of Laplacian operator)}}
  The adjoint of Laplacian operator~\citep{kumar20192} $\sL^*: \mathbb{R}^{p\times p} \rightarrow \mathbb{R}^{p(p-1)/2}$
  is defined as
  \begin{equation}
    \left(\sL^*\bm{P}\right)_{s(i,j)} = \bm{P}_{i,i} - \bm{P}_{i,j} - \bm{P}_{j,i} + \bm{P}_{j,j}
  \end{equation}
  where $s(i,j) = i - j + \frac{j-1}{2}(2p - j), i > j.$
\end{definition}

\begin{definition}{\textbf{(Adjoint of adjacency operator)}}
  The adjoint of adjacency operator~\citep{kumar20192} $\sA^*: \mathbb{R}^{p\times p} \rightarrow \mathbb{R}^{p(p-1)/2}$
  is defined as
  \begin{equation}
    \left(\sA^*\bm{P}\right)_{s(i,j)} = \bm{P}_{i,i} + \bm{P}_{j,j}
  \end{equation}
  where $s(i,j) = i - j + \frac{j-1}{2}(2p - j), i > j.$
\end{definition}

\begin{definition}{\textbf{(Adjoint of degree operator)}}
  The adjoint of degree operator
  $\mathfrak{d}^*: \mathbb{R}^{p} \rightarrow \mathbb{R}^{p(p-1)/2}$ is given as
  \begin{equation}
  \left(\mathfrak{d}^{*}\bm{y}\right)_{s(i,j)} = \bm{y}_i + \bm{y}_j,
  \end{equation}
where $s(i,j) = i - j + \frac{j - 1}{2}(2p - j), i > j$.
\end{definition}

An alternative expression for $\mathfrak{d}^*$ is
$\mathfrak{d}^*\bm{y} = \mathcal{L}^*\Diag\left(\bm{y}\right)$,
where $\mathcal{L}^*$ is the adjoint of the Laplacian operator.

\begin{definition}{\textbf{(Modularity)}}
  The modularity of a graph~\citep{newman2006} is defined as $Q : \mathbb{R}^{p\times p} \rightarrow [-1/2, 1]$:
  \begin{equation}
    Q(\bm{W}) \triangleq \dfrac{1}{p(p-1)} \sum_{i,j}\left(\bm{W}_{ij} - \dfrac{d_id_j}{p(p - 1)}\right)\mathbb{1}(t_i = t_j),
    \label{eq:modularity}
  \end{equation}
\end{definition}
where $d_i$ is the weighted degree of the $i$-th node, \textit{i.e.} $d_i \triangleq \left[\mathfrak{d}\left(\ww\right)\right]_i$,
$t_i \triangleq f_t(i), i \in \mathcal{V},$ is the type of the $i$-th node, and $\mathbb{1}(\cdot)$
is the indicator function.

\begin{definition}{\textbf{(Proximal Operator)}}
  The proximal operator of a function $f$,
  $f : \mathbb{R}^{p\times p} \rightarrow \mathbb{R}$, with parameter $\rho$, $\rho \in \mathbb{R}_{++}$, is defined as~\citep{parikh2014}
  \begin{equation}
    \mathsf{prox}_{\rho^{-1} f}\left(\bm{V}\right) \triangleq \underset{\bm{U} \in \mathbb{R}^{p\times p}}{\mathsf{arg~min}}~~ f\left(\bm{U}\right) +
    \dfrac{\rho}{2}\norm{\bm{U} - \bm{V}}^{2}_{\mathrm{F}}.
    \label{eq:prox-def}
  \end{equation}
\end{definition}

\section{Proofs}

\subsection{Proof of Lemma \ref{lemma:lipschitz}}\label{sec:proof-max-eigenval}
\begin{proof}
We define an index set $\Omega_t$:
\begin{align}\label{omega}
\Omega_t := \left \{ l \ | \left[\mathcal{L} \bm w \right]_{tt} = \sum_{l \in \Omega_t} x_l \right \}, \quad t \in [1, p].
\end{align} 
Then we have
\begin{equation*}
\begin{split}
\lambda_{\max} \left( \mathcal{L}^\ast \mathcal{L}\right) &= \sup_{\norm{\bm x}=1} \bm x^\top \mathcal{L}^\ast \mathcal{L} \bm x =\sup_{\norm{\bm x}=1} \norm{\mathcal{L} \bm x}_F^2 = \sup_{\norm{\bm x}=1}2\sum_{k=1}^{p(p-1)/2}x_k^2 + \sum_{i=1}^{p}([\mathcal{L} \bm w]_{ii})^2 \\
&= \sup_{\norm{\bm x}=1} 4\sum_{k=1}^{p(p-1)/2}x_k^2 + \sum_{t=1}^p \sum_{i, j \in \Omega_t, \ i \neq j} x_i x_j  \leq 4+ \sup_{\norm{\bm x}=1} \frac{1}{2}\sum_{t=1}^p \sum_{i, j \in \Omega_t, \ i \neq j} x_i^2 + x_j^2 \\
& = (4 + 2(|\Omega_t| -1)) \sup_{\norm{\bm x}=1}\sum_{k=1}^{p(p-1)/2}x_k^2 = 2p,
\end{split}
\end{equation*}
with equality if and only if $x_1 = \cdots = x_{p(p-1)/2} = \sqrt{\frac{2}{p(p-1)/2}}$ or $x_1 = \cdots = x_{p(p-1)/2} = -\sqrt{\frac{2}{p(p-1)/2}}$. The last equality follows the fact that $|\Omega_t |=p-1$.

Similarly, we can obtain
\begin{equation*}
\begin{split}
\lambda_{\max} \left( \mathfrak{d}^\ast \mathfrak{d} \right) &= \sup_{\norm{\bm x}=1} \bm x^\top \mathfrak{d}^\ast \mathfrak{d} \bm x =\sup_{\norm{\bm x}=1} \norm{\mathfrak{d} \bm x}^2 = 
 \sup_{\norm{\bm x}=1} 2\sum_{k=1}^{p(p-1)/2}x_k^2 + \sum_{t=1}^p \sum_{i, j \in \Omega_t, \ i \neq j} x_i x_j \\
& \leq 2+ \sup_{\norm{\bm x}=1} \frac{1}{2}\sum_{t=1}^p \sum_{i, j \in \Omega_t, \ i \neq j} x_i^2 + x_j^2 = (2 + 2(|\Omega_t| -1)) \sup_{\norm{\bm x}=1}\sum_{k=1}^{p(p-1)/2}x_k^2 = 2p-2,
\end{split}
\end{equation*}
with equality if and only if $x_1 = \cdots = x_{p(p-1)/2} = \sqrt{\frac{2}{p(p-1)/2}}$ or $x_1 = \cdots = x_{p(p-1)/2} = -\sqrt{\frac{2}{p(p-1)/2}}$.

Finally, we have
\begin{equation}\label{eig_max_Ld}
\begin{split}
\lambda_{\max} \left( \mathcal{L}^\ast \mathcal{L} + \mathfrak{d}^\ast \mathfrak{d}\right) &= \sup_{\norm{\bm x}=1} \bm x^\top \left( \mathcal{L}^\ast \mathcal{L} + \mathfrak{d}^\ast \mathfrak{d}\right)\bm x \\
&\leq \sup_{\norm{\bm x}=1} \bm x^\top \left( \mathcal{L}^\ast \mathcal{L}\right)\bm x + \sup_{\norm{\bm y}=1} \bm y^\top \left(\mathfrak{d}^\ast \mathfrak{d}\right)\bm y \\
&=4p-2.
\end{split}
\end{equation}
Note that the equality in \eqref{eig_max_Ld} can be achieved because the eigenvectors of $\mathcal{L}^\ast \mathcal{L}$ and $\mathfrak{d}^\ast \mathfrak{d}$ associated with the maximum eigenvalue are the same. Therefore, we conclude that $\lambda_{\max} \left( \mathcal{L}^\ast \mathcal{L} + \mathfrak{d}^\ast \mathfrak{d}\right) = 4p-2$, completing the proof.
\end{proof}

\subsection{Proof of Theorem \ref{connect-convergence}}\label{sec:proof-connected}
We can rewrite the updating of $\boldsymbol{\Theta}$, $\ww$, $\bm Y$ and $\bm y$ in a compact form:
\begin{align}
\boldsymbol{\Theta}^{l+1} &= \underset{\bT \succeq \mathbf{0}}{\mathsf{arg~min}} ~ L_\rho(\bT, \ww^l, \bm{Y}^l, \bm{y}^l), \\
\ww^{l+1} & = \underset{ \ww \geq \bm 0}{\mathsf{arg~min}} ~ L_\rho(\bT^{l+1}, \ww, \bm{Y}^l, \bm{y}^l), \\
\begin{pmatrix}
\bm Y^{l+1} \\ \bm y^{l+1}
\end{pmatrix}&=
\begin{pmatrix}
\bm Y^{l} \\ \bm y^{l} 
\end{pmatrix} + \rho
\begin{pmatrix}
\boldsymbol{\Theta}^{l+1} -\mathcal{L} \ww^{l+1} \\ \mathfrak{d} \ww^{l+1} - \bm d
\end{pmatrix}.
\end{align}
Now we can see that our ADMM algorithm satisfies the standard form with two blocks of primal
variables $\boldsymbol{\Theta}$ and $\ww$, and one block of dual variable $\left( \bm Y, \bm y \right)$.
Our ADMM approach splits the original problem into two blocks in \eqref{eq:lw-admm-regular}.
According to the existing convergence results of ADMM in \citep{boyd2011}, we can conclude
that Algorithm~\ref{alg:connected} will converge to the optimal primal-dual solution for \eqref{eq:lw-admm-regular}.

\subsection{Proof of Theorem \ref{k-component-convergence}}\label{sec:proof-k-component}
\begin{proof}
To prove Theorem \ref{k-component-convergence}, we first establish the boundedness of the sequence $ \left\lbrace \left( \bm \Theta^l, \bm w^l, \bm V^l, \bm Y^l, \bm y^l \right) \right\rbrace$ generated by Algorithm \ref{alg:k-component-ky-fan} in Lemma \ref{Lem: bounded-T}, and the monotonicity of $ L_\rho \left( \bm \Theta^l, \bm w^l, \bm V^l, \bm Y^l, \bm y^l \right)$ in Lemma \ref{Lem: bounded-L}.
\begin{lemma}\label{Lem: bounded-T}
The sequence $ \left\lbrace \left( \bm \Theta^l, \bm w^l, \bm V^l, \bm Y^l, \bm y^l \right) \right\rbrace$ generated by Algorithm \ref{alg:k-component-ky-fan} is bounded.
\end{lemma}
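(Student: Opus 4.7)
The plan is to bound each block of the sequence separately, leveraging the explicit structure of every update. The variable $\bm{V}^l$ is immediate: by the orthogonality constraint enforced in the update \eqref{eq:k-comp-proposed-relaxed-sub-V}, we have $\bm{V}^{l\top}\bm{V}^l = \bm{I}_k$, so $\|\bm{V}^l\|_{\mathrm{F}} = \sqrt{k}$ uniformly in $l$. For the remaining blocks, the idea is to first control the dual iterates through first-order optimality conditions of the primal subproblems, and then to invoke the descent property of the augmented Lagrangian (from the subsequent Lemma~\ref{Lem: bounded-L}) together with coercivity of its terms to bound the primal iterates.

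First I would bound $\bm{Y}^l$. The closed-form solution \eqref{eq:bt-update-k-comp} for the $\bT$-subproblem \eqref{eq:lw-admm-regular-theta-k-comp} satisfies a first-order optimality condition of the form $-(\bT^{l+1})^\dagger + \bm{Y}^l + \rho(\bT^{l+1} - \mathcal{L}\ww^l) = \bm{N}^{l+1}$, where $\bm{N}^{l+1}$ is supported on the null space of $\bT^{l+1}$ and encodes the rank constraint. Substituting this into the dual update rule \eqref{eq:Y-update} yields
\[
\bm{Y}^{l+1} = (\bT^{l+1})^\dagger + \bm{N}^{l+1} + \rho\,\mathcal{L}(\ww^l - \ww^{l+1}),
\]
so controlling $\bm{Y}^{l+1}$ reduces to bounding $(\bT^{l+1})^\dagger$ on its range, the null-space multiplier $\bm{N}^{l+1}$, and the primal discrepancy $\ww^l - \ww^{l+1}$. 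From the eigenvalue formula \eqref{eq:bt-update-k-comp}, each kept eigenvalue satisfies $\lambda_i(\bT^{l+1}) = \tfrac{1}{2\rho}(\gamma_i + \sqrt{\gamma_i^2 + 4\rho})$, which is at least $1/\sqrt{\rho}$ provided the spectrum of $\rho\mathcal{L}\ww^l - \bm{Y}^l$ is uniformly bounded; this bound would be established inductively together with $L_\rho$-monotonicity. An analogous KKT analysis of the $\ww$-subproblem, using the quadratic majorizer \eqref{eq:w-sol-k-comp} together with the nonnegativity projection, yields an explicit representation for $\bm{y}^{l+1}$ in terms of the already-controlled quantities and the residual $\mathfrak{d}\ww^{l+1} - \bm{d}$.

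With $\bm{V}^l$, $\bm{Y}^l$, $\bm{y}^l$ bounded, I would deduce boundedness of $\bT^l$ and $\ww^l$ from the sufficient-decrease inequality $L_\rho(\bT^{l+1},\ww^{l+1},\bm{V}^{l+1},\bm{Y}^{l+1},\bm{y}^{l+1}) \leq L_\rho(\bT^0,\ww^0,\bm{V}^0,\bm{Y}^0,\bm{y}^0)$ of Lemma~\ref{Lem: bounded-L}. The Lagrangian \eqref{eq:k-component-Lagrangian} is coercive in the primal variables once the duals are bounded: the quadratic penalties $\tfrac{\rho}{2}\|\bT - \mathcal{L}\ww\|^2_{\mathrm{F}}$ and $\tfrac{\rho}{2}\|\mathfrak{d}\ww - \bm{d}\|_2^2$, combined with the positive-definiteness of $\mathcal{L}^*\mathcal{L} + \mathfrak{d}^*\mathfrak{d}$ from Lemma~\ref{lemma:lipschitz}, dominate the linear-in-$\ww$ terms and force $\|\ww^l\|$ (and hence $\|\bT^l\|$ via $\bT^l \approx \mathcal{L}\ww^l$) into a compact set; meanwhile $-\log\mathrm{det}^*(\bT)$ is bounded below once the kept eigenvalues are uniformly bounded away from zero. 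The main obstacle will be making rigorous the optimality condition for the rank-constrained subproblem \eqref{eq:lw-admm-regular-theta-k-comp}, since its feasible set is non-convex and standard KKT calculus involves cumbersome multipliers for non-smooth constraints. I would circumvent this by working directly with the spectral formula \eqref{eq:bt-update-k-comp}, which delivers explicit expressions for $(\bT^{l+1})^\dagger$ on its range and for the null-space complement, thereby avoiding abstract subdifferential manipulations and yielding clean $\rho$-dependent bounds sufficient for the induction.
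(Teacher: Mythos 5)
Your treatment of $\bm V^l$ (compactness of the Stiefel manifold) coincides with the paper's, but for the remaining blocks you take a genuinely different route: the paper's proof is a plain induction on $l$ using only the explicit update formulas --- given $(\bm\Theta^{l-1},\ww^{l-1},\bm V^{l-1},\bm Y^{l-1},\bm y^{l-1})$ bounded, the spectral formula \eqref{eq:bt-update-k-comp} makes $\bm\Theta^l$ bounded, coercivity of the strictly convex quadratic in \eqref{eq:w-update-new-l} (via $\lambda_{\min}(\mathcal{L}^*\mathcal{L})=2$) makes $\ww^l$ bounded, and the affine dual updates \eqref{eq:Y-update}--\eqref{eq:y-update} then make $\bm Y^l,\bm y^l$ bounded. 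No optimality conditions for the dual variables and no appeal to the descent of $L_\rho$ are used. You instead follow the standard nonconvex-ADMM recipe (control the duals through the KKT system of the $\bT$-subproblem, then invoke sufficient decrease of $L_\rho$ plus coercivity). That recipe, when it works, buys uniform-in-$l$ bounds rather than the per-iterate finiteness the paper's induction delivers, so it is in principle the stronger statement.

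However, as written your argument has a genuine circularity that the paper's ordering of lemmas is specifically designed to avoid. First, your representation $\bm Y^{l+1} = (\bT^{l+1})^\dagger + \bm N^{l+1} + \rho\,\mathcal{L}(\ww^l - \ww^{l+1})$ only yields a bound on $\bm Y^{l+1}$ if the kept eigenvalues of $\bT^{l+1}$ are uniformly bounded away from zero; from \eqref{eq:bt-update-k-comp} one has $\lambda_i(\bT^{l+1}) = \frac{1}{2\rho}\bigl(\gamma_i + \sqrt{\gamma_i^2+4\rho}\bigr) = \frac{2}{\sqrt{\gamma_i^2+4\rho}-\gamma_i}$, which tends to $0$ as $\gamma_i\to-\infty$, so this requires the spectrum of $\rho\mathcal{L}\ww^l - \bm Y^l$ to be bounded --- i.e.\ exactly the boundedness of $\ww^l$ and $\bm Y^l$ you are trying to prove (and likewise $\ww^l - \ww^{l+1}$ must already be controlled). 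Second, you propose to bound the primal blocks using the descent inequality of Lemma~\ref{Lem: bounded-L}, but in the paper the lower boundedness of $L_\rho$ in that lemma is itself \emph{derived from} the boundedness asserted here, so you cannot import it without first re-proving the lower bound independently. You acknowledge that everything "would be established inductively," but once you commit to an induction over the iterates, the elaborate KKT machinery adds nothing: the direct observation that each update maps bounded inputs to bounded outputs already closes the induction, which is precisely the paper's proof. To salvage your route as a self-contained argument you would need to (i) establish a lower bound on $L_\rho$ over the feasible set without assuming bounded iterates, and (ii) break the mutual dependence between the dual bound and the primal bound, e.g.\ via summability of successive differences; neither step is sketched.
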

\begin{proof}
Let $\bm w^0$, $\bm V^0$, $\bm Y^0$ and $\bm y^0$ be the initialization of the sequences $\left\lbrace \bm w^l \right\rbrace$, $\left\lbrace \bm V^l \right\rbrace$, $\left\lbrace \bm Y^l\right\rbrace$ and $\left\lbrace \bm y^l \right\rbrace$, respectively, and $\norm{\bm w^0}$, $\norm{\bm V^0}_{\mathrm{F}}$, $\norm{\bm Y^0}_{\mathrm{F}}$ and $\norm{\bm y^0}$ are bounded. 

We prove the lemma by induction. Recall that the sequence $ \left\lbrace \bm \Theta^l \right\rbrace$ is established by 
\begin{equation}
  \bT^l = \frac{1}{2\rho}\bm{U}^{l-1}\left(\boldsymbol{\Gamma}^{l-1} + \sqrt{ \left( \boldsymbol{\Gamma}^{l-1} \right)^2 + 4\rho \bm{I}}\right)\bm{U}^{l-1\top},
  \label{eq:bt-update-k-comp-l}
\end{equation}
where $\boldsymbol{\Gamma}^{l-1}$ contains the largest $p-k$ eigenvalues of $ \rho \mathcal{L}\bm{w}^{l-1} - \bm{Y}^{l-1}$, and $\bm U^{l-1}$ contains the corresponding eigenvectors. When $l=1$, $\norm{\boldsymbol{\Gamma}^{0}}_{\mathrm{F}}$ is bounded since both $\norm{\bm w^0}$ and $\norm{\bm Y^0}_{\mathrm{F}}$ are bounded. Therefore, we can conclude that $\norm{\bT^1}_{\mathrm{F}}$ is bounded.
The sequence $\left\lbrace \ww^l \right\rbrace$ is established by solving the subproblems
\begin{equation}
\begin{split}
    \ww^l &=
    \underset{\ww \geq \mathbf{0}}{\mathsf{arg~min}} ~ \frac{\rho}{2}\ww^\top \left(\mathfrak{d}^*\mathfrak{d} + \mathcal{L}^*\mathcal{L}\right) \ww  \\
    &\quad \quad + \left\langle \ww, \sL^*\left(\bm{S} + \eta\bm{V}^{l} \bm{V}^{l\top}- \bm{Y}^{l-1} - \rho \bT^l \right) + \mathfrak{d}^*\left(\bm{y}^{l-1} - \rho\bm{d}\right) \right\rangle.
    \label{eq:w-update-new-l}
\end{split}
\end{equation}
Let
\begin{equation*}
 f_l(\ww) = \frac{\rho}{2}\ww^\top\left(\mathfrak{d}^*\mathfrak{d} + \mathcal{L}^*\mathcal{L}\right)\ww + \left\langle \ww, \bm a^l \right\rangle,
\end{equation*}
where $\bm a^l =\sL^*\left(\bm{S} - \bm{Y}^{l-1} - \rho \bT^l \right) + \mathfrak{d}^*\left(\bm{y}^{l-1} - \rho\bm{d}\right)$. We get that $\norm{\bm a^1}$ is bounded because of the boundedness of $\norm{\bm{Y}^{0}}_{\mathrm{F}}$, $\norm{\bm{y}^{0}}$, and $\norm{\bT^1}_{\mathrm{F}}$. By \citep{ying2020}, we know that $\mathcal{L}^*\mathcal{L}$ is a positive definite matrix and the minimum eigenvalue $\lambda_{\min} \left( \mathcal{L}^*\mathcal{L}\right) = 2$. On the other hand, $\mathfrak{d}^*\mathfrak{d}$ is a positive semi-definite matrix as follows,
\begin{equation}
\lambda_{\min} \left( \mathfrak{d}^*\mathfrak{d} \right) = \sup_{\bm x \neq \bm 0} \frac{\bm x^\top \mathfrak{d}^*\mathfrak{d} \bm x}{\bm x^\top \bm x} = \sup_{\bm x \neq \bm 0} \frac{ \left\langle \mathfrak{d} \bm x, \ \mathfrak{d} \bm x \right\rangle}{ \bm x^\top \bm x} \geq 0.
\end{equation} 
Therefore, we obtain that
\begin{equation*}
\lim_{\norm{\ww} \to +\infty} f_1(\ww) \geq \lim_{\norm{\ww} \to + \infty} \rho \ww^\top \ww + \langle \ww, \bm a^1\rangle = + \infty,
\end{equation*}
implying that $f_1(\ww)$ is coercive, \textit{i.e.}, $f_1(\ww) \to + \infty$ for any $\norm{\ww} \to + \infty$. Thus $\norm{ \ww^1}$ is bounded. According to \eqref{eq:k-comp-proposed-relaxed-sub-V}, we can see that $\bm V$ is in a compact set, and thus $\norm{\bm V^l}_{\mathrm{F}}$ is bounded for any $l \geq 1$. Finally, according to \eqref{eq:Y-update} and \eqref{eq:y-update}, one has 
\begin{equation}
  \bm{Y}^1 = \bm{Y}^{0} + \rho\left(\bT^1 - \sL\ww^1 \right),
  \label{eq:Y-update-1}
\end{equation} and
\begin{equation}
  \bm{y}^1 = \bm{y}^0 + \rho\left(\mathfrak{d}\ww^1 - \bm{d}\right).
  \label{eq:y-update-1}
\end{equation}
It is obvious that both $\norm{\bm{Y}^1}_{\mathrm{F}}$ and $\norm{\bm{y}^1}$ are bounded. Therefore, it holds for $l=1$ that $ \left\lbrace \left( \bm \Theta^l, \bm w^l, \bm V^l, \bm Y^l, \bm y^l \right) \right\rbrace$ is bounded. 

Assume that $ \left\lbrace \left( \bm \Theta^{l-1}, \bm w^{l-1}, \bm V^{l-1}, \bm Y^{l-1}, \bm y^{l-1} \right) \right\rbrace$
is bounded for some $l \geq 1$, \textit{i.e.}, each term in $ \left\lbrace \left( \bm \Theta^{l-1}, \bm w^{l-1}, \bm V^{l-1}, \bm Y^{l-1}, \bm y^{l-1} \right) \right\rbrace$ is bounded under $\ell_2$-norm or Frobenius norm. Following from \eqref{eq:bt-update-k-comp-l}, we can obtain that $\norm{\bm \Theta^l}_{\mathrm{F}}$ is bounded. By \eqref{eq:w-update-new-l}, similarly, we can get that $\norm{\bm w^l}$ is bounded. Similar to \eqref{eq:Y-update-1} and \eqref{eq:y-update-1}, we can also obtain that $\norm{\bm Y^l}$ and $\norm{\bm y}^l$ are bounded, because of the boundedness of $\norm{\bm \Theta^l}_{\mathrm{F}}$, $\norm{\bm w^l}$, $\norm{\bm Y^{l-1}}$ and $\norm{\bm y^{l-1}}$.
Thus, $\left\lbrace \left( \bm \Theta^l, \bm w^l, \bm V^l, \bm Y^l, \bm y^l \right) \right\rbrace$ is bounded, completing the induction. Therefore, we can conclude that the sequence $\left\lbrace \left( \bm \Theta^l, \bm w^l, \bm V^l, \bm Y^l, \bm y^l \right) \right\rbrace$ is bounded.
\end{proof}

\begin{lemma} \label{Lem: bounded-L}
The sequence $ L_\rho \left( \bm \Theta^l, \bm w^l, \bm V^l, \bm Y^l, \bm y^l \right)$ generated by Algorithm \ref{alg:k-component-ky-fan} is lower bounded, and
\begin{equation}
L_\rho \left( \bm \Theta^{l+1}, \bm w^{l+1}, \bm V^{l+1}, \bm Y^{l+1}, \bm y^{l+1} \right) \leq  L_\rho \left( \bm \Theta^l, \bm w^l, \bm V^l, \bm Y^l, \bm y^l \right), \quad \forall \, l \in \mathbb{N}_+,
\end{equation}
holds for any sufficiently large $\rho$.
\end{lemma}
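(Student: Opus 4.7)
The plan is to prove sufficient descent and lower boundedness of $L_\rho$ by decomposing a full iteration of Algorithm~\ref{alg:k-component-ky-fan} into its block updates and analyzing each piece separately. I would write the per-iteration change in $L_\rho$ as a telescoping sum of four contributions: one for each of the $\bT$, $\ww$, and $\bm V$ primal updates, and one joint contribution for the $(\bm Y, \bm y)$ dual update. Each primal contribution is nonpositive because the corresponding subproblem is solved by its exact global minimizer (in closed form for $\bT$ and $\bm V$, and via the strictly convex inner MM iteration for $\ww$). The dual contribution equals exactly $\tfrac{1}{\rho}\bigl(\|\bm Y^{l+1} - \bm Y^l\|_{\mathrm F}^2 + \|\bm y^{l+1} - \bm y^l\|_2^2\bigr)$, which is nonnegative. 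The objective is to lower-bound the primal decrease by a positive quadratic form in $\|\bT^{l+1}-\bT^l\|_{\mathrm F}^2 + \|\ww^{l+1}-\ww^l\|_2^2$ with coefficient proportional to $\rho$, and upper-bound the dual increase by the same quadratic form with coefficient proportional to $1/\rho$; then taking $\rho$ large enough yields strict monotone decrease.

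For the primal descent, the $\bT$ subproblem \eqref{eq:lw-admm-regular-theta-k-comp} has a $\rho$-strongly convex quadratic penalty, so the closed-form minimizer \eqref{eq:bt-update-k-comp} delivers a decrease of at least $\tfrac{\rho}{2}\|\bT^{l+1}-\bT^l\|_{\mathrm F}^2$; the rank constraint is not an obstacle since the update is a global minimizer on the rank-$(p-k)$ PSD set. The $\ww$ subproblem is strictly convex with modulus $\rho\lambda_{\min}(\mathfrak{d}^*\mathfrak{d} + \sL^*\sL)$, which is strictly positive by Section~\ref{subsec:connected-graph}, so its exact inner MM minimizer gives a decrease of at least a positive constant times $\rho\|\ww^{l+1}-\ww^l\|_2^2$. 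The $\bm V$ update \eqref{eq:k-comp-proposed-relaxed-sub-V} is solved exactly in closed form, so its contribution is nonpositive.

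To bound the dual increase by the primal increments, I would combine the first-order optimality conditions of the $\bT$ and $\ww$ subproblems at the new iterate with the definitions of the dual updates. For the $\bT$ side, stationarity on the tangent space of the rank-$(p-k)$ manifold gives the identity $\bm Y^l + \rho(\bT^{l+1} - \sL\ww^l) = (\bT^{l+1})^\dagger$, so subtracting from the dual update $\bm Y^{l+1} = \bm Y^l + \rho(\bT^{l+1} - \sL\ww^{l+1})$ yields $\bm Y^{l+1} = (\bT^{l+1})^\dagger + \rho\,\sL(\ww^l - \ww^{l+1})$. Using Lemma~\ref{Lem: bounded-T} together with the fact that the $-\log\det^*$ barrier prevents the nonzero eigenvalues of $\bT^{l+1}$ from collapsing (which can be read directly off the eigenvalue formula \eqref{eq:bt-update-k-comp}), I obtain a uniform bound on $\|(\bT^{l+1})^\dagger\|_{\mathrm F}$, and hence an estimate $\|\bm Y^{l+1} - \bm Y^l\|_{\mathrm F}^2 \le C_1(\|\ww^{l+1}-\ww^l\|_2^2 + \|\bT^{l+1}-\bT^l\|_{\mathrm F}^2)$. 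An analogous estimate for $\|\bm y^{l+1}-\bm y^l\|_2^2$ follows from the KKT condition of the $\ww$ subproblem. Combining the four contributions proves monotonicity for all $\rho$ large enough that the primal descent coefficient $\rho$ dominates the dual increase coefficient $C/\rho$.

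For the lower bound, I would substitute the stationarity identity into $L_\rho$ evaluated at the current iterate, which cancels the explicit dual inner-product terms against part of the quadratic penalties and collapses $L_\rho$ to the primal objective plus nonnegative residuals, all evaluated on the bounded set from Lemma~\ref{Lem: bounded-T}. Continuity of the finite-valued pieces together with the uniform lower bound on the nonzero eigenvalues of $\bT^l$ then yields a finite infimum. The main obstacle I anticipate is the treatment of the rank constraint in the $\bT$-update optimality condition: stationarity only pins down the tangent-space component of $(\bT^{l+1})^\dagger$ on the rank-$(p-k)$ manifold, so I must either verify that the normal-space component of $\bm Y^{l+1}-\bm Y^l$ is controlled directly by its defining formula plus a uniform eigenvalue lower bound, or strengthen Lemma~\ref{Lem: bounded-T} to include explicit bounds on $\|(\bT^{l+1})^\dagger\|_{\mathrm F}$; reconciling these two viewpoints is the technically delicate step.
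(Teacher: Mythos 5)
Your overall skeleton matches the paper's: telescope the change in $L_\rho$ over the block updates, get nonpositive contributions from the exact $\bT$-, $\ww$-, and $\bm V$-minimizations (the paper handles the $\ww$-block via its KKT system and complementary slackness, exactly as you intend), observe that the dual step adds $\tfrac{1}{\rho}\bigl(\norm{\bm Y^{l+1}-\bm Y^l}_{\mathrm F}^2+\Vert\bm y^{l+1}-\bm y^l\Vert_2^2\bigr)$, and obtain lower boundedness from the boundedness of the iterates in Lemma~\ref{Lem: bounded-T}. The divergence — and the gap — is in how the dual increase is absorbed. The paper does \emph{not} prove a bound of the form $\norm{\bm Y^{l+1}-\bm Y^l}_{\mathrm F}^2 \le C\bigl(\Vert\ww^{l+1}-\ww^l\Vert^2+\norm{\bT^{l+1}-\bT^l}_{\mathrm F}^2\bigr)$; it simply \emph{assumes} the iterate-dependent condition \eqref{eq:rho_bound} on $\rho$ (the ratio of dual to primal increments), remarking that $\rho$ can be increased adaptively until it holds. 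You instead try to derive such a bound from the stationarity of the $\bT$-subproblem, and that derivation fails as stated.

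Concretely: because $\bT$ is updated \emph{before} $\ww$, stationarity gives (on the tangent space of the rank-$(p-k)$ set) $\bm Y^l+\rho(\bT^{l+1}-\sL\ww^{l})=(\bT^{l+1})^\dagger$, hence $\bm Y^{l+1}=(\bT^{l+1})^\dagger+\rho\,\sL(\ww^{l}-\ww^{l+1})$. The second term is $\rho$-scaled, so its contribution to $\tfrac{1}{\rho}\norm{\bm Y^{l+1}-\bm Y^l}_{\mathrm F}^2$ is of order $\rho\,\Vert\ww^{l+1}-\ww^{l}\Vert_2^2$ (plus a second-difference term in $\ww$) — the \emph{same} order in $\rho$ as the primal descent $\tfrac{\rho}{2}\Vert\sL\ww^{l+1}-\sL\ww^{l}\Vert_{\mathrm F}^2$, with a constant ($\lambda_{\max}(\sL^*\sL)=2p$) that can exceed the descent constant. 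Taking $\rho$ large therefore does not close the argument; one would need either to reorder the blocks so the $\log\det^*$ block is updated last, or to augment the Lyapunov function with a term like $c\,\Vert\ww^{l}-\ww^{l-1}\Vert_2^2$, or to fall back on the paper's assumption \eqref{eq:rho_bound}. On top of this, as you yourself flag, the rank constraint means stationarity pins down only the tangential component of $(\bT^{l+1})^\dagger$, and the $\ww$-block's multiplier $\bm\nu$ obstructs the analogous clean identity for $\bm y^{l+1}$; neither issue is resolved in your sketch. Your lower-bound argument is fine and coincides with the paper's (continuity on the bounded set from Lemma~\ref{Lem: bounded-T}).
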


\begin{proof}
According to \eqref{eq:k-component-Lagrangian}, we have
\begin{align}\label{eq:k-component-Lagrangian-l}
    L_\rho(\bT^{l}, \ww^{l}, \bm{V}^{l}, \bm{Y}^{l}, \bm{y}^{l} ) = & ~ \mathsf{tr}\left(\sL\ww^{l}  \left(\bm{S} + \eta\bm{V}^{l} \left( \bm{V}^{l} \right)^\top  \right)\right)
    - \log\mathrm{det^*}\left(\boldsymbol{\Theta}^{l}\right)
     + \left\langle \bm{y}^{l}, \mathfrak{d}\bm{w}^{l} - \bm{d} \right\rangle  \nonumber \\
    & + \frac{\rho}{2}\norm{\mathfrak{d}\bm{w}^{l} - \bm{d}}^2_{2} + \left\langle \bm{Y}^{l}, \boldsymbol{\Theta}^{l} - \mathcal{L}\bm{w}^{l} \right\rangle  + \frac{\rho}{2}\norm{\bT^{l} - \mathcal{L}\bm{w}^{l}}^2_{\mathrm{F}}.
\end{align}
We can see that the lower boundedness of the sequence $ L_\rho \left( \bm \Theta^l, \bm w^l, \bm V^l, \bm Y^l, \bm y^l \right)$ can be established by the boundedness of $\left\lbrace \left( \bm \Theta^l, \bm w^l, \bm V^l, \bm Y^l, \bm y^l \right) \right\rbrace$ in Lemma \ref{Lem: bounded-T}.

Now we first establish that 
 \begin{equation}
L_\rho \left( \bm \Theta^{l+1}, \bm w^{l}, \bm V^{l}, \bm Y^{l}, \bm y^{l} \right) \leq  L_\rho \left( \bm \Theta^l, \bm w^l, \bm V^l, \bm Y^l, \bm y^l \right), \quad \forall \, l \in \mathbb{N}_+.
\end{equation}
We have
\begin{align*}
    L_\rho(\bT^{l+1}, \ww^{l}, \bm{V}^{l}, \bm{Y}^{l}, \bm{y}^{l} ) = & ~ \mathsf{tr}\left(\sL\ww^{l} \left(\bm{S} + \eta\bm{V}^{l} \left( \bm{V}^{l} \right)^\top \right)\right)
    - \log\mathrm{det^*}\left(\boldsymbol{\Theta}^{l+1}\right)
     + \left\langle \bm{y}^{l}, \mathfrak{d}\bm{w}^{l} - \bm{d} \right\rangle \nonumber \\
    & + \frac{\rho}{2}\norm{\mathfrak{d}\bm{w}^{l} - \bm{d}}^2_{2} + \left\langle \bm{Y}^{l}, \boldsymbol{\Theta}^{l+1} - \mathcal{L}\bm{w}^{l} \right\rangle  + \frac{\rho}{2}\norm{\bT^{l+1} - \mathcal{L}\bm{w}^{l}}^2_{\mathrm{F}}. \nonumber
\end{align*}
Then we obtain
\begin{align*}
    &L_\rho(\bT^{l+1}, \ww^{l}, \bm{V}^{l}, \bm{Y}^{l}, \bm{y}^{l} ) -  L_\rho(\bT^{l}, \ww^{l}, \bm{V}^{l}, \bm{Y}^{l}, \bm{y}^{l} ) = ~ 
    - \log\mathrm{det^*}\left(\boldsymbol{\Theta}^{l+1}\right) + \left\langle \bm{Y}^{l}, \boldsymbol{\Theta}^{l+1} \right\rangle 
      \\
    &  \quad + \frac{\rho}{2}\norm{\bT^{l+1} - \mathcal{L}\bm{w}^{l}}^2_{\mathrm{F}} - \left(  - \log\mathrm{det^*}\left(\boldsymbol{\Theta}^{l}\right) + \left\langle \bm{Y}^{l}, \boldsymbol{\Theta}^{l} \right\rangle + \frac{\rho}{2}\norm{\bT^{l} - \mathcal{L}\bm{w}^{l}}^2_{\mathrm{F}}  \right). \nonumber
\end{align*}
Note that $\bT^{l+1}$ minimizes the objective function
\begin{equation}
      \bT^{l+1} = \underset{\substack{\mathsf{rank}(\bT) = p-k \\ \bT \succeq \mathbf{0}}}{\mathsf{arg~min}}  ~ - \log\mathrm{det^*}(\bT) + \langle\bT, \bm{Y}^{l}\rangle +
      \frac{\rho}{2}\norm{\bT - \mathcal{L}\bm{w}^{l}}^2_{\mathrm{F}}.
    \label{eq:lw-admm-regular-theta-k-comp-l1}
\end{equation}
Therefore 
\begin{equation} \label{eq:Lag-l1}
L_\rho(\bT^{l+1}, \ww^{l}, \bm{V}^{l}, \bm{Y}^{l}, \bm{y}^{l} ) -  L_\rho(\bT^{l}, \ww^{l}, \bm{V}^{l}, \bm{Y}^{l}, \bm{y}^{l} ) \leq 0
\end{equation}
holds for any $l \in \mathbb{N}_+$.

One has
\begin{align}
    &L_\rho(\bT^{l+1}, \ww^{l}, \bm{V}^{l}, \bm{Y}^{l}, \bm{y}^{l} ) -  L_\rho(\bT^{l+1}, \ww^{l+1}, \bm{V}^{l+1}, \bm{Y}^{l+1}, \bm{y}^{l+1} ) \nonumber \\= & \underbrace{\mathsf{tr}\left( \eta \sL\ww^{l} \left(\bm{V}^{l} \left( \bm{V}^{l} \right)^\top \right)\right) - \mathsf{tr} \left( \eta\sL\ww^{l+1} \left(\bm{V}^{l+1} \left( \bm{V}^{l+1} \right)^\top \right)\right) }_{I_1} + \left \langle \mathcal{L}^\ast \bm S, \ww^l - \ww^{l+1} \right \rangle  \nonumber \\ 
    & + \underbrace{\left\langle \bm{y}^{l}, \mathfrak{d}\bm{w}^{l} - \bm{d} \right\rangle - \left\langle \bm{y}^{l+1}, \mathfrak{d}\bm{w}^{l+1} - \bm{d} \right\rangle}_{I_2} +\underbrace{\left \langle \bm{Y}^{l}, \boldsymbol{\Theta}^{l+1} - \mathcal{L}\bm{w}^{l} \right\rangle - \left \langle \bm{Y}^{l+1}, \boldsymbol{\Theta}^{l+1} - \mathcal{L}\bm{w}^{l+1} \right\rangle}_{I_3}  \nonumber  \\
    &+ \frac{\rho}{2}\norm{\bT^{l+1} - \mathcal{L}\bm{w}^{l}}^2_{\mathrm{F}} -\frac{\rho}{2} \norm{\bT^{l+1} - \mathcal{L}\bm{w}^{l+1}}^2_{\mathrm{F}}  + \frac{\rho}{2}\norm{\mathfrak{d}\bm{w}^{l} - \bm{d}}^2_{2} - \frac{\rho}{2}\norm{\mathfrak{d}\bm{w}^{l+1} - \bm{d}}^2_{2}. \label{eq:Lag-dif}
\end{align}
The term $I_1$ can be written as
\begin{align*}
I_1  = &~ \mathsf{tr}\left( \eta \sL\ww^{l} \left(\bm{V}^{l} \left( \bm{V}^l \right)^{\top} \right)\right) - \mathsf{tr} \left( \eta\sL\ww^{l+1} \left(\bm{V}^{l} \left(\bm{V}^l \right)^{\top} \right)\right)  \\
 & + \underbrace{\mathsf{tr} \left( \eta\sL\ww^{l+1} \left(\bm{V}^{l} \left( \bm{V}^l \right)^{\top} \right)\right) - \mathsf{tr} \left( \eta\sL\ww^{l+1} \left(\bm{V}^{l+1} \left( \bm{V}^{l+1}\right)^{\top} \right)\right)}_{I_{1a}}.
\end{align*}
Note that $\bm V^{l+1}$ is the optimal solution of the problem
\begin{equation}
   \underset{\bm{V} \in \mathbb{R}^{p\times k}}{\mathsf{min}}  ~
    \mathsf{tr}\left(\bm{V}^\top \sL\ww^{l+1} \bm{V}\right), \quad \mathsf{subject~to} \  \bm{V}^\top\bm{V} = \bm{I}.
  \label{eq:k-comp-proposed-relaxed-sub-V-l1}
\end{equation}
Thus the term $I_{1a} \geq 0$, and we can obtain
\begin{equation} \label{eq:I1}
I_1 \geq \mathsf{tr}\left( \eta \sL\ww^{l} \left(\bm{V}^{l} \left( \bm{V}^l \right)^{\top} \right)\right) - \mathsf{tr} \left( \eta\sL\ww^{l+1} \left(\bm{V}^{l} \left( \bm{V}^l \right)^{\top} \right)\right).  
\end{equation}
For the term $I_2$, we have
\begin{equation}\label{eq:I2}
\begin{split}
I_2 &= \left\langle \bm{y}^{l}, \mathfrak{d}\bm{w}^{l} - \bm{d} \right\rangle  -\left\langle \bm{y}^{l}, \mathfrak{d}\bm{w}^{l+1} - \bm{d} \right\rangle - \rho \left\langle \mathfrak{d}\bm{w}^{l+1} - \bm{d}, \mathfrak{d}\bm{w}^{l+1} - \bm{d} \right\rangle  \\
&=\left\langle \mathfrak{d}^\ast \bm{y}^{l}, \bm{w}^{l} - \bm{w}^{l+1} \right\rangle - \rho \norm{\mathfrak{d}\bm{w}^{l+1} - \bm{d}}_2^2,
\end{split}
\end{equation}
where the first equality is due to the updating of $\bm y^{l+1}$ as below
\begin{equation}\label{eq:update-y}
\bm y^{l+1} = \bm y^{l} + \rho \left( \mathfrak{d} \bm w^{l+1} - \bm d  \right).
\end{equation}
For the term $I_3$, similarly, we have
\begin{equation}\label{eq:I3}
\begin{split}
I_3 &= \left \langle \bm{Y}^{l}, \boldsymbol{\Theta}^{l+1} - \mathcal{L}\bm{w}^{l} \right\rangle - \left \langle \bm{Y}^{l}, \boldsymbol{\Theta}^{l+1} - \mathcal{L}\bm{w}^{l+1} \right\rangle  - \rho \left \langle \boldsymbol{\Theta}^{l+1} - \mathcal{L}\bm{w}^{l+1}, \boldsymbol{\Theta}^{l+1} - \mathcal{L}\bm{w}^{l+1} \right\rangle\\
&=\left \langle \mathcal{L}^\ast \bm{Y}^{l}, \bm{w}^{l+1}- \bm{w}^{l} \right\rangle - \rho \norm{\boldsymbol{\Theta}^{l+1} - \mathcal{L}\bm{w}^{l+1}}_{\mathrm{F}}^2,
\end{split}
\end{equation}
where the first equality follows from
\begin{equation}\label{eq:update-Y}
\bm Y^{l+1} = \bm Y^{l} + \rho \left( \boldsymbol{\Theta}^{l+1} - \mathcal{L}\bm{w}^{l+1} \right).
\end{equation}
Therefore, we can obtain
\begin{equation}\label{eq:I-23}
I_2 + I_3 = \left\langle \mathfrak{d}^\ast \bm{y}^{l} -\mathcal{L}^\ast \bm{Y}^{l}, \bm{w}^{l} - \bm{w}^{l+1} \right\rangle - \rho \norm{\mathfrak{d}\bm{w}^{l+1} - \bm{d}}_2^2 - \rho \norm{\boldsymbol{\Theta}^{l+1} - \mathcal{L}\bm{w}^{l+1}}_{\mathrm{F}}^2.
\end{equation}
Recall that $\ww^{l+1}$ is the optimal solution of the problem
\begin{equation}
\begin{split}
    \ww^{l+1} &=
    \underset{\ww \geq \mathbf{0}}{\mathsf{arg~min}} ~ \frac{\rho}{2}\ww^\top \left(\mathfrak{d}^*\mathfrak{d} + \mathcal{L}^*\mathcal{L}\right) \ww  \\
    &\quad \quad + \left\langle \ww, \sL^*\left(\bm{S} + \eta\bm{V}^{l} \left(\bm{V}^l \right)^{\top}- \bm{Y}^{l} - \rho \bT^l \right) + \mathfrak{d}^*\left(\bm{y}^{l} - \rho\bm{d}\right) \right\rangle.
    \label{eq:w-update-new-l1}
\end{split}
\end{equation}
Thus, $\ww^{l+1}$ satisfies the KKT system of \eqref{eq:w-update-new-l1} as below
\begin{align}
\rho \left(\mathfrak{d}^*\mathfrak{d} + \mathcal{L}^*\mathcal{L}\right) \ww^{l+1}  +  \sL^*\left(\bm{S} + \eta\bm{V}^{l} \left( \bm{V}^l \right)^{\top} - \bm{Y}^{l} - \rho \bT^l \right) + \mathfrak{d}^*\left(\bm{y}^{l} - \rho\bm{d}\right) - \bm \nu  = \bm 0;& \label{eq:kkt1} \\
w^{l+1}_i \nu_i = 0, \quad \mathrm{for} \ i =1, \ldots, p(p-1)/2;&  \label{eq:kkt2} \\
\ww^{l+1} \geq \bm  0, \quad \bm \nu \geq \bm 0;  \label{eq:kkt3} &
\end{align}
According to \eqref{eq:kkt1}, we have
\begin{equation} \label{eq:L-kkt}
\mathfrak{d}^\ast \bm{y}^{l} -\mathcal{L}^\ast \bm{Y}^{l} = - \rho \left(\mathfrak{d}^*\mathfrak{d} + \mathcal{L}^*\mathcal{L}\right) \ww^{l+1}  -  \sL^*\left(\bm{S} + \eta\bm{V}^{l} \left( \bm{V}^l \right)^{\top} - \rho \bT^{l+1} \right) + \rho\mathfrak{d}^* \bm{d} + \bm \nu.
\end{equation}
Together with \eqref{eq:I-23} and \eqref{eq:L-kkt}, we obtain
\begin{equation}\label{eq:I-23n}
\begin{split}
I_2 + I_3 &= \left\langle - \rho \left(\mathfrak{d}^*\mathfrak{d} +\mathcal{L}^*\mathcal{L}\right) \ww^{l+1}  -  \sL^*\left(\bm{S} + \eta\bm{V}^{l} \left( \bm{V}^l \right)^{\top} - \rho \bT^{l+1} \right) + \rho\mathfrak{d}^* \bm{d} , \bm{w}^{l} - \bm{w}^{l+1} \right\rangle \\
 & \quad \quad - \rho \norm{\mathfrak{d}\bm{w}^{l+1} - \bm{d}}_2^2 - \rho \norm{\boldsymbol{\Theta}^{l+1} - \mathcal{L}\bm{w}^{l+1}}_{\mathrm{F}}^2 + \left\langle \bm \nu, \bm{w}^{l} - \bm{w}^{l+1} \right\rangle \\
 & \geq \left\langle - \rho \left(\mathfrak{d}^*\mathfrak{d} + \mathcal{L}^*\mathcal{L}\right) \ww^{l+1}  -  \sL^*\left(\bm{S} + \eta\bm{V}^{l} \left( \bm{V}^l \right)^{\top} - \rho \bT^{l+1} \right) + \rho\mathfrak{d}^* \bm{d} , \bm{w}^{l} - \bm{w}^{l+1} \right\rangle \\
 & \quad \quad  - \rho \norm{\mathfrak{d}\bm{w}^{l+1} - \bm{d}}_2^2 - \rho \norm{\boldsymbol{\Theta}^{l+1} - \mathcal{L}\bm{w}^{l+1}}_{\mathrm{F}}^2,
\end{split}
\end{equation}
where the inequality is established by $\left\langle \bm \nu, \bm{w}^{l} - \bm{w}^{l+1} \right\rangle \geq 0$, which follows from the fact that $\left\langle \bm \nu, \bm{w}^{l+1} \right\rangle = 0$ according to \eqref{eq:kkt2}, and $\left\langle \bm \nu, \bm{w}^{l} \right\rangle \geq 0$ because $\bm \nu \geq \bm 0$ by \eqref{eq:kkt3} and $\bm w^{l} \geq \bm 0$.
Plugging \eqref{eq:I1} and \eqref{eq:I-23n} into \eqref{eq:Lag-dif}, by calculation we can obtain
\begin{equation}\label{eq:lag-dif-l}
\begin{split}
    &L_\rho(\bT^{l+1}, \ww^{l}, \bm{V}^{l}, \bm{Y}^{l}, \bm{y}^{l} ) -  L_\rho(\bT^{l+1}, \ww^{l+1}, \bm{V}^{l+1}, \bm{Y}^{l+1}, \bm{y}^{l+1} )   \\
  \geq & \frac{\rho}{2} \norm{\mathfrak{d} \ww^{l+1} - \mathfrak{d} \ww^l}_2^2 + \frac{\rho}{2} \norm{\mathcal{L}\ww^{l+1} - \mathcal{L} \ww^l}_2^2 - \rho \norm{\mathfrak{d} \ww^{l+1} - \bm d}_2^2 - \rho \norm{\mathcal{L}\ww^{l+1} -\boldsymbol{\Theta}^{l+1} }_{\mathrm{F}}^2 \\
  = & \frac{\rho}{2} \norm{\mathfrak{d} \ww^{l+1} - \mathfrak{d} \ww^l }_2^2 + \frac{\rho}{2} \norm{\mathcal{L}\ww^{l+1} - \mathcal{L} \ww^l}_{\mathrm{F}}^2- \frac{1}{\rho} \norm{\bm y^{l+1} - \bm y^l}_2^2 - \frac{1}{\rho} \norm{\bm Y^{l+1} - \bm Y^{l}}_{\mathrm{F}}^2,
\end{split}
\end{equation}
where the equality follows from \eqref{eq:update-y} and \eqref{eq:update-Y}.
If $\rho$ is sufficiently large such that
\begin{equation}\label{eq:rho_bound}
\rho \geq \max_l \left( \frac{2 c\left(  \norm{\bm y^{l+1} - \bm y^l}_2^2 + \norm{\bm Y^{l+1} - \bm Y^{l}}_{\mathrm{F}}^2\right)}{\norm{\mathfrak{d} \ww^{l+1} - \mathfrak{d} \ww^l }_2^2 + \norm{\mathcal{L}\ww^{l+1} - \mathcal{L} \ww^l}_{\mathrm{F}}^2} \right)^{\frac{1}{2}}
\end{equation}
holds with some constant $c>1$, together with \eqref{eq:Lag-l1}, we can conclude that
\begin{equation}\label{eq:Lag-descreasing}
L_\rho(\bT^{l}, \ww^{l}, \bm{V}^{l}, \bm{Y}^{l}, \bm{y}^{l} ) \geq  L_\rho(\bT^{l+1}, \ww^{l}, \bm{V}^{l}, \bm{Y}^{l}, \bm{y}^{l} ) \geq  L_\rho(\bT^{l+1}, \ww^{l+1}, \bm{V}^{l+1}, \bm{Y}^{l+1}, \bm{y}^{l+1} ) ,  \nonumber
\end{equation}
for any $l \in \mathbb{N}_+$. Note that besides a fixed parameter $\rho$, an alternative strategy is to increase the $\rho$ iteratively \citep{ying2017hankel} until the condition \eqref{eq:rho_bound} could be satisfied well.
\end{proof}

Now we are ready to prove Theorem \ref{k-component-convergence}.
By Lemma \ref{Lem: bounded-T}, the sequence $ \left\lbrace \left( \bm \Theta^l, \bm w^l, \bm V^l, \bm Y^l, \bm y^l \right) \right\rbrace$ is bounded. Therefore, there exists at least one convergent subsequence $ \left\lbrace \left( \bm \Theta^{l_s}, \bm w^{l_s}, \bm V^{l_s}, \bm Y^{l_s}, \bm y^{l_s} \right) \right\rbrace_{s \in \mathbb{N}}$, which converges to a limit point denoted by $\left\lbrace \left( \bm \Theta^{l_\infty}, \bm w^{l_\infty}, \bm V^{l_\infty}, \bm Y^{l_\infty}, \bm y^{l_\infty} \right) \right\rbrace$. By Lemma \ref{Lem: bounded-L}, we obtain that $L_\rho \left( \bm \Theta^l, \bm w^l, \bm V^l, \bm Y^l, \bm y^l \right)$ is monotonically decreasing and lower bounded, and thus is convergent. Note that the function $\log\mathrm{det^*} (\bT)$ is continuous over the set $\mathcal{S}=\left\lbrace \bT \in \mathcal{S}_+^p | \mathrm{rank}(\bT) = p-k \right\rbrace$. We can get 
\begin{equation*}
\lim_{l \to + \infty} L_\rho \left( \bm \Theta^{l}, \bm w^{l}, \bm V^{l}, \bm Y^{l}, \bm y^{l} \right) = L_\rho \left( \bm \Theta^{\infty}, \bm w^{\infty}, \bm V^{\infty}, \bm Y^{\infty}, \bm y^{\infty} \right) = L_\rho \left( \bm \Theta^{l_\infty}, \bm w^{l_\infty}, \bm V^{l_\infty}, \bm Y^{l_\infty}, \bm y^{l_\infty} \right).
\end{equation*}
The \eqref{eq:lag-dif-l}, \eqref{eq:rho_bound} and \eqref{eq:Lag-descreasing} together yields
\begin{equation}
\begin{split}
&L_\rho(\bT^{l}, \ww^{l}, \bm{V}^{l}, \bm{Y}^{l}, \bm{y}^{l} ) -  L_\rho(\bT^{l+1}, \ww^{l+1}, \bm{V}^{l+1}, \bm{Y}^{l+1}, \bm{y}^{l+1} ) \\
& \qquad \qquad \qquad \qquad \qquad \geq (c-1)\rho\left( \norm{\mathcal{L}\ww^{l+1} -\boldsymbol{\Theta}^{l+1} }_{\mathrm{F}}^2 + \norm{\mathfrak{d} \ww^{l+1} - \bm d}_2^2 \right).
\end{split}
\end{equation}
Thus, we obtain 
\begin{equation}\label{eq:limit_w}
\lim_{l \to + \infty} \norm{\mathcal{L}\ww^{l} -\boldsymbol{\Theta}^{l} }_{\mathrm{F}} = 0, \quad \mathrm{and} \quad \lim_{l \to + \infty}   \norm{\mathfrak{d} \ww^{l} - \bm d}_2 = 0.
\end{equation}
Obviously, $\norm{\mathcal{L}\ww^{l_s} -\boldsymbol{\Theta}^{l_s} }_{\mathrm{F}} \to 0$ and $\norm{\mathfrak{d} \ww^{l_s} - \bm d}_2 \to 0$ also hold for any subsequence as $s \to + \infty$, which implies that $\bm Y^{l_\infty}$ and $\bm y^{l_\infty}$ satisfy the condition of stationary point of $L_\rho(\bT, \ww, \bm{V}, \bm{Y}, \bm{y} )$ with respect to $\bm Y$ and $\bm y$, respectively. By \eqref{eq:update-Y} and \eqref{eq:update-y}, we also have
\begin{equation}\label{eq:limit_Y}
\lim_{l \to + \infty} \norm{ \bm Y^{l+1} - \bm Y^l}_{\mathrm{F}} = 0, \quad \mathrm{and} \quad \lim_{l \to + \infty}   \norm{ \bm y^{l+1} - \bm y^l}_2 = 0.
\end{equation}
Together with \eqref{eq:lag-dif-l}, we obtain
\begin{equation}\label{eq:lim_ww}
\lim_{l \to + \infty} \norm{\mathfrak{d} \ww^{l+1} - \mathfrak{d} \ww^l }_2 =0 \quad \mathrm{and} \quad  \norm{\mathcal{L}\ww^{l+1} - \mathcal{L} \ww^l}_{\mathrm{F}} =0.
\end{equation}
Recall that $\bm V^l$ contains the $k$ eigenvectors associated with the $k$ smallest eigenvalues of $\mathcal{L} \ww^l$, and thus it is easy to check that
\begin{equation}\label{eq:lim_V}
\lim_{l \to + \infty} \norm{\bm V^{l+1} - \bm V^l }_{\mathrm{F}} =0.
\end{equation}

For the limit point $\left\lbrace \left( \bm \Theta^{l_\infty}, \bm w^{l_\infty}, \bm V^{l_\infty}, \bm Y^{l_\infty}, \bm y^{l_\infty} \right) \right\rbrace$ of any subsequence $ \left\lbrace \left( \bm \Theta^{l_s}, \bm w^{l_s}, \bm V^{l_s}, \bm Y^{l_s}, \bm y^{l_s} \right) \right\rbrace_{s \in \mathbb{N}}$, $\bm \Theta^{l_\infty}$ minimizes the following subproblem
\begin{equation*}
\begin{split}
      \bT^{l_\infty} &= \underset{\substack{\mathsf{rank}(\bT) = p-k \\ \bT \succeq \mathbf{0}}}{\mathsf{arg~min}}  ~ - \log\mathrm{det^*}(\bT) + \left\langle\bT, \bm{Y}^{l_\infty-1} \right\rangle +
      \frac{\rho}{2}\norm{\bT - \mathcal{L}\bm{w}^{l_\infty-1}}^2_{\mathrm{F}} \\
      &= \underset{\substack{\mathsf{rank}(\bT) = p-k \\ \bT \succeq \mathbf{0}}}{\mathsf{arg~min}}  ~ - \log\mathrm{det^*}(\bT) + \left\langle\bT, \bm{Y}^{l_\infty} \right\rangle +  \frac{\rho}{2}\norm{\bT - \mathcal{L}\bm{w}^{l_\infty}}^2_{\mathrm{F}} \\
      & \qquad \qquad \qquad - \left\langle\bT, \bm{Y}^{l_\infty} - \bm{Y}^{l_\infty -1} \right\rangle  + \rho \left\langle \bT, \mathcal{L}\bm{w}^{l_\infty} - \mathcal{L}\bm{w}^{l_\infty -1} \right\rangle.  
      \end{split}
\end{equation*}
By \eqref{eq:limit_Y} and \eqref{eq:lim_ww}, we conclude that $\bT^{l_\infty} $ satisfies the condition of stationary point of $L_\rho(\bT, \ww, \bm{V}, \bm{Y}, \bm{y} )$ with respect to $\bT$. Similarly, $ \ww^{l_\infty} $ minimizes the subproblem
\begin{equation*}
\begin{split}
    \ww^{l_\infty} &=
    \underset{\ww \geq \mathbf{0}}{\mathsf{arg~min}} ~ \frac{\rho}{2}\ww^\top \left(\mathfrak{d}^*\mathfrak{d} + \mathcal{L}^\ast\mathcal{L}\right) \ww + \left\langle \ww,  \mathfrak{d}^\ast\left(\bm{y}^{l_\infty-1} - \rho\bm{d}\right) \right\rangle\\
    &\qquad \qquad+ \left\langle \ww, \sL^\ast\left(\bm{S} + \eta\bm{V}^{l_\infty - 1} \left( \bm{V}^{l_\infty -1} \right)^{\top}- \bm{Y}^{l_ \infty-1} - \rho \bT^{l_\infty} \right) \right\rangle \\
    &=\underset{\ww \geq \mathbf{0}}{\mathsf{arg~min}} ~ \frac{\rho}{2}\ww^\top \left(\mathfrak{d}^\ast\mathfrak{d} + \mathcal{L}^\ast\mathcal{L}\right) \ww  + \left\langle \ww, \mathfrak{d}^\ast\left(\bm{y}^{l_\infty} - \rho\bm{d}\right) \right\rangle\\
    &\qquad \qquad + \left\langle \ww, \sL^\ast\left(\bm{S} + \eta\bm{V}^{l_\infty} \left( \bm{V}^{l_\infty} \right)^{\top}- \bm{Y}^{l_ \infty} - \rho \bT^{l_\infty} \right)  \right\rangle  +   \left\langle \mathcal{L} \ww, \bm{Y}^{l_ \infty} - \bm{Y}^{l_ \infty -1}\right\rangle\\
    &\qquad \qquad  +  \left\langle \mathfrak{d} \ww, \bm{y}^{l_\infty-1}  - \bm{y}^{l_\infty}  \right\rangle  + \eta\left\langle \mathcal{L}\ww,  \bm{V}^{l_\infty-1} \left( \bm{V}^{l_\infty-1} \right)^{\top} - \bm{V}^{l_\infty} \left( \bm{V}^{l_\infty} \right)^{\top} \right\rangle.   
\end{split}
\end{equation*}
By \eqref{eq:limit_Y}, \eqref{eq:lim_ww} and \eqref{eq:lim_V}, $\ww^{l_\infty} $ satisfies the condition of stationary point of $L_\rho(\bT, \ww, \bm{V}, \bm{Y}, \bm{y} )$ with respect to $\ww$. $\bm V^\infty$ minimizes the subproblem
\begin{equation*}
   \bm V^{l_\infty} = \underset{\bm{V} \in \mathbb{R}^{p\times k}}{\mathsf{arg~min}}  ~
    \mathsf{tr}\left(\bm{V}^\top \sL\ww^{l_\infty} \bm{V}\right), \quad \mathsf{subject~to} \  \bm{V}^\top\bm{V} = \bm{I},
\end{equation*}
which implies that $\bm V^{l_\infty} $ satisfies the condition of stationary point of $L_\rho(\bT, \ww, \bm{V}, \bm{Y}, \bm{y} )$ with respect to $\bm V$. To sum up, we can conclude that any limit point $\left\lbrace \left( \bm \Theta^{l_\infty}, \bm w^{l_\infty}, \bm V^{l_\infty}, \bm Y^{l_\infty}, \bm y^{l_\infty} \right) \right\rbrace$ of the sequence generated by Algorithm \ref{alg:k-component-ky-fan} is a stationary point of $L_\rho(\bT, \ww, \bm{V}, \bm{Y}, \bm{y} )$.
\end{proof}

\subsection{Proof of Theorem \ref{Connected heavy-tailed graphs}}\label{sec:proof-connected-heavy-tail}
\begin{proof}
Similar to the proof of Theorem \ref{k-component-convergence}, we establish the boundedness of the sequence $ \left\lbrace \left( \bm \Theta^l, \bm w^l, \bm Y^l, \bm y^l \right) \right\rbrace$ generated by Algorithm \ref{alg:heavy-tail} in Lemma \ref{Lem: bounded-HT}, and the monotonicity of $ L_\rho \left( \bm \Theta^l, \bm w^l, \bm Y^l, \bm y^l \right)$ in Lemma \ref{Lem: bounded-HL}.

\begin{lemma} \label{Lem: bounded-HT}
The sequence $ \left\lbrace \left( \bm \Theta^l, \bm w^l, \bm Y^l, \bm y^l \right) \right\rbrace$ generated by Algorithm \ref{alg:heavy-tail} is bounded.
\end{lemma}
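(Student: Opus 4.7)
The plan is to mimic the inductive argument of Lemma~\ref{Lem: bounded-T}, carrying out induction on the ADMM iteration index $l$, but now without the $\bm V$ block and with the additional concave Student-$t$ log-term in the $\ww$-subproblem. The base case at $l=0$ is given by the bounded initialization $\ww^0,\bm Y^0=\mathbf 0,\bm y^0=\mathbf 0$. Assuming $(\bT^{l-1},\ww^{l-1},\bm Y^{l-1},\bm y^{l-1})$ is bounded, I would successively bound $\bT^l$, then $\ww^l$, and finally the dual updates $\bm Y^l,\bm y^l$ through their explicit update rules.

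For $\bT^l$, I would use the closed form from Lemma~\ref{eq:prox-lema}: since the $\bT$-subproblem in Algorithm~\ref{alg:heavy-tail} is identical to~\eqref{eq:lw-admm-regular-theta}, we have $\bT^l+\bm J=\tfrac{1}{2\rho}\bm U(\boldsymbol\Gamma+\sqrt{\boldsymbol\Gamma^2+4\rho\bm I})\bm U^\top$ where $\bm U\boldsymbol\Gamma\bm U^\top$ is the eigendecomposition of $\rho(\sL\ww^{l-1}+\bm J)-\bm Y^{l-1}$. The Frobenius norm of the matrix being eigendecomposed is controlled by $\|\ww^{l-1}\|$ and $\|\bm Y^{l-1}\|_{\mathrm F}$, and the map $\gamma\mapsto \tfrac{1}{2\rho}(\gamma+\sqrt{\gamma^2+4\rho})$ is Lipschitz on $\mathbb R$, so $\|\bT^l\|_{\mathrm F}$ is bounded.

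For $\ww^l$, I would argue coercivity of the subproblem~\eqref{eq:w-update-heavy-tail}. The quadratic part $\tfrac{\rho}{2}\ww^\top(\mathfrak d^*\mathfrak d+\sL^*\sL)\ww$ is strictly convex and satisfies $\tfrac{\rho}{2}\ww^\top(\mathfrak d^*\mathfrak d+\sL^*\sL)\ww\ge\rho\|\ww\|_2^2$ since $\lambda_{\min}(\sL^*\sL)=2$ and $\mathfrak d^*\mathfrak d\succeq\mathbf 0$. The linear term is controlled by the induction hypothesis and by $\|\bT^l\|_{\mathrm F}$ just obtained. Crucially, the Student-$t$ term $\tfrac{p+\nu}{n}\sum_i\log\!\bigl(1+\bm x_{i,*}^\top\sL\ww\,\bm x_{i,*}/\nu\bigr)$ is nonnegative for every $\ww\ge\mathbf 0$ because $\sL\ww\succeq\mathbf 0$, so it only helps coercivity rather than destroying it. Hence the objective tends to $+\infty$ as $\|\ww\|\to\infty$ on the nonnegative orthant, and the minimizer $\ww^l$ exists and is bounded (uniformly in $l$ since the bounds depend continuously on the preceding bounded iterates). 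The dual updates $\bm Y^l=\bm Y^{l-1}+\rho(\bT^l-\sL\ww^l)$ and $\bm y^l=\bm y^{l-1}+\rho(\mathfrak d\ww^l-\bm d)$ are then bounded by the triangle inequality.

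The main obstacle is the nonlinear Student-$t$ term: unlike the linear $\langle\bm S,\sL\ww\rangle$ term in the Gaussian case, one cannot immediately argue that the $\ww$-subproblem is a finite-valued quadratic program. The saving feature is the sign: the term is nonnegative, so it cannot sabotage coercivity, but some care is needed because Algorithm~\ref{alg:heavy-tail} does not solve the $\ww$-subproblem exactly — it applies an MM majorization followed by a projected gradient step. I would therefore argue that either (i) $\ww^l$ is defined as the exact minimizer (in which case the coercivity argument above directly yields boundedness), or (ii) the update rule~\eqref{eq:w-sol} is a nonexpansive projected step starting from a bounded iterate with bounded gradient coefficient $\bm a^j+\bm b^j$, so iterating it finitely many times preserves boundedness. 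Either route closes the induction and completes the proof.
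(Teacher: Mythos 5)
Your proposal is correct and follows essentially the same route as the paper: induction on $l$, boundedness of $\bT^l$ from its closed-form eigenvalue update, coercivity of the $\ww$-subproblem (using $\lambda_{\min}(\sL^*\sL)=2$, $\mathfrak d^*\mathfrak d\succeq\mathbf 0$, and nonnegativity of the Student-$t$ log term on $\ww\ge\mathbf 0$), and then the triangle inequality for the dual updates. The only difference is in how the inexact $\ww$-update is handled: the paper resolves your route (i)/(ii) dilemma by invoking the MM descent property, so that all inner iterates remain in the (bounded, by coercivity) initial sublevel set of the subproblem objective and the limit is a stationary point, which is cleaner than your ``finitely many nonexpansive steps'' alternative.
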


\begin{proof}
Let $\bm w^0$, $\bm Y^0$ and $\bm y^0$ be the initialization of the sequences $\left\lbrace \bm w^l \right\rbrace$, $\left\lbrace \bm Y^l\right\rbrace$ and $\left\lbrace \bm y^l \right\rbrace$, respectively, and $\norm{\bm w^0}$, $\norm{\bm Y^0}_{\mathrm{F}}$ and $\norm{\bm y^0}$ are bounded. 

We prove the boundedness of the sequence by induction. Notice that the subproblem for $\bm \Theta$ is the same with that in \eqref{eq:bt-update-k-comp-l}, and thus we can directly get the boundedness of $\norm{\bT^l}_{\mathrm{F}}$ for $l=1$.
The sequence $\left\lbrace \ww^l \right\rbrace$ is established by solving the subproblems
\begin{equation}
\begin{split}
    \underset{\ww \geq \mathbf{0}}{\mathsf{min}} & ~~ \frac{\rho}{2}\ww^\top\left(\mathfrak{d}^*\mathfrak{d} + \mathcal{L}^*\mathcal{L}\right)\ww
    - \left\langle \ww, \sL^*\left(\bm{Y}^{l-1} + \rho \bT^l\right) - \mathfrak{d}^*\left(\bm{y}^{l-1} - \rho\bm{d}\right) \right\rangle \\
     & ~ +
     \dfrac{p+\nu}{n}\sum_{i=1}^{n}\log\left(1 + \dfrac{\bm{x}^\top_{i,*}\sL\ww{\bm{x}_{i,*}}}{\nu}\right).
    \label{eq:w-update-heavy-tail-l}
    \end{split}
\end{equation}
Let 
\begin{equation}
g_l(\ww) = \frac{\rho}{2}\ww^\top\left(\mathfrak{d}^*\mathfrak{d} + \mathcal{L}^*\mathcal{L}\right)\ww + \left\langle \ww, \bm a^l \right\rangle +
\dfrac{p+\nu}{n}\sum_{i=1}^{n}\log\left(1 + \dfrac{\bm{x}^\top_{i,*}\sL\ww{\bm{x}_{i,*}}}{\nu}\right),
\end{equation}
where $\bm a^l =\sL^*\left(\bm{S} - \bm{Y}^{l-1} - \rho \bT^l \right) + \mathfrak{d}^*\left(\bm{y}^{l-1} - \rho\bm{d}\right)$.
Note that $\norm{\bm a^1}$ is bounded because $\norm{\bm{Y}^{0}}_{\mathrm{F}}$, $\norm{\bm{y}^{0}}$, and $\norm{\bT^1}_{\mathrm{F}}$ are bounded. In the proof of Lemma \ref{Lem: bounded-T}, we have shown that $\mathcal{L}^*\mathcal{L}$ is a positive definite matrix with the minimum eigenvalue $\lambda_{\min} \left( \mathcal{L}^*\mathcal{L}\right) = 2$, and $\mathfrak{d}^*\mathfrak{d}$ is a positive semi-definite matrix.
Since
$\log\left(1 + \dfrac{\bm{x}^\top_{i,*}\sL\ww{\bm{x}_{i,*}}}{\nu} \right) \geq 0$ for any $\ww \geq \bm 0$, we have
\begin{equation}
\lim_{\norm{\ww} \to +\infty} g_1(\ww) \geq \lim_{\norm{\ww} \to + \infty} \rho \ww^\top \ww + \langle \ww, \bm a^1\rangle = + \infty.
\end{equation}
Thus $g_1(\ww)$ is coercive. Recall that we solve the optimization \eqref{eq:w-update-heavy-tail-l} by the MM framework.
Hence, the objective function value is monotonically decreasing as a function of the iterations,
and $\ww^l$ is a stationary point of \eqref{eq:w-update-heavy-tail-l}. Then the coercivity of $g_1(\ww)$ yields the boundedness of $\norm{ \ww^1}$. Finally, $\norm{\bm{Y}^1}_{\mathrm{F}}$ and $\norm{\bm{y}^1}$ are also bounded, because $\bm{Y}^1$ and $\bm{y}^1$ are updated as done in \eqref{eq:Y-update-1} and \eqref{eq:y-update-1}, respectively, and thus the proof is the same.

Now we assume that $ \left\lbrace \left( \bm \Theta^{l-1}, \bm w^{l-1}, \bm Y^{l-1}, \bm y^{l-1} \right) \right\rbrace$ is
bounded for some $l \geq 1$, and check the boundedness of $\left\lbrace \left( \bm \Theta^l, \bm w^l, \bm Y^l, \bm y^l \right) \right\rbrace$.
Similar to the proof in \eqref{eq:bt-update-k-comp-l}, we can prove that $\norm{\bm \Theta^l}_{\mathrm{F}}$ is bounded.
By \eqref{eq:w-update-heavy-tail-l}, we can also obtain the boundedness of $\norm{\bm w^l}$.
We can also obtain that $\norm{\bm Y^l}$ and $\norm{\bm y}^l$ are bounded according to the boundedness
of $\norm{\bm \Theta^l}_{\mathrm{F}}$, $\norm{\bm w^l}$, $\norm{\bm Y^{l-1}}$ and $\norm{\bm y^{l-1}}$.
Thus, $\left\lbrace \left( \bm \Theta^l, \bm w^l, \bm Y^l, \bm y^l \right) \right\rbrace$ is bounded, completing the induction.
Therefore, we establish the boundedness of the sequence $\left\lbrace \left( \bm \Theta^l, \bm w^l, \bm Y^l, \bm y^l \right) \right\rbrace$.
\end{proof}

\begin{lemma} \label{Lem: bounded-HL}
The sequence $ L_\rho \left( \bm \Theta^l, \bm w^l, \bm Y^l, \bm y^l \right)$ generated by Algorithm \ref{alg:heavy-tail} is lower bounded, and
\begin{equation}
L_\rho \left( \bm \Theta^{l+1}, \bm w^{l+1}, \bm Y^{l+1}, \bm y^{l+1} \right) \leq  L_\rho \left( \bm \Theta^l, \bm w^l, \bm Y^l, \bm y^l \right), \quad \forall \, l \in \mathbb{N}_+,
\end{equation}
holds for any sufficiently large $\rho$.
\end{lemma}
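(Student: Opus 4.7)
The plan is to mirror the decomposition used in the proof of Lemma~\ref{Lem: bounded-L}, stripped of the $\bm V$ block and adapted to handle the nonconvex Student-$t$ log term $g(\ww) := \tfrac{p+\nu}{n}\sum_i \log(1+\bm x_{i,*}^\top\mathcal L\ww\bm x_{i,*}/\nu)$ that now appears in the $\ww$-subproblem. I would split the per-iteration change of $L_\rho$ into three conditional changes corresponding to the $\bT$-step, the $\ww$-step, and the dual step, and bound each separately.

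For lower boundedness, I would exploit Lemma~\ref{Lem: bounded-HT} to confine the iterates to a bounded set, then note that every term of $L_\rho$ is continuous there: the Student-$t$ log term is finite for every $\ww \geq \bm 0$; the term $-\log\det(\bT + \bm J)$ is continuous because $\bT \succeq \bm 0$ together with $\bm J = \tfrac{1}{p}\bm 1\bm 1^\top \succeq \bm 0$ keeps $\bT + \bm J \succ \bm 0$; and the remaining terms are polynomial in the variables. Continuity on a bounded set then yields a uniform lower bound.

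For the primal descent, the $\bT$-subproblem is exactly the strictly convex one of Lemma~\ref{eq:prox-lema} via the affine shift $\boldsymbol\Omega = \bT + \bm J$, whose unique minimizer is $\bT^{l+1}$, immediately giving $L_\rho(\bT^{l+1}, \ww^l, \bm Y^l, \bm y^l) \leq L_\rho(\bT^l, \ww^l, \bm Y^l, \bm y^l)$. For the $\ww$-subproblem, the MM framework with the linear upper bound \eqref{eq:ub-student} monotonically decreases the subproblem objective and converges to a stationary point $\ww^{l+1}$; since this objective equals $L_\rho(\bT^{l+1}, \cdot, \bm Y^l, \bm y^l)$ up to constants, $L_\rho(\bT^{l+1}, \ww^{l+1}, \bm Y^l, \bm y^l) \leq L_\rho(\bT^{l+1}, \ww^l, \bm Y^l, \bm y^l)$ follows. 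The dual step contributes an exact increase of $\rho^{-1}(\norm{\bm Y^{l+1} - \bm Y^l}_{\mathrm F}^2 + \norm{\bm y^{l+1} - \bm y^l}_2^2)$. To quantify primal descent against this dual ascent, I would redo the telescoping calculation of \eqref{eq:Lag-dif}--\eqref{eq:lag-dif-l}, substituting the first-order stationarity condition of the nonconvex $\ww$-subproblem to extract a quadratic descent of order $\tfrac{\rho}{2}(\norm{\mathfrak d(\ww^{l+1}-\ww^l)}_2^2 + \norm{\mathcal L(\ww^{l+1}-\ww^l)}_{\mathrm F}^2)$ that dominates the dual ascent for $\rho$ large enough, in the spirit of \eqref{eq:rho_bound}.

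The hard part will be the analogue of \eqref{eq:I-23n}: the KKT condition at $\ww^{l+1}$ now carries the extra nonlinear gradient $\nabla_{\ww}g(\ww^{l+1}) = \tfrac{p+\nu}{n}\sum_i \mathcal L^*(\bm x_{i,*}\bm x_{i,*}^\top)/(1 + \bm x_{i,*}^\top \mathcal L \ww^{l+1}\bm x_{i,*}/\nu)$, which produces an additional bracket $\langle \nabla_{\ww}g(\ww^{l+1}), \ww^{l+1}-\ww^l\rangle$ that must be combined with the Lagrangian term $g(\ww^l) - g(\ww^{l+1})$. On the bounded iterate set of Lemma~\ref{Lem: bounded-HT} the denominators stay bounded (they are bounded below by $1$ since $\bm x^\top \mathcal L\ww \bm x \geq 0$ for $\ww \geq \bm 0$, and bounded above by the boundedness of $\ww^l$), so $\nabla_{\ww}g$ is Lipschitz there with some finite constant $L_g$; the descent lemma then controls this combined contribution by $\tfrac{L_g}{2}\norm{\ww^{l+1}-\ww^l}_2^2$ in absolute value. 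Since $\lambda_{\min}(\mathcal L^*\mathcal L) = 2$ (as shown in the proof of Lemma~\ref{Lem: bounded-HT}), this extra quadratic is absorbed into $\tfrac{\rho}{2}\norm{\mathcal L(\ww^{l+1}-\ww^l)}_{\mathrm F}^2$ as soon as $\rho > L_g/2$, and the remaining argument closes exactly as \eqref{eq:rho_bound} does in the $k$-component proof, with an enlarged constant $c$ to also swallow the Student-$t$ Lipschitz contribution.
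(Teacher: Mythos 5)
Your proposal is correct and follows essentially the same route as the paper: lower boundedness from the boundedness of the iterates (Lemma~\ref{Lem: bounded-HT}), descent in the $\bT$- and $\ww$-blocks from their optimality/stationarity conditions, exact quantification of the dual ascent, and control of the Student-$t$ term via the descent lemma for its Lipschitz-gradient so that a sufficiently large $\rho$ dominates. The only cosmetic difference is that you apply the Lipschitz bound in $\ww$-space with constant $L_g$ and convert via $\lambda_{\min}(\mathcal{L}^*\mathcal{L})=2$, whereas the paper applies it directly to $r(\cdot)$ in $\bm{L}$-space with constant $L_r$ acting on $\norm{\mathcal{L}\ww^{l+1}-\mathcal{L}\ww^{l}}_{\mathrm{F}}^2$; these are equivalent up to constants.
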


\begin{proof}
According to \eqref{eq:student-t}, we have
\begin{align}\label{eq:tyler-l}
  L_\rho(\bT^l, \ww^l, \bm{Y}^l, \bm{y}^l) = &~ \dfrac{p+\nu}{n}\sum_{i=1}^{n}\log\left(1 + \dfrac{\bm{x}^\top_{i,*}\sL\ww^l{\bm{x}_{i,*}}}{\nu}\right)
  - \log\mathrm{det}\left(\boldsymbol{\Theta}^l + \bm{J}\right)
   + \left\langle \bm{y}^l, \mathfrak{d}\bm{w}^l - \bm{d} \right\rangle 
   \nonumber\\
  & + \frac{\rho}{2}\norm{\mathfrak{d}\bm{w}^l - \bm{d}}^2_{2} + \left\langle \bm{Y}^l, \boldsymbol{\Theta}^l - \mathcal{L}\bm{w}^l \right\rangle
  +\frac{\rho}{2}\norm{\bT^l - \mathcal{L}\bm{w}^l }^2_{\mathrm{F}}.
\end{align}
We can see that the lower boundedness of the sequence $ L_\rho \left( \bm \Theta^l, \bm w^l, \bm Y^l, \bm y^l \right)$ can be established by the boundedness of $\left\lbrace \left( \bm \Theta^l, \bm w^l, \bm Y^l, \bm y^l \right) \right\rbrace$ in Lemma \ref{Lem: bounded-HT}.

By a similar argument in the proof of Lemma \ref{Lem: bounded-L}, we can also establish that 
 \begin{equation} \label{eq:T_l1}
L_\rho \left( \bm \Theta^{l+1}, \bm w^{l},  \bm Y^{l}, \bm y^{l} \right) \leq  L_\rho \left( \bm \Theta^l, \bm w^l, \bm Y^l, \bm y^l \right), \quad \forall \, l \in \mathbb{N}_+.
\end{equation}

One has
\begin{align}
    &L_\rho(\bT^{l+1}, \ww^{l}, \bm{Y}^{l}, \bm{y}^{l} ) -  L_\rho(\bT^{l+1}, \ww^{l+1}, \bm{Y}^{l+1}, \bm{y}^{l+1} ) \nonumber \\
    =&\underbrace{\left\langle \bm{y}^{l}, \mathfrak{d}\bm{w}^{l} - \bm{d} \right\rangle - \left\langle \bm{y}^{l+1}, \mathfrak{d}\bm{w}^{l+1} - \bm{d} \right\rangle}_{I_1} +\underbrace{\left \langle \bm{Y}^{l}, \boldsymbol{\Theta}^{l+1} - \mathcal{L}\bm{w}^{l} \right\rangle - \left \langle \bm{Y}^{l+1}, \boldsymbol{\Theta}^{l+1} - \mathcal{L}\bm{w}^{l+1} \right\rangle}_{I_2} \nonumber  \\
     & + r\left( \mathcal{L}\bm w^l \right) - r \left( \mathcal{L} \bm w^{l+1} \right)  + \frac{\rho}{2}\norm{\mathfrak{d}\bm{w}^{l} - \bm{d}}^2_{2} - \frac{\rho}{2}\norm{\mathfrak{d}\bm{w}^{l+1} - \bm{d}}^2_{2} \nonumber \\ 
    &+ \frac{\rho}{2}\norm{\bT^{l+1} - \mathcal{L}\bm{w}^{l}}^2_{\mathrm{F}} -\frac{\rho}{2} \norm{\bT^{l+1} - \mathcal{L}\bm{w}^{l+1}}^2_{\mathrm{F}}, \label{eq:Lag-dif-k}
\end{align}
where $r(\bm L) = \dfrac{p+\nu}{n}\sum_{i=1}^{n}\log\left(1 + \dfrac{\bm{x}^\top_{i,*}\bm{L}{\bm{x}_{i,*}}}{\nu}\right)$. According to \eqref{eq:I2} and \eqref{eq:I3}, we obtain
\begin{equation}\label{eq:I-12K}
I_1 + I_2 = \left\langle \mathfrak{d}^\ast \bm{y}^{l} -\mathcal{L}^\ast \bm{Y}^{l}, \bm{w}^{l} - \bm{w}^{l+1} \right\rangle - \rho \norm{\mathfrak{d}\bm{w}^{l+1} - \bm{d}}_2^2 - \rho \norm{\boldsymbol{\Theta}^{l+1} - \mathcal{L}\bm{w}^{l+1}}_{\mathrm{F}}^2.
\end{equation}
According to the convergence result of the majorization-minimization framework \citep{sun2017}, we know that any limit point of the sequence is a stationary point of the following problem
\begin{equation}
    \underset{\ww \geq \mathbf{0}}{\mathsf{min}} ~ 
    \dfrac{p+\nu}{n}\sum_{i=1}^{n}\log\left(1 + \dfrac{\bm{x}^\top_{i,*}\sL\ww^l{\bm{x}_{i,*}}}{\nu}\right)
    + \frac{\rho}{2}\ww^\top \left(\mathfrak{d}^*\mathfrak{d} + \mathcal{L}^*\mathcal{L}\right) \ww  - \left\langle \ww, \sL^*\left(  \bm{Y}^{l} + \rho \bT^l \right) - \mathfrak{d}^*\left(\bm{y}^{l} - \rho\bm{d}\right) \right\rangle.
    \label{eq:w-update-new-l1-k}
\end{equation}
The set of the stationary points for the optimization \eqref{eq:w-update-new-l1-k} is defined by
\begin{equation} \label{eq:stationary}
\mathcal{X} = \left\lbrace \bm w | \nabla g_l (\bm w)^\top (\bm z - \bm w) \geq 0, \ \forall \bm z \geq \bm 0 \right\rbrace,
\end{equation}
where $g_l(\bm w) $ is the objective function in \eqref{eq:w-update-new-l1-k}. The existence of the limit point can be guaranteed by the the coercivity of $g_l(\ww)$, which has been established in the proof of Lemma \ref{Lem: bounded-HT}. Therefore, $\bm w^{l+1}$ is a stationary point. By taking $\bm z = \bm w^l$ and $\bm w = \bm w^{l+1}$ in \eqref{eq:stationary}, we obtain
\begin{equation*}
\left ( \mathcal{L}^\ast \left( \nabla r \left ( \mathcal{L}\bm w^{l+1} \right) \right) + \rho \left(\mathfrak{d}^*\mathfrak{d} + \mathcal{L}^\ast\mathcal{L}\right) \ww^{l+1}  -  \sL^\ast \left(  \bm{Y}^{l} + \rho \bT^l \right) + \mathfrak{d}^\ast \left(\bm{y}^{l} - \rho\bm{d}\right)  \right) ^\top \left (\bm w^l - \bm w^{l+1} \right) \geq 0.
\end{equation*}
Thus, we have
\begin{equation} \label{eq:stat-ineq}
\begin{split}
& \left\langle \mathfrak{d}^\ast \bm{y}^{l} -\mathcal{L}^\ast \bm{Y}^{l}, \bm{w}^{l} - \bm{w}^{l+1} \right\rangle \geq -\left\langle \nabla r \left (\sL\bm w^{l+1} \right) , \mathcal{L}\bm w^l - \mathcal{L}\bm w^{l+1} \right\rangle \\
& \qquad \qquad \qquad  + \rho \left\langle -\left(\mathfrak{d}^\ast \mathfrak{d} + \mathcal{L}^*\mathcal{L}\right) \ww^{l+1}  + \sL^\ast \bT^l + \mathfrak{d}^*\bm{d},  \bm w^l - \bm w^{l+1} \right\rangle,
\end{split}
\end{equation}
Plugging \eqref{eq:I-12K} and \eqref{eq:stat-ineq} into \eqref{eq:Lag-dif-k}, we obtain
\begin{equation}\label{eq:Lag-bound-T-l1}
\begin{split}
&L_\rho(\bT^{l+1}, \ww^{l}, \bm{Y}^{l}, \bm{y}^{l} ) -  L_\rho(\bT^{l+1}, \ww^{l+1}, \bm{Y}^{l+1}, \bm{y}^{l+1} ) \\
  \geq & \frac{\rho}{2} \norm{\mathfrak{d} \ww^{l+1} - \mathfrak{d} \ww^l}_2^2 + \frac{\rho}{2} \norm{\mathcal{L}\ww^{l+1} - \mathcal{L} \ww^l}_2^2 - \rho \norm{\mathfrak{d} \ww^{l+1} - \bm d}_2^2 - \rho \norm{\mathcal{L}\ww^{l+1} -\boldsymbol{\Theta}^{l+1} }_{\mathrm{F}}^2 \\
  &+ r\left( \mathcal{L}\bm w^l \right) - r \left( \mathcal{L} \bm w^{l+1} \right) -\left\langle \nabla r \left (\sL\bm w^{l+1} \right) , \mathcal{L}\bm w^l - \mathcal{L}\bm w^{l+1} \right\rangle  \\
 \geq & \frac{\rho}{2} \norm{\mathfrak{d} \ww^{l+1} - \mathfrak{d} \ww^l }_2^2 + \frac{\rho - L_r}{2} \norm{\mathcal{L}\ww^{l+1} - \mathcal{L} \ww^l}_{\mathrm{F}}^2- \frac{1}{\rho} \norm{\bm y^{l+1} - \bm y^l}_2^2 - \frac{1}{\rho} \norm{\bm Y^{l+1} - \bm Y^{l}}_{\mathrm{F}}^2,
\end{split}
\end{equation}
where the last inequality is due to the fact that $r(\bm L)$ is a concave function and has $L_r$-Lipschitz continuous gradient, in which $L_r>0$ is a constant, thus we have
\begin{equation}
r\left( \mathcal{L}\bm w^l \right) - r \left( \mathcal{L} \bm w^{l+1} \right) -\left\langle \nabla r \left (\sL\bm w^{l+1} \right) , \mathcal{L}\bm w^l - \mathcal{L}\bm w^{l+1} \right\rangle \geq -\frac{L_r}{2} \norm{\mathcal{L}\ww^{l+1} - \mathcal{L} \ww^l}_{\mathrm{F}}^2.
\end{equation}

By calculation, we obtain that if $\rho$ is sufficiently large such that
\begin{equation}\label{eq:rho-ht}
\rho \geq \max \left( L_r, \max_l \frac{L_r\norm{\mathcal{L}\ww^{l+1} - \mathcal{L} \ww^l}_{\mathrm{F}}^2 + \left( L_r^2 \norm{\mathcal{L}\ww^{l+1} - \mathcal{L} \ww^l}_{\mathrm{F}}^4 + 8 a b c  \right)^{\frac{1}{2}} }{2a} \right)
\end{equation}
holds with some constant $c>1$, where $a = \norm{\mathfrak{d} \ww^{l+1} - \mathfrak{d} \ww^l }_2^2 + \norm{\mathcal{L}\ww^{l+1} - \mathcal{L} \ww^l}_{\mathrm{F}}^2$
and $b = \norm{\bm y^{l+1} - \bm y^l}_2^2 + \norm{\bm Y^{l+1} - \bm Y^{l}}_{\mathrm{F}}^2 $, then, together with \eqref{eq:T_l1}, we conclude that
\begin{equation}\label{eq:Lag-descreasing-H}
L_\rho(\bT^{l}, \ww^{l}, \bm{Y}^{l}, \bm{y}^{l} ) \geq  L_\rho(\bT^{l+1}, \ww^{l}, \bm{Y}^{l}, \bm{y}^{l} ) \geq  L_\rho(\bT^{l+1}, \ww^{l+1}, \bm{Y}^{l+1}, \bm{y}^{l+1} ) ,  \nonumber
\end{equation}
for any $l \in \mathbb{N}_+$. Note that for the case of Gaussian distribution in Section \ref{subsec:connected-graph}, the constant $L_r$ will be zero, and $\rho$ in \eqref{eq:rho-ht} will be consistent with that in \eqref{eq:rho_bound}.
\end{proof}

Now we are ready to prove Theorem \ref{Connected heavy-tailed graphs}. By Lemma \ref{Lem: bounded-HT}, the sequence $ \left\lbrace \left( \bm \Theta^l, \bm w^l, \bm Y^l, \bm y^l \right) \right\rbrace$ generated by Algorithm \ref{alg:heavy-tail} is bounded. Therefore, there exists at least one convergent subsequence $ \left\lbrace \left( \bm \Theta^{l_s}, \bm w^{l_s}, \bm Y^{l_s}, \bm y^{l_s} \right) \right\rbrace_{s \in \mathbb{N}}$, which converges to the limit point denoted by $\left\lbrace \left( \bm \Theta^{l_\infty}, \bm w^{l_\infty}, \bm Y^{l_\infty}, \bm y^{l_\infty} \right) \right\rbrace$. By Lemma \ref{Lem: bounded-HL}, we obtain that the sequence $L_\rho \left( \bm \Theta^l, \bm w^l, \bm Y^l, \bm y^l \right)$ dedfined in \eqref{eq:tyler-l} is monotonically decreasing and lower bounded, implying that $L_\rho \left( \bm \Theta^l, \bm w^l, \bm Y^l, \bm y^l \right)$ is convergent. We can get $\lim_{l \to + \infty} L_\rho \left( \bm \Theta^{l}, \bm w^{l}, \bm Y^{l}, \bm y^{l} \right) = L_\rho \left( \bm \Theta^{l_\infty}, \bm w^{l_\infty}, \bm Y^{l_\infty}, \bm y^{l_\infty} \right)$. The \eqref{eq:Lag-bound-T-l1}, \eqref{eq:rho-ht} and \eqref{eq:Lag-descreasing-H} together yields
\begin{equation}
\begin{split}
&L_\rho(\bT^{l}, \ww^{l}, \bm{Y}^{l}, \bm{y}^{l} ) -  L_\rho(\bT^{l+1}, \ww^{l+1}, \bm{Y}^{l+1}, \bm{y}^{l+1} ) \\
& \qquad \qquad \qquad \qquad \qquad \geq (c-1)\rho\left( \norm{\mathcal{L}\ww^{l+1} -\boldsymbol{\Theta}^{l+1} }_{\mathrm{F}}^2 + \norm{\mathfrak{d} \ww^{l+1} - \bm d}_2^2 \right).
\end{split}
\end{equation}
The convergence of $L_\rho \left( \bm \Theta^l, \bm w^l, \bm Y^l, \bm y^l \right)$ yields 
\begin{equation}\label{eq:limit_w-h}
\lim_{l \to + \infty} \norm{\mathcal{L}\ww^{l} -\boldsymbol{\Theta}^{l} }_{\mathrm{F}} = 0, \quad \mathrm{and} \quad \lim_{l \to + \infty}   \norm{\mathfrak{d} \ww^{l} - \bm d}_2 = 0.
\end{equation}
By the updating of $\bm Y^{l+1}$ and $\bm y^{l+1}$, we can get
\begin{equation}\label{eq:limit_Y-h}
\lim_{l \to + \infty} \norm{ \bm Y^{l+1} - \bm Y^l}_{\mathrm{F}} = 0, \quad \mathrm{and} \quad \lim_{l \to + \infty}   \norm{ \bm y^{l+1} - \bm y^l}_2 = 0.
\end{equation}
Together with \eqref{eq:Lag-bound-T-l1}, we obtain
\begin{equation}\label{eq:lim_ww-h}
\lim_{l \to + \infty} \norm{\mathfrak{d} \ww^{l+1} - \mathfrak{d} \ww^l }_2 =0 \quad \mathrm{and} \quad  \norm{\mathcal{L}\ww^{l+1} - \mathcal{L} \ww^l}_{\mathrm{F}} =0.
\end{equation}

Similar to the proof of Theorem~\ref{k-component-convergence}, by \eqref{eq:limit_w-h}, \eqref{eq:limit_Y-h} and \eqref{eq:lim_ww-h}, we can prove that $\bT^{l_\infty}$, $\ww^{l_\infty}$, $\bm Y^{l_\infty}$ and $\bm y^{l_\infty}$ satisfy the condition of stationary point of $L_\rho(\bT, \ww, \bm{V}, \bm{Y}, \bm{y} )$ with respect to $\bT$, $\ww$, $\bm Y$ and $\bm y$, respectively. To sum up, we can conclude that any limit point $\left\lbrace \left( \bm \Theta^{l_\infty}, \bm w^{l_\infty}, \bm Y^{l_\infty}, \bm y^{l_\infty} \right) \right\rbrace$ of the sequence is a stationary point of $L_\rho(\bT, \ww, \bm{Y}, \bm{y} )$.
\end{proof}

\subsection{Proof of Theorem~\ref{heavy-tail-k-comp-convergence}}\label{sec:proof-heavy-tail-k-comp}
\begin{proof}
The proof of Theorem~\ref{heavy-tail-k-comp-convergence} is similar to the proof of Theorems \ref{k-component-convergence} and
\ref{Connected heavy-tailed graphs}. We can establish the boundedness of the sequence
$ \left\lbrace \left( \bm \Theta^l, \bm w^l, \bm V^l, \bm Y^l, \bm y^l \right) \right\rbrace$ generated by Algorithm \ref{alg:heavy-tail-k-comp} as done in Lemma \ref{Lem: bounded-T} and \ref{Lem: bounded-HT}. Similar to Lemmas \ref{Lem: bounded-L} and \ref{Lem: bounded-HL}, we can establish the monotonicity and boundedness of $ L_\rho \left( \bm \Theta^l, \bm w^l, \bm V^l, \bm Y^l, \bm y^l \right)$. Therefore, we omit the details of the proof of Theorem \ref{heavy-tail-k-comp-convergence} to avoid redundancy.
\end{proof}

\vskip 0.2in
\newpage
\bibliography{manuscript}

\begin{thebibliography}{107}
\providecommand{\natexlab}[1]{#1}
\providecommand{\url}[1]{\texttt{#1}}
\expandafter\ifx\csname urlstyle\endcsname\relax
  \providecommand{\doi}[1]{doi: #1}\else
  \providecommand{\doi}{doi: \begingroup \urlstyle{rm}\Url}\fi

\bibitem[Absil et~al.(2007)Absil, Mahony, and Sepulchre]{absil2008}
P.-A. Absil, R.~Mahony, and R.~Sepulchre.
\newblock \emph{Optimization Algorithms on Matrix Manifolds}.
\newblock Princeton University Press, Princeton, NJ, 2007.

\bibitem[{Agrawal} et~al.(2020){Agrawal}, {Roy}, and {Uhler}]{agrawal2019}
R.~{Agrawal}, U.~{Roy}, and C.~{Uhler}.
\newblock {Covariance Matrix Estimation under Total Positivity for Portfolio
  Selection}.
\newblock \emph{Journal of Financial Econometrics}, 09 2020.

\bibitem[Anandkumar et~al.(2012)Anandkumar, Tan, Huang, and Willsky]{anima2012}
A.~Anandkumar, V.~Y.~F. Tan, F.~Huang, and A.~S. Willsky.
\newblock High-dimensional {Gaussian} graphical model selection: Walk
  summability and local separation criterion.
\newblock \emph{Journal of Machine Learning Research}, 13\penalty0
  (1):\penalty0 2293--2337, 2012.

\bibitem[Banerjee et~al.(2008)Banerjee, Ghaoui, and d'Aspremont]{banerjee2008}
O.~Banerjee, L.~E. Ghaoui, and A.~d'Aspremont.
\newblock Model selection through sparse maximum likelihood estimation for
  multivariate gaussian or binary data.
\newblock \emph{Journal of Machine Learning Research}, 9\penalty0
  (15):\penalty0 485--516, 2008.

\bibitem[Bonanno et~al.(2003)Bonanno, Caldarelli, Lillo, and
  Mantegna]{bonanno2003}
G.~Bonanno, G.~Caldarelli, F.~Lillo, and R.~N. Mantegna.
\newblock Topology of correlation-based minimal spanning trees in real and
  model markets.
\newblock \emph{Physical Review E}, 68, 2003.

\bibitem[Bonanno et~al.(2004)Bonanno, Caldarelli, Lillo, Miccich\'e,
  Vandewalle, and Mantegna]{bonanno2004}
G.~Bonanno, G.~Caldarelli, F.~Lillo, S.~Miccich\'e, N.~Vandewalle, and R.~N.
  Mantegna.
\newblock Networks of equities in financial markets.
\newblock \emph{The European Physical Journal B}, 38:\penalty0 363--371, 2004.

\bibitem[Boyd et~al.(2011)Boyd, Parikh, Chu, Peleato, and Eckstein]{boyd2011}
S.~Boyd, N.~Parikh, E.~Chu, B.~Peleato, and J.~Eckstein.
\newblock Distributed optimization and statistical learning via the alternating
  direction method of multipliers.
\newblock \emph{Foundations and Trends in Machine Learning}, 3\penalty0
  (1):\penalty0 1--122, 2011.

\bibitem[Carlsson and M\'emoli(2010)]{carlsson2010}
G.~Carlsson and F.~M\'emoli.
\newblock Characterization, stability and convergence of hierarchical
  clustering methods.
\newblock \emph{Journal of Machine Learning Research}, 11:\penalty0 1425--1470,
  2010.

\bibitem[Chen et~al.(2019)Chen, Li, and Bruna]{chen2019}
Z.~Chen, L.~Li, and J.~Bruna.
\newblock Supervised community detection with graph neural networks.
\newblock In \emph{International Conference on Learning Representations
  (ICLR'19)}, 2019.

\bibitem[{Chepuri} et~al.(2017){Chepuri}, {Liu}, {Leus}, and
  {Hero}]{chepuri2017}
S.~P. {Chepuri}, S.~{Liu}, G.~{Leus}, and A.~O. {Hero}.
\newblock Learning sparse graphs under smoothness prior.
\newblock In \emph{2017 IEEE International Conference on Acoustics, Speech and
  Signal Processing (ICASSP)}, pages 6508--6512, 2017.

\bibitem[Chung(1997)]{chung1997}
F.~R.~K. Chung.
\newblock \emph{Spectral Graph Theory}, volume~92.
\newblock CBMS Regional Conference Series in Mathematics, 1997.

\bibitem[Cont(2001)]{cont2001}
R.~Cont.
\newblock Empirical properties of asset returns: stylized facts and statistical
  issues.
\newblock \emph{Quantitative Finance}, 1:\penalty0 223--236, 2001.

\bibitem[Coutino et~al.(2019)Coutino, Isufi, Maehara, and Leus]{geert2019}
M.~Coutino, E.~Isufi, T.~Maehara, and G.~Leus.
\newblock State-space network topology identification from partial
  observations.
\newblock In \emph{arXiv: 1906.10471}, 2019.

\bibitem[Cs\'ardi(2019)]{csardi2019}
G.~Cs\'ardi.
\newblock \textsf{igraph}: Network analysis and visualization.
\newblock \emph{CRAN Vignette}, 2019.

\bibitem[D.~Bates(2013)]{bates2013}
D.~Eddelbuettel D.~Bates.
\newblock Fast and elegant numerical linear algebra using the
  \textsf{RcppEigen} package.
\newblock \emph{Journal of Statistical Software}, 52, 2013.

\bibitem[Danaher et~al.(2014)Danaher, Wang, and Witten]{danaher2014}
P.~Danaher, P.~Wang, and D.~M. Witten.
\newblock {The joint graphical lasso for inverse covariance estimation across
  multiple classes}.
\newblock \emph{Journal of the Royal Statistical Society Series B}, 76\penalty0
  (2):\penalty0 373--397, 2014.

\bibitem[de~M.~Cardoso and Palomar(2020)]{cardoso2020}
J.~V. de~M.~Cardoso and D.~P. Palomar.
\newblock Learning undirected graphs in financial markets.
\newblock In \emph{54th Annual Asilomar Conference on Signals, Systems, and
  Computers}, 2020.

\bibitem[de~Prado(2016)]{deprado2016}
M.~L. de~Prado.
\newblock Building diversified portfolios that outperform out of sample.
\newblock \emph{The Journal of Portfolio Management}, 42\penalty0 (4):\penalty0
  59--69, 2016.

\bibitem[de~Prado(2020)]{deprado2020}
M.~L. de~Prado.
\newblock \emph{Machine Learning for Asset Managers (Elements in Quantitative
  Finance)}.
\newblock Cambridge University Press, 2020.

\bibitem[Diamond and Boyd(2016)]{diamond2016cvxpy}
S.~Diamond and S.~Boyd.
\newblock {CVXPY}: {A} {P}ython-embedded modeling language for convex
  optimization.
\newblock \emph{Journal of Machine Learning Research}, 17\penalty0
  (83):\penalty0 1--5, 2016.

\bibitem[{Dong} et~al.(2016){Dong}, {Thanou}, {Frossard}, and
  {Vandergheynst}]{dong2016}
X.~{Dong}, D.~{Thanou}, P.~{Frossard}, and P.~{Vandergheynst}.
\newblock Learning {Laplacian} matrix in smooth graph signal representations.
\newblock \emph{IEEE Transactions on Signal Processing}, 64\penalty0
  (23):\penalty0 6160--6173, 2016.

\bibitem[Donnat et~al.(2016)Donnat, Marti, and Very]{donnat2016}
P.~Donnat, G.~Marti, and P.~Very.
\newblock Toward a generic representation of random variables for machine
  learning.
\newblock \emph{Pattern Recognition Letters}, 70:\penalty0 24–31, 2016.

\bibitem[Dose and Cincotti(2005)]{dose2005}
C.~Dose and S.~Cincotti.
\newblock Clustering of financial time series with application to index and
  enhanced index tracking portfolio.
\newblock \emph{Physica A: Statistical Mechanics and its Applications},
  355\penalty0 (1):\penalty0 145 -- 151, 2005.

\bibitem[Eddelbuettel and Francois(2011)]{eddelbuettel2011}
D.~Eddelbuettel and R.~Francois.
\newblock Rcpp: Seamless \textsf{R} and \textsf{C}++ integration.
\newblock \emph{Journal of Statistical Software}, 40, 2011.

\bibitem[Eddelbuettel and Sanderson(2014)]{eddelbuettel2014}
D.~Eddelbuettel and C.~Sanderson.
\newblock \textsf{RcppArmadillo}: Accelerating \textsf{R} with high-performance
  \textsf{C++} linear algebra.
\newblock \emph{Computational Statistics \& Data Analysis}, 71, 2014.

\bibitem[{Egilmez} et~al.(2017){Egilmez}, {Pavez}, and {Ortega}]{egilmez2017}
H.~E. {Egilmez}, E.~{Pavez}, and A.~{Ortega}.
\newblock Graph learning from data under {Laplacian} and structural
  constraints.
\newblock \emph{IEEE Journal of Selected Topics in Signal Processing},
  11\penalty0 (6):\penalty0 825--841, 2017.

\bibitem[Fama and French(2004)]{fama2004}
E.~F. Fama and K.~R. French.
\newblock \emph{Journal of Economic Perspective}, 18\penalty0 (3):\penalty0
  25--46, 2004.

\bibitem[Fan(1949)]{fan1949}
K.~Fan.
\newblock On a theorem of {Weyl} concerning eigenvalues of linear
  transformations {I}.
\newblock \emph{Proceedings of the National Academy of Sciences}, 35\penalty0
  (11):\penalty0 652--655, 1949.

\bibitem[Feng and Palomar(2015)]{feng2015}
Y.~Feng and D.~Palomar.
\newblock A signal processing perspective on financial engineering.
\newblock \emph{Foundations and Trends in Signal Processing}, 9:\penalty0
  1--231, 2015.

\bibitem[Fortunato(2010)]{fortunato2010}
S.~Fortunato.
\newblock Community detection in graphs.
\newblock \emph{Physics Reports}, 486\penalty0 (3):\penalty0 75 -- 174, 2010.

\bibitem[Friedman et~al.(2008)Friedman, Hastie, and Tibshirani]{friedman2008}
J.~Friedman, T.~Hastie, and R.~Tibshirani.
\newblock Sparse inverse covariance estimation with the graphical lasso.
\newblock \emph{Biostatistics}, 9:\penalty0 432--41, 2008.

\bibitem[Fu et~al.(2020)Fu, Narasimhan, and Boyd]{fu2020}
A.~Fu, B.~Narasimhan, and S.~Boyd.
\newblock \textsf{CVXR}: An \textsf{R} package for disciplined convex
  optimization.
\newblock \emph{Journal of Statistical Software, Articles}, 94\penalty0
  (14):\penalty0 1--34, 2020.
\newblock ISSN 1548-7660.

\bibitem[Gourieroux and Monfort(1997)]{gourieroux1997}
C.~Gourieroux and A.~Monfort.
\newblock \emph{Time Series and Dynamic Models}.
\newblock Themes in Modern Econometrics. Cambridge University Press, 1997.

\bibitem[Hao et~al.(2018)Hao, Sun, Liu, and Cheng]{hao2018}
B.~Hao, W.~W. Sun, Y.~Liu, and G.~Cheng.
\newblock Simultaneous clustering and estimation of heterogeneous graphical
  models.
\newblock \emph{Journal of Machine Learning Research}, 18\penalty0
  (217):\penalty0 1--58, 2018.

\bibitem[Harvey(2013)]{harvey2013}
A.~C. Harvey.
\newblock \emph{Dynamic models for volatility and heavy tails: with
  applications to financial and economic time series}.
\newblock Cambridge University Press, 2013.

\bibitem[{Hassan-Moghaddam} et~al.(2016){Hassan-Moghaddam}, {Dhingra}, and
  {Jovanović}]{moghaddam2016}
S.~{Hassan-Moghaddam}, N.~K. {Dhingra}, and M.~R. {Jovanović}.
\newblock Topology identification of undirected consensus networks via sparse
  inverse covariance estimation.
\newblock In \emph{IEEE 55th Conference on Decision and Control (CDC)}, pages
  4624--4629, 2016.

\bibitem[Horn and Johnson(1985)]{horn1985}
R.~A. Horn and C.~R. Johnson.
\newblock \emph{Matrix Analysis}.
\newblock Cambridge University Press, 1985.

\bibitem[Hsieh et~al.(2012)Hsieh, Banerjee, Dhillon, and Ravikumar]{hsieh2012}
C.~Hsieh, A.~Banerjee, I.~S. Dhillon, and P.~K. Ravikumar.
\newblock A divide-and-conquer method for sparse inverse covariance estimation.
\newblock In \emph{Advances in Neural Information Processing Systems
  (NeurIPS'12)}, pages 2330--2338, 2012.

\bibitem[Kalofolias(2016)]{kalofolias2016}
V.~Kalofolias.
\newblock How to learn a graph from smooth signals.
\newblock In \emph{Proceedings of the 19th International Conference on
  Artificial Intelligence and Statistics}, volume~51, pages 920--929, 2016.

\bibitem[Kazakov and Kalyagin(2016)]{kazakov2016}
M.~Kazakov and V.~A. Kalyagin.
\newblock Spectral properties of financial correlation matrices.
\newblock In \emph{Models, Algorithms and Technologies for Network Analysis},
  pages 135--156, Cham, 2016.

\bibitem[Kirilenko et~al.(2017)Kirilenko, Kyle, Samadi, and
  Tuzun]{kirilenko2017}
A.~Kirilenko, A.~S. Kyle, M.~Samadi, and T.~Tuzun.
\newblock The flash crash: High-frequency trading in an electronic market.
\newblock \emph{The Journal of Finance}, 72\penalty0 (3):\penalty0 967--998,
  2017.

\bibitem[Knill(2014)]{knill2014}
O.~Knill.
\newblock {Cauchy–Binet} for pseudo-determinants.
\newblock \emph{Linear Algebra and its Applications}, 459:\penalty0 522 -- 547,
  2014.

\bibitem[{Komodakis} and {Pesquet}(2015)]{komodakis2015}
N.~{Komodakis} and J.~{Pesquet}.
\newblock Playing with duality: An overview of recent primal-dual approaches
  for solving large-scale optimization problems.
\newblock \emph{IEEE Signal Processing Magazine}, 32\penalty0 (6):\penalty0
  31--54, 2015.

\bibitem[Kumar et~al.(2019{\natexlab{a}})Kumar, Ying, de~M.~Cardoso, and
  Palomar]{kumar20192}
S.~Kumar, J.~Ying, J.~V. de~M.~Cardoso, and D.~P. Palomar.
\newblock Structured graph learning via laplacian spectral constraints.
\newblock In \emph{Advances in Neural Information Processing Systems
  (NeurIPS)}, 2019{\natexlab{a}}.

\bibitem[Kumar et~al.(2019{\natexlab{b}})Kumar, Ying, de~M.~Cardoso, and
  Palomar]{kumar20193}
S.~Kumar, J.~Ying, J.~V. de~M.~Cardoso, and D.~P. Palomar.
\newblock Bipartite structured {Gaussian} graphical modeling via adjacency
  spectral priors.
\newblock In \emph{53rd Annual Asilomar Conference on Signals, Systems, and
  Computers}, 2019{\natexlab{b}}.

\bibitem[Kumar et~al.(2020)Kumar, Ying, de~M.~Cardoso, and Palomar]{kumar2019}
S.~Kumar, J.~Ying, J.~V. de~M.~Cardoso, and D.~P. Palomar.
\newblock A unified framework for structured graph learning via spectral
  constraints.
\newblock \emph{Journal of Machine Learning Research}, 21:\penalty0 1--60,
  2020.

\bibitem[Lake and Tenenbaum(2010)]{lake2010}
B.~M. Lake and J.~B. Tenenbaum.
\newblock Discovering structure by learning sparse graph.
\newblock In \emph{Proceedings of the 33rd Annual Cognitive Science
  Conference}, 2010.

\bibitem[Laloux et~al.(2000)Laloux, Cizeau, Potters, and Bouchaud]{laloux2000}
L.~Laloux, P.~Cizeau, M.~Potters, and J.-P. Bouchaud.
\newblock Random matrix theory and financial correlations.
\newblock \emph{International Journal of Theorectical and Applied Finance},
  3\penalty0 (3):\penalty0 391--397, 2000.

\bibitem[Lemieux et~al.(2014)Lemieux, Rahmdel, Walker, Wong, and
  Flood]{lemieux2014}
V.~Lemieux, P.~S. Rahmdel, R.~Walker, B.~L.~W. Wong, and M.~Flood.
\newblock Clustering techniques and their effect on portfolio formation and
  risk analysis.
\newblock In \emph{Proceedings of the International Workshop on Data Science
  for Macro-Modeling}, page 1–6, 2014.

\bibitem[Li et~al.(2018)Li, Sha, Huang, and Zhang]{li2018}
Y.~Li, C.~Sha, X.~Huang, and Y.~Zhang.
\newblock Community detection in attributed graphs: An embedding approach.
\newblock In \emph{AAAI Conference on Artificial Intelligence (AAAI'18)}, 2018.

\bibitem[{Liu} et~al.(2019){Liu}, {Kumar}, and {Palomar}]{liu2019a}
J.~{Liu}, S.~{Kumar}, and D.~P. {Palomar}.
\newblock Parameter estimation of heavy-tailed ar model with missing data via
  stochastic em.
\newblock \emph{IEEE Transactions on Signal Processing}, 67\penalty0
  (8):\penalty0 2159--2172, 2019.

\bibitem[Malevergne and Sornette(2006)]{malevergne2006}
Y.~Malevergne and D.~Sornette.
\newblock \emph{Extreme Financial Risks: From Dependence to Risk Management}.
\newblock Springer-Verlag, 2006.

\bibitem[Mantegna(1999)]{mantegna1999}
R.~N. Mantegna.
\newblock Hierarchical structure in financial markets.
\newblock \emph{The European Physical Journal B}, 11\penalty0 (1):\penalty0
  193–197, 1999.

\bibitem[Mantegna and Stanley(2004)]{mantegna2004}
R.~N. Mantegna and H.~E. Stanley.
\newblock \emph{An Introduction to Econophysics: Correlation and Complexity in
  Finance}.
\newblock Cambridge University Press, 2004.

\bibitem[{Marques} et~al.(2016){Marques}, {Segarra}, {Leus}, and
  {Ribeiro}]{marques2016}
A.~G. {Marques}, S.~{Segarra}, G.~{Leus}, and A.~{Ribeiro}.
\newblock Sampling of graph signals with successive local aggregations.
\newblock \emph{IEEE Transactions on Signal Processing}, 64\penalty0
  (7):\penalty0 1832--1843, 2016.

\bibitem[{Marti} et~al.(2015){Marti}, {Very}, {Donnat}, and
  {Nielsen}]{marti2015}
G.~{Marti}, P.~{Very}, P.~{Donnat}, and F.~{Nielsen}.
\newblock A proposal of a methodological framework with experimental guidelines
  to investigate clustering stability on financial time series.
\newblock In \emph{2015 IEEE 14th International Conference on Machine Learning
  and Applications (ICMLA)}, pages 32--37, 2015.

\bibitem[Marti et~al.(2016)Marti, Andler, Nielsen, and Donnat]{gautier2016}
G.~Marti, S.~Andler, F.~Nielsen, and P.~Donnat.
\newblock Clustering financial time series: How long is enough?
\newblock In \emph{Proceedings of the Twenty-Fifth International Joint
  Conference on Artificial Intelligence}, 2016.

\bibitem[Marti et~al.(2017{\natexlab{a}})Marti, Nielsen, Bińkowski, and
  Donnat]{marti2017}
G.~Marti, F.~Nielsen, M.~Bińkowski, and P.~Donnat.
\newblock A review of two decades of correlations, hierarchies, networks and
  clustering in financial markets.
\newblock In \emph{arXiv: 1703.00485}, 2017{\natexlab{a}}.

\bibitem[Marti et~al.(2017{\natexlab{b}})Marti, Nielsen, Donnat, and
  Andler]{marti2017a}
G.~Marti, F.~Nielsen, P.~Donnat, and S.~Andler.
\newblock On clustering financial time series: a need for distances between
  dependent random variables.
\newblock \emph{Computational Information Geometry}, pages 149--174,
  2017{\natexlab{b}}.

\bibitem[{Mateos} et~al.(2019){Mateos}, {Segarra}, {Marques}, and
  {Ribeiro}]{mateos2019}
G.~{Mateos}, S.~{Segarra}, A.~G. {Marques}, and A.~{Ribeiro}.
\newblock Connecting the dots: Identifying network structure via graph signal
  processing.
\newblock \emph{IEEE Signal Processing Magazine}, 36\penalty0 (3):\penalty0
  16--43, 2019.

\bibitem[Millington and Niranjan(2020)]{millington2020}
T.~Millington and M.~Niranjan.
\newblock Partial correlation financial networks.
\newblock \emph{Applied Network Science}, 5, 2020.

\bibitem[{Morgan Stanley Capital International} and {S\&P Dow
  Jones}(2018)]{msci2018}
{Morgan Stanley Capital International} and {S\&P Dow Jones}.
\newblock Revisions to the global industry classification standard (gics)
  structure, 2018.

\bibitem[Newman(2006)]{newman2006}
M.~E.~J. Newman.
\newblock Modularity and community structure in networks.
\newblock \emph{Proceedings of the National Academy of Sciences of the United
  States of America}, 103, 2006.

\bibitem[Nie et~al.(2016)Nie, Wang, Jordan, and Huang]{feiping2016}
F.~Nie, X.~Wang, M.~I. Jordan, and H.~Huang.
\newblock The constrained {Laplacian} rank algorithm for graph-based
  clustering.
\newblock In \emph{Proceedings of the Thirtieth AAAI Conference on Artificial
  Intelligence}, AAAI'16, pages 1969--1976, 2016.

\bibitem[Onnela et~al.(2003{\natexlab{a}})Onnela, Chakraborti, Kaski, and
  Kert\'esz]{onnela2003b}
J.-P. Onnela, A.~Chakraborti, K.~Kaski, and J.~Kert\'esz.
\newblock Dynamic asset trees and black monday.
\newblock \emph{Physica A: Statistical Mechanics and its Applications},
  324\penalty0 (1):\penalty0 247 -- 252, 2003{\natexlab{a}}.

\bibitem[Onnela et~al.(2003{\natexlab{b}})Onnela, Chakraborti, Kaski,
  Kert\'esz, and Kanto]{onnela2003}
J.-P. Onnela, A.~Chakraborti, K.~Kaski, J.~Kert\'esz, and A.~Kanto.
\newblock Dynamics of market correlations: Taxonomy and portfolio analysis.
\newblock \emph{Physical Review E}, 2003{\natexlab{b}}.

\bibitem[Onnela et~al.(2004)Onnela, Kaski, and Kert\'esz]{onnela2004}
J.-P. Onnela, K.~Kaski, and J.~Kert\'esz.
\newblock Clustering and information in correlation based financial networks.
\newblock \emph{The European Physical Journal B}, 38:\penalty0 353--362, 2004.

\bibitem[Ortega and Rheinboldt(2000)]{ortega2000}
J.~M. Ortega and W.~C. Rheinboldt.
\newblock \emph{Iterative Solution of Nonlinear Equations in Several
  Variables}.
\newblock Society for Industrial and Applied Mathematics, 2000.

\bibitem[Pal et~al.(2019)Pal, Regol, and Coates]{pal2019}
S.~Pal, F.~Regol, and M.~Coates.
\newblock Bayesian graph convolutional neural networks using non-parametric
  graph learning.
\newblock In \emph{International Conference on Learning Representations
  (ICLR'19)}, 2019.

\bibitem[Parikh and Boyd(2014)]{parikh2014}
N.~Parikh and S.~Boyd.
\newblock Proximal algorithms.
\newblock \emph{Foundations and Trends in Optimization}, 1\penalty0
  (3):\penalty0 127--239, 2014.

\bibitem[{Pavez} et~al.(2018){Pavez}, {Egilmez}, and {Ortega}]{pavez2018}
E.~{Pavez}, H.~E. {Egilmez}, and A.~{Ortega}.
\newblock Learning graphs with monotone topology properties and multiple
  connected components.
\newblock \emph{IEEE Transactions on Signal Processing}, 66\penalty0
  (9):\penalty0 2399--2413, 2018.

\bibitem[Plerou et~al.(1999)Plerou, Gopikrishnan, Rosenow, Amaral, and
  Stanley]{plerou1999}
V.~Plerou, P.~Gopikrishnan, B.~Rosenow, L.~A.~N. Amaral, and H.~E. Stanley.
\newblock Universal and nonuniversal properties of cross correlations in
  financial time series.
\newblock \emph{Phys. Rev. Lett.}, 83:\penalty0 1471--1474, Aug 1999.
\newblock \doi{10.1103/PhysRevLett.83.1471}.

\bibitem[Plerou et~al.(2002)Plerou, Gopikrishnan, Rosenow, Amaral, Guhr, and
  Stanley]{plerou2002}
V.~Plerou, P.~Gopikrishnan, B.~Rosenow, L.~A.~N. Amaral, T.~Guhr, and H.~E.
  Stanley.
\newblock Random matrix approach to cross correlations in financial data.
\newblock \emph{Physical Review E}, 65, Jun 2002.

\bibitem[Raffinot(2018{\natexlab{a}})]{raffinot2018}
T.~Raffinot.
\newblock Hierarchical clustering-based asset allocation.
\newblock \emph{The Journal of Portfolio Management}, 44, 2018{\natexlab{a}}.

\bibitem[Raffinot(2018{\natexlab{b}})]{raffinot2018b}
T.~Raffinot.
\newblock The hierarchical equal risk contribution portfolio.
\newblock \emph{SSRN Electronic Journal}, 2018{\natexlab{b}}.

\bibitem[{Ramakrishna} et~al.(2020){Ramakrishna}, {Wai}, and
  {Scaglione}]{rama2020}
R.~{Ramakrishna}, H.~{Wai}, and A.~{Scaglione}.
\newblock A user guide to low-pass graph signal processing and its
  applications.
\newblock \emph{arXiv e-prints: 2008.01305}, 2020.

\bibitem[Resnick(2007)]{resnick2008}
S.~I. Resnick.
\newblock \emph{Heavy-Tail Phenomena: Probabilistic and Statistical Modeling}.
\newblock Springer-Verlag New York, 2007.

\bibitem[Rue and Held(2005)]{rue2005}
H.~Rue and L.~Held.
\newblock \emph{Gaussian Markov Random Fields: Theory And Applications}.
\newblock Chapman \& Hall/CRC, 2005.

\bibitem[Różański et~al.(2017)Różański, Wituła, and
  Hetmaniok]{rozanski2017}
M.~Różański, R.~Wituła, and E.~Hetmaniok.
\newblock More subtle versions of the {Hadamard} inequality.
\newblock \emph{Linear Algebra and its Applications}, 532:\penalty0 500 -- 511,
  2017.

\bibitem[Saha and Tewari(2013)]{ankan2013}
A.~Saha and A.~Tewari.
\newblock On the nonasymptotic convergence of cyclic coordinate descent
  methods.
\newblock \emph{SIAM Journal on Optimization}, 23\penalty0 (1):\penalty0
  576--601, 2013.

\bibitem[Schreiner(2019)]{schreiner2019}
A.~Schreiner.
\newblock \emph{Equity Valuation Using Multiples: An Empirical Investigation}.
\newblock Springer, 2019.

\bibitem[{Segarra} et~al.(2017){Segarra}, {Marques}, {Mateos}, and
  {Ribeiro}]{segarra2017}
S.~{Segarra}, A.~G. {Marques}, G.~{Mateos}, and A.~{Ribeiro}.
\newblock Network topology inference from spectral templates.
\newblock \emph{IEEE Transactions on Signal and Information Processing over
  Networks}, 3\penalty0 (3):\penalty0 467--483, 2017.

\bibitem[Shafipour and Mateos(2020)]{mateos2020}
R.~Shafipour and G.~Mateos.
\newblock Online topology inference from streaming stationary graph signals
  with partial connectivity information.
\newblock \emph{Algorithms}, 13\penalty0 (9), July 2020.

\bibitem[Shalev-Shwartz and Tewari(2011)]{shwartz2011}
S.~Shalev-Shwartz and A.~Tewari.
\newblock Stochastic methods for $\ell_1$-regularized loss minimization.
\newblock \emph{Journal of Machine Learning Research}, 12:\penalty0 1865--1892,
  2011.

\bibitem[Sharpe(1964)]{sharpe64}
W.~F. Sharpe.
\newblock Capital asset prices: A theory of market equilibrium under conditions
  of risk.
\newblock \emph{The Journal of Finance}, 19\penalty0 (3):\penalty0 425--442,
  1964.

\bibitem[Slawski and Hein(2015)]{slawski2014}
M.~Slawski and M.~Hein.
\newblock Estimation of positive definite m-matrices and structure learning for
  attractive gaussian markov random fields.
\newblock \emph{Linear Algebra and its Applications}, 473:\penalty0 145 -- 179,
  2015.

\bibitem[Soloff et~al.(2020)Soloff, Guntuboyina, and Jordan]{soloff2020}
J.~A. Soloff, A.~Guntuboyina, and M.~I. Jordan.
\newblock Covariance estimation with nonnegative partial correlations.
\newblock \emph{arXiv e-prints: 2007.15252}, July 2020.

\bibitem[{Standard \& Poor's}(2006)]{gics2006}
{Standard \& Poor's}.
\newblock {Global Industry Classification Standard (GICS)}.
\newblock \emph{Tech Report}, 2006.

\bibitem[Sun et~al.(2014)Sun, Zhu, and Xu]{sun14}
S.~Sun, Y.~Zhu, and J.~Xu.
\newblock Adaptive variable clustering in {Gaussian} graphical models.
\newblock In \emph{Proceedings of the Seventeenth International Conference on
  Artificial Intelligence and Statistics}, volume~33, pages 931--939, 2014.

\bibitem[{Sun} et~al.(2016){Sun}, {Babu}, and {Palomar}]{sun2016}
Y.~{Sun}, P.~{Babu}, and D.~P. {Palomar}.
\newblock Robust estimation of structured covariance matrix for heavy-tailed
  elliptical distributions.
\newblock \emph{IEEE Transactions on Signal Processing}, 64\penalty0
  (14):\penalty0 3576--3590, 2016.

\bibitem[{Sun} et~al.(2017){Sun}, {Babu}, and {Palomar}]{sun2017}
Y.~{Sun}, P.~{Babu}, and D.~P. {Palomar}.
\newblock Majorization-minimization algorithms in signal processing,
  communications, and machine learning.
\newblock \emph{IEEE Transactions on Signal Processing}, 65\penalty0
  (3):\penalty0 794--816, 2017.

\bibitem[Sustik and Calderhead(2012)]{sustik2012}
M.~A. Sustik and B.~Calderhead.
\newblock Glassofast: An efficient glasso implementation.
\newblock The University of Texas at Austin, UTCS Technical Report TR-12-29,
  2012.

\bibitem[Tan et~al.(2015)Tan, Witten, and Shojaie]{tan2015}
K.~M. Tan, D.~Witten, and A.~Shojaie.
\newblock The cluster graphical {Lasso} for improved estimation of {Gaussian}
  graphical models.
\newblock \emph{Computational Statistics \& Data Analysis}, 85:\penalty0 23 --
  36, 2015.

\bibitem[Tsay(2010)]{tsay2010}
R.~S. Tsay.
\newblock \emph{Analysis of Financial Time Series}.
\newblock Wiley, 3rd edition, 2010.

\bibitem[Wald et~al.(2019)Wald, Noy, Elidan, and Wiesel]{wiesel2019}
Y.~Wald, N.~Noy, G.~Elidan, and A.~Wiesel.
\newblock Globally optimal learning for structured elliptical losses.
\newblock In \emph{Advances in Neural Information Processing Systems
  (NeurIPS'19)}, 2019.

\bibitem[Wang et~al.(2020)Wang, Roy, and Uhler]{wang2020}
Y.~Wang, U.~Roy, and C.~Uhler.
\newblock Learning high-dimensional gaussian graphical models under total
  positivity without adjustment of tuning parameters.
\newblock In \emph{Proceedings of the Twenty Third International Conference on
  Artificial Intelligence and Statistics}, volume 108, pages 2698--2708, 2020.

\bibitem[Witten and Tibshirani(2009)]{witten2009}
D.~M. Witten and R.~Tibshirani.
\newblock Covariance-regularized regression and classification for high
  dimensional problems.
\newblock \emph{Journal of the Royal Statistical Society. Series B (Statistical
  Methodology)}, 71\penalty0 (3):\penalty0 615--636, 2009.

\bibitem[Witten et~al.(2011)Witten, Friedman, and Simon]{witten2011}
D.~M. Witten, J.~H. Friedman, and N.~Simon.
\newblock New insights and faster computations for the graphical lasso.
\newblock \emph{Journal of Computational and Graphical Statistics}, 20\penalty0
  (4):\penalty0 892--900, 2011.

\bibitem[Wright(2015)]{wright2015}
S.~J. Wright.
\newblock Coordinate descent algorithms.
\newblock \emph{Mathematical Programming}, June 2015.

\bibitem[Wu and Lange(2010)]{lange2010}
T.~T. Wu and K.~Lange.
\newblock The {MM} alternative to {EM}.
\newblock \emph{Statistical Science}, 25\penalty0 (4):\penalty0 492--505, 2010.

\bibitem[{Wu} et~al.(2019){Wu}, {Pan}, {Chen}, {Long}, {Zhang}, and
  {Yu}]{wu2019}
Z.~{Wu}, S.~{Pan}, F.~{Chen}, G.~{Long}, C.~{Zhang}, and P.~S. {Yu}.
\newblock A comprehensive survey on graph neural networks.
\newblock \emph{arXiv e-prints: 1901.00596}, 2019.

\bibitem[{Yang} et~al.(2020){Yang}, {Yang}, {Mgaya}, {Zhang}, {Chen}, and
  {Liu}]{yang2020}
L.~{Yang}, Y.~{Yang}, G.~B. {Mgaya}, B.~{Zhang}, L.~{Chen}, and H.~{Liu}.
\newblock Novel fast networking approaches mining underlying structures from
  investment big data.
\newblock \emph{IEEE Transactions on Systems, Man, and Cybernetics: Systems},
  pages 1--11, 2020.

\bibitem[Ye and Tse(1989)]{ye89}
Y.~Ye and E.~Tse.
\newblock An extension of karmarkar's projective algorithm for convex quadratic
  programming.
\newblock \emph{Mathematical Programming}, 44:\penalty0 157–--179, 1989.

\bibitem[Ying et~al.(2017)Ying, Lu, Wei, Cai, Guo, Wu, Chen, and
  Qu]{ying2017hankel}
J.~Ying, H.~Lu, Q.~Wei, J.-F. Cai, D.~Guo, J.~Wu, Z.~Chen, and X.~Qu.
\newblock Hankel matrix nuclear norm regularized tensor completion for $ n
  $-dimensional exponential signals.
\newblock \emph{IEEE Transactions on Signal Processing}, 65\penalty0
  (14):\penalty0 3702--3717, 2017.

\bibitem[{Ying} et~al.(2020{\natexlab{a}}){Ying}, {de M. Cardoso}, and
  {Palomar}]{ying2020}
J.~{Ying}, J.~V. {de M. Cardoso}, and D.~P. {Palomar}.
\newblock {Does the $\ell_1$-norm Learn a Sparse Graph under Laplacian
  Constrained Graphical Models?}
\newblock \emph{arXiv e-prints: 2006.14925}, June 2020{\natexlab{a}}.

\bibitem[{Ying} et~al.(2020{\natexlab{b}}){Ying}, {de M. Cardoso}, and
  {Palomar}]{ying2020nips}
J.~{Ying}, J.~V. {de M. Cardoso}, and D.~P. {Palomar}.
\newblock {Nonconvex Sparse Graph Learning under Laplacian-structured Graphical
  Model}.
\newblock In \emph{Advances in Neural Information Processing Systems
  (NeurIPS)}, 2020{\natexlab{b}}.

\bibitem[{Zhao} et~al.(2019){Zhao}, {Wang}, {Kumar}, and {Palomar}]{zhao2019}
L.~{Zhao}, Y.~{Wang}, S.~{Kumar}, and D.~P. {Palomar}.
\newblock Optimization algorithms for graph laplacian estimation via {ADMM} and
  {MM}.
\newblock \emph{IEEE Transactions on Signal Processing}, 67\penalty0
  (16):\penalty0 4231--4244, 2019.

\end{thebibliography}

\end{document}